\documentclass[11pt]{article}

\usepackage[margin=1in]{geometry}
\usepackage{amsmath}
\usepackage{amsthm}
\usepackage{natbib}
\usepackage{hyperref}

\usepackage[utf8]{inputenc} 
\usepackage[T1]{fontenc}    
\usepackage{xcolor}         
\usepackage{url}            
\usepackage{booktabs}       
\usepackage{cellspace}
\usepackage{amsfonts}       
\usepackage{nicefrac}       
\usepackage{microtype}      
\usepackage{enumitem}
\usepackage{mathtools}
\usepackage{tikz}
\usetikzlibrary{arrows.meta,positioning,fit,shapes.geometric}
\hypersetup{hidelinks}

\newcommand{\cE}{\mathcal{E}}
\newcommand{\cF}{\mathcal{F}}

\newcommand{\cH}{\mathcal{H}}

\newcommand{\cL}{\mathcal{L}}

\newcommand{\cX}{\mathcal{X}}
\newcommand{\cY}{\mathcal{Y}}

\newcommand{\E}{\mathbb{E}}

\newcommand{\I}{\mathbb{I}}

\newcommand{\N}{\mathbb{N}}

\newcommand{\Pb}{\mathbb{P}}
\newcommand{\bbR}{\mathbb{R}}

\usepackage{bm}

\newcommand{\lrb}[1]{\left(#1\right)}
\newcommand{\brb}[1]{\bigl(#1\bigr)}
\newcommand{\Brb}[1]{\Bigl(#1\Bigr)}

\newcommand{\lsb}[1]{\left[#1\right]}
\newcommand{\bsb}[1]{\bigl[#1\bigr]}
\newcommand{\Bsb}[1]{\Bigl[#1\Bigr]}
\newcommand{\bbsb}[1]{\biggl[#1\biggr]}
\newcommand{\lcb}[1]{\left\{#1\right\}}
\newcommand{\bcb}[1]{\bigl\{#1\bigr\}}
\newcommand{\Bcb}[1]{\Bigl\{#1\Bigr\}}

\newcommand{\lce}[1]{\left\lceil#1\right\rceil}

\newcommand{\labs}[1]{\left\lvert#1\right\rvert}
\newcommand{\babs}[1]{\bigl\lvert#1\bigr\rvert}
\newcommand{\Babs}[1]{\Bigl\lvert#1\Bigr\rvert}
\newcommand{\bbabs}[1]{\biggl\lvert#1\biggr\rvert}
\newcommand{\lno}[1]{\left\lVert#1\right\rVert}

\DeclareMathOperator*{\argmax}{argmax}

\newcommand{\dif}{\mathrm{d}}

\newcommand{\eps}{\varepsilon}
\newcommand{\gft}{\mathrm{gft}}

\newcommand{\wt}{\widetilde}

\newcommand{\ceq}{\coloneqq}

\newcommand{\KL}{D_{\mathrm{KL}}}

\usepackage{todonotes}

\usepackage{comment}
\newcommand{\BlackBox}{\ensuremath{\blacksquare}}

\usepackage{amssymb}
\usepackage[ruled, noline,noend]{algorithm2e}
\usepackage{graphicx}
\usepackage{subcaption}
\usepackage{xfrac}
\usepackage{multirow}
\usepackage{pifont}

\usepackage[capitalize]{cleveref}
\usepackage{aliascnt}

\newtheorem{theorem}{Theorem}
\crefname{theorem}{theorem}{theorems}
\Crefname{theorem}{Theorem}{Theorems}

\newenvironment{proofsketch}{%
  \par\noindent{\bf Proof Sketch\ }%
}{%
  \hfill\BlackBox\\[2mm]%
}

\newaliascnt{lemma}{theorem}
\newtheorem{lemma}[lemma]{Lemma}
\aliascntresetthe{lemma}
\crefname{lemma}{lemma}{lemmas}
\Crefname{lemma}{Lemma}{Lemmas}

\newaliascnt{corollary}{theorem}

\aliascntresetthe{corollary}
\crefname{corollary}{corollary}{corollaries}
\Crefname{corollary}{Corollary}{Corollaries}

\newaliascnt{proposition}{theorem}
\newtheorem{proposition}[proposition]{Proposition}
\aliascntresetthe{proposition}
\crefname{proposition}{proposition}{propositions}
\Crefname{proposition}{Proposition}{Propositions}

\newaliascnt{fact}{theorem}

\aliascntresetthe{fact}
\crefname{fact}{fact}{facts}
\Crefname{fact}{Fact}{Facts}

\newtheorem{definition}{Definition}
\crefname{definition}{definition}{definitions}
\Crefname{definition}{Definition}{Definitions}

\newtheorem{remark}{Remark}
\crefname{remark}{remark}{remarks}
\Crefname{remark}{Remark}{Remarks}

\crefname{notation}{notation}{notations}
\Crefname{notation}{Notation}{Notations}

\newtheorem{assumption}{Assumption}
\crefname{assumption}{assumption}{assumptions}
\Crefname{assumption}{Assumption}{Assumptions}

\crefname{example}{example}{examples}
\Crefname{example}{Example}{Examples}

\newtheorem{axiom}{Axiom}
\crefname{axiom}{axiom}{axioms}
\Crefname{axiom}{Axiom}{Axioms}

\usepackage{thmtools}
\usepackage{thm-restate}
\usepackage{lastpage}

\title{Repeated Bilateral Trade: The Quest for Fairness}

\author{
Fran\c{c}ois Bachoc\\
University of Lille\\
Institut Universitaire de France (IUF)\\
Lille, France\\
\texttt{francois.bachoc@univ-lille.fr}
\and
Roberto Colomboni\\
School of Mathematics\\
University of Bristol\\
Fry Building, Woodland Road\\
Bristol BS8 1UG, United Kingdom\\
\texttt{roberto.colomboni@bristol.ac.uk}
\and
Emilie Kaufmann\\
Univ. Lille, CNRS, Inria, Centrale Lille, UMR 9189-CRIStAL\\
F-59000 Lille, France\\
\texttt{emilie.kaufmann@univ-lille.fr}
}

\date{}

\begin{document}
\maketitle

\begin{abstract}%
We study repeated bilateral trade from a fairness perspective.
At each round, a fresh seller-buyer pair arrives, and the platform posts a price before observing the traders' valuations.
Trade occurs only if both agents accept the price.
Rather than maximizing only the gain from trade, we consider platforms that seek balanced divisions of the generated surplus.
We show that natural fairness desiderata lead to a one-parameter Rawls-to-Nash family of fair-gain objectives, obtained by aggregating the seller's and buyer's net gains through nonpositive H\"older means.
Unlike the standard gain-from-trade objective and the Rawlsian fair-gain objective studied in prior work, our proposed objectives induce a new statistical structure in which expected rewards are recovered from threshold feedback through a two-dimensional singular-kernel integral identity. This leads to a nonstandard pure-exploration problem whose natural estimators are rectangular double sums with row-column dependence and singular weights.
Assuming independent i.i.d.\ seller and buyer valuation sequences with arbitrary unknown marginals, we characterize the optimal learning rates for the whole Rawls-to-Nash family of fair-gain objectives, giving matching fixed-confidence sample-complexity and regret bounds up to polylogarithmic factors.
\end{abstract}

\vspace{0.5em}
\noindent\textbf{Keywords:} bilateral trade, fairness, online learning, regret minimization, pure exploration\par



\section{Introduction}

Online bilateral trade studies how a platform can learn, over repeated interactions, to intermediate between sellers and buyers.
At each round, the platform posts a price before observing the traders' valuations and trade occurs if and only if both agents accept the price.
The learner then observes only the two threshold decisions, namely the seller's and buyer's acceptance or rejection bits.
The standard learning objective in this model is the \emph{gain from trade} \citep{cesa2021bilateralEC,cesa2024bilateralMOR},
which measures the net increase in market value: it is the sum of the seller's and buyer's realized gains when trade occurs, and zero otherwise.
This is the right quantity for measuring efficiency, but it is blind to how the surplus is divided.
Indeed, for a realized pair of valuations $s \le b$, every price $p \in [s,b]$ generates the same gain from trade, $b-s$, even though the seller gain $p-s$ and the buyer gain $b-p$ may be arbitrarily imbalanced.
Consequently, a learner that optimizes only for efficiency may learn prices that induce systematically unfair trades.

This observation motivates learning objectives that evaluate not only whether surplus is generated, but also how it is divided.
Inspired by Rawlsian fairness, \citet{bachoc2024fairNeurIPS} proposed an alternative learning objective, termed \emph{fair gain from trade} in their work, equal to the minimum of the seller's and buyer's realized gains.
By optimizing the worst-off participant in each successful trade, this objective captures the Rawlsian max-min notion of fairness.
However, this Rawlsian criterion is only one possible way to aggregate the two realized gains into a scalar fair-gain score.

In this paper, rather than selecting another fairness metric from the literature and then studying the corresponding online-learning problem, we first ask which scalar scores are structurally justified for evaluating a two-person gain vector.
Starting from basic desiderata---an equal-gain-equivalent interpretation on a common welfare scale, covariance under rescaling of gains, and zero fairness for one-takes-all splits---we show that the admissible fairness objectives are precisely the finite-order nonpositive H\"older means of the seller's and buyer's gains, forming a family parameterized by a \emph{nonpositive} real number $\lambda$.

Interestingly, this class interpolates between Rawlsian and Nash fairness and connects several standard fairness notions.
The Rawlsian criterion studied by \citet{bachoc2024fairNeurIPS} is recovered as the limiting endpoint $\lambda \to -\infty$.
The opposite endpoint, $\lambda = 0$, is the geometric mean, corresponding to Nash fairness rooted in Nash's bargaining solution \citep{nash1950bargaining} and widely used through the Nash social welfare objective \citep{caragiannis2019unreasonable,branzei2017nash}.
The harmonic mean, obtained at $\lambda = -1$, has appeared as a fairness criterion in resource-allocation problems \citep{dashti2013harmonic} and is closely related to Jain-type fairness considerations \citep{lan2010axiomatic}.
More broadly, H\"older means arise naturally in axiomatic treatments of fairness in networks \citep{lan2010axiomatic}, in fair division of goods and chores \citep{eckart2024fairness}, and as equal-gain equivalents induced by fair utility scales in the classical proportional-fairness literature on communication networks \citep{kelly1998rate,mo2002fair}.

Having identified this class of fairness candidates, we use it as the starting point for this paper.
Our goal is to study the online-learning problem induced by the Rawls-to-Nash family of fair gain-from-trade objectives, obtained by applying nonpositive H\"older means to the seller's and buyer's realized gains in repeated bilateral trade.

\subsection{Problem Formulation}
\label{s:setting}

The repeated bilateral trade interaction protocol is described in Interaction~Protocol~\ref{algo:interaction_protocol}.

{
\renewcommand{\algorithmcfname}{Interaction Protocol}
\begin{algorithm}[H]
\DontPrintSemicolon
\caption{Repeated Bilateral Trade}
\label{algo:interaction_protocol}
\For{$t=1,2,\dots$}{
    A seller and a buyer with private valuations $S_t,B_t\in[0,1]$ join the platform\;
    The platform posts a price $P_t\in[0,1]$\;
    Trade occurs if and only if $S_t\le P_t\le B_t$\;
    The learner observes the acceptance bits
    $Y_t\coloneqq \I\lcb{S_t\le P_t}$
    and
    $Z_t\coloneqq \I\lcb{P_t\le B_t}$\;
}
\end{algorithm}
}

\subsubsection{Fair-Gain Objectives} If trade occurs, then the seller gain is $P_t-S_t$ and the buyer gain is $B_t-P_t$, while both gains are zero otherwise.
To promote fair divisions of the surplus between sellers and buyers, rather than aggregating the two gains through their sum, we consider the following family of reward objectives.

Formally, writing $x_+ \ceq \max\lrb{x,0}$ for $x \in \bbR$, we define the \emph{$\lambda$-fair gain from trade} by
\[
    \gft_\lambda \colon [0,1]^3 \to \bbR,
\qquad
    (p,s,b) \mapsto \I\lcb{s \le p \le b} M_\lambda\brb{(p-s)_+,(b-p)_+},
\]
where $M_\lambda$ denotes the $\lambda$-H\"older mean, on $\bbR_+^2$, namely
\[
    M_\lambda(x,y)
\coloneqq
    \begin{cases}
    \min(x,y), & \lambda=-\infty,\\[2pt]
    \lrb{\frac{x^\lambda+y^\lambda}{2}}^{1/\lambda}, & -\infty < \lambda < 0,\ 0<x,\ 0<y,\\[6pt]
    \sqrt{xy}, & \lambda=0,
    \end{cases}
\]
and, for $\lambda \in (-\infty,0)$, we set $M_\lambda(x,y)=0$ whenever $xy=0$.\footnote{Note that the boundary definitions are chosen so that, with these conventions, one can show that
$
    (\lambda,x,y)\mapsto M_\lambda(x,y)
$
is continuous on the whole $[-\infty,0]\times[0,+\infty)^2$, where $[-\infty,0]$ is endowed with its natural extended-real topology.}

The family of $\lambda$-fair gain-from-trade objectives provides a natural fairness-aware refinement of the usual gain-from-trade objective.
For every realized pair $(s,b)$ with $s\le b$, the reward $\gft_\lambda(p,s,b)$ is symmetric around the midpoint $p=(s+b)/2$ and is maximized there, where both traders receive the same gain $(b-s)/2$.
The family also satisfies a Pigou-Dalton transfer principle (also known as the Robin-Hood principle): holding the total surplus fixed, transferring gain from the trader with the larger gain to the trader with the smaller gain cannot decrease the score, as long as their order is not reversed.
We stress once more that this family recovers several notions of fairness: $\gft_{-\infty}$ coincides with the notion called \emph{fair gain from trade} in \cite{bachoc2024fairNeurIPS}, which corresponds to Rawlsian fairness; $\gft_0$ corresponds to Nash fairness; and $\gft_{-1}$ corresponds to a harmonic-mean notion of fairness.

Beyond these qualitative and interpolating properties, the family also has an axiomatic justification.
In Appendix~\ref{app:why_Holder?}, we start from a generic fair-gain score assigned to the seller-buyer gain pair and impose three desiderata: an equal-gain-equivalent interpretation through a common social-welfare scale, scale covariance, and zero fairness for one-takes-all splits.
These axioms characterize the finite-order H\"older means with $\lambda\in(-\infty,0]$, with the Rawlsian objective recovered as the limiting case $\lambda\to-\infty$.

\subsubsection{Valuation Model}
Throughout the paper, unless otherwise specified, we assume that the sequence of sellers' valuations $S,S_1,S_2,\dots$ is an i.i.d.\ sequence of $[0,1]$-valued random variables, that the sequence of buyers' valuations $B,B_1,B_2,\dots$ is an i.i.d.\ sequence of $[0,1]$-valued random variables, that the two sequences are independent of each other, and that their distributions are arbitrary and unknown to the learner.
In what follows, we denote by $F_S$ the unknown sellers' cdf and by $F_B^\circ$ the unknown buyers' survival function, i.e., for any $p \in [0,1]$,
\[
    F_S(p) \coloneqq \Pb\lsb{S\le p},
\qquad
    F_B^\circ(p) \coloneqq \Pb\lsb{p \le B}.
\]

These assumptions are common in the online bilateral-trade literature \citep{cesa2024bilateralMOR,bachoc2024fairNeurIPS,bacchiocchi2025NeurIPS}.
The boundedness assumption on valuations and prices is standard in regret-minimization settings, and can be viewed as a normalization.
The i.i.d.\ assumption is natural when modeling large and stable markets \citep{bolic2024onlineAAMAS}, in which the distribution of arriving valuations does not change significantly over time.
The independence assumption between seller and buyer valuations is also natural in settings where the two sides of the market are formed by separate populations and traders join the platform without coordination.
On a technical level, removing the independence assumption between seller and buyer valuations is known to lead to unlearnability in several related settings, unless one allows for more general mechanisms, different assumptions, or relaxed benchmarks; see \Cref{s:related_works}.
As shown in \Cref{thm:linear}, analogous unlearnability phenomena also arise in our setting.

\subsubsection{Learning Objectives}
For every $\lambda\in[-\infty,0]$ and $p\in[0,1]$, we define the \emph{expected} $\lambda$-gain from trade and its maximum\footnote{We note that $G_\lambda^\star$ is indeed well defined as $G_\lambda$ is continuous on the compact set $[0,1]$; see \Cref{lem:G_is_Holder}.} as
\[
    G_\lambda(p)
\coloneqq
    \E\bsb{\gft_\lambda(p,S,B)},
\qquad
    G_\lambda^\star
\coloneqq
    \max_{q\in[0,1]} G_\lambda(q).
\]
At a high level, for a fixed fairness level $\lambda\in[-\infty,0]$, the learner's goal is to design pricing strategies that identify or sequentially post prices maximizing the expected $\lambda$-fair gain from trade $G_\lambda$.
We formalize this goal through two learning objectives: pure exploration, in which one cares about estimating the best price(s), and regret minimization.

In all cases, the learner's pricing strategy sequentially selects the price $P_t$ based on the feedback observed in previous trades,
$
    (P_1,Y_1,Z_1),\dots,(P_{t-1},Y_{t-1},Z_{t-1})
$,
and possibly on external randomization.

\paragraph{Pure exploration}
We first consider a fixed-confidence pure-exploration objective.
The learner posts prices until a stopping rule, fixed as part of the algorithm, declares that exploration is over.
The stopping rule is distribution-independent: at each round it can depend only on the feedback and on the learner's randomization observed so far; under a given valuation distribution (of $(S_t,B_t)$), it induces a random stopping time $\tau$.
At time $\tau$, the algorithm outputs a recommendation $\widehat p_\lambda\in[0,1]$, depending only on the stopped history and possibly on the desired fairness level $\lambda$.
We say that the resulting pure-exploration algorithm---consisting of the pricing strategy, the stopping rule, and the recommendation rule---is $(\varepsilon,\delta)$-PAC (Probably Approximately Correct) at fairness level $\lambda$ if
\begin{equation}
\label{eq:pointwise_PAC}
    \Pb\bsb{
        G_\lambda^\star-G_\lambda(\widehat p_\lambda)
        \le \eps
    }
    \ge
    1-\delta.
\end{equation}
We further study a stronger \emph{uniform-in-fairness-level PAC} requirement, useful when the platform wants to choose the fairness level only after collecting data through exploratory posted-price interactions.
The same exploration phase is used for all fairness levels: after the stopping rule declares that exploration is over, the algorithm outputs a recommendation map
\[
    \lambda\mapsto \widehat p_\lambda,
\]
where each $\widehat p_\lambda$ depends only on the stopped history and on the queried value of $\lambda$.
Given an accuracy parameter $\eps>0$ and a confidence parameter $\delta\in(0,1)$, the requirement is that this single map returns an $\eps$-accurate recommendation simultaneously for every $\lambda\in[-\infty,0]$, with high probability:
\begin{equation}
\label{eq:uniform_PAC}
    \Pb\lsb{
        \sup_{\lambda\in[-\infty,0]}
        \brb{
            G_\lambda^\star-G_\lambda(\widehat p_\lambda)
        }
        \le \eps
    }
    \ge
    1-\delta.
\end{equation}
For a fixed valuation distribution, the expected exploration length is the expectation of the stopping time induced by that distribution.
The sample complexity of a PAC algorithm is the worst-case expected exploration length over the considered class of valuation distributions.

We note that while our proposed algorithms use deterministic exploration budgets, which correspond to the special case $\tau\equiv n$, for some $n \in \N$, our PAC lower bound is proved in the more general fixed-confidence model, i.e., it applies to arbitrary
distribution-independent stopping rules.

\paragraph{Regret minimization}
In the \emph{regret-minimization setting} performance is evaluated directly on the sequence of posted prices.
Specifically, given a horizon $T \in \N$ and a pricing strategy $\alpha$ producing the prices $P_1,\dots,P_T$, we define its expected regret at fairness level $\lambda$ by
\[
    R_\lambda(\alpha,T)
\coloneqq
    \max_{p \in [0,1]} \E \lsb{\sum_{t=1}^T
        \brb{\gft_{\lambda}(p,S_t,B_t)-\gft_{\lambda}(P_t,S_t,B_t)}}
=
    \E\lsb{
        \sum_{t=1}^T
        \brb{G_\lambda^\star-G_\lambda(P_t)}
        }.
\]
In this case, the learner aims to design policies whose regret is sublinear in $T$. 

The goal of this paper is to characterize the optimal learning rates for these problems and to provide explicit algorithms that achieve them.

\subsection{Our Contributions}

We report here an overview of our results.
Throughout the paper, $\widetilde O$ suppresses universal constants and logarithmic factors in the relevant displayed parameters.
In particular, in PAC statements $\widetilde{O}(\eps^{-2})$ may hide logarithmic factors in $1/\eps$ and $1/\delta$, while in regret statements $\widetilde{O}(T^{2/3})$ hides logarithmic factors in $T$.

\begin{itemize}
    \item \emph{A Rawls-to-Nash Fairness Family.}
    We propose a new family $\lrb{\gft_\lambda}_{\lambda\in(-\infty,0]}$ of fair gain-from-trade objectives, and show that it arises from basic desiderata for fairness in bilateral trade; see Appendix~\ref{app:why_Holder?}.
    \item\emph{A $\lambda$-Free Exploration Strategy.}
    We introduce a sample-collection procedure (\Cref{algo:collect_samples}) that is independent of the fairness level $\lambda$.
    From the collected threshold feedback up to some fixed deterministic budget $n$, we construct recommenders $\widehat{p}_{\lambda}$ with optimization error $\widetilde O(n^{-1/2})$ with high probability: a finite-$\lambda$ estimator with explicit dependence on $1-\lambda$, and a Rawls estimator for $\lambda=-\infty$ (see \Cref{prop:finite_lambda_high_probability_optimization,prop:rawls_high_probability_optimization}).
    By using the finite-$\lambda$ recommender only when $1-\lambda\le \sqrt n$ and switching otherwise to the Rawls recommender, we obtain a single rule valid for every $\lambda\in[-\infty,0]$ whose error bound is agnostic to $\lambda$ (\Cref{prop:lambda_free_exploration_optimization}). An appropriate choice of $n$ yields an $(\varepsilon,\delta)$-PAC algorithm at fairness level $\lambda$ with sample complexity $\widetilde{O}(\varepsilon^{-2})$.
    \item\emph{Uniform PAC Learning.}
    We also study the setting in which the fairness level is chosen only after exploration.
    Using a grid over fairness levels, we turn our $\lambda$-free error bound into a single high-probability event that supports $\eps$-optimal recommendations simultaneously for all $\lambda\in[-\infty,0]$
    for exploration budget $\widetilde{O}(\eps^{-2})$ (\Cref{thm:uniform_pac_grid}).
    A matching fixed-confidence lower bound, valid already for any fixed $\lambda$ and for arbitrary distribution-independent stopping rules with a.s.\ finite stopping times, shows that the $\eps^{-2}\log(1/\delta)$ scale is unavoidable up to logarithmic factors (\Cref{thm:pac_lower_bound}).
    \item\emph{Regret Minimization.}
    We turn our $\lambda$-free exploration strategy into an explore-then-commit strategy, $\mathrm{ETC}_\lambda(T)$.
    The exploration length depends only on $T$, while the committed recommendation is selected using the pair $(\lambda,T)$.
    The resulting regret is $\widetilde O(T^{2/3})$ (\Cref{thm:regret_etc}) and the $T^{2/3}$ dependence is optimal up to logarithmic factors (\Cref{thm:regret_lower_bound}).
    \item \emph{Necessity of Independence.}
    We show that the independence assumption between seller and buyer valuations is essential under Interaction~Protocol~\ref{algo:interaction_protocol}.
    If the pairs $(S_t,B_t)_{t\in\N}$ are i.i.d.\ over time but seller and buyer valuations may be arbitrarily dependent within each round, then every algorithm suffers linear regret (\Cref{thm:linear}).
    \item \emph{Reusable Technical Tools.}
    Along the way, we develop two tools that may be useful beyond the present setting.
    First, we prove a rectangular McDiarmid inequality for row-column dependent double sums which can be useful beyond bilateral trade whenever an estimator is formed by recombining two independently sampled populations, so that each observation from one population is paired with many observations from the other (\Cref{prop:rectangular:bounded:differences}).
    Second, we formulate a modular lower-bound template for not necessarily finite online-learning problems with a revealing-action structure in the sense of \cite{cesa2006prediction}.
    The template is designed to be used as a black-box ingredient for proving $T^{2/3}$ regret lower bounds in problems where information can be acquired only through actions that are themselves suboptimal; see Appendix~\ref{app:lower_bound_template} and \Cref{thm:t23-template}.
\end{itemize}


To help the reader, given the complex structure of the results in the paper, we offer a proof-dependency graph for the upper bound results in \Cref{fig:upper_bound_graph}.

\begin{figure}[t]
    \centering
    \includegraphics[width=\textwidth]{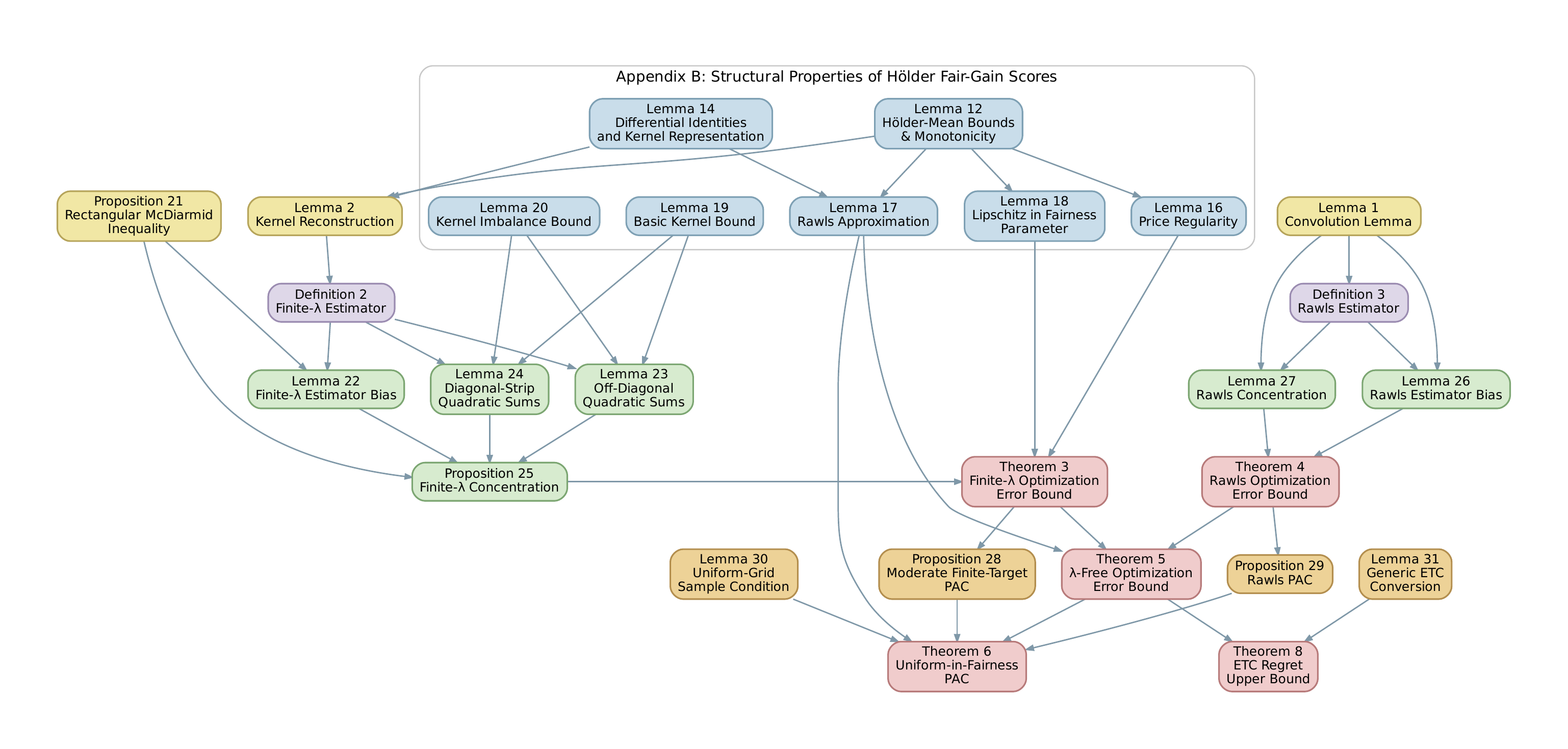}
    \caption{Proof-dependency graph for the main upper bounds.}
    \label{fig:upper_bound_graph}
\end{figure}

Next, we explain the main technical obstacles behind these results and the ideas used to overcome them.

\subsection{Techniques and Challenges}
\label{s:techniques_and_challenges}
One central difficulty of the problem is observability: after posting a price $p$, the learner receives only the two acceptance bits $\I\{S_t \le p\}$ and $\I\{p \le B_t\}$ as feedback, and, unlike in bandit problems, such feedback is \emph{not} sufficient to reconstruct the realized reward $\gft_\lambda(p,S_t,B_t)$, nor to estimate $G_\lambda(p)$ directly from repeated observations at that same price.
This is not merely a technical inconvenience, but a real learnability barrier: as shown in \Cref{thm:linear}, even if seller and buyer valuations remain i.i.d.\ over time, allowing them to be arbitrarily dependent within each round makes the problem unobservable.

Under independence of seller and buyer valuations, we overcome the observability barrier through a key structural result of the paper: a new \emph{Kernel Reconstruction Lemma} (\Cref{lem:kernel_reconstruction_lemma}),  which rewrites the expected $\lambda$-fair gain from trade in terms of the seller cdf, the buyer survival function, and a kernel $H_\lambda$.
This lemma turns an apparently unobservable reward into a form that can be estimated from the available feedback, and provides the right representation of the problem for the subsequent estimation analysis.

On the other hand, even though \Cref{lem:kernel_reconstruction_lemma} resolves the observability issue, it does not reduce the problem to the same estimation task as in \cite{bachoc2024fairNeurIPS}.
In the Rawlsian case studied there, the reconstruction collapses to a one-dimensional convolution.
After discretization, estimation is made at a regular grid of prices and
the estimator at a fixed grid point is an average along an anti-diagonal of the grid; although the same observations are reused across different grid points, the summands entering any single fixed-grid estimate are bounded and built from disjoint time indices, leading to a sum of bounded and independent random variables over which one can apply the ordinary Hoeffding inequality.

In the Rawls-to-Nash family, the reconstruction is instead a two-dimensional integral against a singular kernel.
Once discretized, the estimator of $G_\lambda$ at a fixed grid point (see \Cref{def:estimator_G}) takes the form of a rectangular double sum over seller-side and buyer-side observations.
This creates a row-column dependence structure: each seller-side observation is paired with many buyer-side observations, and vice versa.
Moreover, because the kernel is singular, the corresponding weights can grow as the mesh is refined.
Thus the estimator is not an empirical average of independent bounded variables, but a dependent double sum with large kernel weights.
Consequently, the concentration argument used in the Rawlsian case no longer applies directly.

One might try to avoid this dependence by abandoning the double-sum structure and using more direct strategies.
For instance, one could estimate the seller cdf and the buyer survival function pointwise on a grid and then plug these estimates into the kernel reconstruction formula.
This, however, would waste samples on estimating the marginal functions themselves, rather than the specific reward functional needed for optimization.
Alternatively, one could try to estimate the reconstructed integral directly through independent Monte Carlo samples.
This avoids the dependence induced by the double sum, but runs into the other difficulty created by the kernel reconstruction: the kernel $H_\lambda$ has boundary singularities, so the resulting kernel-weighted summands are not uniformly bounded.
A Hoeffding-type argument would then require truncating the summands, introducing a bias-variance tradeoff that leads to suboptimal rates.
Heuristically, both approaches correspond to an $\eps^{-4}$-type PAC scale, and hence to an explore-then-commit regret rate around $T^{4/5}$, rather than the optimal $\eps^{-2}$ and $T^{2/3}$ scales.
Thus, neither the Rawlsian concentration argument nor a na\"ive independent-sampling alternative captures the correct statistical structure of the problem.

The resolution is to keep the rectangular estimator, but analyze its dependence structure directly.
To do this, we first prove a dedicated rectangular McDiarmid inequality for row-column dependent double sums (\Cref{prop:rectangular:bounded:differences}). Applying this inequality requires bounding a quadratic sum of deterministic envelopes, that is specific to the geometry of the kernel.

In our case, this step turns out to be technically delicate.
We split the grid
of indices
into a diagonal strip and an off-diagonal region.
Inside the diagonal strip, the kernel weights can be large, but the strip has only constant width, so its contribution can be controlled by basic kernel bounds (\Cref{lem:diag_quadratic}).
In the off-diagonal region, the kernel is less singular but spread over many cells: there we use a kernel imbalance bound (\Cref{cor:kernel_imbalance_bound}) together with summability bounds for the resulting row and column envelopes (\Cref{lem:offdiag_quadratic}) to show that the row and column quadratic sums remain of the right order.
This diagonal-strip/off-diagonal decomposition is what allows the rectangular McDiarmid inequality to deliver a fixed-grid estimation error of order $n^{-1/2}$, up to logarithmic factors, despite both dependence and singular weights.

The next estimation issue is discretization in prices.
The estimators are defined on a finite grid of prices, whereas the benchmark optimizes over the continuum $[0,1]$.
Thus, even a sharp grid-level estimate is insufficient unless the reward functions have enough regularity in the posted price.
For finite $\lambda$, the ordinary Lipschitz control available in \cite{bachoc2024fairNeurIPS} is no longer available because the $\lambda$-H\"older mean has singular derivatives near the boundary of the trade interval.
Nevertheless, we prove that every objective $G_\lambda$ is $\frac12$-H\"older in the price (\Cref{lem:G_is_Holder}), which is just enough to transfer the $n^{-1/2}$-type grid-level optimization bounds to the continuum.

At first sight, the fixed-grid estimation problem may appear to be satisfactorily resolved.
A closer look at the finite-$\lambda$ guarantee, however, reveals an undesirable dependence on $1-\lambda$.
The term worsens as $\lambda$ moves toward the Rawlsian regime and, when $|\lambda|$ is large relative to the exploration budget $n$, can make the bound trivial.
The source of the dependence is the control of the singular finite-$\lambda$ kernel.
Yet, one would expect that such deterioration is not intrinsic to the objective family: at $\lambda=-\infty$, the reconstruction collapses to the one-dimensional Rawls convolution, and an estimator inspired by \cite{bachoc2024fairNeurIPS}
(see \Cref{def:estimator_G_rawls})
again achieves an $n^{-1/2}$-type guarantee (see \Cref{prop:rawls_high_probability_optimization}).

The resolution is to stop forcing a single estimator to handle all values of $\lambda$.
For a fixed exploration length $n$, the algorithm uses the finite-$\lambda$ estimator only in the moderate regime $1-\lambda\le\sqrt n$, where the explicit $1-\lambda$ term can be absorbed into the statistical error.
In the complementary Rawls-like tail, it switches to the Rawls recommender.
This switch is justified by a deterministic approximation inequality showing that $G_\lambda$ is uniformly close to $G_{-\infty}$ when $\lambda$ is very negative (\Cref{lem:rawls_approx}).
Thus, the finite-$\lambda$ estimator is used exactly where its concentration bound is effective, while the Rawls estimator controls the ill-conditioned tail.
This yields a fixed-$\lambda$ optimization guarantee whose leading rate is independent of the fairness level.

A related but distinct challenge arises when the fairness level is chosen after exploration, rather than fixed in advance.
This leads us to study PAC guarantees that hold uniformly over $\lambda\in[-\infty,0]$.
Here, the issue is no longer only to estimate one fixed objective $G_\lambda$, for some $\lambda$ fixed in advance.
Rather, after a single exploration phase, the learner must produce $\lambda$-dependent recommendations whose guarantees hold \emph{simultaneously} for all fairness levels.
We handle this by discretizing the fairness parameter and proving a single high-probability event on which all grid recommendations are accurate.
The key step that makes this discretization valid is a Lipschitz regularity property of the H\"older means with respect to the fairness parameter (\Cref{lem:M_lipschitz_in_alpha}).
This lets us transfer accuracy from a finite fairness grid to every $\lambda$ in the moderate range ($1-\lambda \le \sqrt{n}$), while the Rawls-approximation argument handles the far negative tail.

The same exploration routine then yields regret guarantees through the explore-then-commit strategy of \Cref{algo:etc} (\Cref{thm:regret_etc}), with a tuning that depends only on the horizon.

The regret lower bound (\Cref{thm:regret_lower_bound}) poses a different kind of challenge.
One cannot simply write down an abstract partial-monitoring game and import a known lower bound, because both the reward and the feedback must be induced by bilateral-trade instances: seller and buyer valuation distributions, posted prices, and threshold feedback.
The hard instance
(where two close distributions of $(S,B)$ are possible)
must therefore be engineered so that the induced $G_{\lambda}$ have separated optima while the feedback reveals the underlying distribution only through prices that are themselves suboptimal.
We construct such instances in Appendix~\ref{app:regret_lower_bound}.
The construction creates a revealing-action structure: to learn on which side the optimum lies, the learner must repeatedly use informative prices and pay a fixed per-round opportunity cost.
The resulting lower bound is proved through an axiomatic template in Appendix~\ref{app:lower_bound_template}, which isolates the information-theoretic mechanism behind the $T^{2/3}$ rate (\Cref{thm:t23-template}).
The same hard instances, constructed in Appendix~\ref{app:regret_lower_bound} for \Cref{thm:regret_lower_bound}, also yield the fixed-confidence PAC lower
bound of \Cref{thm:pac_lower_bound}.
The reduction, formalized in \Cref{thm:pac-template}, turns any PAC algorithm with a
distribution-independent stopping rule into a test between the two environments, while the stopped-history KL bound in \Cref{lem:pac-kl} controls the available information by the expected stopping time.


\subsection{Related Work}
\label{s:related_works}
Starting from the impossibility result of \citet{myerson1983efficient}, a large literature has studied approximately efficient or sample-based mechanisms for bilateral trade and related two-sided settings; see, e.g., \citet{BlumrosenM16,Colini-Baldeschi16,Colini-Baldeschi17,brustle2017approximating,babaioff2020bulow,colini2020approximately,blumrosen2021almost,dutting2021efficient,DengMSW21,kang2022fixed,deng2025approximately,hajiaghayi2025gains}.
This line of work is offline in nature: the valuation distributions are known, sampled from, or accessed before the mechanism is chosen, and the goal is to design or approximate a truthful mechanism rather than to learn from sequential market feedback.

To the best of our knowledge, the closest prior work that investigated fairness in this offline mechanism-design literature is \citet{babaioff2025efficiencyEC}.
In an offline Bayesian mechanism-design setting with \emph{known} valuation distributions, they study the efficiency loss induced by imposing a Kalai-Smorodinsky-type requirement on truthful mechanisms.
Their requirement is imposed on expected utilities: the mechanism equalizes the ratios $U_S(M)/U_S^\star$ and $U_B(M)/U_B^\star$, where $U_S(M)$ (resp. $U_B(M)$) is the seller's (resp. buyer's) expected utility under mechanism $M$, and $U_S^\star$ (resp. $U_B^\star$) is the maximal expected utility attainable by optimizing for the seller (resp. buyer) alone.
Thus, the normalizing benchmarks are unilateral counterfactual claims: each trader is compared to what could be achieved by optimizing for that trader while disregarding the other trader's payoff, although the gains being divided are competing shares of the same surplus.
The resulting constraint equalizes relative concessions from these one-sided claims.
They then compare the resulting mechanisms with the second-best gain-from-trade benchmark, namely the maximum expected gain from trade achievable by truthful mechanisms.
By contrast, our objectives assign a common gain-equivalent fairness score to the realized seller-buyer gains in each trade, and take its expectation under arbitrary \emph{unknown} valuation distributions as the learning objective: by interacting with the traders posting prices and from threshold feedback alone, the platform must learn how to maximize the objective uniformly over the valuation distributions.

Online bilateral trade as a regret-minimization problem was initiated by \citet{cesa2021bilateralEC} and further developed in \citet{cesa2024bilateralMOR}.
These works depart from the classical Bayesian mechanism-design viewpoint where the seller and buyer distributions are assumed to be known, and instead study repeated posted-price mechanisms in which the learner must infer through sequential interactions with traders the distributional features relevant to the efficiency objective, which is again the \emph{gain from trade} from the offline literature: if a price $p$ is posted to a seller-buyer pair with valuations $S,B$, trade occurs when $S\le p\le B$, and the realized gain from trade is
$
    (p-S)+(B-p)=B-S
$
on the trade event, and $0$ otherwise.
The benchmark against which the learner is compared is then the ex-ante optimal fixed posted price for this objective, namely the price maximizing the expected gain from trade under the true unknown valuation distributions.
They obtain minimax-optimal worst-case regret guarantees against this benchmark, uniformly over classes of independent seller and buyer valuation distributions satisfying a bounded-density assumption.

Subsequent work studied several variants of the problem, including smoothed-adversary models with weak budget-balance constraints \citep{cesa2023smoothT34COLT,cesa2024smoothT34JMLR}, $\alpha$-regret in adversarial settings \citep{azar2024alphaAI}, relaxations of budget-balance requirements \citep{bernasconi2024globalSTOC,chen2026tightICALP,lunghi2026betterSODA,lunghi2026strongerArXiv}, profit-maximization objectives for general incentive-compatible mechanisms \citep{digregorio2025profitFOCS} and for fixed-price mechanisms under minimal one-bit feedback \citep{castiglioni2026sampleSTOC}, and contextual settings \citep{gaucher2025featureICLR,coccia2026nonparametricICLR,cosson2026contextualArXiv}.

A number of related online-learning models study neighboring two-sided market problems.
These include online brokerage, where traders may act as sellers or buyers depending on market conditions, under both gain-from-trade and trading-volume objectives \citep{bolic2024onlineAAMAS,bachoc2025parcnxtICML,bachoc2025nonparcnxtAISTATS,cesari2025volumeICLR,zhao2026bilateralArXiv}, as well as the repeated mediated newsvendor problem \citep{bolic2025newsvendorNeurIPS}, where traded quantities vary over a continuum.
While these models may differ in market structure, learning objective, and feedback, they share with bilateral trade the broader theme of learning to intermediate between two sides of a market under limited information.

Fairness considerations in online bilateral trade were introduced by \citet{bachoc2024fairNeurIPS}, where the authors proposed the \emph{fair gain from trade} objective.
Unlike the usual gain from trade, which sums the traders' realized net gains, their fair objective uses the Rawlsian score given by the \emph{minimum} of the seller's and buyer's realized gains.
They showed that optimizing the usual gain-from-trade objective is not a reliable surrogate for fairness: there are stochastic instances on which algorithms with no regret for gain from trade incur linear regret for fair gain from trade.
This separation motivates studying fair objectives in bilateral trade as a learning problem in its own right, rather than expecting fairness to emerge as a by-product of efficiency maximization.

On the technical side, the main positive result of \citet{bachoc2024fairNeurIPS} under independent seller and buyer valuations is based on their Convolution Lemma, which we recall for completeness in \Cref{lem:convolution_lemma}.
For the Rawlsian objective, this identity expresses the expected fair gain from trade in terms of the seller cdf and the buyer survival function, thereby making the unobserved reward estimable from threshold feedback.
They use this lemma to devise an explore-then-commit-type algorithm: during the exploration phase, the learner collects threshold-feedback observations at grid prices, recombines independent observations to build discrete convolution estimates, and then commits to an estimated best price.
This yields $\widetilde O(T^{2/3})$ regret, with a matching lower bound up to logarithmic factors.
Interestingly, the bounded-density assumptions needed for the gain-from-trade guarantees of \citet{cesa2024bilateralMOR} can be removed for the Rawlsian fair objective due to its Lipschitz regularity.

Our work compares with \citet{bachoc2024fairNeurIPS} along three main dimensions.
First, we move from the single Rawlsian objective to the full Rawls-to-Nash family of H\"older-mean fairness objectives, including the harmonic and Nash objectives as special cases.
Second, we provide an axiomatic foundation for this objective family through an equal-gain-equivalent interpretation, scale covariance, and zero fairness for one-takes-all splits, which in particular identifies the Rawlsian objective as a limiting endpoint of the family.
Third, at the technical level, the problem becomes significantly more complex.
Indeed, while the Rawlsian reconstruction of \citet{bachoc2024fairNeurIPS} is one-dimensional, the finite-order objectives require a two-dimensional reconstruction through a singular kernel.
This substantially complicates the statistical structure of the estimation problem: the natural estimators become rectangular double sums of row-column dependent terms, with singular-kernel weights that explode as the discretization mesh shrinks, rather than one-dimensional anti-diagonal sums of uniformly bounded independent terms as in \citet{bachoc2024fairNeurIPS}
(recall \Cref{s:techniques_and_challenges} for more details).
Interestingly, unlike in the gain-from-trade analysis of \citet{cesa2024bilateralMOR} but similarly to the Rawlsian analysis of \citet{bachoc2024fairNeurIPS}, bounded-density assumptions play no role in our upper bounds: the relevant reward functions are $1/2$-H\"older uniformly over the fairness level.
In this sense, we believe that our work can be seen as a substantial extension of the Rawlsian theory of \citet{bachoc2024fairNeurIPS}.


\section[The Kernel Reconstruction Lemma]{The Kernel Reconstruction Lemma}

The main obstacle is that the platform never observes the realized fair reward.
For Rawlsian fairness, \cite{bachoc2024fairNeurIPS} overcame this through a one-dimensional convolution identity, which we recall here, and whose proof can be found for completeness in Appendix~\ref{app:kernel_reconstruction}.

\begin{restatable}[Convolution Lemma]{lemma}{convolutionlemma}
\label{lem:convolution_lemma}
    For every $p,s,b \in [0,1]$,
	\begin{align*}
		\gft_{-\infty}(p,s,b)
		&=
		\int_0^{\min(p,1-p)} \I\{s \le p-u  \} \I\{p+u\le b\} \dif u.
	\end{align*}
In particular, if $S,B$ are two $[0,1]$-valued independent random variables, with cdf $F_S$ and survival function $F_B^\circ$, respectively, for every $p \in [0,1]$, it holds
\begin{align*}
		G_{-\infty}(p)
        =
        \E\bsb{\gft_{-\infty}(p,S,B)}
		&=
		\int_0^{\min(p,1-p)} F_S(p-u) F_B^\circ(p+u) \dif u.
	\end{align*}
\end{restatable}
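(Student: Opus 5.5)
The plan is to prove the pointwise identity first and then obtain the expectation formula from Fubini--Tonelli and independence.

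\emph{Pointwise identity.} Fix $p,s,b\in[0,1]$. For $u\ge 0$, the integrand $\I\{s\le p-u\}\I\{p+u\le b\}$ equals $1$ exactly when $u\le p-s$ and $u\le b-p$, that is, when $u\le \min(p-s,\,b-p)$. Two cases arise.

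\textbf{Case 1: $s\le p\le b$.} Both $p-s$ and $b-p$ are nonnegative. Moreover $p-s\le p$ and $b-p\le 1-p$, because $s\ge 0$ and $b\le 1$. Hence $\min(p-s,b-p)\le \min(p,1-p)$, so the upper limit of integration does not truncate the set where the integrand is $1$. The integral therefore equals the length of $[0,\min(p-s,b-p)]$, namely $\min(p-s,b-p)$. This is $\gft_{-\infty}(p,s,b)$ by definition.

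\textbf{Case 2: no trade.} Either $s>p$ or $b<p$. Then one of the indicators is $0$ for every $u\ge 0$, since $p-u\le p<s$ or $p+u\ge p>b$. The integral is $0$, which again matches $\gft_{-\infty}(p,s,b)=0$.

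\emph{Expectation formula.} Substitute $(S,B)$ into the pointwise identity and take expectations. The integrand is nonnegative and jointly measurable in $(u,s,b)$, so Tonelli's theorem lets us exchange $\E$ and $\int\dif u$. For each fixed $u$, independence of $S$ and $B$ gives
\[
\E\bsb{\I\{S\le p-u\}\I\{p+u\le B\}}=\Pb[S\le p-u]\,\Pb[p+u\le B]=F_S(p-u)\,F_B^\circ(p+u).
\]
The arguments $p-u$ and $p+u$ stay in $[0,1]$ because $u\le\min(p,1-p)$, so $F_S$ and $F_B^\circ$ are evaluated only on their stated domain.

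The only delicate point is Case 1: one must check that the cap $\min(p,1-p)$ on the integration range never cuts off part of $[0,\min(p-s,b-p)]$. This is exactly where the assumption that valuations lie in $[0,1]$ is used. Endpoint and equality conventions, such as $u=p-s$, affect only a set of measure zero.
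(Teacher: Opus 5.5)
Your proof is correct and follows the paper's argument. The paper likewise dismisses the no-trade cases as immediate, writes $\min(p-s,b-p)$ as the integral of $\I\{u\le p-s\}\I\{u\le b-p\}$, and observes that the integrand vanishes for $u>\min(p,1-p)$; this is the same fact as your check that the cap does not truncate. It then concludes with Tonelli and independence, as you do.
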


For finite $\lambda$, the previous formula no longer holds, and in order to reconstruct from the corresponding threshold feedback the $\lambda$-fair gain from trade we need to take a 2-dimensional integral involving a singular kernel, whose definition we now provide.
\begin{definition}
    For $\lambda \in (- \infty,0]$ and $x,y \ge 0$, define
\[
	H_\lambda(x,y) \coloneqq 
	\begin{cases}
		\frac{1-\lambda}{4}\cdot\lrb{\frac{M_\lambda(x,y)}{M_0(x,y)}}^{1-2\lambda} \frac{1}{M_0(x,y)}, & \textrm{if $x>0$ and $y>0$},\\[3pt]
		0, & \textrm{otherwise}.
	\end{cases}
\]
\end{definition}
We call $H_\lambda$ singular because it is a nonnegative integrable kernel,
\emph{unbounded} near the corner where both kernel arguments vanish.
Having the singular kernel $H_\lambda$, we can now provide an analogous reconstruction formula to \Cref{lem:convolution_lemma}, but for the finite $\lambda$ case.

\begin{restatable}[Kernel Reconstruction Lemma]{lemma}{kernelreconstructionlemma}
\label{lem:kernel_reconstruction_lemma}
    Let $p,s,b \in [0,1]$.
    For each $\lambda \in (- \infty,0]$ it holds that
	\[
	\gft_\lambda(p,s,b)
	=
    \int_{0}^p\int_{p}^1
	H_\lambda(p-v,w-p) \I\{s\le v\}\I\{ w \le b\} \dif w\dif v.
	\]
    In particular, if $S,B$ are two $[0,1]$-valued independent random variables, with cdf $F_S$ and survival function $F_B^\circ$, respectively, for every $p \in [0,1]$ and every $\lambda \in (-\infty,0]$, it holds
    \begin{align*}
		G_{\lambda}(p)
        =
        \E\bsb{\gft_{\lambda}(p,S,B)}
		&=
		\int_{0}^p\int_{p}^1
	    H_\lambda(p-v,w-p) F_S(v)F_B^\circ(w) \dif w\dif v.
	\end{align*}
\end{restatable}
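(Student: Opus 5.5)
The plan is to reduce the pointwise identity to the statement that $H_\lambda$ is the mixed second derivative of $M_\lambda$, and then integrate. First I will simplify the indicators in the double integral.

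\emph{Indicator reduction.} For $v\in[0,p]$, the indicator $\I\{s\le v\}$ vanishes identically if $s>p$. For $w\in[p,1]$, the indicator $\I\{w\le b\}$ vanishes identically if $b<p$. So whenever $s\le p\le b$ fails, the right-hand side is $0$, and so is $\gft_\lambda(p,s,b)$.

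\emph{Change of variables.} Assume $s\le p\le b$ and substitute $x=p-v$, $y=w-p$. Since $p-s\le p$ and $b-p\le 1-p$, the indicators restrict the domain to $[0,p-s]\times[0,b-p]$. The right-hand side becomes
\[
\int_0^{p-s}\int_0^{b-p} H_\lambda(x,y)\,\dif y\,\dif x .
\]
It therefore suffices to prove that, for all $a,c\ge 0$,
\[
\int_0^a\int_0^c H_\lambda(x,y)\,\dif y\,\dif x = M_\lambda(a,c).
\]
If $ac=0$, both sides are $0$ by the boundary conventions, which covers the cases $p=s$ or $p=b$.

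\emph{Key computation.} For $x,y>0$ and $\lambda<0$, I will differentiate $M=((x^\lambda+y^\lambda)/2)^{1/\lambda}$. This gives $\partial_x M=\tfrac12 M^{1-\lambda}x^{\lambda-1}$, and differentiating in $y$,
\[
\partial_y\partial_x M=\tfrac{1-\lambda}{4}\,M^{1-2\lambda}x^{\lambda-1}y^{\lambda-1}.
\]
Writing $x^{\lambda-1}y^{\lambda-1}=M_0^{2\lambda-2}$, this equals $\tfrac{1-\lambda}{4}(M_\lambda/M_0)^{1-2\lambda}M_0^{-1}=H_\lambda$. For $\lambda=0$, one checks directly that $\partial_x\partial_y\sqrt{xy}=1/(4\sqrt{xy})$, which again matches $H_0$. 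Thus $H_\lambda=\partial^2 M_\lambda/\partial x\partial y$ on $(0,\infty)^2$, and this is a smooth function there.

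\emph{Handling the singularity.} This is the only delicate step, since $H_\lambda$ is unbounded near the axes and the corner.
\begin{itemize}[nosep]
\item On a rectangle $[\varepsilon,a]\times[\eta,c]$ with $\varepsilon,\eta>0$, everything is smooth, so the fundamental theorem of calculus applied twice gives
\[
\int_\varepsilon^a\int_\eta^c H_\lambda
= M_\lambda(a,c)-M_\lambda(\varepsilon,c)-M_\lambda(a,\eta)+M_\lambda(\varepsilon,\eta).
\]
\item Let $\varepsilon,\eta\downarrow0$. Since $H_\lambda\ge0$, monotone convergence identifies the left-hand side in the limit with the full integral, which is in particular finite.
\item On the right-hand side, continuity of $M_\lambda$ on $[0,\infty)^2$ (noted in the paper's footnote) together with $M_\lambda(0,\cdot)=M_\lambda(\cdot,0)=0$ leaves exactly $M_\lambda(a,c)$.
\end{itemize}
This proves the pointwise identity.

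\emph{Expectation.} Replace $(s,b)$ by $(S,B)$ and take expectations. The integrand is nonnegative, so Tonelli's theorem allows exchanging expectation and integral. By independence of $S$ and $B$, $\E[\I\{S\le v\}\I\{w\le B\}]=F_S(v)F_B^\circ(w)$, which yields the stated formula for $G_\lambda(p)$.
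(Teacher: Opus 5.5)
Your proposal is correct and follows essentially the same route as the paper. The paper likewise proves $\partial^2_{xy}M_\lambda=H_\lambda$ and obtains $M_\lambda(x,y)=\int_0^x\int_0^y H_\lambda$ by applying the fundamental theorem of calculus on rectangles bounded away from the axes, then passing to the limit by monotone convergence and boundary continuity, before substituting $\alpha=p-v$, $\beta=w-p$ and applying Tonelli together with independence. The only organizational difference is that the paper proves this integral representation as a separate lemma, whereas you inline it.
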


We defer the proof of this result to Appendix~\ref{app:kernel_reconstruction}.
We remark that the Rawlsian convolution identity is recovered as the singular limit of this two-dimensional reconstruction: as $\lambda\to-\infty$, the kernel measures concentrate on the Rawlsian anti-diagonal and converge weakly to the corresponding line measure.
This singular-convergence picture is made precise in \Cref{remark:kernel_approximation} and \Cref{fig:kernel-heatmaps} in Appendix~\ref{app:kernel_reconstruction}.

\section{A Universal Exploration Procedure}
\label{s:pac}

\subsection{Estimators}

We now present the exploration procedure used to gather information, which, interestingly, is the same for any fairness level $\lambda$.
Given an exploration length $n$, the learner posts a randomized grid of prices, observes the two threshold bits, and stores the resulting sample.

\begin{algorithm}[H]
\DontPrintSemicolon
\textbf{Input:} exploration length $n \ge 4$\;
\For{$t=1,\dots,n$}{
Sample and post price
$
    P_t
\sim
    \mathrm{Unif}\lrb{\lsb{\frac{t-1}{n},\frac{t}{n}}}
$\;
Observe
$
    Y_t
\coloneqq
    \I\{S_t\le P_t\}
$
and
$
    Z_t
\coloneqq
    \I\{P_t\le B_t\}
$.
}
\Return $\lrb{P_t,Y_t,Z_t}_{t \in [n]}$\;
\caption{\textrm{Exploration Procedure}}
\label{algo:collect_samples}
\end{algorithm}

Based on the feedback collected by Algorithm~\ref{algo:collect_samples}, we now define the two estimators suggested by the reconstruction formulas above. The finite-$\lambda$ estimator is obtained by discretizing the two-dimensional kernel formula of \Cref{lem:kernel_reconstruction_lemma}, while the Rawls estimator is obtained by discretizing the one-dimensional convolution formula of \Cref{lem:convolution_lemma}. The finite-$\lambda$ estimator is defined on the grid $
    \lcb{\frac{2}{n},\dots,\frac{n-2}{n}}
$ (of values of $p$)
whereas the Rawls estimator is defined on the grid
$
    \lcb{\frac{1}{n},\dots,\frac{n-1}{n}}
$.
We also define the corresponding recommended prices, selected by maximizing these estimators on their respective grids.

\begin{definition}[Finite-$\lambda$ Estimator]
\label{def:estimator_G}
For every $\lambda\in(-\infty,0]$ and every $k\in\{2,\dots,n-2\}$, define
\[
    \widehat{G}_{\lambda,k}(n)
\coloneqq
    \frac{1}{n^2}
    \sum_{i=1}^{k-1}
    \sum_{j=k+2}^{n}
    H_\lambda\lrb{\frac{k}{n}-P_i,P_j-\frac{k}{n}} Y_iZ_j.
\]
We then define
\[
    \widehat{k}_\lambda(n)
\in
    \argmax_{k\in\{2,\dots,n-2\}}
    \widehat{G}_{\lambda,k}(n),
\qquad
    \widehat{p}_\lambda(n)
\coloneqq
    \frac{\widehat{k}_\lambda(n)}{n}.
\]
\end{definition}

\begin{definition}[Rawls Estimator]
\label{def:estimator_G_rawls}
For every $k\in\{1,\dots,n-1\}$, define
\[
    \widehat{G}_{-\infty,k}(n)
\coloneqq
    \frac{1}{n}
    \sum_{a=0}^{\min\lrb{k,n-k}-1}
    Y_{k-a}Z_{k+a+1}.
\]
We then define
\[
    \widehat{k}_{-\infty}(n)
\in
    \argmax_{k\in\{1,\dots,n-1\}}
    \widehat{G}_{-\infty,k}(n),
\qquad
    \widehat{p}_{-\infty}(n)
\coloneqq
    \frac{\widehat{k}_{-\infty}(n)}{n}.
\]
\end{definition}

Neither estimator is exactly unbiased for its grid target, although the source of the bias is different in the two cases.

The Rawls estimator uses the full collection of anti-diagonal cells suggested by \Cref{lem:convolution_lemma}.
Nevertheless, as illustrated in \Cref{lem:rawls_bias}, it has a small one-sided cell-averaging bias because
in the (Rawls) convolution lemma, the same integration variable $u$ appears twice, and for the estimator, two different independent random prices correspond to these two occurrences of $u$. We refer to the proof of \Cref{lem:rawls_bias} for details.

For finite $\lambda$, the situation is different.
The full rectangular discretization of \Cref{lem:kernel_reconstruction_lemma}, with $i=1,\dots,k$ and $j=k+1,\dots,n$
(instead of $i=1,\dots,k-1$ and $j=k+2,\dots,n$ in \Cref{def:estimator_G}), would be unbiased for $G_\lambda(k/n)$.
Indeed, conditionally on the posted prices, independence across rounds gives
$
    \E\bsb{Y_iZ_j\mid P_i,P_j}
    =
    F_S(P_i)F_B^\circ(P_j)
$, for any $i\le k,\ j\ge k+1$, and averaging over the randomized grid prices 
provides the integral in \Cref{lem:kernel_reconstruction_lemma}
restricted to
the rectangle
$
    \lsb{\frac{i-1}{n},\frac{i}{n}}
    \times
    \lsb{\frac{j-1}{n},\frac{j}{n}}
$.
Since these rectangles form a partition of
$
    [0,k/n]\times[k/n,1]
$,
up to null boundaries, the full rectangular estimator would have expectation $G_\lambda(k/n)$.

However, this full rectangular estimator is hard to tame: under some valuation distributions allowed by our model, it may even have infinite variance.
The problem arises because this alternative estimator contains boundary cells where the arguments of the singular kernel $H_\lambda$ can be arbitrarily close to zero, and hence the corresponding kernel-weighted summands have unbounded range and can have non-integrable second moments.

For this reason, we use the truncated estimator in \Cref{def:estimator_G}: by giving up exact unbiasedness, we pay a one-sided bias of order $n^{-1/2}$ (as quantified in \Cref{lem:finite_bias}), and in return obtain a concentration radius of the same algebraic order, up to logarithmic factors and some dependence on $\lambda$.
Specifically, after the truncation, the kernel-weighted summands admit finite deterministic envelopes whose row-column quadratic sums can be controlled (see \Cref{lem:diag_quadratic,lem:offdiag_quadratic}) in a way that allows us to apply our rectangular McDiarmid inequality (\Cref{prop:rectangular:bounded:differences}), which in turn yields the desired concentration bound in \Cref{prop:moderate_concentration}.

\subsection{Optimization Error Bounds}

In this section, we present the optimization error bounds that we obtain using the recommended prices defined in \Cref{def:estimator_G,def:estimator_G_rawls}.
Here the fairness level $\lambda$, the sample size $n$, and the confidence parameter $\delta$ are given.

\begin{restatable}[Finite-$\lambda$ Optimization Error Bound]{theorem}{finitekernelhighproboptimization}
\label{prop:finite_lambda_high_probability_optimization}
For every $\lambda\in(-\infty,0]$, every $\delta\in(0,1)$, and every $n\in\N$ with $n\ge4$, if Algorithm~\ref{algo:collect_samples} is run with exploration length $n$, then
\[
    \Pb\lsb{\;
        G_\lambda^\star
        -
        G_\lambda\brb{\widehat p_\lambda(n)}
        \le
        \lrb{62(1+\ln n)
    +
    15\frac{1-\lambda}{\sqrt n}}\sqrt{\frac{\ln(2n/\delta)}{n}}^{\phantom{\lambda}}
    }
    \ge
    1-\delta.
\]
\end{restatable}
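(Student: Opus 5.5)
The plan is the standard three-term decomposition for an argmax recommender: discretization error, bias, and uniform concentration over the finite grid. Let $k^\star\in\{2,\dots,n-2\}$ maximize $G_\lambda(k/n)$ over the grid, and let $p^\star$ be a maximizer of $G_\lambda$ on $[0,1]$. Then
\[
G_\lambda^\star-G_\lambda(\widehat p_\lambda(n))
=
\brb{G_\lambda^\star-G_\lambda(k^\star/n)}
+
\brb{G_\lambda(k^\star/n)-G_\lambda(\widehat k_\lambda/n)} .
\]
The first term is a deterministic discretization error. By \Cref{lem:G_is_Holder}, $G_\lambda$ is $\tfrac12$-H\"older uniformly in $\lambda$, and $p^\star$ lies within $O(1/n)$ of a point of $\{2/n,\dots,(n-2)/n\}$. (Near the endpoints one can also use that $G_\lambda$ vanishes at $0$ and $1$ and is small near them.) So this term is $O(n^{-1/2})$, which is absorbed into the bound.

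For the second term, write $G_{\lambda,k}\coloneqq G_\lambda(k/n)$. Using $\widehat G_{\lambda,\widehat k}\ge \widehat G_{\lambda,k^\star}$, we get
\[
G_{\lambda,k^\star}-G_{\lambda,\widehat k}
\le
\brb{G_{\lambda,k^\star}-\E\widehat G_{\lambda,k^\star}}
+
2\max_k\babs{\widehat G_{\lambda,k}-\E\widehat G_{\lambda,k}}
+
\brb{\E\widehat G_{\lambda,\widehat k}-G_{\lambda,\widehat k}} .
\]
The last bracket is nonpositive, because the truncated estimator drops nonnegative terms from an unbiased rectangular sum. The bias is therefore one-sided: $0\le G_{\lambda,k}-\E\widehat G_{\lambda,k}\le O(n^{-1/2})$ by \Cref{lem:finite_bias}. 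This one-sidedness is exactly why only the bias at $k^\star$ enters. If the bias lemma carries a $(1-\lambda)$ factor, it would produce the $\frac{1-\lambda}{\sqrt n}$ contribution. Otherwise that contribution comes from the concentration step.

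The main work is the uniform concentration term. For each fixed $k$, I would invoke \Cref{prop:moderate_concentration}, which rests on the rectangular McDiarmid inequality \Cref{prop:rectangular:bounded:differences}. It is applied to the row variables $(P_i,Y_i)$, $i<k$, and column variables $(P_j,Z_j)$, $j>k+1$; these are independent across rounds. The deterministic envelopes $c_{ij}=n^{-2}\sup H_\lambda$ over the cell $[(i-1)/n,i/n]\times[(j-1)/n,j/n]$ are finite thanks to the truncation.

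The quadratic row and column envelope sums are split into two regions:
\begin{itemize}
\item In the diagonal strip (cells adjacent to $k$), I would use \Cref{lem:diag_quadratic}. It controls the sums via $H_\lambda(x,y)\lesssim(1-\lambda)/\sqrt{xy}$, which is where the $(1-\lambda)$ dependence arises.
\item In the off-diagonal region, I would use \Cref{lem:offdiag_quadratic} together with \Cref{cor:kernel_imbalance_bound}. These produce harmonic-type sums of order $\ln n$, giving the $(1+\ln n)$ factor.
\end{itemize}
This yields a deviation of $(c_1(1+\ln n)+c_2(1-\lambda)/\sqrt n)\sqrt{\ln(2/\delta')/n}$ for each $k$. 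A union bound over the at most $n$ grid points, with $\delta'=\delta/n$, gives $\ln(2n/\delta)$. I expect this envelope-sum step to be the main obstacle, since it requires the singular kernel estimates to be sharp enough to yield $n^{-1/2}$ rather than a worse rate.

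Finally, all the contributions are collected, and the discretization and bias terms are absorbed into the leading constant. This uses $\ln(2n/\delta)\ge\ln 8>1$ for $n\ge4$, and tracking the constants gives $62$ and $15$.
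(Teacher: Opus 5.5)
Your proposal is correct and follows the paper's proof essentially step for step. The paper also uses the $\tfrac12$-H\"older discretization bound (giving $1/\sqrt n$, with the boundary handled by $G_\lambda(0)=G_\lambda(1)=0$), the argmax decomposition with the one-sided bias of \Cref{lem:finite_bias}, and \Cref{prop:moderate_concentration} with a union bound over the grid. Resolving your hedge: the bias lemma is $\lambda$-free ($\sqrt2/\sqrt n$), so the $(1-\lambda)/\sqrt n$ term comes only from the diagonal strip. Your use of one-sidedness to discard the bias at $\widehat k$ is slightly tighter than the paper, which charges $2\sqrt2/\sqrt n$, and both versions fit under the constant $62$.
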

\begin{proofsketch}
The proof uses a bias bound for $\widehat{G}_{\lambda,k}(n)$ (\Cref{lem:finite_bias}) and a control of its fluctuations (\Cref{prop:moderate_concentration}) relying on a McDiarmid concentration inequality for rectangular double sums (\Cref{prop:rectangular:bounded:differences}). In this latter inequality, a row- (resp. column-) sum of squared column- (resp. row-) sums needs to be bounded. It is divided into a diagonal strip term
(\Cref{lem:diag_quadratic})
and an off-diagonal one (\Cref{lem:offdiag_quadratic}). The diagonal term yields the $(1-\lambda)/\sqrt n$ contribution.
We defer the full proof to Appendix~\ref{app:finite:lambda:optim:guarantee}.
\end{proofsketch}

We now present the optimization error bound for the Rawls objective.

\begin{restatable}[Rawls Optimization Error Bound]{theorem}{rawlshighproboptimization}
\label{prop:rawls_high_probability_optimization}
For every $\delta\in(0,1)$ and every $n\in\N$ with $n\ge4$, if Algorithm~\ref{algo:collect_samples} is run with exploration length $n$, then
\[
    \Pb\lsb{\;
        G_{-\infty}^\star
        -
        G_{-\infty}\brb{\widehat p_{-\infty}(n)}
        \le
        2\sqrt{\frac{\ln(2n/\delta)}{n}}^{\phantom{\lambda}}
    }
    \ge
    1-\delta.
\]
\end{restatable}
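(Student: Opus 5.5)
The plan is to split the optimization error at the recommended grid point into a bias term, a fluctuation term, and a discretization term, and to control each one separately. The key simplification compared with the finite-$\lambda$ case is that, for a fixed $k$, the summands $Y_{k-a}Z_{k+a+1}$ in $\widehat G_{-\infty,k}(n)$ use pairwise disjoint time indices. Indeed, the indices $k-a$ for $a=0,\dots,\min(k,n-k)-1$ are distinct and all at most $k$. The indices $k+a+1$ are distinct and all at least $k+1$. Since rounds are independent (prices are drawn independently, and valuations are i.i.d.\ and independent across rounds), the summands are independent $\{0,1\}$-valued variables.

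\textbf{Step 1 (concentration).} Hoeffding's inequality applied to the sum of at most $n/2$ independent terms in $[0,1]$, normalized by $1/n$, gives for each fixed $k$
\[
\Pb\Bsb{\babs{\widehat G_{-\infty,k}(n)-\E\widehat G_{-\infty,k}(n)}>r}\le 2e^{-2n r^2}.
\]
A union bound over the at most $n$ grid points, with $r=\sqrt{\ln(2n/\delta)/(2n)}$, makes all deviations at most $r$ simultaneously with probability at least $1-\delta$.

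\textbf{Step 2 (bias).} Conditioning on prices, $\E[Y_{k-a}Z_{k+a+1}] = \E[F_S(P_{k-a})]\,\E[F_B^\circ(P_{k+a+1})]$, where $P_{k-a}$ is uniform on $[\frac{k-a-1}{n},\frac{k-a}{n}]$ and $P_{k+a+1}$ is uniform on $[\frac{k+a}{n},\frac{k+a+1}{n}]$. Compare this with the cell $u\in[\frac an,\frac{a+1}n]$ of the convolution integral in \Cref{lem:convolution_lemma} at $p=k/n$. The integrand $F_S(p-u)F_B^\circ(p+u)$ is nonincreasing in $u$. Over that cell, $F_S(p-u)\le F_S(P_{k-a})$ fails only through coupling: the two independent uniforms replace a single shared $u$. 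Monotonicity still gives one-sided sandwiches. The cell integral is at least $\frac1n F_S(\frac{k-a-1}n)F_B^\circ(\frac{k+a+1}n)$, and it is at most $\frac1n F_S(\frac{k-a}n)F_B^\circ(\frac{k+a}{n})$; the same bounds hold for the estimator summand. A telescoping argument over $a$ then bounds the discrepancy by $O(1/n)$. Alternatively, one can appeal directly to \Cref{lem:rawls_bias}, which quantifies this one-sided cell-averaging bias, presumably as $0\le G_{-\infty}(k/n)-\E\widehat G_{-\infty,k}(n)\le 1/n$ or similar. The $1/n$ term is absorbed into the $\sqrt{\cdot}$ rate because $\ln(2n/\delta)\ge \ln 8>1$ for $n\ge 4$.

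\textbf{Step 3 (discretization and conclusion).} By \Cref{lem:convolution_lemma}, $G_{-\infty}$ is $1$-Lipschitz: shifting $p$ by $h$ changes the Rawls gain by at most $|h|$ pointwise. Hence for the maximizer $p^\star$ there is a grid point $k^\star/n$ with $G_{-\infty}(k^\star/n)\ge G^\star_{-\infty}-1/n$; the endpoints $0,1$ give $G=0$ and can be handled trivially. On the good event, writing $\widehat k=\widehat k_{-\infty}(n)$,
\[
G_{-\infty}(\widehat k/n)\ge \widehat G_{-\infty,\widehat k}-r\ge \widehat G_{-\infty,k^\star}-r\ge G_{-\infty}(k^\star/n)-\text{bias}-2r.
\]
This yields total error at most $2r+\text{bias}+1/n$. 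Since $2r=\sqrt{2}\sqrt{\ln(2n/\delta)/n}$ and the $O(1/n)$ terms total at most $(2-\sqrt2)\sqrt{\ln(2n/\delta)/n}$ when $n\ge4$ (this needs the bias constant to be small, e.g.\ bias $\le 1/n$), the claimed bound with constant $2$ follows.

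The main obstacle is Step 2: getting a bias bound with a small enough constant, and with the favorable sign, so that it fits under the constant $2$. If necessary, I would use the one-sided sign to drop the bias in one of the two comparisons, since only $\E\widehat G_{-\infty,k}(n)\le G_{-\infty}(k/n)+c/n$ at $\widehat k$ and the reverse at $k^\star$ are needed.
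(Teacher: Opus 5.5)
Your proposal matches the paper's proof: the same decomposition into a discretization term (via the $1$-Lipschitz property of $G_{-\infty}$), a one-sided bias term, and two fluctuation terms at $\bar k$ and $\widehat k_{-\infty}(n)$, with Hoeffding applied to the disjoint-index summands $Y_{k-a}Z_{k+a+1}$ and a union bound over $k$. The paper's \Cref{lem:rawls_bias} gives $0\le G_{-\infty}(k/n)-\E[\widehat G_{-\infty,k}(n)]\le\frac{1}{4n}$, with the sign coming from Chebyshev's integral inequality and the constant from Gr\"uss. The total is then $\frac{5}{4n}+\sqrt{2\ln(2n/\delta)/n}\le 2\sqrt{\ln(2n/\delta)/n}$ for every $n\ge4$.

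The only thing to fix is your final bookkeeping. Your sandwich/telescoping argument yields only $\babs{G_{-\infty}(k/n)-\E[\widehat G_{-\infty,k}(n)]}\le 1/n$, without a sign. Even granting the sign, your claim $2/n\le(2-\sqrt2)\sqrt{\ln(2n/\delta)/n}$ fails at $n\in\{4,5\}$ for $\delta$ near $1$ (at $n=4$: $0.5$ versus about $0.42$). This is harmless: for $n\le 80$ the right-hand side of the theorem exceeds $\frac12\ge G_{-\infty}^\star$, so the bound holds trivially, and beyond that even $3/n$ is absorbed. You should either say so or use the sharper $\frac1{4n}$ bias.
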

\begin{proofsketch}
The proof is divided into similar (but much simpler) main steps as the proof of \Cref{prop:finite_lambda_high_probability_optimization}.
In particular,
we address a single sum with summands bounded by one, so
we do not need the rectangular McDiarmid inequality but simply Hoeffding's inequality.
We defer the proof to Appendix~\ref{app:rawls:optim:guarantees}.
\end{proofsketch}

In the error rate for the finite-$\lambda$ case, notice the term $
    \frac{1-\lambda}{\sqrt n}.
$
Thus, for a fixed exploration budget $n$, the finite-$\lambda$ guarantee is uniform in $\lambda$ in the moderate regime
$
    1-\lambda\le\sqrt n
$.
Outside this regime, the explicit dependence on $1-\lambda$ may make the finite-$\lambda$ bound ineffective.

Nevertheless, as $\lambda\to-\infty$, the objective $G_\lambda$ converges uniformly to $G_{-\infty}$.
We make this convergence quantitative and use it to handle the far-negative regime $\lambda\le 1-\sqrt n$: in this range, the Rawls estimator $\widehat G_{-\infty,k}(n)$ can be used as a proxy for $G_\lambda(k/n)$.
We provide the corresponding hybrid price recommendation, together with its optimization-error bound, next.
Importantly, this bound depends only on $n$ and $\delta$, while holding uniformly over the whole range $\lambda\in[-\infty,0]$.

\begin{restatable}[$\lambda$-Free Optimization Error Bound]{theorem}{lambdafreeoptimization}
 
\label{prop:lambda_free_exploration_optimization}
Fix $\lambda\in[-\infty,0]$, $\delta\in(0,1)$, and $n\ge4$.
After running Algorithm~\ref{algo:collect_samples} with exploration length $n$, return
\[
    \widetilde p_\lambda(n)
    \coloneqq
    \begin{cases}
    \widehat p_\lambda(n),
    &
    \lambda\in(-\infty,0]\ \textrm{and}\ 1-\lambda\le\sqrt n,\\
    \widehat p_{-\infty}(n),
    &
    \textrm{otherwise.}
    \end{cases}
\]
Then
\[
    \Pb\lsb{\;
        G_\lambda^\star
        -
        G_\lambda\brb{\widetilde p_\lambda(n)}
        \le
        77\lrb{1+\ln n}
        \sqrt{\frac{\ln(2n/\delta)}{n}}^{\phantom{\lambda}}
    }
    \ge
    1-\delta.
\]
\end{restatable}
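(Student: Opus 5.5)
The proof splits on the two branches in the definition of $\widetilde p_\lambda(n)$.

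\emph{Moderate regime, $\lambda\in(-\infty,0]$ with $1-\lambda\le\sqrt n$.} Here $\widetilde p_\lambda(n)=\widehat p_\lambda(n)$, and we apply \Cref{prop:finite_lambda_high_probability_optimization} directly. Since $(1-\lambda)/\sqrt n\le 1$, its radius is at most
\[
\bigl(62(1+\ln n)+15\bigr)\sqrt{\ln(2n/\delta)/n}.
\]
Using $15\le 15(1+\ln n)$, this is at most $77(1+\ln n)\sqrt{\ln(2n/\delta)/n}$, which is exactly the claimed constant. This suggests the intended proof is this case split with no further slack.

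\emph{Rawls-like regime, $\lambda=-\infty$ or $1-\lambda>\sqrt n$.} Here $\widetilde p_\lambda(n)=\widehat p_{-\infty}(n)$. If $\lambda=-\infty$, \Cref{prop:rawls_high_probability_optimization} gives radius $2\sqrt{\ln(2n/\delta)/n}$ and we are done. Otherwise we invoke the uniform approximation \Cref{lem:rawls_approx}, which I expect to read
\[
\sup_p\,\bigl|G_\lambda(p)-G_{-\infty}(p)\bigr|\le \eta(\lambda),
\]
with $\eta(\lambda)$ of order $1/(1-\lambda)$, possibly times a logarithm. This is plausible: $M_\lambda(x,y)\ge\min(x,y)$, and $M_\lambda(x,y)\le 2^{-1/\lambda}\min(x,y)$, so $M_\lambda-\min\le (2^{1/|\lambda|}-1)\min\le \ln 2/|\lambda|$ on $[0,1]^2$. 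Then the standard three-term decomposition gives
\[
G_\lambda^\star-G_\lambda(\widehat p_{-\infty})\le \bigl(G_{-\infty}^\star-G_{-\infty}(\widehat p_{-\infty})\bigr)+2\eta(\lambda).
\]
To see this, compare $G_\lambda(p^\star_\lambda)\le G_{-\infty}(p^\star_\lambda)+\eta\le G_{-\infty}^\star+\eta$, and $G_\lambda(\widehat p_{-\infty})\ge G_{-\infty}(\widehat p_{-\infty})-\eta$.

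On the event of \Cref{prop:rawls_high_probability_optimization}, which has probability at least $1-\delta$, the first term is at most $2\sqrt{\ln(2n/\delta)/n}$. Since $1-\lambda>\sqrt n$, we have $2\eta(\lambda)\lesssim 1/\sqrt n$. Because $\ln(2n/\delta)\ge\ln 8>1$, this is at most a constant times $\sqrt{\ln(2n/\delta)/n}$. Any logarithmic factor in $\eta$ can be absorbed by the $(1+\ln n)$ factor. With $|\lambda|\ge\sqrt n-1\ge\sqrt n/2$ for $n\ge4$, the total is comfortably below $77(1+\ln n)\sqrt{\ln(2n/\delta)/n}$.

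\emph{Main obstacle.} The only delicate point is the exact form of \Cref{lem:rawls_approx}. Because it is stated earlier, I take it as given and only need to check that its bound, evaluated at $1-\lambda>\sqrt n$, is $O((1+\ln n)/\sqrt n)$ with a constant small enough to fit under $77-2$. The crude Hölder-mean comparison above, giving $\ln 2/|\lambda|\le 2\ln2/\sqrt n$, already fits easily. It can serve as a self-contained fallback if the lemma's constant is unfavourable.

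Both cases hold on a single high-probability event of probability at least $1-\delta$, so no union bound is needed.
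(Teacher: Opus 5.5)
Your proposal is correct and matches the paper's proof. It uses the same three-way case split, with $15(1-\lambda)/\sqrt n\le 15(1+\ln n)$ in the moderate regime. In the far-negative regime it combines the Rawls guarantee with \Cref{lem:rawls_approx}, which the paper applies in its one-sided form ($G_\lambda\ge G_{-\infty}$), so it pays a single $(2^{-1/\lambda}-1)/2\le 3/(2\sqrt n)$ instead of your $2\eta$ and ends at $4\sqrt{\ln(2n/\delta)/n}$. One small slip in your fallback: since $2^{1/|\lambda|}-1\ge \ln 2/|\lambda|$, the step $(2^{1/|\lambda|}-1)\min(x,y)\le \ln 2/|\lambda|$ only holds once you use $\min(x,y)\le 1/2$ on trades and $|\lambda|\ge 1$, both of which are true here.
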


\begin{proofsketch}
The proof splits the parameter range into three regimes.
If $\lambda\in(-\infty,0]$ and $1-\lambda\le\sqrt n$, the algorithm uses the finite-$\lambda$ recommendation $\widehat p_\lambda(n)$, and the claim follows directly from \Cref{prop:finite_lambda_high_probability_optimization}; the moderate-regime condition absorbs the extra $(1-\lambda)/\sqrt n$ term into the $1+\ln n$ factor.
If $\lambda=-\infty$, the algorithm uses the Rawls recommendation $\widehat p_{-\infty}(n)$, and the result follows from \Cref{prop:rawls_high_probability_optimization}.
Finally, if $\lambda\in(-\infty,0]$ but $1-\lambda>\sqrt n$, the finite-$\lambda$ guarantee may be ineffective, so the algorithm again uses the Rawls recommendation.
In this far-negative regime, the objective $G_\lambda$ is uniformly close to the Rawls objective $G_{-\infty}$: \Cref{lem:rawls_approx} gives an approximation error of order $n^{-1/2}$.
Combining this approximation with the Rawls optimization guarantee yields an $O(\sqrt{\ln(n/\delta)/n})$ bound, which is dominated by the displayed radius.
We defer the full proof to Appendix~\ref{app:lambda:free:guarantee}.
\end{proofsketch}

\begin{remark}[Sample Complexity for a Fixed Fairness Level] It follows from \Cref{prop:lambda_free_exploration_optimization} that using \Cref{algo:collect_samples} with an exploration budget $n$ that satisfies 
\[n \geq \frac{5929(1+\ln n )^2\ln(2n/\delta)}{\varepsilon^2}\]
together with the recommendation $\widetilde{p}_{\lambda}(n)$ yields an $(\varepsilon,\delta)$-PAC algorithm at fairness level $\lambda$. We now show that a similar condition on $n$ is in fact sufficient for $(\varepsilon,\delta)$-PAC guarantees that are uniform in the fairness level. In both cases a sufficient condition is to select $n=\mathcal{O}\left(\frac{1}{\varepsilon^2}\left(\ln\left(\frac{1}{\delta\varepsilon^4}\right)\right)^3\right)$, see the proof of \Cref{thm:uniform_pac_grid}.

 Alternatively, for any $n\ge 4$, every fixed $\lambda \in [-\infty,0]$, we may also state the following bound on the probability that \Cref{algo:collect_samples} run with a fixed budget $n$ makes an $\varepsilon$-error in its recommendation: 
 \[
   \forall \varepsilon > 0, \ \  \Pb\Bsb{
        G_\lambda^\star
        -
        G_\lambda\brb{\widetilde p_\lambda(n)}
        > \varepsilon
        } \leq 2n\exp\left(-\frac{n\varepsilon^2}{5929(1+\ln n)^2}\right).
\]
This type of formulation is common in the literature on fixed-budget pure-exploration \citep{audibert2010best,Zhao22SRSR}. Our deterministic exploration strategy in fact yields both fixed-budget and fixed-confidence (PAC) guarantees.
\end{remark}

\subsection{Uniform PAC Guarantees}

In \Cref{prop:lambda_free_exploration_optimization}, for any fixed $\lambda$, the optimization-error bound holds with probability at least $1-\delta$.
Now, the goal is to slightly enlarge the bound on the optimization error, such that the enlarged bound holds {\it uniformly} in $\lambda$. This is possible thanks to the Lipschitzness in $\lambda$ of the function $M_{\lambda}$ (see \Cref{lem:M_lipschitz_in_alpha}). 

In the next theorem, we reformulate this uniform-in-$\lambda$ bound as a uniform PAC guarantee.
Specifically, from
a target optimization error $\eps$ and target probability $\delta$, we provide a sufficient sample size $n$ to achieve $(\eps,\delta)$-PAC guarantees that hold uniformly in the fairness level $\lambda$.
This can be useful when the learner does not know in advance to which level of fairness to commit before seeing the data.
The recommended prices in this theorem are obtained by applying the recommendation rule of \Cref{prop:lambda_free_exploration_optimization} on a uniform grid of fairness levels, with mesh size of order $\eps$.

\begin{restatable}[Uniform in Fairness PAC Guarantees]{theorem}{uniformpacgrid}
\label{thm:uniform_pac_grid}
Let $\eps\in(0,1)$ and $\delta\in(0,1)$.
Set
\[
    \Gamma_\eps
    \coloneqq
    \lcb{
        -j\frac{\eps}{4}
        :
        j=0,\dots,
        \left\lceil
        \frac{4\ln2}{\eps\ln(1+\eps)}
        \right\rceil
    }.
\]
For $\lambda\in[-\ln2/\ln(1+\eps),0]$, let $\lambda_\eps$ be a closest point to $\lambda$ in $\Gamma_\eps$, with ties broken toward $0$.\footnote{Equivalently, if $J_\eps=\lceil 4\ln2/(\eps\ln(1+\eps))\rceil$, one may set $j_\eps(\lambda)=\min\lcb{J_\eps,\max\lcb{0,\lceil -4\lambda/\eps-1/2\rceil}}$ and $\lambda_\eps=-j_\eps(\lambda)\eps/4$.}
After Algorithm~\ref{algo:collect_samples} has been run with exploration length $n \ge 4$, define the queried recommendation by
\[
    \widehat p_{\lambda,\eps}(n)
    \coloneqq
    \begin{cases}
    \widehat p_{-\infty}(n),
    &
    \lambda<-\frac{\ln2}{\ln(1+\eps)},\\[1mm]
    \widehat p_{\lambda_\eps}(n),
    &
    -\frac{\ln2}{\ln(1+\eps)}\le\lambda\le0.
    \end{cases}
\]
There exists a universal constant $C_{\rm unif}>0$ such that, if
\[
    n
    \ge
    \frac{C_{\rm unif}}{\eps^2}
    \lrb{\ln\frac{C_{\rm unif}}{\eps^4\delta}}^3,
\]
then
\[
    \Pb\lsb{
        \sup_{\lambda \in [-\infty,0]} \Brb{G_\lambda^\star
        -
        G_\lambda\brb{\widehat p_{\lambda,\eps}(n)}}
        \le
        \eps
    }
    \ge
    1-\delta.
\]
Moreover, one may take $C_{\rm unif}=864000$.\footnote{We remark that the numerical constant is not optimized.
It results from conservative simplifications made to obtain an explicit bound that is uniform over the full range of $\eps$ and $\delta$.
We leave its optimization to readers with the inclination for it.}
\end{restatable}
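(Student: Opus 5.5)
We intersect a finite family of high-probability events and then use deterministic regularity in $\lambda$ to pass from grid points to every $\lambda$. Set $\Lambda_\eps \coloneqq -\ln 2/\ln(1+\eps)$, which marks the boundary of the Rawls-like tail. Note that $\Lambda_\eps \ge -\ln 2/\eps \cdot (1+\eps)$, so $|\Lambda_\eps|$ is of order $1/\eps$.

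The union bound is over $|\Gamma_\eps| + 1 = O(\eps^{-2})$ recommenders: one for each grid level $\lambda' \in \Gamma_\eps$, plus the Rawls recommender. For each grid point we would like the finite-$\lambda$ guarantee of \Cref{prop:finite_lambda_high_probability_optimization}, at confidence $\delta/(|\Gamma_\eps|+1)$. This gives radius
\[
r_n = \bigl(62(1+\ln n) + 15(1-\lambda')/\sqrt n\bigr)\sqrt{\ln(2n(|\Gamma_\eps|+1)/\delta)/n} .
\]
Since $1-\lambda' \lesssim 1/\eps$ and $n \gtrsim \eps^{-2}$, the second term is $O(1)$. For the Rawls recommender we use \Cref{prop:rawls_high_probability_optimization} at the same confidence. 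Because $|\Gamma_\eps| \lesssim \eps^{-2}$, the log factor is $\ln(C n/(\eps^2\delta))$. With $n \ge C\eps^{-2}(\ln(C/(\eps^4\delta)))^3$ we get $(1+\ln n)^2 \ln(\cdot) \lesssim (\ln(C/(\eps^4\delta)))^3$, and therefore $r_n \le \eps/3$ on the intersection event $\mathcal E$, which has probability at least $1-\delta$.

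On $\mathcal E$ the argument is deterministic and splits into two cases.

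\textbf{Case $\lambda \in [\Lambda_\eps, 0]$.} Here $|\lambda - \lambda_\eps| \le \eps/8$. By \Cref{lem:M_lipschitz_in_alpha}, $|M_\lambda(x,y) - M_{\lambda'}(x,y)| \le L|\lambda-\lambda'|\max(x,y)$ with $L$ a constant. Taking expectations gives $\sup_p |G_\lambda(p) - G_{\lambda_\eps}(p)| \le L\eps/8$. We then write
\[
G_\lambda^\star - G_\lambda(\widehat p_{\lambda_\eps}) \le 2\sup_p|G_\lambda - G_{\lambda_\eps}| + G^\star_{\lambda_\eps} - G_{\lambda_\eps}(\widehat p_{\lambda_\eps}) \le L\eps/4 + \eps/3 .
\]
If the Lipschitz constant $L$ is larger than what this budget allows, the grid mesh can be refined to $c\eps$; this only changes constants.

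\textbf{Case $\lambda < \Lambda_\eps$, including $\lambda = -\infty$.} We use the monotonicity of Hölder means together with the bound
\[
M_\lambda(x,y) \le 2^{-1/\lambda}\min(x,y) \le (1+\eps)\min(x,y) ,
\]
which is exactly why the threshold is $\ln 2/\ln(1+\eps)$. Since $\min(x,y) \le M_\lambda(x,y)$ as well, we get $0 \le G_\lambda - G_{-\infty} \le \eps G_{-\infty} \le \eps/2$. We would prefer $\eps/6$ here, so either the threshold constant is adjusted or the analogue of \Cref{lem:rawls_approx} is used with a sharper accounting: $G_{-\infty} \le 1/2$, so the error is at most $\eps/2$. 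The total is then $\eps/2 + \eps/3$ plus slack, which closes with the stated constants after rebalancing the statistical radius to $\eps/4$.

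The main obstacle is the bookkeeping of constants. We need to verify that the $(1+\ln n)^2\ln(n/(\eps^2\delta))$ factor is dominated under the stated cubic-log condition on $n$, which requires solving the implicit inequality $n \ge a\ln^3(bn)$ via the standard lemma $n \ge 2a\ln^3(\cdot)$. We also need the Lipschitz-in-$\lambda$ constant from \Cref{lem:M_lipschitz_in_alpha} to fit within the $\eps/4$ mesh. This explains the large $C_{\rm unif}$.
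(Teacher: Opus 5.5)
Your argument is the paper's own proof. It uses a union bound over the $|\Gamma_\eps|+1=O(\eps^{-2})$ grid and Rawls recommenders. On $[-\ln2/\ln(1+\eps),0]$ it uses \Cref{lem:M_lipschitz_in_alpha} to pass from $\lambda$ to $\lambda_\eps$, and in the tail it uses $M_\lambda\le 2^{-1/\lambda}\min$ from \Cref{lem:rawls_approx}. It finishes by converting the cubic-log condition on $n$. This establishes the existence of some universal constant.

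Drop the escape hatches you mention: refining the mesh to $c\eps$, adjusting the threshold, rebalancing. The statement fixes $\Gamma_\eps$, the switching threshold and the recommender, so changing them would prove a different theorem. They are also unnecessary.
\begin{itemize}
\item \Cref{lem:M_lipschitz_in_alpha} gives $L=1$ on $[0,1]^2$. With your half-mesh bound $|\lambda-\lambda_\eps|\le\eps/8$, the moderate case costs $\eps/4$ on top of the statistical error.
\item In the tail the accounting must be one-sided, as in the second claim of \Cref{lem:rawls_approx}. Since $G_\lambda\ge G_{-\infty}$ pointwise, only $G_\lambda^\star\le G_{-\infty}^\star+(2^{-1/\lambda}-1)/2$ costs anything, and this is strictly less than $\eps/2$. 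The two-sided $2\sup_p|\cdot|$ accounting you used in the moderate case would cost $\eps$ here.
\end{itemize}
Hence a statistical budget of $\eps/2$ for every recommender closes both cases, exactly as in the paper.

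What you do not establish is the explicit value $C_{\rm unif}=864000$, which is part of the statement.

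Your intermediate claim $(1+\ln n)^2\ln(\cdot)\lesssim(\ln(C/(\eps^4\delta)))^3$ is false for $n\gg n_0$. What does hold is \Cref{lem:uniform_grid_condition_from_explicit}: the function $u\mapsto (1+\ln u)^2\ln(176u/(\eps^2\delta))/u$ is nonincreasing on $[e^6,\infty)$. At $n_0=CL^3/\eps^2$, with $L=\ln(C/(\eps^4\delta))$, it is at most $27\eps^2/C$.

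With the $\eps/2$ budget, the coefficient you need in front of $(1+\ln n)^2\ln(\cdot)/\eps^2$ is $4\cdot 77^2\le 24000=C_{\rm unif}/36$, which this certifies. With your $\eps/3$ budget you would need $9\cdot 77^2=53361$. That exceeds the $C_{\rm unif}/27=32000$ this bound delivers, so use the paper's split to certify the stated constant.
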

\begin{proofsketch}
The grid $\Gamma_\eps$ has cardinality at most $10/\eps^2$.
For every $\lambda\in[-\ln2/\ln(1+\eps),0]$, the closest grid point $\lambda_\eps$ satisfies
$
    \labs{\lambda-\lambda_\eps}\le \frac{\eps}{4}
$.
By the Lipschitz dependence of $M_\lambda$ on the fairness parameter $\lambda$ (see \Cref{lem:M_lipschitz_in_alpha}), this changes $G_\lambda(p)$ by at most $\eps/4$, uniformly over all prices $p$. Also, if $\lambda$ is smaller than $-\ln2/\ln(1+\eps)$, we can approximate $G_\lambda(p)$ by $G_{-\infty}(p)$, again with an error of order $\eps$ (\Cref{lem:rawls_approx}). 
Hence up to an additive error of order $\eps$, it is sufficient to control the optimization error uniformly over no more than $10/\eps^2 +1 $ values of $\lambda$. This is achieved thanks to our fixed-$\lambda$ PAC results and a union bound.
We defer the full proof to Appendix~\ref{app:PAC:guarantee}.
\end{proofsketch}

\subsection{PAC Lower Bound}

We now prove that the $\varepsilon^{-2}$ dependence in the preceding PAC upper
bounds is unavoidable, already for any fixed fairness level.

\begin{restatable}[PAC Lower Bound for Fair Bilateral Trade]{theorem}{paclowerbound}
\label{thm:pac_lower_bound}
Fix any fairness level $\lambda\in[-\infty,0]$.
There exists a universal constant $c_{\mathrm{PAC}}>0$ such that the following holds.
For every $\varepsilon>0$ and $\delta\in(0,1/4)$ satisfying
$
    \varepsilon
    \le
    \frac{4-\sqrt2-\sqrt3}{16\cdot 10^4}
$,
let a pure-exploration algorithm stop at an a.s. finite stopping time $\tau$
and output a recommendation $\widehat p_\lambda$.
If the algorithm is $(\varepsilon,\delta)$-PAC (as in~\eqref{eq:pointwise_PAC}), then there exists an independent i.i.d.\ bilateral-trade
instance $\nu$ such that
\[
    \E_\nu[\tau]
    \ge
    \frac{c_{\mathrm{PAC}}}{\varepsilon^2}
    \ln\frac{1}{4\delta}.
\]
Moreover, one may take
\[
    c_{\mathrm{PAC}}
    =
    \frac{1991(4-\sqrt2-\sqrt3)^2}{16384\cdot 10^8}.
\]
\end{restatable}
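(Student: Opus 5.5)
The proof will be a two-environment change-of-measure argument in the fixed-confidence style. The paper says the same hard instances as for the regret lower bound are used, with the reduction carried by \Cref{thm:pac-template} and \Cref{lem:pac-kl}, and I would follow that route.

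\textbf{Step 1: the hard pair.} Build two independent i.i.d.\ instances $\nu_+$ and $\nu_-$ that differ only through a perturbation of size $\Theta(\varepsilon)$ of the seller and buyer marginals. A natural choice is to mix a small mass $\pm\Theta(\varepsilon)$ on a few atoms placed away from the candidate optimal prices. The instances should satisfy two properties.

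\emph{Separated optima.} There are disjoint price sets $A_+$ and $A_-$ such that any $\varepsilon$-optimal price for $G_\lambda$ under $\nu_+$ lies in $A_+$, and any $\varepsilon$-optimal price under $\nu_-$ lies in $A_-$. Equivalently, $G^\star_\lambda - G_\lambda(p) > \varepsilon$ for every $p \notin A_\pm$. To check this uniformly in $\lambda \in [-\infty,0]$, I would use atoms chosen so that $G_\lambda$ is explicitly computable, for instance a two-point seller and a two-point buyer. With such atoms, $G_\lambda(p) = \sum \Pb[s]\Pb[b]\, M_\lambda(p-s,b-p)$ is a finite sum, and its peaks near distinct midpoints can be compared. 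The constants $4-\sqrt2-\sqrt3$ suggest that the comparison uses ratios $M_\lambda(1,\cdot)$ at $\lambda=0$ against $\lambda=-\infty$, with monotonicity of $M_\lambda$ in $\lambda$ covering the intermediate levels.

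\emph{Bounded information.} For every price $p$, the KL divergence between the laws of the feedback $(Y,Z)$ under $\nu_+$ and under $\nu_-$ is at most $c\,\varepsilon^2$. This holds if the perturbed Bernoulli parameters $F_S(p)$ and $F_B^\circ(p)$ move by $O(\varepsilon)$ while staying bounded away from $0$ and $1$; the $\chi^2$ bound on Bernoulli KL then gives the $\varepsilon^2$ scale.

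\textbf{Step 2: the reduction.} Given an $(\varepsilon,\delta)$-PAC algorithm, define the event $E=\{\widehat p_\lambda\in A_+\}$. Then $\Pb_{\nu_+}[E]\ge 1-\delta$ and $\Pb_{\nu_-}[E]\le\delta$.

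\textbf{Step 3: KL on the stopped history.} By the divergence decomposition for stopped histories (\Cref{lem:pac-kl}), which needs $\tau$ a.s.\ finite and uses Wald-type optional stopping,
\[
\KL\bigl(\Pb_{\nu_+}^{\mathcal H_\tau},\Pb_{\nu_-}^{\mathcal H_\tau}\bigr)=\E_{\nu_+}\Bsb{\sum_{t\le\tau}\KL\bigl(\nu_+^{P_t},\nu_-^{P_t}\bigr)}\le c\,\varepsilon^2\,\E_{\nu_+}[\tau].
\]
The data-processing inequality, together with the binary-KL bound $\mathrm{kl}(1-\delta,\delta)\ge\ln\frac{1}{4\delta}$, then gives $\E_{\nu_+}[\tau]\ge \ln(1/(4\delta))/(c\varepsilon^2)$, so $\nu=\nu_+$ is the required instance. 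The restriction $\delta<1/4$ makes this bound nontrivial, and the explicit upper bound on $\varepsilon$ ensures the perturbed masses are valid probabilities and that the gap constants hold.

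\textbf{Main obstacle.} The hardest step is Step 1: certifying that the separation margin exceeds $\varepsilon$ simultaneously for all $\lambda$, including $-\infty$, with explicit constants. The kernel reconstruction does not help here. I would instead compute $G_\lambda$ directly on the atomic instance and bound how it deviates from its symmetric baseline using the Lipschitz and monotonicity properties of $M_\lambda$.
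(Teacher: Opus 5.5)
Your Steps 2 and 3 are the paper's reduction, namely \Cref{thm:pac-template} together with \Cref{lem:pac-kl}. Replacing Bretagnolle--Huber by data processing and $\mathrm{kl}(1-\delta,\delta)\ge\ln\frac{1}{4\delta}$ is an equivalent variant, and that inequality does hold for $\delta\in(0,1/4)$.

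\textbf{The gap is Step 1}, which is the only nontrivial part once the reduction is available. You never fix the two instances, so neither the uniform-in-$\lambda$ separation nor the per-price KL bound is proved. The explicit constants in the statement also cannot come out of an unspecified construction. In the paper they are exactly $c_{\mathrm{PAC}}=c_{\mathrm{opt}}^2/(4c_{\mathrm{info}})$ and the threshold $c_{\mathrm{opt}}\eps_0/2\ (\le c_{\mathrm{gap}}/2)$, computed for the pair of \Cref{app:regret_lower_bound}. That pair has seller atoms $\{0,\frac1{10},\frac15,\frac35\}$, with mass $\eta$ moved between $\frac1{10}$ and $\frac15$, and buyer atoms $\{\frac25,\frac45,\frac9{10},1\}$. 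The verification has four ingredients:
\begin{itemize}
\item The reflection symmetry $(S^0,B)\stackrel{d}{=}(1-B,1-S^0)$ gives $\rho^0(p)=\rho^0(1-p)$. Hence $\rho^{+\eta}(1-p)-\rho^{+\eta}(p)=\eta D_\lambda(p)$ with $D_\lambda\ge c_{\mathrm{core}}$ on $[\frac15,\frac25]$. The factor $4-\sqrt2-\sqrt3$ comes from the increment bounds $\partial_xM_\lambda\ge\frac12$ for $x\le y$ and $\partial_xM_\lambda\le\frac12\sqrt{y/x}$ for $y\le x$, with $y/x\le\frac12$ and $y/x\le\frac34$. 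Both rest on the sandwich $\min\le M_\lambda\le M_0$.
\item Each side's maximum is localized (\Cref{lem:left-localization,lem:right-localization}).
\item The leftover prices $[\frac1{10},\frac15)$ are suboptimal by the gap $c_{\mathrm{gap}}$.
\item The Bernoulli $\chi^2$ bound gives $c_{\mathrm{info}}$.
\end{itemize}
With $\eta=2\varepsilon/c_{\mathrm{opt}}$, the hypothesis on $\varepsilon$ ensures $\eta\le\eps_0$ and $2\varepsilon\le c_{\mathrm{gap}}$. Citing \Cref{lem:A1,lem:A2,lem:A3,lem:A4,lem:A5} would have closed your Step 1 and given the stated constant.

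Your two-point idea is nonetheless viable, and simpler. The PAC argument only needs $\KL\le c\,\eta^2$ at every price. The sign-blind feedback (A4) that dictates the paper's four-atom design is needed only for the $T^{2/3}$ regret bound. For instance, for $\theta=\pm\eta$ take:
\begin{itemize}
\item seller atoms $\{0,\frac35\}$ with masses $\alpha+\theta$ and $1-\alpha-\theta$;
\item buyer atoms $\{\frac25,1\}$ with masses $1-\alpha$ and $\alpha$.
\end{itemize}
Writing $\rho^\theta$ for the expected $\lambda$-fair gain, one gets exactly $\rho^\theta(p)-\rho^\theta(1-p)=\theta\,M_\lambda(p,\frac25-p)$ for $p\in[0,\frac25]$. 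Only the seller bit on $[0,\frac35)$ changes, as $\mathrm{Ber}(\alpha+\theta)$.

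You would still have to show, uniformly in $\lambda$, that each side's maximum lies where $\min(p,\frac25-p)$ is bounded below. A quick check with $\alpha=\frac1{10}$ and $\eta\le\frac1{100}$, using only $\min\le M_\lambda\le\sqrt{xy}$, gives $c_{\mathrm{opt}}=\frac1{40}$ and $c_{\mathrm{info}}<50$. That beats both the stated constant and the stated range of $\varepsilon$. This verification is precisely what the proposal leaves out.
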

\begin{proofsketch}
The proof proceeds by turning any PAC algorithm with a stopping time into a test between two hard instances.
The hard instances are the same bilateral-trade instances used for the regret lower bound (\Cref{thm:regret_lower_bound} below).
The information-theoretic step combines Bretagnolle-Huber inequality with a KL bound for the stopped observation history, which controls the KL divergence by $c_{\mathrm{info}}\eta^2\E[\tau]$, where $\eta$ is the perturbation parameter.
We defer the full proof of \Cref{thm:pac_lower_bound} to
Appendix~\ref{app:proof_PAC_lower}.
\end{proofsketch}


\section{Regret Guarantees}
\label{s:regret_upper_bound}

We now convert the $\lambda$-free bound on the optimization error into regret guarantees.
The regret algorithm (\Cref{algo:etc}) receives the target fairness level $\lambda$ together with the horizon $T$.
It is an explore then commit (ETC) algorithm whose
exploration length depends only on $T$.
After exploration, the algorithm applies the same two-case rule as \Cref{prop:lambda_free_exploration_optimization}: it computes the finite-$\lambda$ recommendation when this has $\lambda$-free guarantees at the chosen budget, and otherwise computes the Rawls recommendation.

\begin{algorithm}[H]
\DontPrintSemicolon
\textbf{Input:} fairness level $\lambda \in [-\infty,0]$, horizon $T \ge 4$\;
Set
\[
    n_T
    \coloneqq
    \min\lcb{
        T,
        \max\lcb{
            4,
            \lce{64T^{2/3}\ln T}
        }
    }.
\]
Run Algorithm~\ref{algo:collect_samples} with exploration length $n_T$\;
\eIf{$\lambda\in(-\infty,0]$ and $1-\lambda\le\sqrt{n_T}$}{
    Compute $\widehat p_{\lambda}(n_T)$ and set $\widehat p_T\coloneqq\widehat p_{\lambda}(n_T)$\;
}{
    Compute $\widehat p_{-\infty}(n_T)$ and set $\widehat p_T\coloneqq\widehat p_{-\infty}(n_T)$\;
}
\For{$t=n_T+1,\dots,T$}{
    Post price $P_t\coloneqq \widehat p_T$
}
\caption{$\mathrm{ETC}_\lambda(T)$}
\label{algo:etc}
\end{algorithm}

The next theorem then provides the regret bound, of order $\widetilde{O}(T^{2/3})$, thus extending \cite{bachoc2024fairNeurIPS} from Rawls fairness to the entire Rawls-to-Nash family of fairness objectives.

\begin{restatable}[ETC Regret Upper Bound]{theorem}{regretupperbound}
\label{thm:regret_etc}
For every fairness level $\lambda\in[-\infty,0]$ and every horizon $T\ge 4$, Algorithm~\ref{algo:etc} satisfies
\[
    R_\lambda\brb{\mathrm{ETC}_\lambda(T),T}
    \le
    2n_T+1
    \le
    128T^{2/3}\ln T+3,
\]
where $n_T$ is the deterministic exploration length defined in Algorithm~\ref{algo:etc}.
\end{restatable}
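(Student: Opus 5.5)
The plan is a standard explore-then-commit decomposition. First, note that for every $p$ the per-round instantaneous regret satisfies $0\le G_\lambda^\star-G_\lambda(p)\le G_\lambda^\star\le 1$. This holds because $\gft_\lambda(p,s,b)\le M_\lambda\brb{(p-s)_+,(b-p)_+}\le \max\brb{(p-s)_+,(b-p)_+}\le 1$, since Hölder means are bounded by the maximum of their arguments. Hence the exploration rounds $t\le n_T$ contribute at most $n_T$ to $R_\lambda$.

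Two edge cases come next. If $n_T=T$, then the regret is trivially at most $T=n_T\le 2n_T+1$, and we are done. Otherwise $n_T<T$, so $n_T=\max\{4,\lceil 64T^{2/3}\ln T\rceil\}\ge 4$, and Algorithm~\ref{algo:collect_samples} is well defined.

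For the commit phase, the recommendation $\widehat p_T$ coincides by construction with $\widetilde p_\lambda(n_T)$ from \Cref{prop:lambda_free_exploration_optimization}. Conditionally on exploration, the commit-phase regret equals $(T-n_T)\brb{G_\lambda^\star-G_\lambda(\widetilde p_\lambda(n_T))}$. I will apply \Cref{prop:lambda_free_exploration_optimization} with a confidence level $\delta=1/T$. On the good event, each commit round costs at most $r_T\coloneqq 77(1+\ln n_T)\sqrt{\ln(2n_T T)/n_T}$. On the bad event, of probability at most $1/T$, the total commit cost is at most $T\cdot(1/T)=1$; this is the source of the ``$+1$''. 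So
\[
R_\lambda\le n_T+1+T\,r_T,
\]
and it remains to show $T r_T\le n_T$, i.e.\ $T\cdot 77(1+\ln n_T)\sqrt{\ln(2n_TT)}\le n_T^{3/2}$.

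This last inequality is the main obstacle: with the constant $64$ and a single $\ln T$ factor in $n_T$, the bound $n_T^{3/2}\ge 512\,T(\ln T)^{3/2}$ must dominate $77(1+\ln n_T)\sqrt{\ln(2T^2)}$, which is itself of order $T(\ln T)^{3/2}$. Only constants matter here. I would use $n_T\le T$, so that $1+\ln n_T\le 1+\ln T$ and $\ln(2n_TT)\le \ln 2+2\ln T$. The required inequality then reduces to
\[
512(\ln T)^{3/2}\ge 77(1+\ln T)\sqrt{\ln 2+2\ln T},
\]
which I would check for $T\ge 4$ (where $\ln T\ge 1.38$) by bounding $1+\ln T\le 1.73\ln T$ and $\ln 2+2\ln T\le 2.5\ln T$. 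This gives a right-hand side of at most $77\cdot1.73\cdot1.59\,(\ln T)^{3/2}\approx 212(\ln T)^{3/2}$, so the inequality holds. If the constants turned out tight, I would instead choose $\delta$ differently or spend part of the slack in the ``$2n_T$''.

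Finally, $2n_T+1\le 2(64T^{2/3}\ln T+1)+1$ gives the stated $128T^{2/3}\ln T+3$. Here the $\max$ with $4$ is harmless because $64T^{2/3}\ln T\ge 4$ for $T\ge4$.
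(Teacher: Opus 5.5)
Your proof is correct and follows essentially the same route as the paper. You handle the trivial case $n_T=T$ separately, then apply \Cref{prop:lambda_free_exploration_optimization} with $\delta=1/T$ and do the explore-then-commit accounting, which the paper packages as \Cref{lem:generic_etc_from_pac}. Your numerical check $T r_T\le n_T$ is the same as the paper's verification $r_T\le 64T^{-1/3}\ln T\le n_T/T$, which it obtains from $1+\ln n_T\le 2\ln T$ and $\ln(2n_TT)\le 3\ln T$.
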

\begin{proofsketch}
With confidence level $1/T$, \Cref{prop:lambda_free_exploration_optimization} gives a $\lambda$-free optimization guarantee for the price committed by Algorithm~\ref{algo:etc}.
The choice $n_T=\lce{64T^{2/3}\ln T}$ makes this optimization guarantee at most $n_T/T$.
The generic fixed-budget-to-ETC conversion (\Cref{lem:generic_etc_from_pac}) then gives regret at most $n_T+n_T+1=2n_T+1$.
We defer the full calculation to Appendix~\ref{app:etc_regret}.    
\end{proofsketch}
We remark that \Cref{algo:etc} assumes that the horizon $T$ is taken as an input parameter.
If the horizon is unknown, the standard doubling trick \citep{cesa2006prediction} gives an anytime version with the same regret guarantees up to a constant factor.

The upper bound in \Cref{thm:regret_etc} is tight up to logarithmic factors.

\begin{restatable}[Regret Lower Bound]{theorem}{regretlowerbound}
\label{thm:regret_lower_bound}
There exist two universal constants $T_0 \in \N$ and $c>0$ such that the following holds.
For every fairness level $\lambda\in[-\infty,0]$, every horizon $T\ge T_0$, and every randomized pricing policy $\alpha$ under Interaction~Protocol~\ref{algo:interaction_protocol}, there exists an independent i.i.d.\ bilateral-trade instance such that
\[
    R_\lambda(\alpha,T)
    \ge
    c T^{2/3}.
\]
For example, one may take $T_0=19$ and $c=10^{-6}$.
\end{restatable}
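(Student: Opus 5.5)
We will prove \Cref{thm:regret_lower_bound} by exhibiting, for each fixed $\lambda\in[-\infty,0]$, two independent i.i.d.\ bilateral-trade instances $\nu_+,\nu_-$ and invoking the revealing-action template \Cref{thm:t23-template}. The instances are indexed by a perturbation parameter $\eta>0$, later set to $\eta\asymp T^{-1/3}$. Concretely, we take the seller and buyer marginals to be finitely supported, for instance supported on a few atoms near $0$ and $1$ plus some intermediate atoms. Under both instances, the candidate optimal prices $p_+$ and $p_-$ sit in a region where the threshold bits $\I\{S\le p\}$ and $\I\{p\le B\}$ have identical laws. The perturbation shifts mass of order $\eta$ between two atoms of $S$, or symmetrically of $B$, located strictly inside an interval $I$ of prices that are far from optimal. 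This way $G_\lambda$ under $\nu_\pm$ becomes $G^0_\lambda\pm\eta\,\Delta_\lambda$, with $\Delta_\lambda(p_+)-\Delta_\lambda(p_-)$ bounded away from $0$ uniformly in $\lambda$. We then need three properties. First, posting any price outside $I$ yields feedback with the same distribution under $\nu_+$ and $\nu_-$. Second, posting a price in $I$ incurs a per-round regret at least a constant $c_0>0$ under both instances. Third, the per-round KL divergence between feedback laws at prices in $I$ is at most $c_{\mathrm{info}}\eta^2$, and every price is at least $c_1\eta$-suboptimal in one of the two instances.

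The order of work is as follows. We first fix the unperturbed instance and compute $G^0_\lambda$ explicitly via the Kernel Reconstruction Lemma (\Cref{lem:kernel_reconstruction_lemma}), or directly from the atoms, since $\gft_\lambda(p,s,b)=M_\lambda(p-s,b-p)$ on atoms. The goal of this computation is to check that $G^0_\lambda$ has two equal maxima, at $p_\pm$, that are separated from $I$ by a constant gap for every $\lambda$. Here we use the continuity of $(\lambda,x,y)\mapsto M_\lambda(x,y)$ on the compact set $[-\infty,0]\times[0,1]^2$ to obtain constants uniform in $\lambda$.

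Next we design the perturbation. Moving an $S$-atom from $s_1$ to $s_2$ with $s_1<s_2$ and both in $I$ changes $F_S$ only on $[s_1,s_2)$, so the feedback is unaffected outside $I$. It does change the realized rewards at $p_\pm$, because trade at $p_\pm$ involves seller valuations up to $p_\pm$ with gains depending on $s$, and the two moves are chosen to favour $p_+$ and $p_-$ respectively. The KL bound follows from the Bernoulli KL estimate, since the probabilities stay bounded away from $0$ and $1$. With these properties in hand, \Cref{thm:t23-template} yields regret at least $\min\{c_0 N,\; c_1\eta T(1-\sqrt{c_{\mathrm{info}}\eta^2N})\}$, where $N$ is the expected number of revealing rounds. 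Optimizing over $N$ with $\eta=T^{-1/3}$ gives $cT^{2/3}$, and we track constants to reach $T_0=19$, $c=10^{-6}$.

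The main obstacle is the uniform-in-$\lambda$ separation $\Delta_\lambda(p_+)-\Delta_\lambda(p_-)\ge c>0$, together with the requirement that informative prices stay a constant away from optimal for all $\lambda$, including both $\lambda=0$ and $\lambda=-\infty$. We would first try a symmetric construction, reflecting $s\mapsto 1-b$, so that $p_-=1-p_+$ automatically; the separation then reduces to a single monotonicity check of $M_\lambda$ in its first argument. That check is strict for every $\lambda$, and continuity in $\lambda$ on the compact parameter range makes it uniform.
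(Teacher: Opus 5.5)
Your architecture matches the paper's: a base instance invariant under $(S,B)\mapsto(1-B,1-S)$, an $\eta$-shift of seller mass between two atoms so that $F_S$, and hence the feedback law, changes only on an informative interval, verification of (A1)--(A5), and \Cref{thm:t23-template}. Your target property, that every price is $c_1\eta$-suboptimal in one of the two instances, does follow from $G^0_\lambda(p_\pm)=\max G^0_\lambda$ together with the separation, by summing the two suboptimality gaps. That part is fine.

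\textbf{The separation step.} The gap is exactly at the step you flag as the main obstacle. In any such symmetric construction, the perturbed seller atoms $s_1<s_2$ also trade at the reflected price $1-p_-$. They trade against the reflected buyer atoms $1-s_1,1-s_2$, which carry the same positive mass, as soon as $s_1\le p_-$; and $s_1\le p_-$ is needed for the perturbation to move $G_\lambda(p_-)$ at all. Monotonicity of $M_\lambda$ in its first argument therefore only gives $\Delta_\lambda\le 0$ at both $p_-$ and $1-p_-$. It cannot show that the loss at $p_-$ exceeds the loss at $1-p_-$, which is what $\Delta_\lambda(1-p_-)-\Delta_\lambda(p_-)\ge c$ requires. (This is the paper's $D_\lambda(p_-)$.) Strict monotonicity in the first argument is moreover false at $\lambda=-\infty$, where $\min(x,y)$ is flat in $x$ for $x>y$.

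The missing ingredient is a quantitative inequality-aversion comparison: raising the worse-off trader's gain by $h$ must increase $M_\lambda$ more than raising the better-off trader's gain by $h$, uniformly in $\lambda$. The paper gets this from $\partial_xM_\lambda(x,y)=\tfrac12\bigl(M_\lambda(x,y)/x\bigr)^{1-\lambda}$. This derivative is at least $\tfrac12$ when $x\le y$ and at most $\tfrac12\sqrt{y/x}$ when $y\le x$. Applied to the buyer atoms $4/5$ and $9/10$, it yields $D_\lambda(p)\ge(4-\sqrt2-\sqrt3)/1600$ for all $p\in[1/5,2/5]$ and all $\lambda\in[-\infty,0]$ (\Cref{lem:core-reflected-separation}).

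\textbf{Uniformity and constants.} Compactness does not close the remaining uniformity issues. The maximizers $p_\pm(\lambda)$ need not vary continuously with $\lambda$, so you cannot apply compactness to $\lambda\mapsto\Delta_\lambda(p_+(\lambda))-\Delta_\lambda(p_-(\lambda))$. You first need a localization statement, uniform in $\lambda$, that places a maximizer in a fixed compact price set on which the separation holds. That set must be bounded away from $1/2$, where the reflected separation vanishes, and from the informative interval.

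For the paper's instance (seller atoms $0,\tfrac1{10},\tfrac15,\tfrac35$; buyer atoms $\tfrac25,\tfrac45,\tfrac9{10},1$), this is \Cref{lem:left-localization,lem:right-localization} together with \Cref{lem:A3}. (A3) is itself delicate, because the informative interval $[1/10,1/5)$ abuts the left optimal region; this is why $c_{\mathrm{gap}}$ is only $1/40000$.

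Finally, compactness yields only non-explicit constants, so the stated $T_0=19$ and $c=10^{-6}$ cannot come out of your plan. They are computed from that specific instance's explicit values $c_{\mathrm{gap}}=1/40000$, $c_{\mathrm{opt}}=(4-\sqrt2-\sqrt3)/1600$, $c_{\mathrm{info}}=160000/1991$, and $\eps_0=1/50$.
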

\begin{proofsketch}
Appendix~\ref{app:lower_bound_template} proves an abstract $\Omega(T^{2/3})$ lower-bound template for revealing-action problems.
Appendix~\ref{app:regret_lower_bound} constructs bilateral-trade instances satisfying that template for every $\lambda\in[-\infty,0]$, where these two ingredients are also combined to get the desired lower bound guarantees.
\end{proofsketch}


\section{Dropping Independence between Seller and Buyer Valuations}
\label{s:linear}

All positive results above use independence between seller and buyer valuations within each round.
The next theorem shows that this assumption is not a technical artifact of the estimators.
Without independence, the two threshold bits may be compatible with different joint valuation laws that induce different optimal fair prices, and no algorithm can distinguish the corresponding instances from feedback alone (even if $(S,B)$ has a bounded density).

\begin{restatable}[Linear Lower Bound without Independence]{theorem}{linearlowerbound}
\label{thm:linear}
There exist two universal constants $L,c_{\rm lin}>0$ such that the following holds.
For every horizon $T\in\N$ and every fairness level $\lambda\in[-\infty,0]$, for any randomized pricing policy $\alpha$ under Interaction~Protocol~\ref{algo:interaction_protocol}, there exists an i.i.d.\ bilateral-trade instance $(S_t,B_t)_{t\in[T]}$, whose common law admits a joint density bounded by $L$, such that
\[
    R_\lambda(\alpha,T)
    \ge
    c_{\rm lin}T.
\]
For example, one may take $L\ceq 10^8/3$ and $c_{\rm lin}\ceq 1/192$.
\end{restatable}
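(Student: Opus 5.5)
We build two i.i.d.\ instances $\nu_1,\nu_2$ for the pair $(S,B)$ (with dependence inside a round) that induce \emph{the same} law of the feedback pair $(\I\{S\le p\},\I\{p\le B\})$ for every price $p\in[0,1]$, but have well-separated $\lambda$-optimal prices. Indistinguishability follows at once: for any randomized policy $\alpha$, the joint law of $(P_t,Y_t,Z_t)_{t\le T}$ is the same under both instances. The law of $(Y_t,Z_t)$ given $P_t=p$ depends only on the joint law of the two bits at $p$, that is, on the function
\[
p\mapsto \bigl(\Pb[S\le p],\ \Pb[p\le B],\ \Pb[S\le p\le B]\bigr).
\]
Hence the law of the posted prices is identical under $\nu_1$ and $\nu_2$. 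Writing $\bar\rho$ for the common expected empirical distribution of the prices, we get
\[
R^{(1)}_\lambda+R^{(2)}_\lambda \;\ge\; T\inf_{p}\Bigl[(G^{(1)\star}_\lambda-G^{(1)}_\lambda(p))+(G^{(2)\star}_\lambda-G^{(2)}_\lambda(p))\Bigr].
\]
So it suffices to make this pointwise infimum at least $2c_{\rm lin}$, uniformly in $\lambda$.

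\textbf{Construction.} Take a small square $Q$ of side $h$ centred at $(s_0,b_0)$ with $s_0<b_0$. We perturb the joint density by a signed "checkerboard" $\pm$ pattern that preserves both marginals and the mass of every set $\{s\le p\le b\}$. This is the standard trick: adding $+\eta$ on two diagonal subsquares and $-\eta$ on the two antidiagonal ones leaves both marginals and all rectangle probabilities $\Pb[S\le p,\,B\ge p]$ unchanged. The reason is that $\{s\le p\le b\}$ is a quadrant $\{s\le p\}\times\{b\ge p\}$, and a mixed-difference perturbation with zero row and column sums integrates to zero on every product of half-lines. The argument only requires the perturbation to be orthogonal to all product indicators $\I\{s\le p\}\I\{b\ge p\}$, $\I\{s\le p\}$ and $\I\{b\ge p\}$, and a zero-row/zero-column-sum product $\phi(s)\psi(b)$ with $\int\phi=\int\psi=0$ does this.

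Under $\nu_1$ we put mass near the points $(a,d)$ and $(c,e)$; under $\nu_2$ we put mass near $(a,e)$ and $(c,d)$, with the same marginals. The goal is that $\nu_1$ favours a midpoint $(a+d)/2$, while $\nu_2$ has pairs with different midpoints. Concretely:
\begin{itemize}
\item Take a base density (say uniform on $[0,1]^2$, scaled) plus two bumps: atoms smoothed to width $h$ at $(0.1,0.3)$ and $(0.7,0.9)$ for $\nu_1$, and the crossed pairs $(0.1,0.9)$ and $(0.7,0.3)$ for $\nu_2$.
\item The last pair has $s>b$, so it never trades.
\item Under $\nu_1$ trade surplus sits at the two separated midpoints $0.2$ and $0.8$. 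Under $\nu_2$ it concentrates on the wide interval $(0.1,0.9)$, whose midpoint $0.5$ is optimal.
\end{itemize}
Because $\gft_\lambda(p,s,b)\ge\gft_{-\infty}(p,s,b)$ and $\gft_\lambda\le \gft_0$, all $G_\lambda$ are sandwiched between the Rawls and Nash values. The key computations can therefore be done with these explicit bounds, uniformly in $\lambda$. Under $\nu_2$, any price far from $0.5$ loses a constant fraction of the value at $0.5$; under $\nu_1$, prices near $0.5$ yield almost nothing. Smoothing the atoms into boxes of side $h$ (say $h=10^{-2}$) gives density bounded by $L\approx 1/h^2$ times weights. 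Adding a small background density is optional, since only boundedness is required. This already yields an $L$ of the order of the quoted $10^8/3$.

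\textbf{Main obstacle.} The hardest step is the uniform-in-$\lambda$ gap:
\[
\inf_p\Bigl[(G^{(1)\star}_\lambda-G^{(1)}_\lambda(p))+(G^{(2)\star}_\lambda-G^{(2)}_\lambda(p))\Bigr]\ge c.
\]
Since the trading pairs differ only in which atoms they contain, each $G^{(i)}_\lambda$ is a sum of bump contributions. We compute $G^{(i)}_\lambda(p)$ via the lower bound $\min$ for the optimum and the upper bound $\sqrt{xy}$ for competitors. The Nash-versus-Rawls gap can be up to a factor of roughly $\sqrt{xy}/\min(x,y)$, so the geometry must make "wrong" prices bad even under the Nash upper bound. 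I would do this by making each wrong price have zero trade probability for the relevant bump (price outside $[s,b]$), rather than merely being unbalanced. Concretely, under $\nu_1$ the bumps $[0.1,0.3]$ and $[0.7,0.9]$ give zero reward for $p\in(0.3+h,0.7-h)$. Under $\nu_2$ the only trading bump $[0.1,0.9]$ gives reward at least $\min$-value $0.4-h$ at $0.5$, while at $p\le 0.3$ or $p\ge 0.7$ its Nash upper bound is $\sqrt{0.2\cdot0.6}\approx0.35$. This is strictly smaller after accounting for the width $h$. A case split on $p$ (middle versus outer regions) then yields a constant $c$, giving $c_{\rm lin}$ of order $1/192$ after normalizing the bump weights.
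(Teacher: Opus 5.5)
There is a genuine gap at the first step. The pair you actually construct is distinguishable from threshold feedback, so nothing downstream applies.

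Your justification does not hold. For a product perturbation $\phi\otimes\psi$ with $\int\phi=\int\psi=0$,
\[
\iint\phi(s)\psi(b)\,\I\{s\le p\}\,\I\{p\le b\}\,\dif s\,\dif b=\Phi(p)\,\Psi^\circ(p),
\qquad
\Phi(p)\ceq\int_{s\le p}\phi,
\qquad
\Psi^\circ(p)\ceq\int_{b\ge p}\psi .
\]
This vanishes for all $p$ only if, at every $p$, one of the two factors vanishes. That is true for a small square whose seller range lies entirely below its buyer range, so your local statement is fine. It is false for your global swap, where the seller atoms $\{0.1,0.7\}$ and buyer atoms $\{0.3,0.9\}$ interleave.

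Concretely, $\nu_1-\nu_2$ is (a smoothing of) $w(\delta_{0.1}-\delta_{0.7})\otimes(\delta_{0.3}-\delta_{0.9})$, with $\Phi=1$ on $[0.1,0.7)$ and $\Psi^\circ=-1$ on $(0.3,0.9]$. Hence $\Pb[S\le p\le B]$ is smaller by $w$ under $\nu_1$ for every $p\in(0.3,0.7)$ away from the bump edges. At $p=1/2$ no bump trades under $\nu_1$, while the $(0.1,0.9)$ bump trades under $\nu_2$. A policy that posts $1/2$ for $O(\log T)$ rounds therefore identifies the instance and then commits, so this pair cannot give a linear bound.

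This is not a smoothing detail. Your mechanism for separating the optima is that prices near $0.5$ are worthless under $\nu_1$ because nothing trades there, but good under $\nu_2$. That \emph{is} a difference in trade probability, which the feedback reveals.

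What is needed is a pair of laws with the same marginals \emph{and} the same trade profile $p\mapsto\Pb[S\le p\le B]$. The optima must then be separated only through the \emph{sizes} of the gains, not through whether trade occurs.

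The paper does this with three boxes of side $\eps$ and mass $1/3$ each:
\begin{itemize}
\item $\cL^1$ has boxes near $(0,\tfrac38)$, $(\tfrac38,1)$ and $(\tfrac58,\tfrac58)$;
\item $\cL^2$ has boxes near $(0,\tfrac58)$, $(\tfrac38,\tfrac38)$ and $(\tfrac58,1)$;
\item the near-diagonal boxes are oriented as $[\tfrac38-\eps,\tfrac38]\times[\tfrac38,\tfrac38+\eps]$ and $[\tfrac58-\eps,\tfrac58]\times[\tfrac58,\tfrac58+\eps]$.
\end{itemize}
One checks directly that both laws have the same marginals and the same trade profile: it equals $1/3$ outside small windows, and the windows contribute identically. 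The near-degenerate pairs, however, trade only inside an $\eps$-window and with gain $O(\eps)$. So the surplus sits on the wide pair $(\tfrac38,1)$ under $\cL^1$, where the good price is $11/16$, and on $(0,\tfrac58)$ under $\cL^2$, where it is $5/16$.

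The reflection $(S,B)\mapsto(1-B,1-S)$ maps $\cL^1$ to $\cL^2$, so only one side needs checking. The sandwich $\min\le M_\lambda\le\sqrt{xy}$ plus a derivative bound then gives a $\lambda$-uniform gap $1/96$ at $\eps=10^{-4}$. This yields $L=10^8/3$ and $c_{\rm lin}=1/192$.

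Your reduction from indistinguishability to $R^{(1)}_\lambda+R^{(2)}_\lambda\ge T\inf_p[\cdots]$ is sound, as is your use of the Rawls/Nash sandwich; both would go through once a genuinely indistinguishable pair is in place.
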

\begin{proofsketch}
The proof constructs two dependent joint distributions of $(S,B)$ with the same observable threshold-feedback process but with different optimal prices for the $\lambda$-fair objective.
Since every algorithm has the same law of observations on the two instances, it must play the same distribution over prices, and one of the two instances incurs a constant per-round regret.
The full construction and the verification of the bounded-density condition are given in Appendix~\ref{app:linear}.
\end{proofsketch}


\section{Computational Costs}

We now briefly discuss the computational costs to run the proposed exploration algorithm (Algorithm~\ref{algo:collect_samples}) and to return the corresponding recommendations. First, we notice that the exploration phase of Algorithm~\ref{algo:collect_samples} stores $n$ triples $(P_t,Y_t,Z_t)$ and therefore has time and memory cost $O(n)$.
After exploration, computing a single finite-$\lambda$ estimate $\widehat G_{\lambda,k}(n)$
(\Cref{def:estimator_G})
naively costs $O(k(n-k))$ arithmetic operations.
Maximizing $\widehat G_{\lambda,k}(n)$ over all $k\in\{2,\dots,n-2\}$ costs $ \sum_{k=2}^{n-2}(k-1)(n-k-1) = O(n^3) $ operations for a fixed value of $\lambda$.
The Rawls estimator is cheaper: computing all values $\widehat G_{-\infty,k}(n)$ naively costs $ \sum_{k=1}^{n-1}\min\{k,n-k\} = O(n^2) $. Thus the fixed-$\lambda$ and $\lambda$-free recommendation rules have $O(n^3)$ offline computational cost.
For the uniform-in-fairness $(\eps,\delta)$-PAC rule, if one wants to output all recommendations on the fairness grid $\Gamma_\eps$ with the required budget $n_{\eps,\delta}$ to achieve the guarantees, the naive cost is
$
    O\brb{|\Gamma_\eps|n_{\eps,\delta}^3} = O\lrb{
        \frac{1}{\eps^8}
        \lrb{
            \ln \frac{1}{\eps^4\delta}
        }^9
    }
$, while an on-demand implementation computes only the recommendation associated with the queried fairness level $\lambda_\eps$, where $\lambda_\eps$ is as in the footnote of \Cref{thm:uniform_pac_grid}, leading to a computational cost of
$
    O(n_{\eps,\delta}^3)
    =
    O\lrb{
        \frac{1}{\eps^6}
        \lrb{
            \ln \frac{C_{\rm unif}}{\eps^4\delta}
        }^9
    }
$.
For the regret algorithm, the same discussion applies with the deterministic exploration length $ n_T = \min\lcb{ T, \max\lcb{ 4, \left\lceil 64T^{2/3}\ln T \right\rceil } } $. Algorithm~\ref{algo:etc} computes only one recommendation after exploration: the finite-$\lambda$ recommendation when $1-\lambda\le\sqrt{n_T}$, and the Rawls recommendation otherwise. Thus, in the finite-$\lambda$ branch, the naive offline cost of computing the committed price is $O(n_T^3) = O(T^2(\ln T)^3)$ in the large-horizon regime where $n_T\simeq T^{2/3}\ln T$. In the Rawls branch, the corresponding cost is $O(n_T^2) = O(T^{4/3}(\ln T)^2)$ in the same regime. The remaining execution of the explore-then-commit policy has linear interaction cost in $T$, since after the committed price has been computed the policy posts the same price at every exploitation round. Analogous considerations extend to the anytime version.

\section{Further Directions}

The paper leaves several natural directions open.
First, the logarithmic factors---and of course the numerical constants---in our upper and lower bounds may not be optimal.
More importantly, the current analysis focuses on a single posted price for each seller/buyer pair.
Extending the Rawls-to-Nash analysis to richer market mechanisms remains an important direction, including weakly budget-balanced posted-price pairs, globally budget-balanced mechanisms, and bargaining mechanisms beyond posted prices.
Such mechanisms may provide additional feedback or flexibility, potentially allowing one to relax the independence assumption between seller and buyer valuations or the i.i.d.\ assumption over time, in analogy with what is possible for gain-from-trade objectives.
These extensions would also come with their own economic and statistical tradeoffs: different mechanisms may enlarge the class of learnable valuation sequences, but may also be less desirable from an economic perspective, and, as already observed in the gain-from-trade literature, additional robustness is likely to deteriorate the sample-complexity and regret guarantees.
We believe that understanding this tradeoff between mechanism-design flexibility, economic desiderata, and statistical learnability is a natural next step.

\section*{Acknowledgments and Disclosure of Funding}
The authors declare no competing interests related to this work.

Part of this work was carried out while RC was affiliated with Politecnico di Milano, where he received support from the FAIR (Future Artificial Intelligence Research) project, funded by the NextGenerationEU program within the PNRR-PE-AI scheme (M4C2, investment 1.3, line on Artificial Intelligence), from the MUR PRIN grant 2022EKNE5K (Learning in Markets and Society), and from the EU Horizon CL4-2022-HUMAN-02 RIA under grant agreement 101120237, project ELIAS (European Lighthouse of AI for Sustainability). The work was completed while RC was affiliated with the University of Bristol.
EK acknowledges the support of the French National Research Agency (ANR) in the framework of the PEPR IA FOUNDRY project (ANR-23-PEIA-0003).

\clearpage
\appendix
\addcontentsline{toc}{section}{Appendix}


\section[Why the Rawls-to-Nash Fairness Family]
{Why the Rawls-to-Nash Fairness Family?\\ A Common-Value-Scale Interpretation of Fair Gains}
\label{app:why_Holder?}

In this section, we provide an axiomatic justification of why one may want to adopt one of the $\gft_{\lambda}$ for $\lambda \le 0$ as a sensible metric to measure performance in bilateral trade.
Suppose that we want to assign a scalar score to a trade that captures both the size of the gains generated by the trade and the fairness of their division, which we call $\gft_{\mathrm{fair}}$.
When a trade happens at price $p$, if the seller has valuation $s$ and the buyer has valuation $b$, they obtain gains $p-s$ and $b-p$ respectively, while, if the trade fails to happen, they obtain nothing. 
Hence, we would like our metric to satisfy
\begin{align}
    \label{eq:gft_fair_formula}
    \gft_{\mathrm{fair}}(p,s,b)
    \coloneqq
    \begin{cases}
    M(p-s,b-p), & \textrm{if } s \le p \le b,\\[2pt]
    0, & \textrm{otherwise},
    \end{cases}
\end{align}
where $M(x,y)$ is a scalar score assigned to the seller-buyer gain pair $(x,y)= (p-s,b-p)$.

The function $M$ is meant to compress a two-person outcome into a single number, and since the pair of realized gains $(x,y)$ is summarized by the scalar $M(x,y)$, we want this scalar to have a clear interpretation.
The interpretation we choose is that $M(x,y)$ is the common gain $m$ such that the equal-gain outcome $(m,m)$ is judged equivalent, according to the fair-gain score, to the (possibly) unequal outcome $(x,y)$.

To make this summary score interpretable, we postulate the existence of a common social-welfare scale
\[
    \varphi \colon (0,\infty)\to \bbR,
\]
which assigns to each individual gain $r>0$ a welfare value $\varphi(r)$.
Fairness requires a common scale for the two parties: the same gain must have the same welfare value for the seller and for the buyer.

We assume that the welfare scale is additive: if two individuals are assigned welfare values $v$ and $w$ on the common scale, then the social-welfare value of the joint outcome is $v+w$.
Then, the role of $\varphi$ is precisely to transform raw gains into welfare values on a scale where interpersonal comparisons and additive aggregation are meaningful.

It is natural to assume that $\varphi$ is strictly increasing and continuous.
Strict monotonicity means that larger individual gains have larger welfare value on this scale.
Continuity means that small changes in positive realized gains should not induce discontinuous jumps in their welfare evaluation.

Now, suppose that the seller receives a positive gain $x$ and the buyer receives a positive gain $y$, so that the corresponding welfare values are $\varphi(x)$ and $\varphi(y)$.
By additivity in the social-welfare scale, the joint outcome $(x,y)$ on the common scale has social welfare
\[
    \varphi(x)+\varphi(y).
\]
Recalling that we wanted to define the fair-gain score $M(x,y)$ as the common gain $m$ such that the equal-gain outcome $(m,m)$ has the same social-welfare value on this scale as the unequal outcome $(x,y)$, we see that $m$ must satisfy
\[
    \varphi(m)+\varphi(m)
    =
    \varphi(x)+\varphi(y),
\]
and hence
\[
    M(x,y)
    =
    m
    =
    \varphi^{-1}\lrb{\frac{\varphi(x)+\varphi(y)}{2}}.
\]
We formalize this as the common-value-scale axiom.
\begin{axiom}[Common-Value-Scale]
    \label{axiom:common-value-scale}
    There exists a strictly increasing and continuous function $\varphi \colon (0,\infty)\to \bbR$ such that, for every pair of positive gains $x,y>0$, the score $M$ admits the following representation
    \[
        M(x,y)
        =
        \varphi^{-1}\lrb{\frac{\varphi(x)+\varphi(y)}{2}}.
    \]
    Furthermore, the score $M$ is continuous as a function from $\bbR_+^2$ to $[0,+\infty)$.
\end{axiom}
The representation above is imposed only on strictly positive gains, because $\varphi$ is assumed defined only on $(0,+\infty)$.
Since $\varphi$ is continuous and strictly increasing, this ensures automatically that $M$ is continuous on the interior of $[0,+\infty)^2$.
However, the representation does not determine the behavior of $M$ at boundary points such as $(x,0)$ or $(0,y)$.
For this reason, we imposed separately that $M$ is continuous on the whole quadrant $\bbR_+^2$.
Again, this is a natural regularity condition: small changes in realized gains should lead to small changes in the score, including when one of the gains is close to zero.

The common-value-scale axiom gives an interpretation of the score $M(x,y)$ as an equal-gain equivalent.
However, it does not yet specify how the score should react when the whole trade is rescaled.
Suppose that two trades generate gain pairs $(x,y)$ and $(tx,ty)$, for some $t>0$.
These two gain pairs induce the same split of the surplus: in both cases, the seller receives the same fraction of the total gain and the buyer receives the same fraction of the total gain.
The only difference is the size of the total gain.

Since $M$ is meant to be a fair-gain score, and not a pure fairness index, it should account for this change in scale.
We model this by requiring homogeneity of degree one: if the total surplus is multiplied by $t$ while the split remains unchanged, then the score is also multiplied by $t$.\footnote{Intuitively, the score is linear in the size of the cake, holding fixed how the cake is divided.}
This requirement also makes the score compatible with changes of units.
For example, measuring gains in cents rather than euros multiplies both gains by the same constant, and the score should be multiplied by the same constant without changing the underlying evaluation of the split.

We formalize this as the scale-covariance axiom.
\begin{axiom}[Scale Covariance]
    \label{axiom:scaling}
    For every $t\ge 0$ and every $x,y\ge 0$,
    \[
        M(tx,ty)
        =
        tM(x,y).
    \]
\end{axiom}

The scale-covariance axiom is what separates the size of the trade from the way the gains are split.
Indeed, for every $(x,y)\neq (0,0)$, applying scale covariance to the normalized pair
$
    \lrb{
        \frac{x}{x+y},
        \frac{y}{x+y}
    }
$
with scaling factor $x+y$ gives
\[
    M(x,y)
    =
    (x+y)
    M\lrb{
        \frac{x}{x+y},
        \frac{y}{x+y}
    }.
\]
Thus, $M$ admits the factorization
\[
    M(x,y)
    =
    (x+y)
    F\lrb{
        \frac{x}{x+y},
        \frac{y}{x+y}
    },
\]
where
\[
    F(u,v)\coloneqq M(u,v),
    \qquad (u,v)\in\Delta_2,
\]
and
\[
    \Delta_2\coloneqq \bcb{(u,v)\in\bbR_+^2:u+v=1}
\]
is the two-dimensional simplex.

This factorization cleanly separates the efficiency part of the score from the split-dependent part.
The first factor, $x+y$, is the total gain generated by the trade, and therefore captures the efficiency of the trade.
The second factor
$
    F\lrb{
        \frac{x}{x+y},
        \frac{y}{x+y}
    }
$
depends only on the normalized split of the total gain and it is unchanged if both gains are multiplied by the same positive constant.
In this precise sense, $F$ is a scale-free component of the score: it is oblivious to the size of the cake and sees only how the cake is divided.

Consequently, if $M$ is meant to combine efficiency and fairness, then the normalized factor $F=M|_{\Delta_2}$ is the only component in the factorization that can encode fairness considerations: $F$ depends only on the proportions in which the total gain is divided and is invariant under common rescalings of the gains.
The factor $x+y$, instead, captures the size of the surplus but contains no information about how this surplus is split.

In terms of the bilateral-trade variables, whenever $s\le p\le b$ and $b>s$, we have
\[
    x+y
    =
    (p-s)+(b-p)
    =
    b-s
    \eqqcolon
    \gft(p,s,b),
\]
where $\gft(p,s,b)$ is the well-known gain from trade defined in \cite{cesa2024bilateralMOR}.
Therefore, scale covariance yields
\[
    \gft_{\mathrm{fair}}(p,s,b)
    =
    \gft(p,s,b)
    M\lrb{
        \frac{p-s}{b-s},
        \frac{b-p}{b-s}
    }
    =
    \gft(p,s,b)
    F\lrb{
        \frac{p-s}{b-s},
        \frac{b-p}{b-s}
    }.
\]
Thus, the scale-covariance axiom leads to the revealing representation
\[
    \gft_{\mathrm{fair}}
    =
    \textrm{efficiency}
    \times
    \textrm{fairness of the split},
\]
and this representation clarifies where fairness requirements should be imposed, i.e., on the function $F$. 
In particular, consider the boundary point $(1,0)\in\Delta_2$.
Since $\Delta_2$ is the space of normalized splits, this point represents a division of the cake in which the first party receives the whole cake and the other receives none of it.
Therefore, if $F$ is meant to judge the fairness of a split in which one party takes all the cake and the other takes none of it, it is natural to assign zero fairness to this (completely unfair) split.
Thus, we impose
\[
    F(1,0)=0.
\]
Since $F=M|_{\Delta_2}$, this condition can equivalently be written as
\[
    M(1,0)=0.
\]

We formalize this requirement by asking the fairness score to vanish on one-takes-all normalized splits.
\begin{axiom}[Zero Fairness of the One-Takes-All Split]
    \label{axiom:unfair}
    The fairness score of a normalized split in which one party receives the whole surplus and the other receives none of it is zero:
    \[
        M(1,0)=0.
    \]
\end{axiom}

We now show that \Cref{axiom:common-value-scale,axiom:scaling,axiom:unfair} force the fair-gain score to belong to the family introduced above.
The common-value-scale axiom identifies the fairness score with a quasi-arithmetic mean of the seller's and buyer's gains, while scale covariance forces this mean to be homogeneous.
The next proposition makes the resulting characterization precise.

\begin{proposition}[Characterization of Admissible Fair-Gain Scores]
    \label{prop:characterization}
    Suppose that $M\colon \bbR_+^2\to[0,+\infty)$ satisfies the common-value-scale axiom, the scale-covariance axiom, and the zero fairness of the one-takes-all split axiom.
    Then there exists $\lambda\in (-\infty,0]$ such that, for every $x,y\ge 0$,
    \[
        M(x,y)
        =
        M_\lambda(x,y),
    \]
    where, for $\lambda<0$,
    \[
        M_\lambda(x,y)
        =
        \lrb{
            \frac{x^\lambda+y^\lambda}{2}
        }^{1/\lambda}
    \]
    on $(0,\infty)^2$, extended continuously to the boundary of $\bbR_+^2$, and where
    \[
        M_0(x,y)=\sqrt{xy}.
    \]
    Consequently, if $\gft_{\mathrm{fair}}$ satisfies \Cref{eq:gft_fair_formula}, there exists $\lambda \in (-\infty,0]$ such that, for any $p,s,b \in \bbR_+^3$,
    \[
        \gft_{\mathrm{fair}}(p,s,b)
        =
        \gft_\lambda(p,s,b).
    \]
\end{proposition}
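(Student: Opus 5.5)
The plan follows the classical characterization of homogeneous quasi-arithmetic means, specialized to two variables.

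\textbf{Step 1: a homogeneous mean on the open quadrant.} By \Cref{axiom:common-value-scale}, on $(0,\infty)^2$ the score $M$ is the quasi-arithmetic mean generated by $\varphi$. By \Cref{axiom:scaling}, for each fixed $t>0$ the function $\varphi_t(r)\coloneqq\varphi(tr)$ generates the same mean, since $M(tx,ty)=tM(x,y)$. The uniqueness theorem for quasi-arithmetic means then applies: two continuous strictly monotone generators $\varphi,\psi$ give the same two-variable mean iff $\psi=a\varphi+c$ with $a\neq0$. The two-variable version suffices, because $M$ determines $\varphi(\tfrac{x+y}{2}\text{-type midpoints})$; concretely, $\psi\circ\varphi^{-1}$ is midpoint-affine and continuous, hence affine by the Jensen equation. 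So $\varphi(tr)=a(t)\varphi(r)+c(t)$ for all $r,t>0$, with $a(t)>0$ by monotonicity.

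\textbf{Step 2: solve the functional equation.} Normalize $\varphi(1)=0$, which is allowed since affine changes of $\varphi$ do not change $M$. Setting $r=1$ gives $c(t)=\varphi(t)$, so
\[
\varphi(tr)=a(t)\varphi(r)+\varphi(t).
\]
Comparing the two expansions of $\varphi(tsr)$ shows that $a$ is multiplicative, and $a$ is continuous because $\varphi$ is. Hence $a(t)=t^{\lambda}$ for some $\lambda\in\bbR$. There are two cases.
\begin{itemize}
\item If $\lambda=0$, then $\varphi(tr)=\varphi(r)+\varphi(t)$, so $\varphi=\kappa\ln$.
\item If $\lambda\neq0$, exchanging the roles of $t$ and $r$ gives $t^\lambda\varphi(r)+\varphi(t)=r^\lambda\varphi(t)+\varphi(r)$. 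This forces $\varphi(t)/(t^\lambda-1)$ to be constant, so $\varphi(t)=\kappa(t^\lambda-1)$.
\end{itemize}
Strict monotonicity gives $\kappa\neq0$. Plugging back in yields $M=M_0$ (the geometric mean) on $(0,\infty)^2$ when $\lambda=0$, and $M=\bigl((x^\lambda+y^\lambda)/2\bigr)^{1/\lambda}$ otherwise.

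\textbf{Step 3: use continuity and \Cref{axiom:unfair} to get $\lambda\le 0$ and the boundary values.} Since $M$ is continuous on $\bbR_+^2$, $M(1,0)=\lim_{y\downarrow0}M(1,y)$.
\begin{itemize}
\item For $\lambda>0$ this limit is $2^{-1/\lambda}>0$, contradicting \Cref{axiom:unfair}.
\item For $\lambda\le 0$ the limit is $0$, consistent with the axiom.
\end{itemize}
Hence $\lambda\in(-\infty,0]$. On the boundary, scale covariance together with symmetry (inherited by continuity from the symmetric interior formula) gives $M(x,0)=xM(1,0)=0$ and $M(0,y)=0$ for $x,y>0$. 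Scale covariance with $t=0$ gives $M(0,0)=0$. These values agree with the conventions defining $M_\lambda$, so $M=M_\lambda$ on all of $\bbR_+^2$.

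Substituting into \eqref{eq:gft_fair_formula} gives $\gft_{\mathrm{fair}}=\gft_\lambda$. When $s\le p\le b$, the positive parts in $\gft_\lambda$ coincide with the raw gains. Otherwise both expressions vanish.

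\textbf{Main obstacle.} I expect the hardest step to be the first one: rigorously justifying the uniqueness of the generator up to affine maps using only two-variable means. I would prove that $h\coloneqq\psi\circ\varphi^{-1}$ satisfies Jensen's equation $h\bigl(\tfrac{u+v}{2}\bigr)=\tfrac{h(u)+h(v)}{2}$ on the interval $\varphi((0,\infty))$. Since $h$ is continuous, it is affine. After that, the remaining steps are routine.
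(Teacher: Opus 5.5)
Your proof is correct, and it follows the same skeleton as the paper's. Both reduce the score to a homogeneous quasi-arithmetic mean, identify it as a power or geometric mean on $(0,\infty)^2$, and then use $M(1,0)=0$ together with continuity to exclude positive orders.

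The difference is that the paper simply cites the classical characterization of homogeneous quasi-arithmetic means (Hardy--Littlewood--P\'olya), while you prove it. You observe that $r\mapsto\varphi(tr)$ generates the same mean. You show that the generator is unique up to affine maps, via the continuous Jensen equation for $h=\psi\circ\varphi^{-1}$. You then solve $\varphi(tr)=a(t)\varphi(r)+\varphi(t)$, with $a$ continuous and multiplicative, in the two cases $a\equiv 1$ and $a(t)=t^\lambda$ with $\lambda\neq 0$.

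What this buys you:
\begin{itemize}
\item The argument is self-contained.
\item It shows explicitly that the equal-weight, two-variable mean supplied by the common-value-scale axiom already determines $\varphi$ up to affine maps. That is exactly the setting in which the cited result has to be applied.
\item You verify $M(x,0)=M(0,y)=M(0,0)=0$ explicitly, via scale covariance and symmetry extended by continuity. The paper leaves the identification $M=M_\lambda$ on the boundary implicit.
\end{itemize}
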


\begin{proof}
    By the common-value-scale axiom, for positive gains $x,y>0$,
    \[
        M(x,y)
        =
        \varphi^{-1}\lrb{
            \frac{\varphi(x)+\varphi(y)}{2}
        }.
    \]
    Thus, on $(0,\infty)^2$, $M$ is a quasi-arithmetic mean generated by the strictly increasing function $\varphi$.
    By the scale-covariance axiom, this quasi-arithmetic mean is homogeneous of degree one.

    We use the standard characterization of homogeneous quasi-arithmetic means on the positive numbers;
    see, e.g., \citet[Chapter~3]{hardy1952inequalities}.
    It states that the generator of any such mean is affine-equivalent either to
    $
        t\mapsto t^\lambda
    $, for some $\lambda \neq 0$,
    or to
    $
        t\mapsto \ln t
    $.
    Equivalently, the generator can be written either as
    $
        \varphi(t)=a t^\lambda+b
    $,
    or as
    $
        \varphi(t)=a\ln t+b,
    $
    for some $\lambda\neq0$, $a\neq0$, and $b\in\bbR$, with the sign of $a$ chosen so that $\varphi$ is strictly increasing.
    In particular, when $\lambda<0$, one takes $a<0$.
    Since affine transformations of the generator do not change the induced quasi-arithmetic mean, it follows that, on $(0,\infty)^2$,
    \[
        M(x,y)
        =
        \begin{cases}
        \lrb{
            \dfrac{x^\lambda+y^\lambda}{2}
        }^{1/\lambda}, & \textrm{if }\lambda \in \mathbb{R}\backslash\{0\},\\[1.2ex]
        \sqrt{xy}, & \textrm{in the logarithmic case.}
        \end{cases}
    \]
    We identify the logarithmic case with the limiting order $\lambda=0$.

    It remains to use the zero fairness of the one-takes-all split axiom.
    If $\lambda>0$, then the continuous extension of the power mean to the boundary satisfies
    \[
        M_\lambda(1,0)
        =
        2^{-1/\lambda}
        >
        0,
    \]
    contradicting the axiom.
    Hence positive orders are excluded.
    For $\lambda=0$, the geometric mean satisfies $M_0(1,0)=0$.
    For $\lambda<0$, the continuous extension to the boundary also satisfies $M_\lambda(1,0)=0$.
    Therefore the admissible finite orders are exactly $\lambda \in (-\infty,0]$.
\end{proof}

\subsection{The Rawlsian Limit}
We observe that \Cref{prop:characterization} rules out the Rawlsian score introduced in \cite{bachoc2024fairNeurIPS}, which corresponds to
\[
    M(x,y)=\min\lrb{x,y}.
\]
The reason is that \Cref{axiom:common-value-scale} represents $M$ through a strictly increasing welfare scale $\varphi$, and this makes the induced quasi-arithmetic mean strictly increasing in each coordinate on $(0,+\infty)^2$.
By contrast, the Rawlsian score is only non-decreasing in each coordinate.
For example, if $x<y$, then increasing the larger coordinate $y$ does not change the value of $\min\lrb{x,y}$.

Nevertheless, the Rawlsian score arises naturally as the limiting case of increasingly inequality-averse H\"older means.
Indeed, for every $x,y\ge 0$,
$
    \lim_{\lambda\to-\infty} M_\lambda(x,y)
    =
    \min\lrb{x,y}.
$
Thus, defining
$
    M_{-\infty}(x,y)\coloneqq \min\lrb{x,y},$
we obtain the full closed family
$
    \lambda\in[-\infty,0]
$.

Consequently, the fair-gain scores defined through \Cref{eq:gft_fair_formula} and satisfying \Cref{axiom:common-value-scale,axiom:scaling,axiom:unfair}, together with their Rawlsian limiting case, are precisely the $\lambda$-fair gain from trade objectives already introduced in \Cref{s:setting}
\[
    \gft_{\lambda}(p,s,b)
    =
    \I\lcb{s \le p \le b} M_\lambda\brb{(p-s)_+,(b-p)_+},
\]
for some $\lambda\in[-\infty,0]$.


\section{Structural Properties of H\"older Fair-Gain Scores}
\label{app:holder_structural}

This appendix collects the structural and regularity properties of the H\"older fair-gain scores used throughout the reconstruction, discretization, and concentration arguments.


\subsection{Basic Properties of H\"older Means}

\begin{lemma}[Basic Properties, Bounds, and Order Monotonicity of $M_\lambda$]
\label{lem:holder_basic_bounds}
\label{lem:M-bounds}
For every $\lambda\in[-\infty,0]$, the map
\[
    M_\lambda:\bbR_+^2\to\bbR_+
\]
is symmetric, positively homogeneous, and coordinatewise nondecreasing.

Moreover, for every $-\infty\le \lambda\le \mu\le 0$ and every $x,y\ge0$,
\[
    M_\lambda(x,y)\le M_\mu(x,y).
\]
In particular, for every $\lambda\in[-\infty,0]$ and every $x,y\ge0$,
\[
    \min(x,y)\le M_\lambda(x,y)\le \sqrt{xy}.
\]
Finally, for every $a\ge0$ and every $\lambda\in[-\infty,0]$,
\[
    M_\lambda(a,a)=a,
    \qquad
    M_\lambda(a,0)=M_\lambda(0,a)=0.
\]
\end{lemma}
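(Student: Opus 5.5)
The plan is to verify each property directly from the definition of $M_\lambda$, treating the three cases $\lambda=-\infty$, $\lambda\in(-\infty,0)$ and $\lambda=0$ separately, and taking care of boundary points where $xy=0$.

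\emph{Symmetry and homogeneity.} Both are immediate from the formulas: $\min$, $\sqrt{xy}$ and $\bigl((x^\lambda+y^\lambda)/2\bigr)^{1/\lambda}$ are symmetric. For $t>0$ and $x,y>0$ we have $(t^\lambda x^\lambda+t^\lambda y^\lambda)^{1/\lambda}=t(x^\lambda+y^\lambda)^{1/\lambda}$. On the boundary $xy=0$ both sides vanish, and for $t=0$ both sides are $0$.

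\emph{Monotonicity in the coordinates.} For $\lambda<0$ and $x,y>0$, the map $x\mapsto x^\lambda$ is decreasing and $u\mapsto u^{1/\lambda}$ is decreasing on $(0,\infty)$, so the composition is increasing in $x$. If a coordinate is $0$, the value is $0\le$ anything. The cases $\min$ and $\sqrt{xy}$ are clear.

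\emph{Order monotonicity.} This is the only nontrivial step. If $xy=0$, all means vanish, so assume $x,y>0$. By homogeneity we may also assume $y=1$. For finite $\lambda<\mu<0$, apply Jensen's inequality (the power-mean inequality): with $r=\mu/\lambda\in(0,1)$, the map $u\mapsto u^{r}$ is concave. Hence
\[
\Bigl(\tfrac{x^\lambda+y^\lambda}{2}\Bigr)^{r}\ge \tfrac{x^\mu+y^\mu}{2}.
\]
Raising both sides to the negative power $1/\mu$ reverses the inequality, which gives $M_\lambda\le M_\mu$.

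For the endpoint $\mu=0$, use Jensen for the convex function $\exp$: $\tfrac{x^\lambda+y^\lambda}{2}\ge (xy)^{\lambda/2}$, then raise to the power $1/\lambda<0$. For the endpoint $\lambda=-\infty$, note that $\tfrac{x^\lambda+y^\lambda}{2}\le \min(x,y)^\lambda$ because $\lambda<0$. Raising to $1/\lambda$ gives $M_\lambda\ge\min(x,y)$. The same chain covers the comparison of $\min$ with $\sqrt{xy}$.

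The bounds $\min(x,y)\le M_\lambda(x,y)\le\sqrt{xy}$ then follow by taking the extreme orders $-\infty$ and $0$.

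\emph{Special values.} We have $M_\lambda(a,a)=\bigl(a^\lambda\bigr)^{1/\lambda}=a$ for $a>0$. The value $M_\lambda(a,0)=0$ holds by the stated convention, or from $\min=\sqrt{\cdot}=0$ in the endpoint cases.

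The main obstacle is keeping the sign reversals straight when raising to the negative powers $1/\lambda$ and $1/\mu$.
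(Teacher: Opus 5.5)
Your proof is correct, and it differs from the paper's only in the one nontrivial step: order monotonicity between finite exponents. You use the classical power-mean argument. You apply Jensen's inequality to the concave map $u\mapsto u^{\mu/\lambda}$ (with $\mu/\lambda\in(0,1)$) and then raise to the sign-reversing power $1/\mu$. The geometric endpoint $\mu=0$ needs a separate Jensen/AM--GM step.

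The paper instead sets $a=\ln x$, $b=\ln y$ and observes that $h(r)=\ln\frac{e^{ra}+e^{rb}}{2}$ is convex with $h(0)=0$, since its second derivative is a variance. It then reads $\ln M_r(x,y)=h(r)/r$ as the slope of a chord through the origin. That slope is nondecreasing in $r$ and tends to $h'(0)=\ln\sqrt{xy}$ as $r\uparrow 0$.

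Your route is more elementary, needing no differentiation, at the cost of a separate case for the geometric mean. The paper's route handles all finite orders and $\lambda=0$ in a single argument, and it also shows $M_0$ as the limit of $M_r$ as $r\uparrow0$.

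The $\lambda=-\infty$ endpoint is treated identically in both, via $x^\mu\le\min(x,y)^\mu$ followed by the reversing power $1/\mu$.

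For coordinatewise monotonicity at the boundary, you argue directly that a vanishing coordinate gives the value $0$, which is at most any value of $M_\lambda$. The paper instead appeals to continuity of $M_\lambda$ on $\bbR_+^2$, a fact it only asserts in a footnote, so your version is slightly more self-contained there.
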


\begin{proof}
Symmetry and positive homogeneity are immediate from the definition of $M_\lambda$.
The identities $   M_\lambda(a,a)=a$ and
$M_\lambda(a,0)=M_\lambda(0,a)=0$
are also immediate from the definition and from our boundary convention.

Coordinatewise monotonicity is immediate for $\lambda=-\infty$ and $\lambda=0$.
For $\lambda\in(-\infty,0)$, the claim on $(0,+\infty)^2$ follows as follows: if $x$ increases, then $x^\lambda$ decreases, and since $z\mapsto z^{1/\lambda}$ is decreasing on $(0,+\infty)$, the value of $M_\lambda(x,y)$ cannot decrease.
The same argument applies to the second coordinate, and the boundary follows by continuity.

If $xy=0$, then
$    M_\lambda(x,y)=0$
for every $\lambda\in[-\infty,0]$,
and all the claimed inequalities are immediate.
Hence assume $x,y>0$.

We first prove order monotonicity for finite parameters.
Set
\[
    a\ceq \ln x,
    \qquad
    b\ceq \ln y,
\]
and define, for $r\in\bbR$,
\[
    h(r)\ceq \ln\lrb{\frac{e^{ra}+e^{rb}}{2}}.
\]
Then $h(0)=0$, and $h$ is convex because
\[
    h''(r)
    =
    \frac{
        a^2 e^{ra}+b^2 e^{rb}
    }{
        e^{ra}+e^{rb}
    }
    -
    \lrb{
        \frac{
            a e^{ra}+b e^{rb}
        }{
            e^{ra}+e^{rb}
        }
    }^2
    \ge0.
\]
For $r\ne0$,
\[
    \ln M_r(x,y)=\frac{h(r)}{r},
\]
while
\[
    \ln M_0(x,y)
    =
    \ln\sqrt{xy}
    =
    \frac{a+b}{2}
    =
    h'(0).
\]
Since $h$ is convex and $h(0)=0$, the slope $r\mapsto h(r)/r$ is nondecreasing in $r$ on
$(-\infty,0)$, and its limit as $r\uparrow0$ is $h'(0)$.
Therefore, for every $-\infty<\lambda\le\mu\le0$,
\[
    \ln M_\lambda(x,y)\le \ln M_\mu(x,y).
\]
Exponentiating gives
$
    M_\lambda(x,y)\le M_\mu(x,y).
$

It remains only to include $\lambda=-\infty$.
For every finite $\mu\in(-\infty,0]$,
\[
    M_{-\infty}(x,y)=\min(x,y)\le M_\mu(x,y).
\]
Indeed, the claim is immediate for $\mu=0$.
For $\mu<0$, since $t\mapsto t^\mu$ is decreasing on $(0,+\infty)$,
\[
    x^\mu\le \min(x,y)^\mu,
    \qquad
    y^\mu\le \min(x,y)^\mu.
\]
Thus
\[
    \frac{x^\mu+y^\mu}{2}
    \le
    \min(x,y)^\mu.
\]
Raising both sides to the power $1/\mu<0$ reverses the inequality and gives
\[
    M_\mu(x,y)
    =
    \lrb{\frac{x^\mu+y^\mu}{2}}^{1/\mu}
    \ge
    \min(x,y).
\]
Taking $\mu=0$ in the order-monotonicity statement gives $M_\lambda(x,y)\le M_0(x,y)=\sqrt{xy}$, while taking $\lambda=-\infty$ gives $\min(x,y)\le M_\lambda(x,y)$.
\end{proof}

\begin{lemma}[One-Dimensional Concavity of Normalized Means]
\label{lem:normalized-mean-concavity}
Fix $\lambda\in[-\infty,0]$.
The map
\[
    r\mapsto M_\lambda(1,r)
\]
is nondecreasing and concave on $[1,+\infty)$.
\end{lemma}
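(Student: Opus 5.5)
We need to show that $f(r)\coloneqq M_\lambda(1,r)$ is nondecreasing and concave on $[1,+\infty)$. We split into the three regimes $\lambda=-\infty$, $\lambda=0$, and $\lambda\in(-\infty,0)$.

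\textbf{The two endpoint cases.} Monotonicity in all cases is already contained in \Cref{lem:holder_basic_bounds}, since $M_\lambda$ is coordinatewise nondecreasing. So only concavity remains. For $\lambda=-\infty$ we have $f(r)=\min(1,r)=1$ on $[1,\infty)$, which is constant and hence concave. For $\lambda=0$ we have $f(r)=\sqrt r$, which is concave.

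\textbf{The finite negative case: reduction.} For $\lambda\in(-\infty,0)$ write $f(r)=\bigl((1+r^\lambda)/2\bigr)^{1/\lambda}$ for $r\ge1$. The cleanest route is to use homogeneity: $M_\lambda(x,y)$ is positively homogeneous of degree one on $(0,\infty)^2$, and $M_\lambda$ is concave there for $\lambda\le 1$ (a standard power-mean fact). Then $f$ is the restriction of a concave function to the line $\{(1,r)\}$, hence concave.

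To keep the argument self-contained, I would instead prove concavity of $f$ directly by differentiation. Set $g(r)=(1+r^\lambda)/2$, so that $f=g^{1/\lambda}$. Then
\[
f'(r)=\tfrac12 r^{\lambda-1}g^{1/\lambda-1}>0 .
\]
Differentiating once more,
\[
f''(r)=\tfrac12 g^{1/\lambda-2}r^{\lambda-2}\Bigl[(\lambda-1)g+\tfrac{1-\lambda}{2}r^{\lambda}\Bigr].
\]

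\textbf{Sign of the bracket.} Substituting $g=(1+r^\lambda)/2$, the bracket equals
\[
\frac{(\lambda-1)(1+r^\lambda)-(\lambda-1)r^\lambda}{2}=\frac{\lambda-1}{2}<0 .
\]
Hence $f''<0$ on $(1,\infty)$, and in fact on all of $(0,\infty)$. Since $f$ is $C^2$ on $[1,\infty)$, this gives concavity.

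\textbf{Main obstacle.} The only real obstacle is this algebra, namely checking that the bracket collapses to the constant $(\lambda-1)/2$. I expect it to go through as computed above.
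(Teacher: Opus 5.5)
Your proof is correct and follows essentially the same route as the paper: you handle $\lambda=-\infty$ and $\lambda=0$ directly, and for finite $\lambda<0$ you differentiate twice to obtain $f''(r)=\frac{\lambda-1}{4}\,r^{\lambda-2}\bigl(\frac{1+r^\lambda}{2}\bigr)^{1/\lambda-2}\le 0$, which is exactly the paper's formula. Citing the coordinatewise monotonicity of $M_\lambda$ for the nondecreasing part is fine, although your computation $f'>0$ already gives it directly, as in the paper.
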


\begin{proof}
The claim is immediate for $\lambda=-\infty$, since $M_{-\infty}(1,r)=1$ for $r\ge1$.
For $\lambda=0$, it follows from $M_0(1,r)=\sqrt r$.

Let $\lambda\in(-\infty,0)$ and set
\[
    m_\lambda(r)\ceq M_\lambda(1,r)
    =
    \lrb{\frac{1+r^\lambda}{2}}^{1/\lambda}.
\]
Then
\[
    m_\lambda'(r)
    =
    \frac12 r^{\lambda-1}
    \lrb{\frac{1+r^\lambda}{2}}^{1/\lambda-1}
    \ge0,
\]
and
\[
    m_\lambda''(r)
    =
    \frac{\lambda-1}{4}
    r^{\lambda-2}
    \lrb{\frac{1+r^\lambda}{2}}^{1/\lambda-2}
    \le0.
\]
Hence $m_\lambda$ is nondecreasing and concave on $[1,+\infty)$.
\end{proof}

\begin{lemma}[Differential Identities and Kernel Representation]
\label{lem:holder_differential_identities}
For every $\lambda\in(-\infty,0]$, the map $M_\lambda$ is twice continuously differentiable on
$(0,+\infty)^2$ and, for every $x,y>0$,
\[
    \partial_x M_\lambda(x,y)
    =
    \frac12 M_\lambda(x,y)^{1-\lambda}x^{\lambda-1},
    \qquad
    \partial_y M_\lambda(x,y)
    =
    \frac12 M_\lambda(x,y)^{1-\lambda}y^{\lambda-1}.
\]
Moreover,
\[
    \partial_{xy}^2 M_\lambda(x,y)
    =
    H_\lambda(x,y).
\]
Consequently, for every $x,y\ge0$,
\[
    M_\lambda(x,y)
    =
    \int_0^x\int_0^y H_\lambda(u,v)\,\dif v\,\dif u.
\]
\end{lemma}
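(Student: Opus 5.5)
The plan is to verify the three claims in order: the first-order formulas by direct differentiation, the mixed partial by differentiating once more and matching the result with the definition of $H_\lambda$, and the integral identity by a truncated rectangle argument followed by a limit, using nonnegativity of $H_\lambda$ and the boundary values from \Cref{lem:holder_basic_bounds}.

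\emph{Regularity and first derivatives.} For $\lambda\in(-\infty,0)$, write $A(x,y)\ceq (x^\lambda+y^\lambda)/2$. On $(0,+\infty)^2$ it is smooth and strictly positive, so $M_\lambda=A^{1/\lambda}$ is $C^\infty$ there as a composition of smooth maps. For $\lambda=0$, $M_0=\sqrt{xy}$ is smooth on the open quadrant. The chain rule gives
\[
    \partial_x M_\lambda
    = \tfrac1\lambda A^{1/\lambda-1}\cdot\tfrac{\lambda}{2}x^{\lambda-1}
    = \tfrac12 A^{(1-\lambda)/\lambda}x^{\lambda-1}
    = \tfrac12 M_\lambda^{1-\lambda}x^{\lambda-1}.
\]
For $\lambda=0$ one checks directly that $\partial_x\sqrt{xy}=\tfrac12\sqrt{xy}\,x^{-1}$, which is the same formula. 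The $y$-derivative follows by symmetry.

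\emph{Mixed partial.} Differentiate $\tfrac12 M_\lambda^{1-\lambda}x^{\lambda-1}$ in $y$ and substitute the formula for $\partial_y M_\lambda$:
\[
    \partial^2_{xy}M_\lambda
    =\tfrac{1-\lambda}{2}M_\lambda^{-\lambda}\,\partial_yM_\lambda\, x^{\lambda-1}
    =\tfrac{1-\lambda}{4}M_\lambda^{1-2\lambda}(xy)^{\lambda-1}.
\]
On the other side, since $M_0=(xy)^{1/2}$, the definition gives
\[
    H_\lambda=\tfrac{1-\lambda}{4}\,M_\lambda^{1-2\lambda}M_0^{-(1-2\lambda)-1}=\tfrac{1-\lambda}{4}\,M_\lambda^{1-2\lambda}(xy)^{\lambda-1},
\]
because $M_0^{-2+2\lambda}=(xy)^{\lambda-1}$. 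The two expressions coincide. This derivation also shows $H_\lambda\ge0$ on the open quadrant and that it is continuous there.

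\emph{Integral identity.} If $xy=0$, both sides vanish: the left by $M_\lambda(a,0)=M_\lambda(0,a)=0$, and the right because the integration domain is null. For $x,y>0$ and $0<\epsilon<x$, $0<\eta<y$, apply the fundamental theorem of calculus twice on $[\epsilon,x]\times[\eta,y]$, which is legitimate since $M_\lambda$ is $C^2$ there. This gives
\[
    \int_\epsilon^x\int_\eta^y H_\lambda\,\dif v\,\dif u = M_\lambda(x,y)-M_\lambda(\epsilon,y)-M_\lambda(x,\eta)+M_\lambda(\epsilon,\eta).
\]
Now let $\epsilon,\eta\downarrow0$. The right-hand side tends to $M_\lambda(x,y)$, by continuity of $M_\lambda$ on $\bbR_+^2$ (see the footnote in \Cref{s:setting}) and its vanishing on the axes. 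The left-hand side converges to the full integral by monotone convergence, since $H_\lambda\ge0$. This also shows that the integral is finite.

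The main obstacle is this last limiting step. $H_\lambda$ is singular at the corner, so it is essential that nonnegativity lets us pass to the limit via monotone convergence rather than requiring integrability in advance. The boundary continuity of $M_\lambda$ then closes the argument.
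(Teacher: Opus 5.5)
Your proof is correct and follows essentially the same route as the paper. You obtain the first-order and mixed partials by direct differentiation, matching $H_\lambda$ through $M_0^{2\lambda-2}=(xy)^{\lambda-1}$, and you get the integral identity from the fundamental theorem of calculus on truncated rectangles, with the boundary terms killed by continuity and vanishing of $M_\lambda$ on the axes and the interior limit taken by monotone convergence using $H_\lambda\ge0$.
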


\begin{proof}
For $\lambda=0$, the first-order identities follow directly from
$
    M_0(x,y)=\sqrt{xy}.
$
For $\lambda\in(-\infty,0)$, differentiating
\[
    M_\lambda(x,y)
    =
    \lrb{\frac{x^\lambda+y^\lambda}{2}}^{1/\lambda}
\]
gives
\[
    \partial_x M_\lambda(x,y)
    =
    \frac12 M_\lambda(x,y)^{1-\lambda}x^{\lambda-1},
    \qquad
    \partial_y M_\lambda(x,y)
    =
    \frac12 M_\lambda(x,y)^{1-\lambda}y^{\lambda-1}.
\]
The same formula is valid at $\lambda=0$ because
\[
    \frac12 M_0(x,y)x^{-1}
    =
    \frac12\sqrt{\frac{y}{x}},
    \qquad
    \frac12 M_0(x,y)y^{-1}
    =
    \frac12\sqrt{\frac{x}{y}}.
\]

Differentiating once more, for every $\lambda\in(-\infty,0]$,
\[
    \partial_{xy}^2M_\lambda(x,y)
    =
    \frac{1-\lambda}{4}
    M_\lambda(x,y)^{1-2\lambda}
    x^{\lambda-1}
    y^{\lambda-1}.
\]
Since
\[
    x^{\lambda-1}y^{\lambda-1}
    =
    \lrb{xy}^{\lambda-1}
    =
    M_0(x,y)^{2\lambda-2},
\]
we obtain
\[
    \partial_{xy}^2M_\lambda(x,y)
    =
    \frac{1-\lambda}{4}
    \lrb{\frac{M_\lambda(x,y)}{M_0(x,y)}}^{1-2\lambda}
    \frac1{M_0(x,y)}
    =
    H_\lambda(x,y).
\]

We now prove the integral representation. If $xy=0$, then both sides are equal to zero. Assume
$x,y>0$. Let $a\in(0,x)$ and $b\in(0,y)$. By the fundamental theorem of calculus,
\begin{align*}
    M_\lambda(x,y)
&=
    M_\lambda(a,y)
    +
    \int_a^x \partial_u M_\lambda(u,y)\,\dif u
\\
&=
    M_\lambda(a,y)
    +
    \int_a^x
    \lrb{
        \partial_u M_\lambda(u,b)
        +
        \int_b^y \partial_{uv}^2M_\lambda(u,v)\,\dif v
    }
    \dif u
\\
&=
    M_\lambda(a,y)+M_\lambda(x,b)-M_\lambda(a,b)
    +
    \int_a^x\int_b^y H_\lambda(u,v)\,\dif v\,\dif u.
\end{align*}
Letting $a\downarrow0$ and $b\downarrow0$, the three boundary terms vanish by the boundary
convention and continuity of $M_\lambda$ on $\bbR_+^2$. Since $H_\lambda\ge0$, the integrals over the rectangles $[a,x]\times[b,y]$ increase to the integral over $(0,x]\times(0,y]$ as $a\downarrow0$ and $b\downarrow0$. The monotone convergence theorem gives 
\[
    \int_a^x\int_b^y H_\lambda(u,v)\,\dif v\,\dif u
    \longrightarrow
    \int_0^x\int_0^y H_\lambda(u,v)\,\dif v\,\dif u.
\]
This proves the representation.
\end{proof}


\subsection{Shape and Price Regularity of the Fair Gain from Trade}

\begin{lemma}[Shape of the Fair Gain from Trade]\label{lem:single_trade_shape}
\label{lem:gft-basic-nonpos}
\label{lem:single-interval-monotonicity}
Fix $\lambda\in[-\infty,0]$ and $s,b\in[0,1]$.

If $b<s$, then
\[
    \gft_\lambda(p,s,b)=0
    \qquad
    \forall p\in[0,1].
\]
If $s\le b$, then
\[
    \gft_\lambda(p,s,b)=0
    \qquad
    \forall p\notin[s,b],
\]
and
\[
    \gft_\lambda(p,s,b)=M_\lambda(p-s,b-p)
    \qquad
    \forall p\in[s,b].
\]
Moreover, if $s<b$, then the map
\[
    p\mapsto \gft_\lambda(p,s,b)
\]
is nondecreasing on $\lsb{s,(s+b)/2}$ and nonincreasing on $\lsb{(s+b)/2,b}$.
\end{lemma}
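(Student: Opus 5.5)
The argument unfolds directly from the definition $\gft_\lambda(p,s,b)=\I\{s\le p\le b\}\,M_\lambda\brb{(p-s)_+,(b-p)_+}$ together with \Cref{lem:holder_basic_bounds}. The first two claims are immediate. If $b<s$, no $p$ satisfies $s\le p\le b$, so the indicator vanishes identically. If $s\le b$ and $p\notin[s,b]$, the indicator again vanishes. For $p\in[s,b]$, both $p-s$ and $b-p$ are nonnegative, so the positive parts can be dropped and $\gft_\lambda(p,s,b)=M_\lambda(p-s,b-p)$.

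For the monotonicity claim, assume $s<b$ and set $m\ceq (s+b)/2$. On $[s,b]$, write $x(p)\ceq p-s$ and $y(p)\ceq b-p$. Symmetry of $M_\lambda$ gives $M_\lambda(x,y)=M_\lambda(\min(x,y),\max(x,y))$. For $p\in[s,m]$ we have $x\le y$, and we rewrite the objective through a reparametrization. Where $x>0$, positive homogeneity gives
\[
    M_\lambda(x,y)=x\,M_\lambda\lrb{1,\tfrac{y}{x}},
\]
and at $p=s$ the value is $0$ by \Cref{lem:holder_basic_bounds}. Rather than invoking the concavity of \Cref{lem:normalized-mean-concavity}, it is cleaner to compare two points directly. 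Take $s\le p<p'\le m$. Then $x$ strictly increases while $y$ decreases, and by coordinatewise monotonicity of $M_\lambda$ alone the direction of change is unclear. This is where the main obstacle lies.

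The key step is a Pigou--Dalton-type statement: for fixed total $x+y=c$, the map $x\mapsto M_\lambda(x,c-x)$ is nondecreasing on $[0,c/2]$. Homogeneity lets us normalize $c=1$ and study $g(x)\ceq M_\lambda(x,1-x)$ for $x\in[0,1/2]$. I would prove that $g$ is nondecreasing via concavity. Each $M_\lambda$ is concave on $\bbR_+^2$, being a positively homogeneous function that is a nonpositive-order power mean. Concavity also follows from \Cref{lem:normalized-mean-concavity}: by homogeneity, $M_\lambda(x,y)=x\,m_\lambda(y/x)$ is the perspective of a concave function on the cone $\{y\ge x\}$, hence concave there, and symmetry handles the other half. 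Consequently, $g$ is concave on $[0,1]$ and symmetric about $1/2$, since $g(x)=g(1-x)$. A concave function symmetric about $1/2$ attains its maximum at $1/2$ and is nondecreasing on $[0,1/2]$. Indeed, for $x<x'\le 1/2$, the point $x'$ is a convex combination of $x$ and $1-x$, so $g(x')\ge g(x)$.

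Applying this with $c=b-s$ proves that $\gft_\lambda(\cdot,s,b)$ is nondecreasing on $[s,m]$. The reflection $p\mapsto s+b-p$ swaps $x$ and $y$, and symmetry of $M_\lambda$ then gives that it is nonincreasing on $[m,b]$. The perspective argument needs a small continuity check at the edges $x=0$ and the diagonal, which I would handle using the continuity of $M_\lambda$ on $\bbR_+^2$. The case $\lambda=-\infty$ can also be checked directly: here $\min(p-s,b-p)=p-s$ on $[s,m]$.
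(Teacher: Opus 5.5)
Your handling of the support and representation claims matches the paper. Restricting $M_\lambda$ to the line $x+y=c$ and using concavity plus symmetry is a viable alternative route to the monotonicity claim. But the step carrying all the weight, concavity of $g(x)=M_\lambda(x,1-x)$ on the whole of $[0,1]$, is not established. You need concavity across $x=1/2$, because the chord from $x$ to $1-x$ passes through it.

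Your first justification is only an assertion. The fact is true (power means of order at most $1$ are concave), but positive homogeneity is not the reason.

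Your second justification is invalid. The perspective of $m_\lambda$ does give concavity on the cone $\{y\ge x\}$. However, concavity on two convex pieces glued along the diagonal, together with symmetry, does not give concavity across the diagonal. Take $\max(x,y)$: it is symmetric and positively homogeneous, and $r\mapsto\max(1,r)=r$ is concave and nondecreasing on $[1,+\infty)$, exactly the properties in \Cref{lem:normalized-mean-concavity}. Yet $x\mapsto\max(x,1-x)$ is decreasing on $[0,1/2]$, so your argument would prove a false statement. The ``continuity check at the diagonal'' you mention cannot help, since $\max$ is continuous. Concavity on one half alone implies no monotonicity.

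The repair is cheap. The computation in the proof of \Cref{lem:normalized-mean-concavity}, $m_\lambda''(r)=\frac{\lambda-1}{4}r^{\lambda-2}\lrb{\frac{1+r^\lambda}{2}}^{1/\lambda-2}\le0$, never uses $r\ge1$. Also, $\sqrt r$ and $\min(1,r)$ are concave on all of $(0,+\infty)$. So $x\,m_\lambda(y/x)$ is concave on the whole open quadrant, hence on $\bbR_+^2$ by continuity, and the rest of your argument goes through. Alternatively, for finite $\lambda$, $g$ is differentiable at $1/2$ with $g'(1/2)=0$ by symmetry, and concavity on $[0,1/2]$ then forces $g$ to be nondecreasing there.

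Once repaired, your proof is a Pigou--Dalton (Schur-concavity) argument that also yields concavity of the single-trade curve, which the paper proves separately in Step~A of \Cref{lem:G_is_Holder}. The paper's own proof is more direct. By \Cref{lem:holder_differential_identities}, $\frac{\dif}{\dif p}M_\lambda(p-s,b-p)=\frac12M_\lambda(x,y)^{1-\lambda}\brb{x^{\lambda-1}-y^{\lambda-1}}$ with $x=p-s$ and $y=b-p$. This has the sign of $y-x$ because $\lambda-1<0$, and the case $\lambda=-\infty$ is read off from $\min(p-s,b-p)$.
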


\begin{proof}
The support and representation claims follow directly from the definition
\[
    \gft_\lambda(p,s,b)
    =
    \I\lcb{s\le p\le b}
    M_\lambda\lrb{(p-s)_+,(b-p)_+}.
\]

It remains to prove the monotonicity statement when $s<b$.
The claim is immediate for $\lambda=-\infty$, since on $[s,b]$ the function is
$
    p\mapsto \min\brb{p-s,b-p}$.

Assume now that $\lambda\in(-\infty,0]$ and $s<p<b$.
Write $
    x\ceq p-s$ and
    $y\ceq b-p$.
By \Cref{lem:holder_differential_identities},
\[
    \partial_x M_\lambda(x,y)
    =
    \frac12 M_\lambda(x,y)^{1-\lambda}x^{\lambda-1},
    \qquad
    \partial_y M_\lambda(x,y)
    =
    \frac12 M_\lambda(x,y)^{1-\lambda}y^{\lambda-1}.
\]
Thus
\[
    \frac{\dif}{\dif p}M_\lambda(p-s,b-p)
    =
    \partial_x M_\lambda(x,y)-\partial_y M_\lambda(x,y).
\]
If $p\le(s+b)/2$, then $x\le y$. Since $\lambda-1<0$, we have
$
    x^{\lambda-1}\ge y^{\lambda-1}$,
and therefore the derivative is nonnegative.
If $p\ge(s+b)/2$, then $x\ge y$, and the derivative is nonpositive.
Continuity handles the endpoints.
\end{proof}

\begin{lemma}[Uniform Price Regularity of Fair-Gain Objectives]
\label{lem:holder_price_regularity}
\label{lem:G_is_Holder}
For every $\lambda\in[-\infty,0]$, every $s,b\in[0,1]$, and every $p,q\in[0,1]$,
\begin{equation} \label{eq:holder:gft}
    \babs{\gft_\lambda(p,s,b)-\gft_\lambda(q,s,b)}
    \le
    \sqrt{\labs{p-q}}.
\end{equation}
In the Rawls case, the sharper bound
\[
    \babs{\gft_{-\infty}(p,s,b)-\gft_{-\infty}(q,s,b)}
    \le
    \labs{p-q}
\]
holds. Consequently, for every $\lambda\in[-\infty,0]$ and every $p,q\in[0,1]$,
\[
    \babs{G_\lambda(p)-G_\lambda(q)}
    \le
    \sqrt{\labs{p-q}},
\]
and $G_{-\infty}$ is $1$-Lipschitz on $[0,1]$.
\end{lemma}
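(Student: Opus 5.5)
The plan is to prove the pointwise bound \eqref{eq:holder:gft} for fixed $(s,b)$ and then pass to $G_\lambda$ by taking expectations. Fix $\lambda\in[-\infty,0]$ and $s,b\in[0,1]$, and write $f(p)\coloneqq\gft_\lambda(p,s,b)$. If $b\le s$, \Cref{lem:single_trade_shape} together with $M_\lambda(0,0)=0$ gives $f\equiv0$, so there is nothing to prove. Assume therefore $s<b$ and set $L\coloneqq b-s\le1$ and $m\coloneqq(s+b)/2$. By \Cref{lem:single_trade_shape}, $f$ vanishes outside $[s,b]$, is nondecreasing on $[s,m]$, nonincreasing on $[m,b]$, and symmetric about $m$. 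Since $f$ is continuous on $[0,1]$, it is enough to control the increment on each monotone piece. Indeed, if $p<q$ lie in different pieces, inserting the breakpoints $s,m,b$ lying between them and using monotonicity reduces to a same-piece comparison: the increment over $[p,q]$ is dominated by the larger of the two one-sided increments, each of which is taken over an interval of length at most $q-p$.

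On the increasing piece, let $x\coloneqq p-s\in[0,L/2]$, so that $f(p)=M_\lambda(x,L-x)$. By \Cref{lem:holder_basic_bounds}, $\min(x,L-x)\le M_\lambda\le\sqrt{x(L-x)}$, and $M_\lambda$ is coordinatewise nondecreasing. For $s\le p<q\le m$, write $x<x'$ for the corresponding offsets. Monotonicity in the second coordinate gives $M_\lambda(x',L-x')\le M_\lambda(x',L-x)$. Homogeneity then gives
\[
M_\lambda(x',L-x)=(L-x)M_\lambda\bigl(x'/(L-x),1\bigr),
\]
and similarly $M_\lambda(x,L-x)=(L-x)M_\lambda\bigl(x/(L-x),1\bigr)$. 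It therefore suffices to show that $g(t)\coloneqq M_\lambda(t,1)$ satisfies $(L-x)\bigl(g(t')-g(t)\bigr)\le\sqrt{(L-x)(t'-t)}$ for $0\le t\le t'\le 1$. Because $L-x\le1$, this reduces to showing $g(t')-g(t)\le\sqrt{t'-t}$ on $[0,1]$.

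The key claim is that $g$ is $\tfrac12$-Hölder with constant $1$ on $[0,1]$. The natural route is to show that $h(u)\coloneqq g(u^2)$ is $1$-Lipschitz on $[0,1]$. That is, $\frac{d}{du}M_\lambda(u^2,1)=u\,M_\lambda^{1-\lambda}(u^2)^{\lambda-1}\le1$, using \Cref{lem:holder_differential_identities}. Since $M_\lambda\le\sqrt{u^2\cdot1}=u$, the left side is at most $u\cdot u^{1-\lambda}u^{2\lambda-2}=u^{\lambda}$, which exceeds $1$ for $u<1$. So the derivative check needs to be sharpened. I would instead bound $M_\lambda(u^2,1)^{1-\lambda}\le$ using $M_\lambda(u^2,1)\le 2^{-1/\lambda}u^2$, which follows from $\frac{u^{2\lambda}+1}{2}\ge u^{2\lambda}/2$. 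This gives a derivative of at most $2^{(\lambda-1)/\lambda}u^{1+2-2\lambda+2\lambda-2}=2^{1-1/\lambda}u$. For $\lambda\le0$, the constant $2^{1-1/\lambda}$ is large, so this alone fails too.

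The cleaner route is concavity. By \Cref{lem:normalized-mean-concavity} and homogeneity, $t\mapsto g(t)$ is concave on $[0,1]$, because $g(t)=t\,M_\lambda(1,1/t)$ is the perspective of a concave function. Hence $g(t')-g(t)\le g(t'-t)-g(0)=g(t'-t)\le\sqrt{t'-t}$, using subadditivity of a concave function with $g(0)=0$ and the bound $M_\lambda(\delta,1)\le\sqrt\delta$. This argument is uniform in $\lambda$, covers $\lambda=-\infty$ as well, and the decreasing piece follows by symmetry.

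For the Rawls case, $f(p)=\min(p-s,b-p)_+$ is a minimum of $1$-Lipschitz functions, so it is $1$-Lipschitz. Finally, $|G_\lambda(p)-G_\lambda(q)|\le\E|f(p)-f(q)|$ yields both claimed bounds for $G_\lambda$.

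The main obstacle is the uniform-in-$\lambda$ $\tfrac12$-Hölder bound near the boundary, where the derivative blows up. The concavity and subadditivity argument, combined with $M_\lambda\le M_0$, is what resolves it. The one point that needs care is that the perspective of the concave map from \Cref{lem:normalized-mean-concavity} is indeed concave on $(0,1]$, which holds via $g(t)=M_\lambda(t,1)$ being concave because $M_\lambda$ is jointly concave.
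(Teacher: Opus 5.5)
Your argument is correct, provided you delete the two abandoned derivative attempts, which are dead ends. It does, however, take a different route from the paper.

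\textbf{The paper's route.} It proves concavity of the whole single-trade profile $t\mapsto M_\lambda(t,L-t)$ on $[0,L]$ by a dedicated derivative computation: it rewrites $f_\lambda'$ through $r=(L-t)/t$ and shows the resulting function of $r$ is increasing. It then combines concavity, symmetry and $f_\lambda(0)=f_\lambda(L)=0$ to show that every $h$-increment, on either side of the midpoint, is at most $f_\lambda(h)=M_\lambda(h,L-h)\le\sqrt h$. A clamping map handles prices outside $[s,b]$.

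\textbf{Your route.} You use unimodality to reduce to increments within one monotone half. You freeze the decreasing coordinate at $L-x$ by coordinatewise monotonicity and rescale by homogeneity to $g(t)=M_\lambda(t,1)$. You then get $g(t')-g(t)\le g(t'-t)\le\sqrt{t'-t}$ from concavity of $g$ together with $g(0)=0$.

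\textbf{Comparison.} Your route sidesteps the paper's most technical step, because concavity of $g$ comes essentially for free from \Cref{lem:normalized-mean-concavity}. There are two ways to see this:
\begin{itemize}
\item via your perspective argument, which needs $t'\le1$; you should state this explicitly, and it holds because $x'\le L/2\le L-x$;
\item more directly, because $g(t)=M_\lambda(1,t)$ by symmetry and the second-derivative formula in that lemma is valid for all $r>0$, not only $r\ge1$.
\end{itemize}
Your route also treats $\lambda=-\infty$ in the same stroke. The price is the case split at the breakpoints and midpoint. Your intermediate bound is also $M_\lambda(h,L-x)$ rather than the exact maximal increment $M_\lambda(h,L-h)$, which is harmless here.

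The paper's route, in exchange for the heavier concavity computation, identifies the worst increment exactly. It also handles increments straddling the midpoint without any case analysis.

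Finally, the joint concavity of $M_\lambda$ you mention as an alternative justification is true but is not established in the paper. The perspective or direct second-derivative argument is the self-contained one.
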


\begin{proof}
The Rawls case is immediate, since
\[
    \gft_{-\infty}(p,s,b)
    =
    \I\lcb{s\le p\le b}\min\brb{p-s,b-p}
    =
    \min\brb{(p-s)_+,(b-p)_+}.
\]
Both maps $p\mapsto(p-s)_+$ and $p\mapsto(b-p)_+$ are $1$-Lipschitz, and the pointwise minimum of two $1$-Lipschitz real-valued functions is again $1$-Lipschitz.
Since $\labs{p-q}\le1$, this also implies
\[
    \babs{\gft_{-\infty}(p,s,b)-\gft_{-\infty}(q,s,b)}
    \le
    \labs{p-q}
    \le
    \sqrt{\labs{p-q}}.
\]

Assume now that $\lambda\in(-\infty,0]$.
If $s\ge b$, then $\gft_\lambda(\cdot,s,b)\equiv0$, and the claim is immediate.
Hence assume $s<b$ and set
$    L\ceq b-s$.
For $t\in[0,L]$, define
\[
    f_\lambda(t)
    \ceq
    \gft_\lambda(s+t,s,b)
    =
    M_\lambda(t,L-t).
\]
Notice that $f_\lambda$ is continuous on $[0,L]$, twice continuously differentiable on $(0,L)$,
satisfies
$    f_\lambda(0)=0=f_\lambda(L)$,
and is symmetric:
$    f_\lambda(t)=f_\lambda(L-t)$,
    for all $t\in[0,L]$.

\medskip
\noindent\emph{Step A: concavity of $f_\lambda$.}
We prove that $f_\lambda$ is concave on $[0,L]$.

First consider $\lambda\in(-\infty,0)$.
For $t\in(0,L)$, by \Cref{lem:holder_differential_identities},
\begin{align*}
    f'_{\lambda}(t)
&=
    \partial_xM_{\lambda}(t,L-t)-\partial_yM_{\lambda}(t,L-t)
\\
&=
    \frac12
    M_{\lambda}(t,L-t)^{1-\lambda}
    \lrb{t^{\lambda-1}-(L-t)^{\lambda-1}}.
\end{align*}
For $r\ge1$, define
\[
    \phi_\lambda(r)
    \ceq
    \lrb{\frac{1+r^{\lambda}}{2}}^{1/\lambda}.
\]
If
\[
    r(t)\ceq\frac{L-t}{t},
\]
then, for $t\in(0,L/2]$,
\[
    r(t)\ge1,
    \qquad
    M_{\lambda}(t,L-t)=t\,\phi_\lambda\brb{r(t)}.
\]
Therefore
\[
    f'_{\lambda}(t)
    =
    \frac12
    \phi_\lambda\brb{r(t)}^{1-\lambda}
    \lrb{1-r(t)^{\lambda-1}}.
\]
Define
\[
    g_\lambda(r)
    \ceq
    \frac12
    \phi_\lambda(r)^{1-\lambda}
    \lrb{1-r^{\lambda-1}},
    \qquad r\ge1.
\]
Then
\[
    f'_{\lambda}(t)=g_\lambda\brb{r(t)}
    \qquad
    \forall t\in(0,L/2].
\]
For $r\ge1$,
\[
    \phi'_\lambda(r)
    =
    \phi_\lambda(r)
    \frac{r^{\lambda-1}}{1+r^{\lambda}}
    >
    0,
\]
and
\[
    \frac{\dif}{\dif r}\lrb{1-r^{\lambda-1}}
    =
    (1-\lambda)r^{\lambda-2}
    >
    0.
\]
Thus both factors in the definition of $g_\lambda$ are nonnegative and increasing, so $g_\lambda$ is
increasing on $[1,+\infty)$. On the other hand,
\[
    r'(t)=-\frac{L}{t^2}<0.
\]
Hence
\[
    f''_{\lambda}(t)
    =
    g'_\lambda\brb{r(t)}\,r'(t)
    \le0
    \qquad
    \forall t\in(0,L/2).
\]
By symmetry of $f_{\lambda}$, the same inequality holds on $(L/2,L)$. Hence
$f_{\lambda}$ is concave on $(0,L)$, and therefore on $[0,L]$ by continuity.

For $\lambda=0$,
$f_0(t)=\sqrt{t(L-t)}$,
and for $t\in(0,L)$,
\[
    f_0''(t)
    =
    -\frac{1}{\sqrt{t(L-t)}}
    -
    \frac{(L-2t)^2}{4\brb{t(L-t)}^{3/2}}
    <
    0.
\]
Thus $f_0$ is concave on $[0,L]$ as well.

\medskip
\noindent\emph{Step B: maximal $h$-increments occur at the boundary.}
Fix $h\in(0,L]$ and define
\[
    F(t)\ceq f_\lambda(t+h)-f_\lambda(t),
    \qquad
    t\in[0,L-h].
\]
If $h=L$, there is nothing to prove. Assume $h\in(0,L)$.
Since $f_\lambda$ is concave, its derivative is nonincreasing on $(0,L)$, and therefore
\[
    F'(t)=f'_\lambda(t+h)-f'_\lambda(t)\le0
    \qquad
    \forall t\in(0,L-h).
\]
By continuity, $F$ is nonincreasing on $[0,L-h]$. Hence
$
    \sup_{t\in[0,L-h]}F(t)
    =
    F(0)
    =
    f_\lambda(h)$.
Using the symmetry of $f_\lambda$ and $f_\lambda(L)=0$, we also get
$
    \inf_{t\in[0,L-h]}F(t)
    =
    F(L-h)
    =
    -f_\lambda(h)$.
Thus, for every $t\in[0,L-h]$,
\begin{equation}
\label{eq:holder_boundary_increment}
    \babs{f_\lambda(t+h)-f_\lambda(t)}
    \le
    f_\lambda(h)
    =
    M_\lambda(h,L-h).
\end{equation}

\medskip
\noindent\emph{Step C: boundary increment bound.}
By \Cref{lem:holder_basic_bounds}, for every $h\in[0,L]$,
\begin{equation}
\label{eq:holder_boundary_increment_bound}
    M_\lambda(h,L-h)
    \le
    M_0(h,L-h)
    =
    \sqrt{h(L-h)}
    \le
    \sqrt h,
\end{equation}
where the last inequality uses $L\le1$.

\medskip
\noindent\emph{Step D: back to $p,q$.}
Assume without loss of generality that $q\le p$.
Define
\[
    \tau(u)
    \ceq
    \min\brb{\max\lrb{u-s,0},L},
    \qquad
    u\in[0,1].
\]
Then
\[
    \gft_\lambda(u,s,b)
    =
    f_\lambda\brb{\tau(u)}
    \qquad
    \forall u\in[0,1].
\]
Let $
    \widetilde h
    \ceq
    \tau(p)-\tau(q)$.
Since $\tau$ is nondecreasing, $\widetilde h\ge0$. Moreover,
\[
    \widetilde h
    =
    \babs{[q,p]\cap[s,b]}
    \le
    p-q.
\]
If $\widetilde h=0$, the claim is immediate. Otherwise, using
\eqref{eq:holder_boundary_increment} with $t=\tau(q)$ and $h=\widetilde h$, and then
\eqref{eq:holder_boundary_increment_bound}, we get
\[
    \babs{\gft_\lambda(p,s,b)-\gft_\lambda(q,s,b)}
    =
    \Babs{f_\lambda\brb{\tau(p)}-f_\lambda\brb{\tau(q)}}
    \le
    M_\lambda\lrb{\widetilde h,L-\widetilde h}
    \le
    \sqrt{\widetilde h}
    \le
    \sqrt{p-q}.
\]
This proves \eqref{eq:holder:gft}.
Taking expectations yields
\[
    \babs{G_\lambda(p)-G_\lambda(q)}
    =
    \Babs{\E\bsb{\gft_\lambda(p,S,B)-\gft_\lambda(q,S,B)}}
    \le
    \E\Bsb{\babs{\gft_\lambda(p,S,B)-\gft_\lambda(q,S,B)}}
    \le
    \sqrt{\labs{p-q}}.
\]
The Rawls Lipschitz claim for $G_{-\infty}$ follows in the same way from the pointwise Rawls
Lipschitz bound.
\end{proof}


\subsection{Comparison Across Fairness Levels}

\begin{lemma}[Approximation by the Rawls Objective]
\label{lem:rawls_approx}
For every $\lambda<0$ and every $p\in[0,1]$,
\begin{equation}
\label{eq:Gslambda:minus:infty}
    0
    \le
    G_\lambda(p)-G_{-\infty}(p)
    \le
    \frac{2^{-1/\lambda}-1}{2}.
\end{equation}
Consequently, for every $\lambda<0$ and every $p\in[0,1]$,
\begin{equation} \label{eq:Gstart:minus:G:lambda:p}
    G_\lambda^\star-G_\lambda(p)
    \le
    G_{-\infty}^\star-G_{-\infty}(p)
    +
    \frac{2^{-1/\lambda}-1}{2}.
\end{equation}
\end{lemma}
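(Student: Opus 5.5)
The plan is to prove a pointwise version of \eqref{eq:Gslambda:minus:infty} for every realized pair $(s,b)$ and then take expectations. Fix $\lambda<0$ and $s,b\in[0,1]$. If $p\notin[s,b]$, both $\gft_\lambda(p,s,b)$ and $\gft_{-\infty}(p,s,b)$ vanish. Otherwise set $x=p-s\ge0$ and $y=b-p\ge0$. By the order monotonicity in \Cref{lem:holder_basic_bounds}, $M_\lambda(x,y)\ge\min(x,y)$, which gives the lower bound $0\le \gft_\lambda-\gft_{-\infty}$.

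For the upper bound, we may assume by symmetry that $0<x\le y$; the case $xy=0$ is trivial. Then
\[
M_\lambda(x,y)\le \lrb{\tfrac{x^\lambda}{2}}^{1/\lambda}=2^{-1/\lambda}x .
\]
This holds because dropping $y^\lambda>0$ decreases the base, and raising to the negative power $1/\lambda$ reverses the inequality. Hence
\[
M_\lambda(x,y)-\min(x,y)\le (2^{-1/\lambda}-1)\min(x,y).
\]
Since $\min(x,y)\le (x+y)/2=(b-s)/2\le 1/2$, we get $\gft_\lambda-\gft_{-\infty}\le (2^{-1/\lambda}-1)/2$ pointwise. Taking expectations over $(S,B)$ yields \eqref{eq:Gslambda:minus:infty}; independence of $S$ and $B$ is not needed here.

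For \eqref{eq:Gstart:minus:G:lambda:p}, write $c_\lambda\ceq(2^{-1/\lambda}-1)/2$. Let $q^\star$ be a maximizer of $G_\lambda$, which exists by continuity (\Cref{lem:G_is_Holder}). Then
\[
G_\lambda^\star=G_\lambda(q^\star)\le G_{-\infty}(q^\star)+c_\lambda\le G_{-\infty}^\star+c_\lambda ,
\]
using the upper bound of \eqref{eq:Gslambda:minus:infty}. Combining this with $G_\lambda(p)\ge G_{-\infty}(p)$, the lower bound of \eqref{eq:Gslambda:minus:infty}, gives the claim.

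The only delicate step is the crude bound $M_\lambda(x,y)\le 2^{-1/\lambda}\min(x,y)$, together with keeping track of the sign reversal when raising to the negative power $1/\lambda$. Everything else is direct.
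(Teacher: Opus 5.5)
Your proof is correct and follows essentially the same route as the paper. Both arguments use the pointwise sandwich $\min(x,y)\le M_\lambda(x,y)\le 2^{-1/\lambda}\min(x,y)$, then $\gft_{-\infty}\le 1/2$, then take expectations. The only cosmetic difference is in the second claim: you pass through a maximizer $q^\star$ of $G_\lambda$, whereas the paper bounds $G_\lambda(q)-G_\lambda(p)$ for every $q$ and takes the supremum, so it never needs a maximizer to exist.
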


\begin{proof}
Fix $\lambda<0$.
For $x,y\ge0$, we first show that
\[
    \min(x,y)
    \le
    M_\lambda(x,y)
    \le
    2^{-1/\lambda}\min(x,y).
\]
The first inequality follows from \Cref{lem:holder_basic_bounds}. For the second, if $xy=0$, the
claim follows from the boundary convention. Assume $x,y>0$. Since one of $x^\lambda$ and
$y^\lambda$ is equal to $\min(x,y)^\lambda$ and the other is nonnegative,
\[
    \frac{x^\lambda+y^\lambda}{2}
    \ge
    \frac{\min(x,y)^\lambda}{2}.
\]
Because $1/\lambda<0$, the map $z\mapsto z^{1/\lambda}$ is decreasing on $(0,+\infty)$, and so
\[
    M_\lambda(x,y)
    =
    \lrb{\frac{x^\lambda+y^\lambda}{2}}^{1/\lambda}
    \le
    \lrb{\frac{\min(x,y)^\lambda}{2}}^{1/\lambda}
    =
    2^{-1/\lambda}\min(x,y).
\]

Therefore, for every $p,s,b\in[0,1]$,
\[
    0
    \le
    \gft_\lambda(p,s,b)-\gft_{-\infty}(p,s,b)
    \le
    \lrb{2^{-1/\lambda}-1}\gft_{-\infty}(p,s,b).
\]
Since
\[
    \gft_{-\infty}(p,s,b)
    =
    \I\lcb{s\le p\le b}\min\brb{(p-s)_+,(b-p)_+}
    \le
    \frac12,
\]
we obtain
\[
    0
    \le
    \gft_\lambda(p,s,b)-\gft_{-\infty}(p,s,b)
    \le
    \frac{2^{-1/\lambda}-1}{2}.
\]
Substituting $s=S$ and $b=B$ and taking expectations proves the first claim \eqref{eq:Gslambda:minus:infty}.

For the second claim \eqref{eq:Gstart:minus:G:lambda:p}, let
\[
    \eta_\lambda\ceq \frac{2^{-1/\lambda}-1}{2}.
\]
For every $q\in[0,1]$, the one-sided approximation gives
\[
    G_\lambda(q)-G_\lambda(p)
    \le
    G_{-\infty}(q)+\eta_\lambda-G_{-\infty}(p).
\]
Taking the supremum over $q\in[0,1]$ yields
\[
    G_\lambda^\star-G_\lambda(p)
    \le
    G_{-\infty}^\star-G_{-\infty}(p)+\eta_\lambda.
\]
This proves the claim.
\end{proof}

\begin{lemma}[Lipschitz Dependence on the Fairness Parameter]
\label{lem:holder_lipschitz_fairness_parameter}
\label{lem:M_lipschitz_in_alpha}
For every $\alpha,\beta\in[0,+\infty)$,
\[
    \sup_{x,y\in[0,1]}
    \labs{M_{-\alpha}(x,y)-M_{-\beta}(x,y)}
    \le
    \labs{\alpha-\beta},
\]
recalling that $M_0$ denotes the geometric mean.
\end{lemma}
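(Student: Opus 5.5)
Throughout, $x,y\in[0,1]$. If $xy=0$, all $M_{-\alpha}(x,y)$ vanish by the boundary convention and there is nothing to prove. For $x,y>0$ I would write $a\ceq\ln x\le0$ and $b\ceq\ln y\le0$, and use the convex function $h(r)=\ln\bigl((e^{ra}+e^{rb})/2\bigr)$ from the proof of \Cref{lem:holder_basic_bounds}. There $\ln M_r(x,y)=h(r)/r$ for $r\neq0$, extended by $h'(0)$ at $r=0$.

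The plan is to show that $\gamma\mapsto M_{-\gamma}(x,y)$ is $1$-Lipschitz on $[0,\infty)$ for each fixed pair. By the mean value theorem it suffices to bound its derivative by $1$ in absolute value. Continuity at $\gamma=0$ handles the endpoint.

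By order monotonicity, $\gamma\mapsto M_{-\gamma}$ is nonincreasing, so I only need an upper bound on $-\frac{\dif}{\dif\gamma}M_{-\gamma}=\frac{\dif}{\dif r}M_r$ at $r=-\gamma<0$. Writing $\ell(r)=h(r)/r$, we have
\[
\frac{\dif}{\dif r}M_r=M_r\,\ell'(r)=M_r\,\frac{rh'(r)-h(r)}{r^2}.
\]
The key step is to bound $rh'(r)-h(r)$. Let $w_1,w_2$ be the softmax weights $e^{ra}/(e^{ra}+e^{rb})$ and $e^{rb}/(e^{ra}+e^{rb})$. Then
\[
rh'(r)-h(r)=\sum_i w_i\ln(2w_i),
\]
which is $\ln2$ minus the Shannon entropy of $(w_1,w_2)$, and hence lies in $[0,\ln2]$. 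This gives
\[
\frac{\dif}{\dif r}M_r\le M_r\frac{\ln 2}{r^2}\le \min(1,\cdot)\text{-type bounds}.
\]
Since $M_r\le\sqrt{xy}\le1$, this already gives a derivative of at most $\ln2/r^2\le1$ when $|r|\ge\sqrt{\ln 2}$.

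The main obstacle is small $|r|$, where $\ln 2/r^2$ blows up. Here I would use the finer bound coming from the entropy. Assume WLOG $a<b\le0$ and set $d\ceq b-a>0$. Then $w_1=\sigma(rd)$ for the logistic function $\sigma$, and $\sum_i w_i\ln(2w_i)\le (rd)^2/8$, because the KL divergence to uniform is at most a quadratic in the logit. This yields a derivative of at most $M_r d^2/8$.

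It then remains to control $M_r d^2$ using $M_r\le\sqrt{xy}=e^{(a+b)/2}$. Writing $u=-a\ge d$, we get $e^{(a+b)/2}d^2\le e^{-u+d/2}d^2\le e^{-d/2}d^2\le 16/e^2$. Hence the derivative is at most $2/e^2<1$.

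Combining the two regimes, $|\frac{\dif}{\dif r}M_r|\le1$ for all $r<0$. The limit $r\uparrow0$ is handled by continuity of $r\mapsto M_r$, which holds since $\ell$ extends continuously with value $h'(0)$. The mean value theorem then gives $|M_{-\alpha}-M_{-\beta}|\le|\alpha-\beta|$ pointwise, and taking the supremum over $x,y\in[0,1]$ concludes.

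The delicate check will be the quadratic bound $\mathrm{KL}\le (rd)^2/8$. I would verify it as follows. Let $\psi(t)=\ln 2-H(\sigma(t))$. Then $\psi(0)=0$, $\psi'(0)=0$, and $\psi''(t)=\sigma'(t)(1+t\,\sigma''(t)/\sigma'(t))$. If this turns out to be messy, I would fall back on the cruder estimate $\psi(t)\le \min(\ln2,t^2/2)$, which together with $e^{-d/2}d^2/2\le 8/e^2\approx1.08$ may need slight adjustment of the constants, for instance by using $M_r\le\min(x,y)\,2^{-1/r}$.
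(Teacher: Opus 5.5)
Your argument is correct and is essentially the paper's proof in unnormalized coordinates: your quantity $rh'(r)-h(r)=\psi(rd)$ is exactly the paper's $t\tanh t-\ln\cosh t$ at $t=rd/2$, with $d=\ln(y/x)$ playing the role of $q$, and your bound $M_r d^2/8\le e^{-d/2}d^2/8\le 2/e^2$ is the paper's $e^{-q/2}q^2/8\le 2/e^2$ (the paper obtains $M_r\le y\,e^{-q/2}$ through homogeneity and $y\le 1$, while you use $M_r\le\sqrt{xy}$ and $-\ln x\ge d$). The step you flag as delicate is a one-liner: $\psi'(t)=t\,\sigma'(t)\le t/4$ for $t\ge 0$ and $\psi$ is even (which also absorbs your harmless sign slip $w_1=\sigma(-rd)$), so $\psi(t)\le t^2/8$ for all $t$, the quadratic bound alone covers every $r<0$, and both your large-$|r|$ regime and your fallback are unnecessary (the fallback would in any case only give $8/e^2>1$).
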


\begin{proof}
By symmetry, it is enough to consider $0\le x\le y\le1$.
If $x=0$, then the boundary convention gives
\[
    M_{-\alpha}(0,y)=M_{-\beta}(0,y)=0,
\]
and the claim is immediate.
Hence assume $0<x\le y$.
There exists $q\ge0$ such that $x=ye^{-q}$.
By homogeneity,
\[
    M_{-\alpha}(x,y)
    =
    yM_{-\alpha}(e^{-q},1),
    \qquad
    M_{-\beta}(x,y)
    =
    yM_{-\beta}(e^{-q},1).
\]
Since $y\le1$, it is enough to prove that, for every $q\ge0$,
\[
    \labs{M_{-\alpha}(e^{-q},1)-M_{-\beta}(e^{-q},1)}
    \le
    \labs{\alpha-\beta}.
\]

For $\alpha>0$ and $q\ge0$, define
\[
    m_\alpha(q)
    \ceq
    M_{-\alpha}(e^{-q},1)
    =
    \lrb{\frac{e^{\alpha q}+1}{2}}^{-1/\alpha}.
\]
Also define
\[
    m_0(q)
    \ceq
    M_0(e^{-q},1)
    =
    e^{-q/2}.
\]
We prove that $\alpha\mapsto m_\alpha(q)$ is $1$-Lipschitz uniformly in $q$.

For $\alpha>0$, write
$t\ceq \frac{\alpha q}{2}$.
Since
\[
    \frac{e^{\alpha q}+1}{2}
    =
    e^{\alpha q/2}\cosh\lrb{\frac{\alpha q}{2}},
\]
we have
\[
    m_\alpha(q)
    =
    \exp\lrb{
        -\frac q2
        -
        \frac1\alpha
        \ln\cosh\lrb{\frac{\alpha q}{2}}
    }.
\]
Differentiating with respect to $\alpha$ gives
\[
    \frac{\partial}{\partial \alpha}m_\alpha(q)
    =
    m_\alpha(q)
    \frac{
        \ln\cosh(t)-t\tanh(t)
    }{\alpha^2}.
\]
Therefore
\[
    \labs{
        \frac{\partial}{\partial \alpha}m_\alpha(q)
    }
    =
    m_\alpha(q)
    \frac{
        t\tanh(t)-\ln\cosh(t)
    }{\alpha^2}.
\]
Now the function
$
    h(t)
    \ceq
    t\tanh(t)-\ln\cosh(t)
$
satisfies $h(0)=0$ and
$
    h'(t)
    =
    t\,\operatorname{sech}^2(t)
    \le
    t$.
Hence
\[
    0\le h(t)\le\frac{t^2}{2}.
\]
Using also $m_\alpha(q)\le m_0(q)=e^{-q/2}$ from \Cref{lem:holder_basic_bounds}, we obtain
\[
    \labs{
        \frac{\partial}{\partial \alpha}m_\alpha(q)
    }
    \le
    e^{-q/2}
    \frac{t^2}{2\alpha^2}
    =
    e^{-q/2}
    \frac{q^2}{8}.
\]
Finally,
\[
    \sup_{q\ge0}e^{-q/2}\frac{q^2}{8}
    =
    \frac{2}{e^2}
    \le1.
\]
Thus
\[
    \labs{
        \frac{\partial}{\partial \alpha}m_\alpha(q)
    }
    \le1
    \qquad
    \forall \alpha>0,\ \forall q\ge0.
\]
Since $m_\alpha(q)\to m_0(q)$ as $\alpha\downarrow0$, it follows that
\[
    \labs{m_\alpha(q)-m_\beta(q)}
    \le
    \labs{\alpha-\beta}
    \qquad
    \forall \alpha,\beta\in[0,+\infty),\ \forall q\ge0.
\]
The reduction above proves the claim.
\end{proof}


\subsection{Kernel Identities and Pointwise Bounds}

\begin{lemma}[Basic Kernel Bound]
\label{lem:kernel_basic_bound}
For every $\lambda\in(-\infty,0]$ and every $x,y>0$,
\[
    H_\lambda(x,y)
    \le
    \frac{1-\lambda}{4\sqrt{xy}}.
\]
\end{lemma}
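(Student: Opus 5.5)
The bound follows directly from the definition of $H_\lambda$ once we control the ratio $M_\lambda/M_0$. For $x,y>0$ the definition gives
\[
    H_\lambda(x,y)
    =
    \frac{1-\lambda}{4}\lrb{\frac{M_\lambda(x,y)}{M_0(x,y)}}^{1-2\lambda}\frac{1}{M_0(x,y)},
    \qquad
    M_0(x,y)=\sqrt{xy}.
\]
It therefore suffices to show that the middle factor is at most one.

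To do so, I will invoke the order monotonicity in \Cref{lem:holder_basic_bounds}. Taking $\mu=0$ there gives $M_\lambda(x,y)\le M_0(x,y)=\sqrt{xy}$ for every $\lambda\in(-\infty,0]$. Since $x,y>0$, we have $M_0(x,y)>0$ and $M_\lambda(x,y)\ge\min(x,y)>0$. Hence the ratio $M_\lambda(x,y)/M_0(x,y)$ lies in $(0,1]$.

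Next, the exponent satisfies $1-2\lambda\ge1>0$ because $\lambda\le0$. Raising a number in $(0,1]$ to a positive power keeps it in $(0,1]$, so
\[
    \lrb{\frac{M_\lambda(x,y)}{M_0(x,y)}}^{1-2\lambda}\le1.
\]
Substituting this into the definition yields $H_\lambda(x,y)\le\frac{1-\lambda}{4\sqrt{xy}}$. The constant $\frac{1-\lambda}{4}$ is nonnegative, so multiplying the inequality by it preserves the direction.

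There is no real obstacle. The only points to check are the signs: the exponent $1-2\lambda$ is positive and the ratio is at most one, both of which follow from $\lambda\le0$ and the earlier lemma.
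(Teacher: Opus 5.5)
Your proof is correct and is essentially the paper's argument: the order monotonicity $M_\lambda\le M_0$ from \Cref{lem:holder_basic_bounds}, together with $1-2\lambda\ge1$, bounds the middle factor by one, and $M_0(x,y)=\sqrt{xy}$ then gives the claim. Your extra check that the ratio is strictly positive is harmless but not needed, since any number in $[0,1]$ raised to a positive exponent is still at most one.
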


\begin{proof}
By definition,
\[
    H_\lambda(x,y)
    =
    \frac{1-\lambda}{4}
    \lrb{\frac{M_\lambda(x,y)}{M_0(x,y)}}^{1-2\lambda}
    \frac1{M_0(x,y)}.
\]
By \Cref{lem:holder_basic_bounds},
$
    M_\lambda(x,y)\le M_0(x,y)$.
Since $1-2\lambda\ge1$ and $M_0(x,y)=\sqrt{xy}$, the claim follows.
\end{proof}

\begin{lemma}[Kernel Imbalance Bound]
\label{lem:kernel_imbalance_bound}
\label{cor:kernel_imbalance_bound}
For every $\lambda\in(-\infty,0]$ and every $x,y>0$,
\[
    H_\lambda(x,y)
    \le
    \frac{1-\lambda}{\sqrt{xy}}
    \lrb{
        1-\labs{\frac{x-y}{x+y}}
    }^{-\lambda}.
\]
\end{lemma}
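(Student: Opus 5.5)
Since $H_\lambda$ is symmetric and the right-hand side depends only on $\sqrt{xy}$ and on $\labs{x-y}/(x+y)$, we may assume $0<x\le y$, so that
\[
    1-\labs{\frac{x-y}{x+y}}=\frac{2x}{x+y}.
\]
The plan is to bound the factor $\lrb{M_\lambda(x,y)/M_0(x,y)}^{1-2\lambda}$ in the definition of $H_\lambda$ directly, rather than through the crude estimate $M_\lambda\le M_0$ used in \Cref{lem:kernel_basic_bound}.

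For $\lambda=0$ the ratio equals $1$. The claim then reduces to $H_0=\frac{1}{4\sqrt{xy}}\le\frac{1}{\sqrt{xy}}$, which holds. So we take $\lambda<0$.

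For $\lambda<0$, the proof of \Cref{lem:rawls_approx} gives $M_\lambda(x,y)\le 2^{-1/\lambda}\min(x,y)=2^{-1/\lambda}x$. Hence
\[
    \frac{M_\lambda}{M_0}\le 2^{-1/\lambda}\sqrt{x/y}.
\]
Write $r\ceq x/y\in(0,1]$ and $\mu\ceq-\lambda>0$. This yields
\[
    \lrb{\frac{M_\lambda}{M_0}}^{1-2\lambda}
    \le 2^{(1+2\mu)/\mu}\,r^{(1+2\mu)/2}
    =4\cdot 2^{1/\mu}\,r^{1/2}\,r^{\mu}.
\]
We also have the trivial bound $\lrb{M_\lambda/M_0}^{1-2\lambda}\le1$. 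The target is
\[
    \lrb{\frac{2x}{x+y}}^{\mu}=\lrb{\frac{2r}{1+r}}^{\mu}\ge r^{\mu}.
\]
So it suffices to show $\frac14\lrb{M_\lambda/M_0}^{1+2\mu}\le r^{\mu}$, up to the slack between $\frac{1-\lambda}{4}$ and $1-\lambda$.

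The remaining step is an elementary case split. When $2^{1/\mu}r^{1/2}\le1$, the second bound gives $\frac14\lrb{M_\lambda/M_0}^{1+2\mu}\le 2^{1/\mu}r^{1/2}r^{\mu}\le r^{\mu}$. When $2^{1/\mu}r^{1/2}>1$, i.e.\ $r>4^{-1/\mu}$, we have $r^{\mu}>1/4$, so the trivial bound gives $\frac14\lrb{M_\lambda/M_0}^{1+2\mu}\le\frac14<r^\mu$.

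This closes the argument with room to spare: the constant in front is actually $\frac{1-\lambda}{4}$ rather than $1-\lambda$, and we used $r^\mu$ instead of the larger $\lrb{2r/(1+r)}^\mu$.

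The main obstacle is getting the exponent of $r$ right. The naive bound $M_\lambda\le 2^{-1/\lambda}\min$ carries a factor $2^{(1+2\mu)/\mu}$ that blows up as $\mu\to0$. The case split above, combining it with the trivial bound $M_\lambda\le M_0$, is what tames this blow-up, and it is the step I would check most carefully.
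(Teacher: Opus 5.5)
Your proof is correct, and it takes a different route from the paper. The paper writes $M_\lambda/M_0$ in closed form in terms of $r=\sqrt{x/y}$ and rewrites $1-\lvert x-y\rvert/(x+y)=2r^2/(1+r^2)$. This reduces the claim to the scalar inequality $2^{\lambda-1/\lambda}r(1+r^2)^{-\lambda}\le(1+r^{-2\lambda})^{2-1/\lambda}$. It then proves that inequality by splitting on $\lambda\in[-1,0)$ versus $\lambda\le-1$, using AM--GM in both regimes and Jensen's inequality for $t\mapsto t^{-\lambda}$ in the second.

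You never compute the ratio. Instead you bound it above by two comparison bounds already in the paper: $M_\lambda\le 2^{-1/\lambda}\min(x,y)$ (from the proof of \Cref{lem:rawls_approx}) and $M_\lambda\le M_0$ (\Cref{lem:holder_basic_bounds}). You then split on the geometry, $x/y$ versus $4^{1/\lambda}$, rather than on $\lambda$. Your route is shorter and reuses existing facts. It also gives a slightly stronger bound,
\[
H_\lambda(x,y)\le\frac{1-\lambda}{\sqrt{xy}}\Bigl(\frac{\min(x,y)}{\max(x,y)}\Bigr)^{-\lambda},
\]
which implies the stated one because $\min(x,y)/\max(x,y)\le 1-\lvert x-y\rvert/(x+y)$. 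The paper's route keeps the ratio exact until the final step, but needs noticeably more algebra.

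One correction to your closing commentary: the factor $\tfrac14$ in $H_\lambda$ is not room to spare. Case A uses it to cancel the $2^2$ inside $2^{(1+2\mu)/\mu}$, and Case B uses it against $r^\mu>\tfrac14$; without it, your argument breaks. The only genuine slack you leave is the discarded factor $\bigl(2/(1+r)\bigr)^{\mu}\ge1$.
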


\begin{proof}
By definition,
\[
    H_\lambda(x,y)
    =
    \frac{1-\lambda}{4}
    \lrb{\frac{M_\lambda(x,y)}{M_0(x,y)}}^{1-2\lambda}
    \frac1{M_0(x,y)}.
\]
Since $M_0(x,y)=\sqrt{xy}$, it is enough to prove that
\begin{equation}
\label{eq:mean_ratio_imbalance_bound}
    \lrb{\frac{M_\lambda(x,y)}{M_0(x,y)}}^{1-2\lambda}
    \le
    4
    \lrb{
        1-\labs{\frac{x-y}{x+y}}
    }^{-\lambda}.
\end{equation}

If $\lambda=0$, the left-hand side of \eqref{eq:mean_ratio_imbalance_bound} is equal to $1$,
while the right-hand side is equal to $4$.
Hence the claim is immediate. We may assume $\lambda<0$.

By symmetry in $x$ and $y$, it is enough to consider $0<x\le y$.
Set
$
    r\ceq \sqrt{\frac{x}{y}}\in(0,1]$.
Then $x=yr^2$. Moreover,
\[
    M_\lambda(x,y)
    =
    \lrb{\frac{x^\lambda+y^\lambda}{2}}^{1/\lambda}
    =
    y\lrb{\frac{r^{2\lambda}+1}{2}}^{1/\lambda}
    =
    y2^{-1/\lambda}r^2\lrb{1+r^{-2\lambda}}^{1/\lambda}.
\]
Since
$
    M_0(x,y)=\sqrt{xy}=yr$,
we get
\[
    \frac{M_\lambda(x,y)}{M_0(x,y)}
    =
    2^{-1/\lambda}r\lrb{1+r^{-2\lambda}}^{1/\lambda}.
\]
Therefore
\[
    \lrb{\frac{M_\lambda(x,y)}{M_0(x,y)}}^{1-2\lambda}
    =
    \frac{
        2^{2-1/\lambda}r^{1-2\lambda}
    }{
        \lrb{1+r^{-2\lambda}}^{2-1/\lambda}
    }.
\]

On the other hand,
\[
    \labs{\frac{x-y}{x+y}}
    =
    \frac{y-x}{x+y}
    =
    \frac{1-r^2}{1+r^2},
\]
and hence
\[
    1-\labs{\frac{x-y}{x+y}}
    =
    \frac{2r^2}{1+r^2}.
\]
Thus \eqref{eq:mean_ratio_imbalance_bound} is equivalent to
\begin{equation}
\label{eq:equivalent_kernel_imbalance_bound}
    2^{\lambda-1/\lambda}r\lrb{1+r^2}^{-\lambda}
    \le
    \lrb{1+r^{-2\lambda}}^{2-1/\lambda}.
\end{equation}

First assume $-1\le\lambda<0$. Since $0<r\le1$ and $-\lambda\le1$, we have
$r^{-2\lambda}\ge r^2$, hence
\[
    1+r^{-2\lambda}\ge1+r^2.
\]
Also, by the arithmetic-geometric mean inequality,
$
    1+r^{-2\lambda}
    \ge
    2r^{-\lambda}$.
Therefore,
\begin{align*}
    \lrb{1+r^{-2\lambda}}^{2-1/\lambda}
&=
    \lrb{1+r^{-2\lambda}}^{-\lambda}
    \lrb{1+r^{-2\lambda}}^{2+\lambda-1/\lambda}
\ge
    \lrb{1+r^2}^{-\lambda}
    \lrb{1+r^{-2\lambda}}^{-1/\lambda}
\\
&\ge
    \lrb{1+r^2}^{-\lambda}
    \lrb{2r^{-\lambda}}^{-1/\lambda}
=
    2^{-1/\lambda}r\lrb{1+r^2}^{-\lambda}.
\end{align*}
Since $\lambda<0$, we have $2^\lambda\le1$, and therefore
$
    2^{\lambda-1/\lambda}
    \le
    2^{-1/\lambda}.
$
This proves \eqref{eq:equivalent_kernel_imbalance_bound} in the case $-1\le\lambda<0$.

Now assume $\lambda\le-1$. Since the map $t\mapsto t^{-\lambda}$ is convex on $[0,+\infty)$,
Jensen's inequality gives
\[
    \frac{1+r^{-2\lambda}}{2}
    \ge
    \lrb{\frac{1+r^2}{2}}^{-\lambda}.
\]
Equivalently,
\[
    1+r^{-2\lambda}
    \ge
    2^{1+\lambda}\lrb{1+r^2}^{-\lambda}.
\]
Again, by the arithmetic-geometric mean inequality,
$
    1+r^{-2\lambda}
    \ge
    2r^{-\lambda}$.
Since $1+r^{-2\lambda}\ge1$,
\[
    \lrb{1+r^{-2\lambda}}^{2-1/\lambda}
    \ge
    \lrb{1+r^{-2\lambda}}^{1-1/\lambda}.
\]
Therefore,
\begin{align*}
    \lrb{1+r^{-2\lambda}}^{2-1/\lambda}
&\ge
    \lrb{1+r^{-2\lambda}}
    \lrb{1+r^{-2\lambda}}^{-1/\lambda}
\\
&\ge
    2^{1+\lambda}\lrb{1+r^2}^{-\lambda}
    \lrb{2r^{-\lambda}}^{-1/\lambda}
\\
&=
    2^{1+\lambda-1/\lambda}
    r\lrb{1+r^2}^{-\lambda}.
\end{align*}
Since
    $2^{1+\lambda-1/\lambda}
    \ge
    2^{\lambda-1/\lambda}$,
we again obtain \eqref{eq:equivalent_kernel_imbalance_bound}.
This proves \eqref{eq:mean_ratio_imbalance_bound}.

Substituting \eqref{eq:mean_ratio_imbalance_bound} into the definition of $H_\lambda$ yields
\[
    H_\lambda(x,y)
    \le
    \frac{1-\lambda}{4}
    \cdot
    4
    \lrb{
        1-\labs{\frac{x-y}{x+y}}
    }^{-\lambda}
    \frac1{\sqrt{xy}},
\]
which is the desired bound.
\end{proof}

\section{Proof of the Kernel Reconstruction Lemma}
\label{app:kernel_reconstruction}
\convolutionlemma*

\begin{proof}
    \label{proof:convolution_lemma}
	First, notice that if $p \le s$ or $b \le p$ the statement is immediate.
    It is then enough to prove the statement for $s<p<b$.
    In this case, we have
	\begin{align*}
		\gft_{-\infty}(p,s,b)
	&=
		\min\brb{(p-s)_+,(b-p)_+}
	=
		\int_0^1 \I\{ u \le (p-s)_+ \}\I\{ u \le (b-p)_+ \} \dif u
	\\
    &=
		\int_0^1 \I\{s \le p-u  \} \I\{p+u\le b\} \dif u\;
    =
        \int_0^{\min(p,1-p)} \I\{s \le p-u  \} \I\{p+u\le b\} \dif u,
	\end{align*}
    where in the last equality we used that the integrand vanishes whenever $u>p$ (since then $p-u<0\le s$), and also whenever $u>1-p$ (since then $p+u>1\ge b$).
    For the second claim, replace $s$ and $b$ by the independent random variables $S$ and $B$, take expectations, and apply Tonelli's theorem.
    This gives, for any $p \in [0,1]$,
    \begin{align*}
        G_{-\infty}(p)
        &=
        \int_0^{\min(p,1-p)}
        \E\bsb{
            \I\{S \le p-u\}
            \I\{p+u \le B\}
        }
        \dif u
        \\
        &=
        \int_0^{\min(p,1-p)}
        \Pb\bsb{\lcb{S \le p-u} \cap \lcb{ p+u \le B}}
        \dif u
        \\
        &=
        \int_0^{\min(p,1-p)}
        \Pb\lsb{S \le p-u} \Pb\lsb{ p+u \le B}
        \dif u
        =
        \int_0^{\min(p,1-p)}
        F_S(p-u)F_B^\circ(p+u)
        \dif u ,
    \end{align*}
    where the penultimate equality uses the independence of $S$ and $B$, and the last equality uses the definitions of $F_S$ and $F_B^\circ$.
\end{proof}

\kernelreconstructionlemma*

\begin{proof}
Fix $\lambda\in(-\infty,0]$ and $p,s,b\in[0,1]$.
Throughout the proof, we interpret the integrand arbitrarily on the boundary sets where
$p-v=0$ or $w-p=0$; these sets have two-dimensional Lebesgue measure zero and hence do not affect the value of the integrals.
We first prove the pointwise identity.
If $p\le s$, then for every $v\in[0,p]$ we have $v\le p\le s$, so
$\I\{s\le v\}=0$ except possibly on a null set.
If $b\le p$, then for every $w\in[p,1]$ we have $w\ge p\ge b$, so
$\I\{w\le b\}=0$ except possibly on a null set.
In both cases, the right-hand side is equal to zero, and so is
$\gft_\lambda(p,s,b)$.

It remains to consider the case $s<p<b$.
Set $x\coloneqq p-s$ and
 $y\coloneqq b-p$.
By \Cref{lem:holder_differential_identities},
\[
    M_\lambda(x,y)
    =
    \int_0^x\int_0^y
    H_\lambda(\alpha,\beta)\,\dif\beta\,\dif\alpha .
\]
Using the change of variables
$
    \alpha=p-v,
     \beta=w-p$,
we obtain
\begin{align*}
    \gft_\lambda(p,s,b)
&=
    M_\lambda(p-s,b-p)
=
    \int_0^{p-s}\int_0^{b-p}
    H_\lambda(\alpha,\beta)\,\dif\beta\,\dif\alpha
\\
&=
    \int_s^p\int_p^b
    H_\lambda(p-v,w-p)\,\dif w\,\dif v
=
    \int_0^p\int_p^1
    H_\lambda(p-v,w-p)
    \I\{s\le v\}
    \I\{w\le b\}
    \,\dif w\,\dif v .
\end{align*}
This proves the first claim.

For the second claim, substitute $s=S$ and $b=B$, take expectations, and apply Tonelli's theorem,
using nonnegativity of the integrand. Then
\begin{align*}
    G_\lambda(p)
&=
    \int_0^p\int_p^1
    H_\lambda(p-v,w-p)
    \E\bsb{
        \I\{S\le v\}
        \I\{w\le B\}
    }
    \,\dif w\,\dif v
\\
&=
    \int_0^p\int_p^1
    H_\lambda(p-v,w-p)
    \Pb\lrb{\{S\le v\}\cap\{w\le B\}}
    \,\dif w\,\dif v .
\end{align*}
By independence of $S$ and $B$,
\[
    \Pb\lrb{\{S\le v\}\cap\{w\le B\}}
    =
    \Pb\{S\le v\}\Pb\{w\le B\}
    =
    F_S(v)F_B^\circ(w).
\]
Therefore
\[
    G_\lambda(p)
    =
    \int_0^p\int_p^1
    H_\lambda(p-v,w-p)
    F_S(v)F_B^\circ(w)
    \,\dif w\,\dif v .
\]
This concludes the proof.
\end{proof}

\begin{remark}[The Rawlsian Limit as an Anti-Diagonal Approximation of Identity]
\label{remark:kernel_approximation}
The kernel reconstruction admits a useful interpretation in the Rawlsian limit.
For a fixed $p\in[0,1]$ and $\lambda\in(-\infty,0]$, define the absolutely continuous measure $\mu_{\lambda,p}$ on $[0,p]\times[p,1]$ by
\[
    \mu_{\lambda,p}\lsb{A}
    \coloneqq
    \int_A
    H_{\lambda}(p-v,w-p)\dif w\dif v
\]
for every measurable set $A\subset [0,p]\times[p,1]$.
Let
$
    L_p
    \coloneqq
    \min\lrb{p,1-p}
$
and define $\gamma_p:[0,L_p]\to[0,p]\times[p,1]$ by
$
    \gamma_p(u)
    \coloneqq
    (p-u,p+u)
$.
We denote by
$
    \nu_p
$
the push-forward of Lebesgue measure on $[0,L_p]$ through $\gamma_p$.
The measure $\nu_p$ is supported on the anti-diagonal segment
\[
    \Gamma_p
    \coloneqq
    \bcb{(p-u,p+u):0\le u\le L_p}.
\]
Equivalently, $\nu_p=(1/\sqrt{2})\mathcal H^1|_{\Gamma_p}$, where $\mathcal H^1$ is the $1$-dimensional Hausdorff measure.
Then, as $\lambda\to-\infty$, $\mu_{\lambda,p}$ behaves as an approximation of $\nu_p$.
More precisely:
\begin{enumerate}
    \item $\mu_{\lambda,p}$ is nonnegative;
    \item its total mass converges to the mass of the limiting anti-diagonal measure:
    \[
        \mu_{\lambda,p}\bsb{[0,p]\times[p,1]}
        \to
        \nu_p\bsb{\Gamma_p}
        =
        L_p,
        \qquad
        \lambda\to-\infty\;;
    \]
    \item for every $\eta>0$, its mass outside an $\eta$-tube around the anti-diagonal vanishes:
    \[
        \mu_{\lambda,p}\Bsb{\bcb{(v,w):\labs{v+w-2p}>\eta}}
        \to
        0,
        \qquad
        \lambda\to-\infty\;;
    \]
    \item the limiting mass is uniformly distributed along the anti-diagonal segment: for every interval $I\subseteq[0,L_p]$,
    \[
        \mu_{\lambda,p}
        \Bsb{
            \bcb{(v,w)\in[0,p]\times[p,1]: p-v\in I}
        }
        \to
        \nu_p\Bsb{\bcb{(p-u,p+u):u\in I}}
        =
        |I|,
        \qquad
        \lambda\to-\infty.
    \]
\end{enumerate}
It follows that, for every bounded continuous $\varphi:[0,p]\times[p,1]\to\mathbb R$,
\[
    \int_0^p\int_p^1
    \varphi(v,w)H_{\lambda}(p-v,w-p)\dif w\dif v
    \to
    \int_{\Gamma_p}\varphi\dif\nu_p
    =
    \int_0^{L_p}\varphi(p-u,p+u)\dif u,
    \qquad
    \lambda\to-\infty.
\]

Thus, as $\lambda\to-\infty$, the two-dimensional reconstruction collapses to the one-dimensional Rawlsian convolution along the anti-diagonal.
This provides a useful sanity check on the kernel formula and explains, at a structural level, why the Rawlsian case is substantially simpler than the finite-order Rawls-to-Nash objectives.
In the Rawlsian limit, discretizing the anti-diagonal yields a one-dimensional sum of uniformly bounded terms that can be arranged to use disjoint time indices.
For finite $\lambda$, the estimator of \Cref{def:estimator_G} sums over a rectangle of seller-side and buyer-side observations, with singular weights concentrating near the anti-diagonal and with repeated reuse of observations along rows and columns.
\end{remark}

\begin{figure}[h]
    \centering

    \begin{subfigure}[t]{0.32\textwidth}
        \centering
        \includegraphics[width=\linewidth]{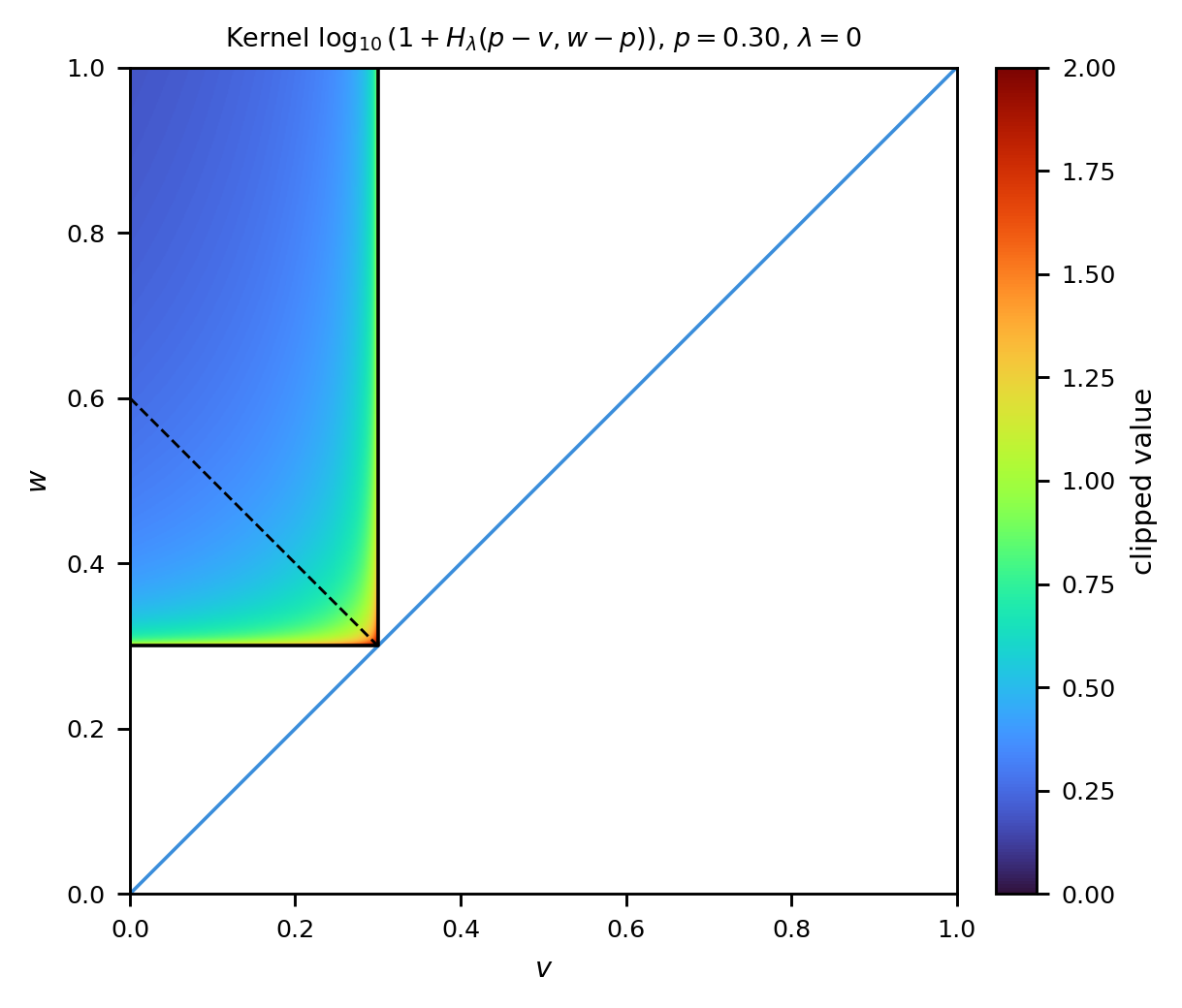}
        \caption{$\lambda=0$.}
        \label{fig:kernel-heatmap-lambda-0}
    \end{subfigure}
    \hfill
    \begin{subfigure}[t]{0.32\textwidth}
        \centering
        \includegraphics[width=\linewidth]{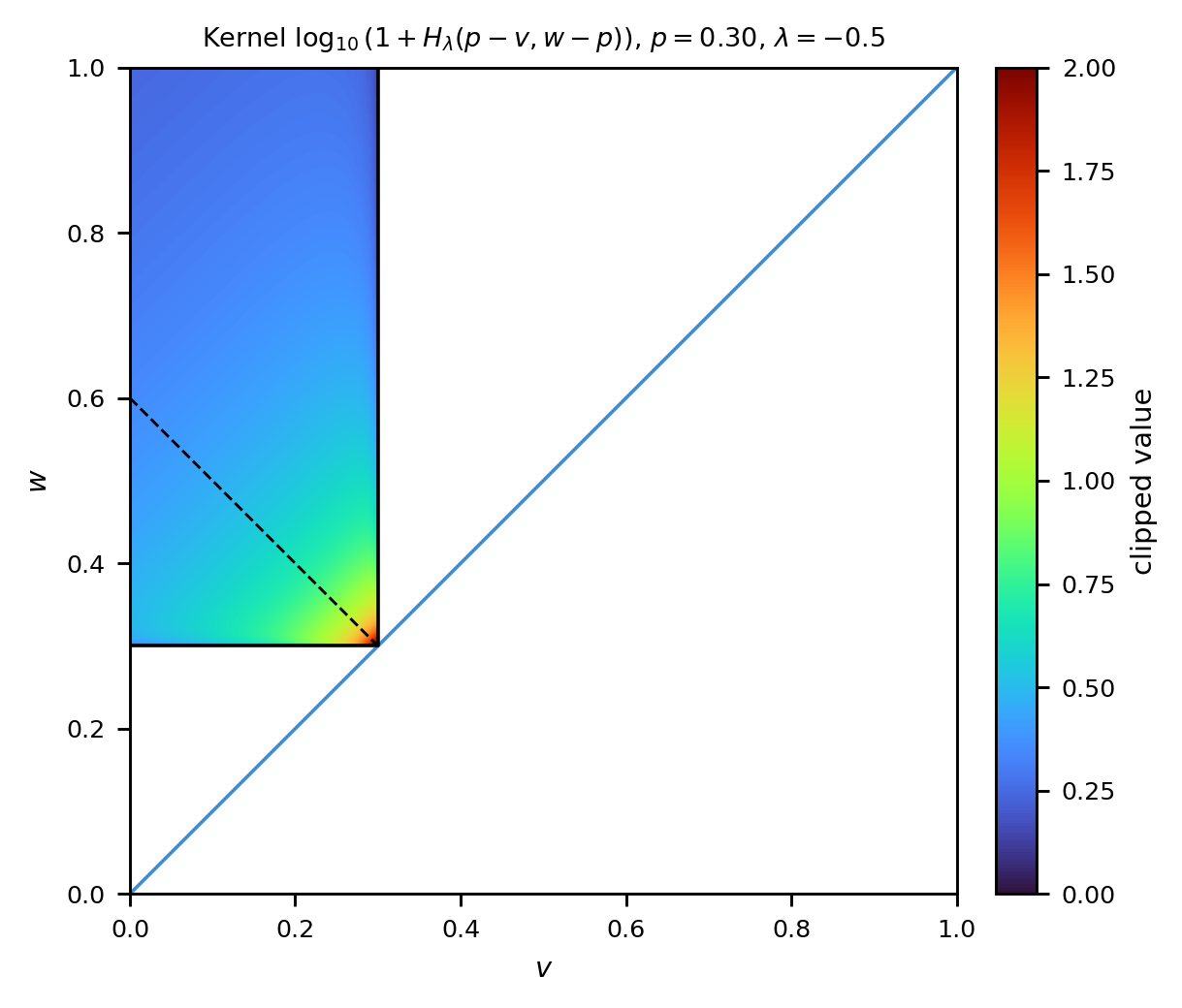}
        \caption{$\lambda=-0.5$.}
        \label{fig:kernel-heatmap-lambda-m0p5}
    \end{subfigure}
    \hfill
    \begin{subfigure}[t]{0.32\textwidth}
        \centering
        \includegraphics[width=\linewidth]{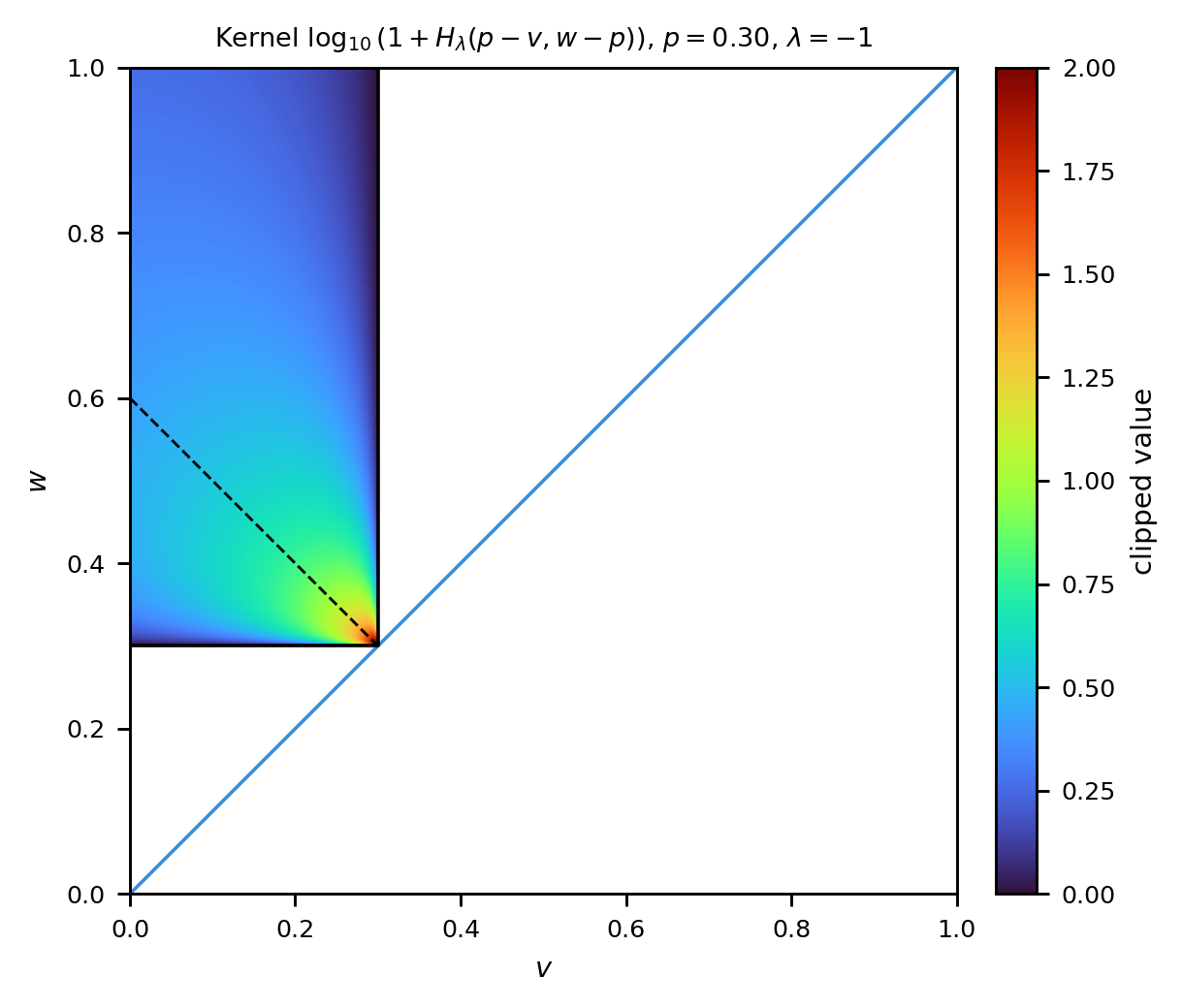}
        \caption{$\lambda=-1$.}
        \label{fig:kernel-heatmap-lambda-m1}
    \end{subfigure}

    \medskip

    \begin{subfigure}[t]{0.32\textwidth}
        \centering
        \includegraphics[width=\linewidth]{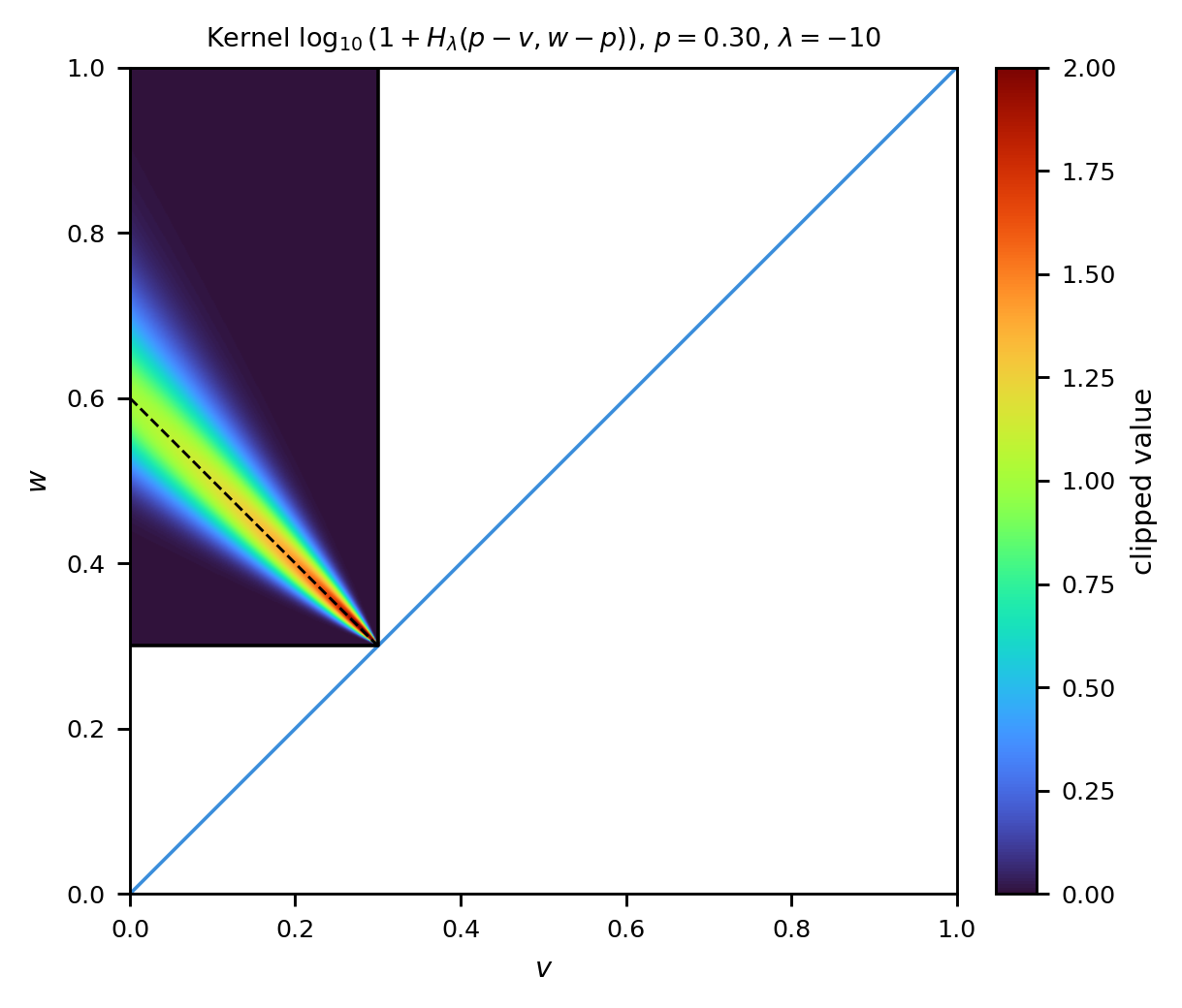}
        \caption{$\lambda=-10$.}
        \label{fig:kernel-heatmap-lambda-m10}
    \end{subfigure}
    \hfill
    \begin{subfigure}[t]{0.32\textwidth}
        \centering
        \includegraphics[width=\linewidth]{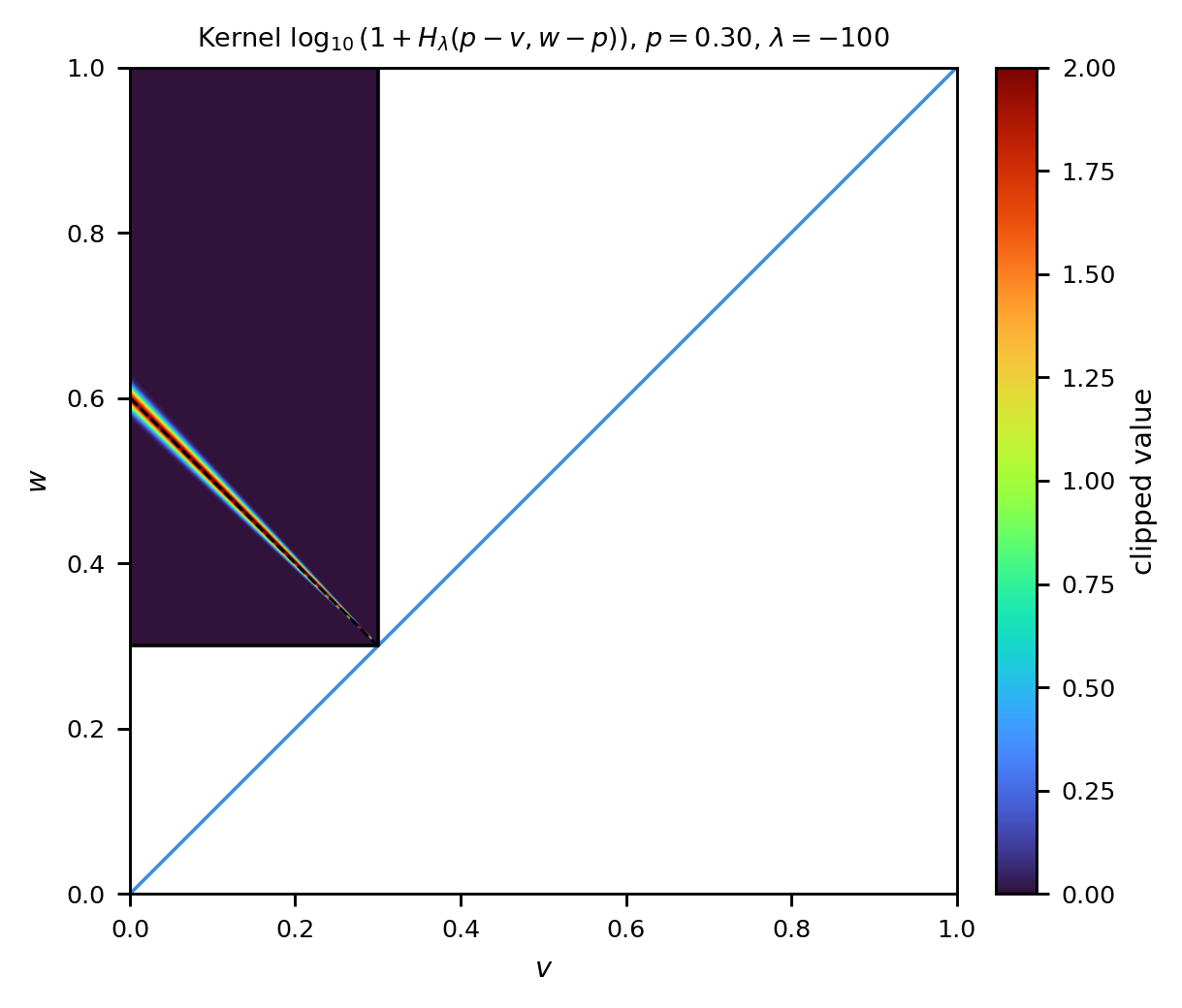}
        \caption{$\lambda=-100$.}
        \label{fig:kernel-heatmap-lambda-m100}
    \end{subfigure}
    \hfill
    \begin{subfigure}[t]{0.32\textwidth}
        \centering
        \includegraphics[width=\linewidth]{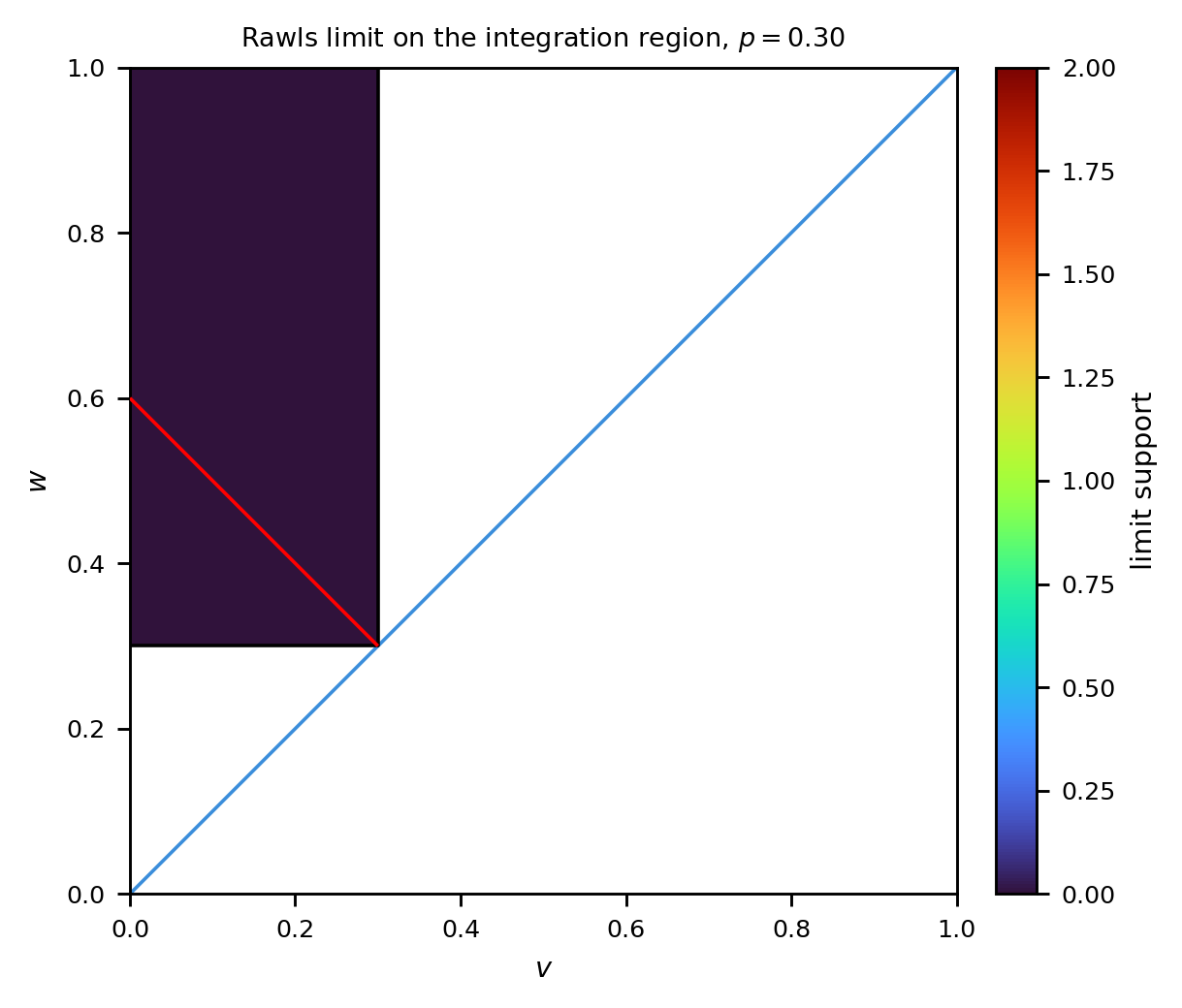}
        \caption{Rawls limit.}
        \label{fig:kernel-heatmap-rawls}
    \end{subfigure}

    \caption{
    Heatmaps of the reconstruction kernel on the relevant integration region.
    For finite $\lambda$, the panels show $\log_{10}\brb{1+H_\lambda(p-v,w-p)}$ on the rectangle $[0,p]\times[p,1]$, with $p=0.30$.
    The displayed color scale is common across finite-$\lambda$ panels and clipped to $[0,2]$.
    The blue line is the main diagonal $w=v$, while the black rectangle marks the integration region in \Cref{lem:kernel_reconstruction_lemma}.
    As $\lambda$ decreases, the kernel weights become increasingly concentrated around the Rawlsian anti-diagonal, that is shown in
    dashed line and in red in
    the last panel (see also \Cref{lem:convolution_lemma}).
    }
    \label{fig:kernel-heatmaps}
\end{figure}


\section{The Rectangular McDiarmid Inequality}
\label{app:concentration}

The following proposition is the concentration input used in the finite-$\lambda$ PAC analysis.
It controls sums of the form
\[
\sum_{i=1}^{\ell}\sum_{j=m}^{K}\phi_{i,j}(X_i,X_j),
\]
where the left and right coordinates of each summand are drawn from disjoint
blocks of independent random variables.
The proof applies McDiarmid's bounded-differences inequality to the
whole double sum and expresses the resulting bounded-differences constant in terms of row and column envelope sums.

\begin{proposition}[Rectangular McDiarmid Inequality]
\label{prop:rectangular:bounded:differences}
Let $K,\ell,m \in \N$ satisfy
\[
    K \ge 2,
    \qquad
    1 \le \ell < m \le K.
\]
For each $r \in \lcb{1,\dots,K}$, let $E_r$ be a measurable space and let $X_r$ be an $E_r$-valued random variable.
Assume that $X_1,\dots,X_K$ are independent.
For each
$
    i \in \lcb{1,\dots,\ell}
$
and
$
    j \in \lcb{m,\dots,K}
$,
let
\[
        \phi_{i,j}
    \colon
        E_i \times E_j \to \bbR
\]
be a measurable function.
Assume that there exist deterministic real numbers $a_{\infty,i,j}$ and deterministic nonnegative numbers $\phi_{\infty,i,j}$ such that, for every $i \in \lcb{1,\dots,\ell}$ and every $j \in \lcb{m,\dots,K}$,
\begin{equation}
\label{eq:rectangular:bd:ass}
    \phi_{i,j}(X_i,X_j)
    \in
    \lsb{
        a_{\infty,i,j},
        a_{\infty,i,j} + \phi_{\infty,i,j}
    }
    \qquad
    \textrm{almost surely}.
\end{equation}
Then, for every $\eps>0$, with the convention that $\exp(-\eps^2/0)=0$,
\begin{align*}
    &
    \Pb\lsb{
        \labs{
            \sum_{i=1}^{\ell}
            \sum_{j=m}^{K}
            \phi_{i,j}(X_i,X_j)
            -
            \E\lsb{
                \sum_{i=1}^{\ell}
                \sum_{j=m}^{K}
                \phi_{i,j}(X_i,X_j)
            }
        }
        \ge \eps
    }
    \\
    &\le
    2\exp\lrb{
        -\frac{2\eps^2}{
            \displaystyle
            \sum_{i=1}^{\ell}
            \Brb{
                \sum_{j=m}^{K}
                \phi_{\infty,i,j}
            }^2
            +
            \sum_{j=m}^{K}
            \Brb{
                \sum_{i=1}^{\ell}
                \phi_{\infty,i,j}
            }^2
        }
    }.
\end{align*}
\end{proposition}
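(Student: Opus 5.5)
We apply McDiarmid's bounded-differences inequality directly to the function
\[
    f(x_1,\dots,x_K)\coloneqq\sum_{i=1}^{\ell}\sum_{j=m}^{K}\phi_{i,j}(x_i,x_j)
\]
of the independent variables $X_1,\dots,X_K$. Since $\ell<m$, the index sets $\{1,\dots,\ell\}$ (rows) and $\{m,\dots,K\}$ (columns) are disjoint. Variables $X_r$ with $\ell<r<m$ do not appear in $f$, so their bounded-difference constant is $0$. Each $X_r$ therefore appears either only as a row variable or only as a column variable, never both.

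\textbf{Bounded-difference constants.} For a row index $i\le\ell$, changing $x_i$ to $x_i'$ while keeping all other coordinates fixed only affects the terms $\phi_{i,j}(x_i,x_j)$ for $j=m,\dots,K$. By assumption~\eqref{eq:rectangular:bd:ass}, each such term lies in an interval of length $\phi_{\infty,i,j}$. Hence the change in $f$ is at most
\[
    c_i\coloneqq\sum_{j=m}^{K}\phi_{\infty,i,j}.
\]
Symmetrically, for a column index $j\ge m$ the constant is
\[
    c_j\coloneqq\sum_{i=1}^{\ell}\phi_{\infty,i,j}.
\]
McDiarmid's inequality in the two-sided form
\[
    \Pb\lsb{\labs{f-\E f}\ge\eps}\le 2\exp\lrb{-2\eps^2\Big/\textstyle\sum_r c_r^2}
\]
then gives exactly the displayed denominator: a row sum of squared row-envelopes plus a column sum of squared column-envelopes. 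The degenerate case $\sum_r c_r^2=0$ means $f$ is a.s.\ constant, so the probability is $0$, which matches the stated convention.

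\textbf{Main obstacle.} The only technical point is that~\eqref{eq:rectangular:bd:ass} holds almost surely rather than pointwise on $E_i\times E_j$, whereas McDiarmid needs a deterministic bounded-difference property. I would handle this by replacing each $\phi_{i,j}$ with its clipped version
\[
    \widetilde\phi_{i,j}\coloneqq\min\brb{\max\brb{\phi_{i,j},\,a_{\infty,i,j}},\,a_{\infty,i,j}+\phi_{\infty,i,j}}.
\]
Since $X_i$ and $X_j$ are independent, the law of $(X_i,X_j)$ is a product measure. The clipped function coincides with $\phi_{i,j}$ almost surely under this law, so $\widetilde f=f$ almost surely and $\E\widetilde f=\E f$. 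The clipped function also satisfies the interval bound everywhere, so the bounded-difference property holds for every pair of arguments. We apply McDiarmid to $\widetilde f$ and transfer the conclusion to $f$.

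Measurability of the clipped functions is clear. Integrability of $f$ follows from the a.s.\ bounds, so all the expectations above are finite.
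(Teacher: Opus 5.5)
Your proposal is correct and follows the paper's proof essentially step by step. The paper also clips each $\phi_{i,j}$ to $[a_{\infty,i,j},a_{\infty,i,j}+\phi_{\infty,i,j}]$, so that the clipped double sum equals the original almost surely with the same law and expectation, and then applies McDiarmid with constants $c_r=\sum_{j}\phi_{\infty,r,j}$ for $r\le\ell$, $c_r=0$ for $\ell<r<m$, and $c_r=\sum_{i}\phi_{\infty,i,r}$ for $r\ge m$ (your appeal to independence of $X_i$ and $X_j$ in the clipping step is harmless but unnecessary, since almost-sure agreement follows directly from the hypothesis).
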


\begin{proof}
Fix $\eps>0$.
For every $i\in\{1,\dots,\ell\}$ and every $j\in\{m,\dots,K\}$, define the clipped function
\[
    \widetilde\phi_{i,j}(x,y)
    \coloneqq
    \min\Brb{
        a_{\infty,i,j}+\phi_{\infty,i,j},
        \max\brb{a_{\infty,i,j},\phi_{i,j}(x,y)}
    }.
\]
By \eqref{eq:rectangular:bd:ass}, we have
\[
    \widetilde\phi_{i,j}(X_i,X_j)
    =
    \phi_{i,j}(X_i,X_j)
    \qquad
    \textrm{almost surely}.
\]
Since the number of pairs $(i,j)$ is finite, the double sum built from the clipped functions agrees almost surely with the original double sum.
In particular, the two double sums have the same law and expectation.
Thus, without loss of generality, we may and do assume throughout the proof that, for every $i \in \{1,\dots,\ell\}$, every $j\in\{m,\dots,K\}$, every $x\in E_i$, and every $y\in E_j$,
\begin{equation}
\label{eq:rectangular:bd:ass_strengthened}
    \phi_{i,j}(x,y)
    \in
    \lsb{
        a_{\infty,i,j},
        a_{\infty,i,j}+\phi_{\infty,i,j}
    }.
\end{equation}

Define
\[
    S
    \coloneqq
    \sum_{i=1}^{\ell}
    \sum_{j=m}^{K}
    \phi_{i,j}(X_i,X_j).
\]
We view $S$ as a measurable function of the independent random variables
$X_1,\dots,X_K$.

Let $x=(x_1,\dots,x_K)\in E_1\times\cdots\times E_K$, and define
\[
    F(x)
    \coloneqq
    \sum_{i=1}^{\ell}
    \sum_{j=m}^{K}
    \phi_{i,j}(x_i,x_j).
\]
For $r\in\lcb{1,\dots,K}$, let $x'$ be another point of
$E_1\times\cdots\times E_K$ such that $x_q'=x_q$ for every $q\neq r$.
We now bound the oscillation of $F$ when only the $r$-th coordinate is changed.

If $r\in\lcb{1,\dots,\ell}$, then only the summands with first index $r$ may change.
Hence, by \eqref{eq:rectangular:bd:ass_strengthened},
\[
    \labs{F(x)-F(x')}
    \le
    \sum_{j=m}^{K}
    \phi_{\infty,r,j}.
\]
If $r\in\lcb{m,\dots,K}$, then only the summands with second index $r$ may change.
Hence, again by \eqref{eq:rectangular:bd:ass_strengthened},
\[
    \labs{F(x)-F(x')}
    \le
    \sum_{i=1}^{\ell}
    \phi_{\infty,i,r}.
\]
Finally, if $r\in\lcb{\ell+1,\dots,m-1}$, then the function $F$ does not depend on the $r$-th coordinate, and therefore
$    \labs{F(x)-F(x')}=0$.
Thus $F$ satisfies the bounded-differences condition with constants
\[
    c_r
    \coloneqq
    \begin{cases}
        \displaystyle
        \sum_{j=m}^{K}
        \phi_{\infty,r,j},
        &\text{if } r\in\lcb{1,\dots,\ell},\\[2ex]
        0,
        &\text{if } r\in\lcb{\ell+1,\dots,m-1},\\[2ex]
        \displaystyle
        \sum_{i=1}^{\ell}
        \phi_{\infty,i,r},
        &\text{if } r\in\lcb{m,\dots,K}.
    \end{cases}
\]
By McDiarmid's inequality recalled in \Cref{lem:mcdiarmid_boundeddifferences},
\[
    \Pb\lsb{
        \labs{
            F(X_1,\dots,X_K)
            -
            \E\lsb{F(X_1,\dots,X_K)}
        }
        \ge \eps
    }
    \le
    2\exp\lrb{
        -\frac{2\eps^2}{
            \sum_{r=1}^{K}c_r^2
        }
    }.
\]
Since $S=F(X_1,\dots,X_K)$ and
\[
    \sum_{r=1}^{K}c_r^2
    =
    \sum_{i=1}^{\ell}
    \lrb{
        \sum_{j=m}^{K}
        \phi_{\infty,i,j}
    }^2
    +
    \sum_{j=m}^{K}
    \lrb{
        \sum_{i=1}^{\ell}
        \phi_{\infty,i,j}
    }^2,
\]
we obtain
\begin{align*}
    &
    \Pb\lsb{
        \labs{
            S-\E\lsb{S}
        }
        \ge \eps
    }
    \le
    2\exp\lrb{
        -\frac{2\eps^2}{
            \displaystyle
            \sum_{i=1}^{\ell}
            \lrb{
                \sum_{j=m}^{K}
                \phi_{\infty,i,j}
            }^2
            +
            \sum_{j=m}^{K}
            \lrb{
                \sum_{i=1}^{\ell}
                \phi_{\infty,i,j}
            }^2
        }
    }.
\end{align*}
This proves the claim.
\end{proof}

\section{Proofs for Exploration Guarantees}
\label{app:proof_PAC_upper}

\noindent\emph{Notation.}
We recall that, for every $\lambda\in[-\infty,0]$, we defined
\[
    G_\lambda(p)
    \coloneqq
    \E\bsb{\gft_\lambda(p,S,B)},
    \qquad
    G_\lambda^\star
    \coloneqq
    \max_{p\in[0,1]}G_\lambda(p).
\]
For every $\lambda\in[-\infty,0]$, fix
\[
    p_\lambda^\star\in\argmax_{p\in[0,1]}G_\lambda(p).
\]
We also recall that $F_S$ denotes the sellers' cdf and $F_B^\circ$ denotes the buyers' survival function.

\subsection{Finite-\texorpdfstring{$\lambda$}{lambda} Optimization Guarantees}
\label{app:finite:lambda:optim:guarantee}

\begin{lemma}[Bias of the Finite-$\lambda$ Estimator]
\label{lem:finite_bias}
For every $\lambda\in(-\infty,0]$, every $n\ge 4$, and every $k\in\{2,\dots,n-2\}$,
\[
    0
\le
    G_\lambda\lrb{\frac{k}{n}}
    -
    \E\lsb{\widehat G_{\lambda,k}(n)}
\le
    \frac{\sqrt2}{\sqrt n}.
\]
\end{lemma}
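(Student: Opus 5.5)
We will compute $\E[\widehat G_{\lambda,k}(n)]$ exactly and compare it with the integral representation of \Cref{lem:kernel_reconstruction_lemma} at $p=k/n$. Write $p\coloneqq k/n$ and, for $i\le k$ and $j\ge k+1$, let $R_{i,j}\coloneqq[\frac{i-1}{n},\frac in]\times[\frac{j-1}{n},\frac jn]$. For $i\le k-1<k+2\le j$ the indices are distinct, so $P_i,Y_i$ are independent of $P_j,Z_j$. Conditioning on the prices and using $\E[Y_i\mid P_i]=F_S(P_i)$ and $\E[Z_j\mid P_j]=F_B^\circ(P_j)$, with $P_i,P_j$ uniform on cells of length $1/n$, we get
\[
\frac1{n^2}\E\bsb{H_\lambda(p-P_i,P_j-p)Y_iZ_j}=\int_{R_{i,j}}H_\lambda(p-v,w-p)F_S(v)F_B^\circ(w)\,\dif w\,\dif v .
\]
By Tonelli (the integrand is nonnegative), $\E[\widehat G_{\lambda,k}(n)]$ is the integral of the same nonnegative integrand over $[0,\frac{k-1}n]\times[\frac{k+1}n,1]$. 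Since the cells $R_{i,j}$ with $i\le k$, $j\ge k+1$ tile $[0,p]\times[p,1]$, \Cref{lem:kernel_reconstruction_lemma} shows that $G_\lambda(p)-\E[\widehat G_{\lambda,k}(n)]$ is exactly the integral over the omitted L-shaped region
\[
D\coloneqq\Bigl([\tfrac{k-1}n,p]\times[p,1]\Bigr)\cup\Bigl([0,p]\times[p,\tfrac{k+1}n]\Bigr).
\]
Nonnegativity of $H_\lambda$, $F_S$ and $F_B^\circ$ gives the lower bound $0$.

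For the upper bound, we bound $F_S,F_B^\circ\le1$ and pass to the variables $x=p-v$ and $y=w-p$. Then $D$ is contained in $\{x\in[0,1/n],\,y\in[0,1]\}\cup\{x\in[0,1],\,y\in[0,1/n]\}$. By a union bound and \Cref{lem:holder_differential_identities}, $\int_0^a\int_0^bH_\lambda=M_\lambda(a,b)$, so the deficit is at most
\[
M_\lambda(1/n,1)+M_\lambda(1,1/n)\le 2\sqrt{1/n}
\]
using $M_\lambda\le M_0$ from \Cref{lem:holder_basic_bounds}. This gives only $2/\sqrt n$, while the claim asks for $\sqrt2/\sqrt n$.

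To sharpen the constant, we use the actual ranges $x\in[0,p]$ and $y\in[0,1-p]$. The two strips contribute at most $M_\lambda(1/n,1-p)+M_\lambda(1/n,p)\le\sqrt{(1-p)/n}+\sqrt{p/n}$. By concavity of the square root, $\sqrt p+\sqrt{1-p}\le\sqrt2$, so the total is at most $\sqrt2/\sqrt n$. (The strips overlap on the corner square, which we may double count since we only need an upper bound.)

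The main obstacle is exactly this step: keeping the correct lengths of the two strips so that the constant comes out as $\sqrt2$ rather than $2$. Everything else is a direct consequence of the earlier results. The measure-zero boundaries, where $H_\lambda$ is singular, cause no trouble because all integrands are nonnegative and Tonelli applies throughout.
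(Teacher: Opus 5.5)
Your proposal is correct and matches the paper's proof. Both compute $\E[\widehat G_{\lambda,k}(n)]$ exactly as the kernel integral over $[0,\frac{k-1}{n}]\times[\frac{k+1}{n},1]$ and bound the deficit by the two omitted strips, which equal $M_\lambda(\frac kn,\frac1n)+M_\lambda(\frac1n,1-\frac kn)$; both then conclude via $M_\lambda\le M_0$ and concavity of the square root to get $\sqrt2/\sqrt n$. Your preliminary crude $2/\sqrt n$ bound is unnecessary but harmless.
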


\begin{proof}
By conditioning on $P_1,\dots,P_n$, and using that valuations are independent across rounds and independent of the randomized posted prices, we have, for every $i\le k$ and $j\ge k+1$,
\[
    \E\lsb{
        Y_iZ_j
        \mid
        P_1,\dots,P_n
    }
    =
    F_S(P_i)F_B^\circ(P_j).
\]
Therefore, we obtain
\begin{align*}
    \E\lsb{\widehat G_{\lambda,k}(n)}
&=
    \frac{1}{n^2}
    \sum_{i=1}^{k-1}
    \sum_{j=k+2}^{n}
    \E\lsb{
        H_\lambda\lrb{\frac{k}{n}-P_i,P_j-\frac{k}{n}}
        F_S(P_i)F_B^\circ(P_j)
    }
\\
&=
    \frac{1}{n^2}
    \sum_{i=1}^{k-1}
    \sum_{j=k+2}^{n}
    n^2
    \int_{\frac{i-1}{n}}^{\frac{i}{n}}
    \int_{\frac{j-1}{n}}^{\frac{j}{n}}
    H_\lambda\lrb{\frac{k}{n}-v,w-\frac{k}{n}}
    F_S(v)F_B^\circ(w)
    \dif w\dif v
\\
&=
    \int_0^{(k-1)/n}
    \int_{(k+1)/n}^{1}
    H_\lambda\lrb{\frac{k}{n}-v,w-\frac{k}{n}}
    F_S(v)F_B^\circ(w)
    \dif w\dif v.
\end{align*}

Recall from \Cref{lem:kernel_reconstruction_lemma} that
\[
 G_\lambda\lrb{\frac{k}{n}}
 =
  \int_0^{\frac{k}{n}}
    \int_{\frac{k}{n}}^{1}
    H_\lambda\lrb{\frac{k}{n}-v,w-\frac{k}{n}}
    F_S(v)F_B^\circ(w)
    \dif w\dif v.
\]
Hence, since $H_{\lambda}$ is nonnegative and $0\le F_S\le 1$ and $0\le F_B^\circ \le 1$, we get
\begin{align*}
    0
&\le
    G_\lambda\lrb{\frac{k}{n}}
    -
    \E\lsb{\widehat G_{\lambda,k}(n)}
    \\
&\le     \int_0^{k/n}
    \int_{\frac{k}{n}}^{\frac{k+1}{n}}
    H_\lambda\lrb{\frac{k}{n}-v,w-\frac{k}{n}}
    \dif w\dif v
    +
    \int_{\frac{k-1}{n}}^{\frac{k}{n}}
    \int_{\frac{k}{n}}^{1}
    H_\lambda\lrb{\frac{k}{n}-v,w-\frac{k}{n}}
    \dif w\dif v
    \\
&=
    \int_0^{k/n}
    \int_0^{1/n}
    H_\lambda(x,y)
    \dif y\dif x
    +
    \int_0^{1/n}
    \int_0^{1-k/n}
    H_\lambda(x,y)
    \dif y\dif x.
\end{align*}

By \Cref{lem:holder_differential_identities}, for every $x,y\ge0$,
\[
    M_\lambda(x,y)
    =
    \int_0^x\int_0^y
    H_\lambda(\alpha,\beta)\,\dif\beta\,\dif\alpha .
\]
Therefore,
\[
    \int_0^{k/n}
    \int_0^{1/n}
    H_\lambda(x,y)
    \,\dif y\,\dif x
    =
    M_\lambda\lrb{\frac{k}{n},\frac1n}
\]
and
\[
    \int_0^{1/n}
    \int_0^{1-k/n}
    H_\lambda(x,y)
    \,\dif y\,\dif x
    =
    M_\lambda\lrb{\frac1n,1-\frac{k}{n}} .
\]
Hence
\[
    0
    \le
    G_\lambda\lrb{\frac{k}{n}}
    -
    \E\lsb{\widehat G_{\lambda,k}(n)}
    \le
    M_\lambda\lrb{\frac{k}{n},\frac1n}
    +
    M_\lambda\lrb{\frac1n,1-\frac{k}{n}} .
\]

Since from \Cref{lem:M-bounds} we have that $M_\lambda\le M_0$ for every $\lambda\le 0$, we get
\begin{align*}
    M_\lambda\lrb{\frac{k}{n},\frac{1}{n}}
    +
    M_\lambda\lrb{1-\frac{k}{n},\frac{1}{n}}
&\le
    M_{0}\lrb{\frac{k}{n},\frac{1}{n}}
    +
    M_{0}\lrb{1-\frac{k}{n},\frac{1}{n}}
\\
&=
    \sqrt{\frac{k}{n^2}}
    +
    \sqrt{\frac{n-k}{n^2}}
=
    \frac{2}{n}\lrb{\frac{1}{2}\sqrt{k}
    +
    \frac{1}{2}\sqrt{n-k}}
\\
&\le
    \frac{2}{n} \sqrt{\frac{1}{2}k + \frac{1}{2}(n-k)
    }
=
    \frac{\sqrt2}{\sqrt n},
\end{align*}
where in the last inequality we used Jensen's inequality.
This proves the claim.
\end{proof}

\begin{lemma}[Off-Diagonal Quadratic Sums]
\label{lem:offdiag_quadratic}
There exists a universal constant $C_{\rm off}>0$ such that the following holds.
Fix $\lambda \in (-\infty,0]$, $n\ge 4$, and $k\in\{2,\dots,n-2\}$.
For $i\in\{1,\dots,k-1\}$ and $j\in\{k+2,\dots,n\}$, define
\[
    \phi_{i,j}^{\rm off}
\coloneqq
    \frac{1}{n^2}
    H_\lambda\lrb{\frac{k}{n}-P_i,P_j-\frac{k}{n}}Y_iZ_j
    \I\bcb{\labs{(k-i)-(j-k-1)}\ge 4}.
\]
Let $\phi_{\infty,i,j}^{\rm off}$ be the deterministic envelope
\[
    \phi_{\infty,i,j}^{\rm off}
\coloneqq
    \sup_{\substack{
        p_i\in\lsb{\frac{i-1}{n},\frac{i}{n}}\\
        p_j\in\lsb{\frac{j-1}{n},\frac{j}{n}}
    }}
    \frac{1}{n^2}
    H_\lambda\lrb{\frac{k}{n}-p_i,p_j-\frac{k}{n}}
    \I\bcb{\labs{(k-i)-(j-k-1)}\ge 4}.
\]
Then
\[
    0
\le
    \phi_{i,j}^{\rm off}
\le
    \phi_{\infty,i,j}^{\rm off}
,
\]
and
\[
    \sum_{j=k+2}^{n}
    \lrb{
        \sum_{i=1}^{k-1}
        \phi_{\infty,i,j}^{\rm off}
    }^2
    \le
    C_{\rm off}\frac{(1+\ln n)^2}{n}
\]
and
\[
    \sum_{i=1}^{k-1}
    \lrb{
        \sum_{j=k+2}^{n}
        \phi_{\infty,i,j}^{\rm off}
    }^2
    \le
    C_{\rm off}\frac{(1+\ln n)^2}{n}.
\]
For instance, one may take $C_{\rm off}=828$.
\end{lemma}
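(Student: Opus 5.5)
The plan is to reduce everything to explicit index-level bounds on the envelope and then sum elementary series. Nonnegativity and $\phi^{\rm off}_{i,j}\le\phi^{\rm off}_{\infty,i,j}$ are immediate: $H_\lambda\ge0$, $Y_iZ_j\in\{0,1\}$, and $P_i\in[\frac{i-1}{n},\frac in]$, $P_j\in[\frac{j-1}{n},\frac jn]$ almost surely. For the quadratic sums, reparametrize by $a\coloneqq k-i\in\{1,\dots,k-1\}$ and $b\coloneqq j-k-1\in\{1,\dots,n-k-1\}$, so that the indicator reads $|a-b|\ge4$. For $p_i,p_j$ in the respective cells, the kernel arguments satisfy $x\coloneqq\frac kn-p_i\in[\frac an,\frac{a+1}n]$ and $y\coloneqq p_j-\frac kn\in[\frac bn,\frac{b+1}n]$. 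Both are bounded away from $0$ because we dropped the boundary cells.

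\emph{Step 1: pointwise envelope bound.} I would apply \Cref{cor:kernel_imbalance_bound}. Writing $1-\labs{\frac{x-y}{x+y}}=\frac{2\min(x,y)}{x+y}\eqqcolon\rho\in(0,1]$, it gives $H_\lambda(x,y)\le\frac{(1-\lambda)\rho^{-\lambda}}{\sqrt{xy}}$. Assume by symmetry $a<b$, so $b\ge a+4$. Then $\rho\le\frac{2(a+1)}{a+b}$, and one checks that $\ln(1/\rho)\ge c\,\frac{b-a}{a+b}$ for a universal $c>0$; the gap $|a-b|\ge4$ is exactly what keeps $\rho$ uniformly below $1$ here. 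Optimizing $t\mapsto(1+t)\rho^{t}$ over $t=-\lambda\ge0$ gives
\[
(1-\lambda)\rho^{-\lambda}\le 1+\frac{1}{e\ln(1/\rho)}\le C\,\frac{a+b}{|a-b|}.
\]
This bound is uniform in $\lambda$; it is precisely why no $(1-\lambda)$ factor appears in the off-diagonal part. Using $\sqrt{xy}\ge\sqrt{ab}/n$, we get
\[
\phi^{\rm off}_{\infty,i,j}\le\frac{C}{n}\,\frac{1}{\sqrt{ab}}\,\frac{a+b}{|a-b|}\,\I\{|a-b|\ge4\}.
\]

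\emph{Step 2: row/column sums.} Fix $b$ and sum over $a\in\{1,\dots,n\}$ by splitting into three ranges.
\begin{itemize}
\item For $a\le b/2$, the ratio $\frac{a+b}{|a-b|}$ is $O(1)$, and $\sum_{a\le b/2}(ab)^{-1/2}=O(1)$.
\item For $a\ge2b$, the ratio is again $O(1)$, and $\sum_{a\le n}(ab)^{-1/2}=O(\sqrt{n/b})$.
\item For $b/2<a<2b$ with $|a-b|\ge4$, we have $\frac{1}{\sqrt{ab}}\frac{a+b}{|a-b|}\lesssim\frac1{|a-b|}$, which sums to $O(1+\ln n)$.
\end{itemize}
Hence $\sum_a\phi^{\rm off}_{\infty}\le\frac Cn\brb{1+\ln n+\sqrt{n/b}}$. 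Squaring and summing over $b\le n$ gives
\[
\frac{C}{n^2}\lrb{n(1+\ln n)^2+n(1+\ln n)}\le C'\frac{(1+\ln n)^2}{n}.
\]
The envelope bound is symmetric in $(a,b)$, so the other quadratic sum is handled identically. The explicit constant $828$ would come from tracking $c$ and the three range sums carefully.

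\emph{Main obstacle.} I expect the hardest step to be Step 1, specifically the $\lambda$-uniform control of $(1-\lambda)\rho^{-\lambda}$. This needs a clean lower bound on $\ln(1/\rho)$ in terms of $(b-a)/(a+b)$ that uses only $|a-b|\ge4$ and the fact that each cell perturbs $a,b$ by at most $1$. It must also cover the small-index regime, where $(a+1)/a$ is not close to $1$. If the direct inequality is awkward there, I would first bound $\rho\le\frac{2(a+1)}{2a+4}$ using $b\ge a+4$, and only afterwards pass to the asymptotic form.
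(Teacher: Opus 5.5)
Your proof is correct and follows essentially the same route as the paper. It uses the same $(a,b)$ reparametrization and the same kernel imbalance bound, made $\lambda$-uniform through a scalar inequality. The paper's scalar inequality is $(1-\lambda)(1-u)^{-\lambda}\le 1/u$ with $u=\frac{|x-y|}{x+y}$, which gives $H_\lambda(x,y)\le\frac{x+y}{|x-y|\sqrt{xy}}$ directly and so avoids the lower bound on $\ln(1/\rho)$ you flagged as the main obstacle. The row and column sums are then of order $\frac1n\bigl(1+\ln n+\sqrt{n/b}\bigr)$ and are squared and summed as you do. Your three-range split of the inner sum differs only cosmetically from the paper's split into the sides $a\ge b+4$ and $b\ge a+4$. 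Note that your sketch establishes the existence of $C_{\rm off}$ but does not by itself verify the specific value $828$.
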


We remark that the two quantities
\[
    \sum_{j=k+2}^{n}
    \lrb{
        \sum_{i=1}^{k-1}
        \phi_{\infty,i,j}^{\rm off}
    }^2
    \qquad\text{and}\qquad
    \sum_{i=1}^{k-1}
    \lrb{
        \sum_{j=k+2}^{n}
        \phi_{\infty,i,j}^{\rm off}
    }^2
\]
are precisely the row and column quadratic sums entering the bounded-differences constant in
\Cref{prop:rectangular:bounded:differences}, when that proposition is applied with envelope
$\phi_{\infty,i,j}^{\rm off}$ to the off-diagonal part of the finite-$\lambda$ estimator.

\begin{proof}
The bound
$
    0\le \phi_{i,j}^{\rm off}\le \phi_{\infty,i,j}^{\rm off}
$
is immediate from the definition of $\phi_{\infty,i,j}^{\rm off}$ and from $0\le Y_i,Z_j\le1$.

We now prove the two deterministic quadratic-sum bounds.
Instead of working directly with the original indices $i,j$, we introduce coordinates adapted to the kernel singularities and to the off-diagonal split.
Thus, for $i\in\{1,\dots,k-1\}$ and $j\in\{k+2,\dots,n\}$, set
\[
    a\coloneqq k-i\in\{1,\dots,k-1\},
\qquad
    b\coloneqq j-k-1\in\{1,\dots,n-k-1\}.
\]
For notational convenience, write
\[
    \psi_{a,b}^{\rm off}
    \coloneqq
    \phi_{\infty,k-a,k+b+1}^{\rm off},
    \qquad
    a\in\{1,\dots,k-1\},
    \qquad
    b\in\{1,\dots,n-k-1\}.
\]
Whenever needed below, we extend this notation by setting
$    \psi_{a,b}^{\rm off}=0$
if either $a\notin\{1,\dots,k-1\}$ or $b\notin\{1,\dots,n-k-1\}$.
With this convention, sums over integer ranges outside the original domain are still well-defined.
We also understand sums whose lower index exceeds their upper index to be equal to zero.

Notice that the definition of $\phi_{\infty,i,j}^{\rm off}$ gives, for
$
    a\in\{1,\dots,k-1\}
$
and
$
    b\in\{1,\dots,n-k-1\}
$,
\[
    \psi_{a,b}^{\rm off}
    =
    \sup_{\substack{
        p_i\in\lsb{\frac{k-a-1}{n},\frac{k-a}{n}}\\
        p_j\in\lsb{\frac{k+b}{n},\frac{k+b+1}{n}}
    }}
    \frac{1}{n^2}
    H_\lambda\lrb{\frac{k}{n}-p_i,p_j-\frac{k}{n}}
    \I\bcb{\labs{a-b}\ge4}.
\]
Equivalently, after the change of variables
$
    x\coloneqq \frac{k}{n}-p_i,
    y\coloneqq p_j-\frac{k}{n}$,
we have
\[
    x\in\lsb{\frac{a}{n},\frac{a+1}{n}},
    \qquad
    y\in\lsb{\frac{b}{n},\frac{b+1}{n}},
\]
and therefore
\[
    \psi_{a,b}^{\rm off}
    =
    \frac{1}{n^2}
    \I\bcb{\labs{a-b}\ge4}
    \sup_{\substack{
        x\in\lsb{\frac{a}{n},\frac{a+1}{n}}\\
        y\in\lsb{\frac{b}{n},\frac{b+1}{n}}
    }}
    H_\lambda(x,y).
\]

We first derive a pointwise upper bound on one side of the off-diagonal region, namely $a\ge b+4$.
The other side, $b\ge a+4$, will be treated afterwards by exchanging the roles of $a$ and $b$.
Fix therefore $a,b$ with $a\ge b+4$, and take arbitrary
\[
    x\in\lsb{\frac{a}{n},\frac{a+1}{n}},
    \qquad
    y\in\lsb{\frac{b}{n},\frac{b+1}{n}}.
\]
Then $x>y$, and
\[
    x\ge \frac{a}{n},
\qquad
    y\ge \frac{b}{n},
\qquad
    x+y\le \frac{a+b+2}{n},
\qquad
    x-y\ge \frac{a-b-1}{n}.
\]
Recall from \Cref{cor:kernel_imbalance_bound} that
\[
    H_\lambda(x,y)
\le
    \frac{1-\lambda}{\sqrt{xy}}
    \lrb{1-\frac{x-y}{x+y}}^{-\lambda}.
\]
Now, we use the scalar inequality
\begin{equation}
\label{eq:scalar_inequality}
    (1-\lambda)(1-u)^{-\lambda}\le\frac1u,
    \qquad
    \forall \lambda\in(-\infty,0],\ \forall u\in(0,1).
\end{equation}
Indeed, setting $r\coloneqq -\lambda\ge 0$, this is equivalent to
\[
    (1+r)u(1-u)^r\le 1.
\]
If $r=0$, this inequality reduces to $u\le 1$, which is immediate.
If $r>0$, the map
\[
    u\mapsto (1+r)u(1-u)^r
\]
attains its maximum at $u=1/(1+r)$, where its value is
\[
    \lrb{\frac{r}{1+r}}^r\le 1.
\]
Applying \eqref{eq:scalar_inequality} with
\[
    u=\frac{x-y}{x+y}
\]
gives
\[
    H_\lambda(x,y)
\le
    \frac{1}{\sqrt{xy}}\frac{x+y}{x-y}
\le
    \frac{n}{\sqrt{ab}}\frac{a+b+2}{a-b-1}.
\]
Since $a\ge b+4$, and $b \ge 1$, we have
\[
    a+b+2\le 2a,
\qquad
    \frac{1}{\sqrt b}\le \frac{\sqrt2}{\sqrt{b+1}}.
\]
Therefore,
\[
    H_\lambda(x,y)
\le
    \frac{2\sqrt2 n}{\sqrt{b+1}}
    \frac{\sqrt a}{a-b-1}
\le
    \frac{3n}{\sqrt{b+1}}
    \frac{\sqrt a}{a-b-1}.
\]
Thus, on the side $a\ge b+4$,
\[
    \psi_{a,b}^{\rm off}
\le
    \frac{3}{n\sqrt{b+1}}
    \frac{\sqrt a}{a-b-1}.
\]

For fixed $b$, summing over $a\ge b+4$ gives
\begin{equation}
\label{eq:phi_off_estimate_1}
    \sum_{a=b+4}^{k-1}\psi_{a,b}^{\rm off}
\le
    \frac{3}{n\sqrt{b+1}}
    \sum_{a=b+4}^{k-1}
    \frac{\sqrt a}{a-b-1}.
\end{equation}
Set $d\coloneqq a-b-1$, so that
\[
    d\in\{3,\dots,k-b-2\}.
\]
Then
\[
    \sum_{a=b+4}^{k-1}
    \frac{\sqrt a}{a-b-1}
=
    \sum_{d=3}^{k-b-2}
    \frac{\sqrt{b+1+d}}{d}.
\]
Using the inequality $\sqrt{x+y}\le \sqrt{x}+\sqrt{y}$, we get
\[
    \sqrt{b+1+d}\le \sqrt{b+1}+\sqrt d,
\]
and therefore
\[
    \sum_{d=3}^{k-b-2}
    \frac{\sqrt{b+1+d}}{d}
\le
    \sqrt{b+1}\sum_{d=3}^{k-b-2}\frac1d
    +
    \sum_{d=3}^{k-b-2}\frac1{\sqrt d}.
\]
Moreover,
\[
    \sum_{d=3}^{k-b-2}\frac1d
\le
    \sum_{d=1}^{n}\frac1d
\le
    1+\int_1^n \frac{1}{r}\dif r
=
    1+\ln n,
\]
and
\[
    \sum_{d=3}^{k-b-2}\frac1{\sqrt d}
\le
    \sum_{d=1}^{n}\frac1{\sqrt d}
\le
    \int_0^n \frac{1}{\sqrt r}\dif r
=
    2\sqrt n.
\]
Substituting these bounds into \eqref{eq:phi_off_estimate_1}, we obtain
\begin{align*}
    \sum_{a=b+4}^{k-1}\psi_{a,b}^{\rm off}
&\le
    \frac{3}{n\sqrt{b+1}}
    \sum_{a=b+4}^{k-1}
    \frac{\sqrt a}{a-b-1}
=
    \frac{3}{n\sqrt{b+1}}
    \sum_{d=3}^{k-b-2}
    \frac{\sqrt{b+1+d}}{d}
\\
&\le
    \frac{3}{n\sqrt{b+1}}
    \lrb{
        \sqrt{b+1}\sum_{d=3}^{k-b-2}\frac1d
        +
        \sum_{d=3}^{k-b-2}\frac1{\sqrt d}
    }
\le
    \frac{3(1+\ln n)}{n}
    +
    \frac{6}{\sqrt{n(b+1)}}.
\end{align*}
Using $(u+v)^2\le 2u^2+2v^2$, we get
\[
    \lrb{
        \sum_{a=b+4}^{k-1}
        \psi_{a,b}^{\rm off}
    }^2
\le
    \frac{18(1+\ln n)^2}{n^2}
    +
    \frac{72}{n(b+1)}.
\]
Summing over $b$, and using that $1+\ln n\ge1$, yields
\begin{align}
\label{eq:offdiag:right_side}
    \sum_{b=1}^{n-k-1}
    \lrb{
        \sum_{a=b+4}^{k-1}
        \psi_{a,b}^{\rm off}
    }^2
&\le
    \frac{18(1+\ln n)^2}{n^2}(n-k-1)
    +
    \frac{72}{n}
    \sum_{b=1}^{n-k-1}\frac1{b+1}
    \notag
\\
&\le
    \frac{18(1+\ln n)^2}{n}
    +
    \frac{72}{n}
    \sum_{b=1}^{n}\frac1b
    \notag
\le
    \frac{18(1+\ln n)^2}{n}
    +
    \frac{72}{n}
    \lrb{1+\int_1^n \frac{1}{r}\dif r}
    \notag
\\
&\le
    90\frac{(1+\ln n)^2}{n}.
\end{align}

We next control the contribution to the same column sum coming from the opposite off-diagonal side $b\ge a+4$.
By symmetry, for $a \in \lcb{1,\dots, k-1}$ and $b \in \lcb{1,\dots,n-k-1}$ with $b\ge a+4$, we have
\[
    \psi_{a,b}^{\rm off}
\le
    \frac{3}{n\sqrt{a+1}}
    \frac{\sqrt b}{b-a-1}.
\]
For fixed $b$, this gives
\[
    \sum_{a=1}^{b-4}\psi_{a,b}^{\rm off}
\le
    \frac{3\sqrt b}{n}
    \sum_{a=1}^{b-4}
    \frac{1}{\sqrt{a+1}(b-a-1)}.
\]
Set $d\coloneqq b-a-1$.
Then
\[
    \sum_{a=1}^{b-4}
    \frac{1}{\sqrt{a+1}(b-a-1)}
=
    \sum_{d=3}^{b-2}
    \frac{1}{d\sqrt{b-d}}.
\]
We split the latter sum according to $d\le b/2$ and $d>b/2$.
For $d\le b/2$, we have
\[
    \frac{\sqrt b}{\sqrt{b-d}}\le \sqrt2,
\]
and therefore
\[
    \frac{3\sqrt b}{n}
    \sum_{\substack{3\le d\le b-2\\ d\le b/2}}
    \frac{1}{d\sqrt{b-d}}
\le
    \frac{3\sqrt2}{n}
    \sum_{d=3}^{b-2}\frac1d
\le
    \frac{3\sqrt2}{n}
    \sum_{d=1}^{n}\frac1d
\le
    \frac{3\sqrt2}{n}
    \lrb{1+\int_1^n \frac{1}{r}\dif r}
=
    \frac{3\sqrt2(1+\ln n)}{n}.
\]
For $d>b/2$, we have $1/d\le 2/b$, and hence
\begin{align*}
    \frac{3\sqrt b}{n}
    \sum_{\substack{3\le d\le b-2\\ d>b/2}}
    \frac{1}{d\sqrt{b-d}}
\le
    \frac{6}{n\sqrt b}
    \sum_{\substack{3\le d\le b-2\\ d>b/2}}
    \frac{1}{\sqrt{b-d}}
\le
    \frac{6}{n\sqrt b}
    \sum_{d'=1}^{b}
    \frac1{\sqrt{d'}}
\le
    \frac{6}{n\sqrt b}
    \int_0^b \frac{1}{\sqrt r}\dif r
=
    \frac{12}{n}.
\end{align*}
Consequently, since $1+\ln n\ge1$, we have
\[
    \sum_{a=1}^{b-4}\psi_{a,b}^{\rm off}
\le
    \frac{18(1+\ln n)}{n}.
\]
Therefore,
\begin{align}
\label{eq:offdiag:left_side}
    \sum_{b=1}^{n-k-1}
    \lrb{
        \sum_{a=1}^{b-4}
        \psi_{a,b}^{\rm off}
    }^2
&\le
    \sum_{b=1}^{n-k-1}
    \lrb{
        \frac{18(1+\ln n)}{n}
    }^2
=
    \frac{324(1+\ln n)^2}{n^2}(n-k-1)
    \notag
\\
&\le
    \frac{324(1+\ln n)^2}{n}.
\end{align}

We now return from the auxiliary coordinates to the first quadratic sum appearing in the statement of the lemma.
In the coordinates $a=k-i$ and $b=j-k-1$, this sum is
\[
    \sum_{j=k+2}^{n}
    \lrb{
        \sum_{i=1}^{k-1}
        \phi_{\infty,i,j}^{\rm off}
    }^2
    =
    \sum_{b=1}^{n-k-1}
    \lrb{
        \sum_{a=1}^{k-1}
        \psi_{a,b}^{\rm off}
    }^2.
\]
For each fixed $b$, the definition of $\psi_{a,b}^{\rm off}$ gives
$
    \psi_{a,b}^{\rm off}=0
$
for every $a\in\{1,\dots,k-1\}$ such that $\labs{a-b}\le3$.
Therefore
\[
    \sum_{a=1}^{k-1}
        \psi_{a,b}^{\rm off}
=
    \sum_{\substack{1\le a\le k-1\\ \labs{a-b}\ge4}}
        \psi_{a,b}^{\rm off}
=
    \sum_{a=1}^{b-4}
        \psi_{a,b}^{\rm off}
    +
    \sum_{a=b+4}^{k-1}
        \psi_{a,b}^{\rm off},
\]
where the last equality uses the zero-extension convention above.
Hence, using $(u+v)^2\le2u^2+2v^2$ together with
\eqref{eq:offdiag:right_side} and \eqref{eq:offdiag:left_side}, we obtain
\begin{align*}
    \sum_{j=k+2}^{n}
    \lrb{
        \sum_{i=1}^{k-1}
        \phi_{\infty,i,j}^{\rm off}
    }^2
&=
    \sum_{b=1}^{n-k-1}
    \lrb{
        \sum_{a=1}^{k-1}
        \psi_{a,b}^{\rm off}
    }^2
\le
    2
    \sum_{b=1}^{n-k-1}
    \lrb{
        \sum_{a=b+4}^{k-1}
        \psi_{a,b}^{\rm off}
    }^2
    +
    2
    \sum_{b=1}^{n-k-1}
    \lrb{
        \sum_{a=1}^{b-4}
        \psi_{a,b}^{\rm off}
    }^2
\\
&\le
    2\cdot 90\frac{(1+\ln n)^2}{n}
    +
    2\cdot 324\frac{(1+\ln n)^2}{n}
\le
    828\frac{(1+\ln n)^2}{n}.
\end{align*}
Thus,
\[
    \sum_{j=k+2}^{n}
    \lrb{
        \sum_{i=1}^{k-1}
        \phi_{\infty,i,j}^{\rm off}
    }^2
    \le
    828\frac{(1+\ln n)^2}{n}.
\]

It remains to prove the transposed quadratic-sum bound.
This is the same estimate with the roles of rows and columns exchanged.
Indeed, in the same coordinates, the second deterministic quadratic sum in the statement is
\[
    \sum_{i=1}^{k-1}
    \lrb{
        \sum_{j=k+2}^{n}
        \phi_{\infty,i,j}^{\rm off}
    }^2
    =
    \sum_{a=1}^{k-1}
    \lrb{
        \sum_{b=1}^{n-k-1}
        \psi_{a,b}^{\rm off}
    }^2.
\]
Repeating the previous argument with the roles of $a$ and $b$ exchanged gives
\[
    \sum_{a=1}^{k-1}
    \lrb{
        \sum_{b=1}^{n-k-1}
        \psi_{a,b}^{\rm off}
    }^2
    \le
    828\frac{(1+\ln n)^2}{n}.
\]
Returning to the original indices, this is precisely
\[
    \sum_{i=1}^{k-1}
    \lrb{
        \sum_{j=k+2}^{n}
        \phi_{\infty,i,j}^{\rm off}
    }^2
    \le
    828\frac{(1+\ln n)^2}{n}.
\]

Together with the envelope bound proved at the beginning of the proof, this proves all the claims of the lemma with
$
    C_{\rm off}=828
$.
\end{proof}

\begin{lemma}[Diagonal-Strip Quadratic Sums]
\label{lem:diag_quadratic}
There exists a universal constant $C_{\rm diag}>0$ such that the following holds.
Fix $\lambda \in (-\infty,0]$, $n\ge4$, and $k\in\{2,\dots,n-2\}$.
For $i\in\{1,\dots,k-1\}$ and $j\in\{k+2,\dots,n\}$, define
\[
    \phi_{i,j}^{\rm diag}
    \coloneqq
    \frac{1}{n^2}
    H_\lambda\lrb{\frac{k}{n}-P_i,P_j-\frac{k}{n}}Y_iZ_j
    \I\lcb{\labs{(k-i)-(j-k-1)}\le3}.
\]
Let $\phi_{\infty,i,j}^{\rm diag}$ be the deterministic envelope
\[
    \phi_{\infty,i,j}^{\rm diag}
    \coloneqq
    \sup_{\substack{
        p_i\in\lsb{\frac{i-1}{n},\frac{i}{n}}\\
        p_j\in\lsb{\frac{j-1}{n},\frac{j}{n}}
    }}
    \frac{1}{n^2}
    H_\lambda\lrb{\frac{k}{n}-p_i,p_j-\frac{k}{n}}
    \I\lcb{\labs{(k-i)-(j-k-1)}\le3}.
\]
Then
\[
    0
    \le
    \phi_{i,j}^{\rm diag}
    \le
    \phi_{\infty,i,j}^{\rm diag},
\]
and
\[
    \sum_{j=k+2}^{n}
    \lrb{
        \sum_{i=1}^{k-1}
        \phi_{\infty,i,j}^{\rm diag}
    }^2
    \le
    56\frac{(1-\lambda)^2}{n^2}
\]
and
\[
    \sum_{i=1}^{k-1}
    \lrb{
        \sum_{j=k+2}^{n}
        \phi_{\infty,i,j}^{\rm diag}
    }^2
    \le
    56\frac{(1-\lambda)^2}{n^2}.
\]
\end{lemma}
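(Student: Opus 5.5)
The envelope bound $0\le\phi^{\rm diag}_{i,j}\le\phi^{\rm diag}_{\infty,i,j}$ is immediate from $H_\lambda\ge0$ and $0\le Y_i,Z_j\le1$, since $P_i\in[\frac{i-1}{n},\frac in]$ and $P_j\in[\frac{j-1}{n},\frac jn]$ almost surely.

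For the quadratic sums I will work in the coordinates of the proof of \Cref{lem:offdiag_quadratic}: set $a\coloneqq k-i\in\{1,\dots,k-1\}$ and $b\coloneqq j-k-1\in\{1,\dots,n-k-1\}$. After the change of variables $x=\frac kn-p_i$, $y=p_j-\frac kn$ we have $x\in[\frac an,\frac{a+1}n]$ and $y\in[\frac bn,\frac{b+1}n]$, so $x\ge a/n$ and $y\ge b/n$. The basic kernel bound \Cref{lem:kernel_basic_bound} then gives
\[
    \phi^{\rm diag}_{\infty,i,j}\le \frac1{n^2}\cdot\frac{(1-\lambda)n}{4\sqrt{ab}}\,\I\{\labs{a-b}\le3\}=\frac{1-\lambda}{4n\sqrt{ab}}\,\I\{\labs{a-b}\le3\}.
\]
This is the only place where the factor $1-\lambda$ enters. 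The imbalance bound is useless here, because $x$ and $y$ are comparable on the strip.

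Fix a column $b$. At most $7$ values of $a$ satisfy $\labs{a-b}\le3$, and for each of them we need a lower bound on $ab$ of order $b^2$. If $b\ge 4$, then $a\ge b-3\ge b/4$, so $\sqrt{ab}\ge b/2$. If $b\le 3$, we only use $a\ge1$, giving $\sqrt{ab}\ge\sqrt b\ge b/\sqrt3$. In either case the column sum is at most $7\cdot\frac{(1-\lambda)}{4n}\cdot\frac{c}{b}$ for a small constant $c$ (at most $2$). Squaring and summing over $b$ gives a bound of the form $C\frac{(1-\lambda)^2}{n^2}\sum_{b\ge1}b^{-2}\le C\frac{\pi^2}{6}\frac{(1-\lambda)^2}{n^2}$. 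The main point is that $\sum b^{-2}$ converges, so no logarithm appears, unlike in the off-diagonal case.

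The step I expect to take the most care is keeping the constant at most $56$. The crude bound above yields roughly $(7/2)^2\cdot\frac{\pi^2}{6}\approx 20$, which is comfortably below $56$. To be safe I will bound each term sharply instead: use $\frac{1}{\sqrt{ab}}\le\frac1{\sqrt b\,\sqrt{\max(1,b-3)}}$ and check the small values $b\le 4$ explicitly. The row sums are handled identically by exchanging the roles of $a$ and $b$, since both the strip condition and the bound $1/\sqrt{ab}$ are symmetric. Combining the two gives both displayed inequalities with constant $56$.
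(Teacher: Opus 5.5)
Your proposal is correct and follows the paper's proof essentially step for step: the same $(a,b)$ coordinates, the basic kernel bound giving the envelope $\frac{1-\lambda}{4n\sqrt{ab}}$ on the strip, at most seven nonzero terms per column with $1/\sqrt{ab}=O(1/b)$, and summability of $\sum_b b^{-2}$ (so no logarithm), with the transposed sum handled by symmetry. Your per-term bound $1/\sqrt{ab}\le 2/b$ is slightly sharper than the paper's $3/b$, which combines with $\sum_b b^{-2}\le 2$ to give $441/8\le 56$. Your crude constant of about $20$ is already well below $56$, so the case-by-case sharpening you mention is unnecessary.
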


\begin{proof}
The bound
$
    0\le \phi_{i,j}^{\rm diag}\le \phi_{\infty,i,j}^{\rm diag}
$
is immediate from the definition of $\phi_{\infty,i,j}^{\rm diag}$ and from $0\le Y_i,Z_j\le1$.

As in the off-diagonal case, set
\[
    a\coloneqq k-i\in\lcb{1,\dots,k-1},
    \qquad
    b\coloneqq j-k-1\in\lcb{1,\dots,n-k-1},
\]
and write
\[
    \psi_{a,b}^{\rm diag}
    \coloneqq
    \phi_{\infty,k-a,k+b+1}^{\rm diag}.
\]
We extend this notation by setting $\psi_{a,b}^{\rm diag}=0$ whenever either index is outside its admissible range.
Equivalently, after the change of variables
\[
x=\frac{k}{n}-p_i,\qquad y=p_j-\frac{k}{n},
\]
we have $x\in[a/n,(a+1)/n]$ and
$y\in[b/n,(b+1)/n]$, and hence
\[
    \psi_{a,b}^{\rm diag}
    =
    \frac{1}{n^2}
    \I\lcb{\labs{a-b}\le3}
    \sup_{\substack{
        x\in\lsb{\frac{a}{n},\frac{a+1}{n}}\\
        y\in\lsb{\frac{b}{n},\frac{b+1}{n}}
    }}
    H_\lambda(x,y).
\]

We first derive a pointwise envelope.
By \Cref{lem:kernel_basic_bound},
\[
    H_\lambda(x,y)
    \le
    \frac{1-\lambda}{4\sqrt{xy}}.
\]
Thus, for
$
    x\in\lsb{a/n,(a+1)/n}
$
and
$
    y\in\lsb{b/n,(b+1)/n}
$,
\[
    H_\lambda(x,y)
    \le
    \frac{1-\lambda}{4}\frac{n}{\sqrt{ab}}.
\]
Consequently,
\begin{equation}
\label{eq:diag_pointwise_envelope}
    \psi_{a,b}^{\rm diag}
    \le
    \frac{1-\lambda}{4n\sqrt{ab}}
    \I\lcb{\labs{a-b}\le3}.
\end{equation}

We now bound the quadratic sum with the outer sum over $j$, equivalently over $b$.
Fix $b\in\{1,\dots,n-k-1\}$.
There are at most seven values of $a$ such that $\labs{a-b}\le3$.
Moreover, whenever $a,b\ge1$ and $\labs{a-b}\le3$, we have
\[
    \frac{1}{\sqrt{ab}}
    \le
    \frac{3}{b}.
\]
Indeed, if $b\ge4$, then $a\ge b-3\ge b/4$, and hence $1/\sqrt{ab}\le2/b$.
If $b\le3$, then $1/\sqrt{ab}\le1\le3/b$.
Using \eqref{eq:diag_pointwise_envelope}, we get
\[
    \sum_{a=1}^{k-1}
    \psi_{a,b}^{\rm diag}
    \le
    7\cdot\frac{1-\lambda}{4n}\cdot\frac{3}{b}
    =
    \frac{21(1-\lambda)}{4nb}.
\]
Squaring and summing over $b$ yields
\begin{align*}
    \sum_{j=k+2}^{n}
    \lrb{
        \sum_{i=1}^{k-1}
        \phi_{\infty,i,j}^{\rm diag}
    }^2
    &=
    \sum_{b=1}^{n-k-1}
    \lrb{
        \sum_{a=1}^{k-1}
        \psi_{a,b}^{\rm diag}
    }^2
    \\
    &\le
    \frac{441(1-\lambda)^2}{16n^2}
    \sum_{b=1}^{n-k-1}\frac1{b^2}
    \le
    \frac{441(1-\lambda)^2}{8n^2}
    \le
    56\frac{(1-\lambda)^2}{n^2},
\end{align*}
where we used $\sum_{b\ge1}b^{-2}\le2$.

The transposed sum is identical, with the roles of $a$ and $b$ exchanged.
This proves all the claims of the lemma.
\end{proof}

\begin{proposition}[Finite-$\lambda$ Concentration with Explicit $\lambda$ Dependence]
\label{prop:moderate_concentration}
Fix $\lambda\in(-\infty,0]$, $n\ge4$, and $k\in\{2,\dots,n-2\}$.
Then, for every $t>0$,
\[
    \Pb\lsb{
        \labs{\widehat G_{\lambda,k}(n)-\E\lsb{\widehat G_{\lambda,k}(n)}}>t
    }
    \le
    2\exp\lrb{
        -
        \frac{nt^2}{
        \lrb{
            \sqrt{828}\lrb{1+\ln n}
            +
            \sqrt{56}\frac{1-\lambda}{\sqrt n}
        }^2}
    }.
\]
\end{proposition}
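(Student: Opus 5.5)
We will apply the Rectangular McDiarmid Inequality (\Cref{prop:rectangular:bounded:differences}) directly to the whole estimator $\widehat G_{\lambda,k}(n)$. We take $K=n$, $\ell=k-1$, $m=k+2$ (so that $1\le \ell<m\le K$), and independent variables $X_r\coloneqq (P_r,S_r,B_r)$, $r\in[n]$; these are independent across rounds because the randomized prices are drawn independently and the valuation pairs are i.i.d.\ and independent of the prices. For $i\le k-1$ and $j\ge k+2$ we set $\phi_{i,j}(X_i,X_j)\coloneqq \phi^{\rm off}_{i,j}+\phi^{\rm diag}_{i,j}$, which equals the summand $n^{-2}H_\lambda(k/n-P_i,P_j-k/n)Y_iZ_j$. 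The two indicators $\I\{\labs{(k-i)-(j-k-1)}\ge4\}$ and $\I\{\labs{(k-i)-(j-k-1)}\le3\}$ partition the index set, so exactly one of the two pieces is active for each pair.

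For the envelope we take $a_{\infty,i,j}=0$ and $\phi_{\infty,i,j}\coloneqq\phi^{\rm off}_{\infty,i,j}+\phi^{\rm diag}_{\infty,i,j}$. By \Cref{lem:offdiag_quadratic,lem:diag_quadratic}, $0\le \phi_{i,j}\le \phi_{\infty,i,j}$ almost surely; finiteness of these envelopes holds since $x\ge 1/n$ and $y\ge 1/n$ on the truncated cells. Next we bound the column quadratic sum with Minkowski's inequality (the triangle inequality for the $\ell^2$ norm over $j$ of the vectors of row sums):
\[
\Bigl(\sum_{j}\Bigl(\sum_i\phi_{\infty,i,j}\Bigr)^2\Bigr)^{1/2}
\le
\Bigl(\sum_{j}\Bigl(\sum_i\phi^{\rm off}_{\infty,i,j}\Bigr)^2\Bigr)^{1/2}
+
\Bigl(\sum_{j}\Bigl(\sum_i\phi^{\rm diag}_{\infty,i,j}\Bigr)^2\Bigr)^{1/2}
\le
\sqrt{828}\frac{1+\ln n}{\sqrt n}+\sqrt{56}\frac{1-\lambda}{n}.
\]
The row quadratic sum satisfies the same bound. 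Writing $A\coloneqq \sqrt{828}(1+\ln n)+\sqrt{56}(1-\lambda)/\sqrt n$, each of the two quadratic sums is therefore at most $A^2/n$, and their total is at most $2A^2/n$.

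Plugging this into \Cref{prop:rectangular:bounded:differences} gives
\[
\Pb\bigl[\,\labs{\widehat G-\E\widehat G}\ge t\,\bigr]\le 2\exp\bigl(-2t^2 n/(2A^2)\bigr)=2\exp\bigl(-nt^2/A^2\bigr),
\]
which is the claim, since the event with strict inequality $>t$ is contained in the one with $\ge t$. The argument is essentially bookkeeping once the two quadratic-sum lemmas are available. The only delicate step is combining the diagonal and off-diagonal envelopes: we must use Minkowski's inequality rather than the bound $(u+v)^2\le 2u^2+2v^2$, because the latter would lose a factor of $2$ and fail to give the exact constant in the statement. We must also check that the factor $2$ in McDiarmid's exponent exactly cancels the factor $2$ from adding the row and column sums.
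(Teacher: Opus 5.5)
Your proposal is correct and follows the paper's proof: the paper also splits each summand into its off-diagonal and diagonal-strip parts, uses the summed envelope $\phi_{\infty,i,j}^{\rm off}+\phi_{\infty,i,j}^{\rm diag}$, combines the two quadratic-sum lemmas via the $\ell^2$ triangle inequality (your Minkowski step), and applies the rectangular McDiarmid inequality, where the factor $2$ in the exponent cancels the factor $2$ from adding the row and column sums. Your explicit choices $K=n$, $\ell=k-1$, $m=k+2$, $X_r=(P_r,S_r,B_r)$ and $a_{\infty,i,j}=0$ make precise what the paper leaves implicit.
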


\begin{proof}
Write
\[
    \widehat G_{\lambda,k}(n)
    =
    \sum_{i=1}^{k-1}\sum_{j=k+2}^{n}\phi_{i,j},
\]
where
\[
    \phi_{i,j}
    \coloneqq
    \frac{1}{n^2}
    H_\lambda\lrb{\frac{k}{n}-P_i,P_j-\frac{k}{n}}Y_iZ_j.
\]
The off-diagonal and diagonal indicators form a partition of the index set, so
\[
    \phi_{i,j}=\phi_{i,j}^{\rm off}+\phi_{i,j}^{\rm diag}.
\]
Let
\[
    \phi_{\infty,i,j}
    \coloneqq
    \phi_{\infty,i,j}^{\rm off}+\phi_{\infty,i,j}^{\rm diag}.
\]
Then $0\le\phi_{i,j}\le\phi_{\infty,i,j}$.

By the triangle inequality, together with
\Cref{lem:offdiag_quadratic,lem:diag_quadratic},
\[
    \sum_{j=k+2}^{n}
    \lrb{
        \sum_{i=1}^{k-1}
        \phi_{\infty,i,j}
    }^2
    \le
    \frac{
        \lrb{
            \sqrt{828}\lrb{1+\ln n}
            +
            \sqrt{56}\frac{1-\lambda}{\sqrt n}
        }^2
    }{n}.
\]
The same bound holds for the transposed quadratic sum.
Applying \Cref{prop:rectangular:bounded:differences} yields the displayed concentration bound.
\end{proof}

\finitekernelhighproboptimization*

\begin{proof}
Fix $\lambda\in(-\infty,0]$, $\delta\in(0,1)$, and $n\ge4$.
Let
\[
    k^\star
    \in
    \argmax_{k\in\{2,\dots,n-2\}}G_\lambda\lrb{\frac{k}{n}}.
\]
By \Cref{lem:G_is_Holder}, and since $G_\lambda(0)=G_\lambda(1)=0$,
\[
    G_\lambda^\star-G_\lambda\lrb{\frac{k^\star}{n}}
    \le
    \frac1{\sqrt n}.
\]
Indeed, if a maximizer lies in $[1/n,1-1/n]$, it is within distance at most $1/n$ from the grid $\{2/n,\dots,(n-2)/n\}$.
If it lies within distance $1/n$ of the boundary, then the same bound follows from the fact that $G_\lambda$ vanishes at the boundary.

Now decompose
\begin{align*}
    G_\lambda^\star-G_\lambda\brb{\widehat p_\lambda(n)}
    &\le
    \lrb{G_\lambda^\star-G_\lambda\lrb{\frac{k^\star}{n}}}
    +
    \lrb{G_\lambda\lrb{\frac{k^\star}{n}}-\widehat G_{\lambda,k^\star}(n)}
    \\
    &\qquad
    +
    \lrb{\widehat G_{\lambda,\widehat k_\lambda(n)}(n)-G_\lambda\brb{\widehat p_\lambda(n)}},
\end{align*}
because
\[
    \widehat G_{\lambda,k^\star}(n)-\widehat G_{\lambda,\widehat k_\lambda(n)}(n)\le0.
\]
By \Cref{lem:finite_bias},
\[
    G_\lambda^\star-G_\lambda\brb{\widehat p_\lambda(n)}
    \le
    \frac{1+2\sqrt2}{\sqrt n}
    +
    2
    \max_{2\le k\le n-2}
    \bbabs{\widehat G_{\lambda,k}(n)-\E\lsb{\widehat G_{\lambda,k}(n)}}.
\]
By \Cref{prop:moderate_concentration} and a union bound over $k\in\{2,\dots,n-2\}$, with probability at least $1-\delta$,
\[
    \max_{2\le k\le n-2}
    \labs{\widehat G_{\lambda,k}(n)-\E\lsb{\widehat G_{\lambda,k}(n)}}
    \le
    \lrb{
        \sqrt{828}\lrb{1+\ln n}
        +
        \sqrt{56}\frac{1-\lambda}{\sqrt n}
    }
    \sqrt{\frac{\ln(2n/\delta)}{n}}.
\]
Since $n\ge4$ and $\delta\in(0,1)$, we have $\ln(2n/\delta)\ge1$.
Thus
\[
    \frac{1+2\sqrt2}{\sqrt n}
    \le
    \lrb{1+2\sqrt2}
    \sqrt{\frac{\ln(2n/\delta)}{n}}.
\]
Using
\[
    2\sqrt{828}+1+2\sqrt2\le62
    \qquad
    \text{and}
    \qquad
    2\sqrt{56}\le15,
\]
we obtain
\[
    G_\lambda^\star-G_\lambda\brb{\widehat p_\lambda(n)}
    \le
    \lrb{
        62\lrb{1+\ln n}
        +
        15\frac{1-\lambda}{\sqrt n}
    }
    \sqrt{\frac{\ln(2n/\delta)}{n}}.
\]
This proves the claim.
\end{proof}

\subsection{Rawls Optimization Guarantees}
\label{app:rawls:optim:guarantees}

\begin{lemma}[Rawls Estimator Bias]
\label{lem:rawls_bias} For every $n \in \N$ such that $n \ge 4$, and every $k\in\{1,\dots,n-1\}$,
\[
    0
\le
    G_{-\infty}\lrb{\frac{k}{n}}
    -
    \E\lsb{\widehat G_{-\infty,k}(n)}
\le
    \frac{1}{4n}.
\]
\end{lemma}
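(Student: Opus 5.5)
The plan is to compute $\E\bsb{\widehat G_{-\infty,k}(n)}$ exactly as a sum of cell integrals. I will then compare it cell by cell with the decomposition of $G_{-\infty}(k/n)$ given by \Cref{lem:convolution_lemma}. The bias in each cell will turn out to be the covariance of two monotone functions of one uniform variable. Its sign comes from Chebyshev's association inequality, and its size comes from a telescoping argument.

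\emph{Step 1: exact formula for the expectation.} Set $p\coloneqq k/n$ and $L\coloneqq \min(k,n-k)$. For $a\in\{0,\dots,L-1\}$ the indices $k-a$ and $k+a+1$ are distinct. Valuations are independent across rounds and independent of the prices, so
\[
\E\bsb{Y_{k-a}Z_{k+a+1}}=\E\bsb{F_S(P_{k-a})}\,\E\bsb{F_B^\circ(P_{k+a+1})}.
\]
I write $P_{k-a}=p-(a+U)/n$ and $P_{k+a+1}=p+(a+V)/n$, where $U,V$ are independent and uniform on $[0,1]$. I also define the two functions of one variable
\[
f_a(u)\coloneqq F_S\brb{p-(a+u)/n},\qquad g_a(u)\coloneqq F_B^\circ\brb{p+(a+u)/n}.
\]
With this notation,
\[
\E\bsb{\widehat G_{-\infty,k}(n)}=\frac1n\sum_{a=0}^{L-1}\int_0^1 f_a\int_0^1 g_a.
\]
The substitution $u=(a+w)/n$ splits the integral of \Cref{lem:convolution_lemma} over $[0,L/n]$ into the same cells:
\[
G_{-\infty}(p)=\frac1n\sum_{a=0}^{L-1}\int_0^1 f_a(w)g_a(w)\,\dif w.
\]
Hence the bias equals $\frac1n\sum_a \mathrm{Cov}\brb{f_a(W),g_a(W)}$ with $W\sim\mathrm{Unif}[0,1]$. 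This is exactly the ``same variable $u$ used twice'' phenomenon described in the main text.

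\emph{Step 2: sign of each term.} Since $F_S$ is nondecreasing and $F_B^\circ$ is nonincreasing, both $f_a$ and $g_a$ are nonincreasing in $u$. Chebyshev's association inequality then gives $\mathrm{Cov}\brb{f_a(W),g_a(W)}\ge0$, which proves the lower bound $0$.

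\emph{Step 3: size of the sum.} This is the only nontrivial step. The naive bound $\mathrm{Cov}\le1/4$ per cell gives a total of order $L/(4n)$, which is too weak. Instead I will use that $f_a(W)$ takes values in an interval of length $\Delta f_a\coloneqq f_a(0)-f_a(1)$, and similarly for $g_a$. By Cauchy–Schwarz and Popoviciu's variance bound,
\[
\mathrm{Cov}\brb{f_a(W),g_a(W)}\le \tfrac14\,\Delta f_a\,\Delta g_a\le \tfrac14\,\Delta f_a,
\]
where the last step uses $\Delta g_a\le1$. The intervals $[p-(a+1)/n,\,p-a/n]$ are consecutive in $a$, so the increments $\Delta f_a$ telescope: $\sum_a\Delta f_a\le F_S(p)-F_S(p-L/n)\le1$. (A possible jump of $F_S$ at a cell endpoint only helps, since it is counted in at most one cell in the monotone sense.) Summing therefore gives a bias of at most $\frac1{4n}$.

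I do not expect a real obstacle here. The point to handle with care is to bound each covariance by the product of the local oscillations, so that the telescoping sum gives the bound, rather than bounding each covariance by a constant.
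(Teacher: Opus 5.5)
Your proposal is correct and follows the paper's proof essentially step for step. It uses the same cell decomposition of the convolution identity, Chebyshev's inequality for the nonnegative sign, and a per-cell bound $\frac14\Delta^S_{k,a}\Delta^B_{k,a}$ followed by telescoping of the seller increments. The only cosmetic difference is that the paper cites Gr\"uss' inequality directly, whereas you derive it from Cauchy--Schwarz and Popoviciu's variance bound, which is the standard proof of Gr\"uss' inequality anyway. Your parenthetical about jumps at cell endpoints is unnecessary, since the endpoint values telescope exactly.
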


\begin{proof}

Fix $k\in\{1,\dots,n-1\}$.
For notational convenience, define $m_k \coloneqq \min\lrb{k,n-k}$.
For any $a\in\{0,\dots,m_k-1\}$, set
\[
    I_a
\coloneqq
    \lsb{\frac{a}{n},\frac{a+1}{n}}.
\]
By \Cref{lem:convolution_lemma},
\begin{align}
    \label{eq:decomposition_-infty}
    G_{-\infty}\lrb{\frac{k}{n}}
&=
    \int_0^{\min\lrb{\frac{k}{n},1-\frac{k}{n}}} F_S\lrb{\frac{k}{n}-v} F_B^\circ\lrb{\frac{k}{n}+v}\dif v\notag
\\
&=
    \sum_{a=0}^{m_k-1}\int_{I_a} F_S\lrb{\frac{k}{n}-v} F_B^\circ\lrb{\frac{k}{n}+v}\dif v.
\end{align}
Also, following the sampling procedure in \Cref{algo:collect_samples}, for $a\in\{0,\dots,m_k-1\}$,
\[
    P_{k-a}
\sim
    \textrm{Unif}\lrb{\lsb{\frac{k-a-1}{n},\frac{k-a}{n}}},
\qquad
    P_{k+a+1}
\sim
    \textrm{Unif}\lrb{\lsb{\frac{k+a}{n},\frac{k+a+1}{n}}},
\]
and therefore
\[
    \E\lsb{Y_{k-a}}
=
    n\int_{I_a} F_S\lrb{\frac{k}{n}-v}\dif v,
\qquad
    \E\lsb{Z_{k+a+1}}
=
    n\int_{I_a} F_B^\circ\lrb{\frac{k}{n}+v}\dif v.
\]
Since $Y_{k-a}$ and $Z_{k+a+1}$ depend on different rounds of the exploration procedure, and since valuations and randomized prices are independent across rounds, these two random variables are independent.
Hence, for any $a\in\{0,\dots,m_k-1\}$,
\[
    \E\lsb{Y_{k-a}Z_{k+a+1}}
=
    n^2
    \lrb{\int_{I_a} F_S\lrb{\frac{k}{n}-v}\dif v}
    \lrb{\int_{I_a} F_B^\circ\lrb{\frac{k}{n}+v}\dif v}.
\]
Multiplying by $1/n$ and summing over $a$, we get
\begin{equation}
\label{eq:expectation_estimator_-infty}
    \E\lsb{\widehat G_{-\infty,k}(n)}
=
    \sum_{a=0}^{m_k-1}
    \frac{1}{|I_a|}
    \lrb{\int_{I_a} F_S\lrb{\frac{k}{n}-v}\dif v}
    \lrb{\int_{I_a} F_B^\circ\lrb{\frac{k}{n}+v}\dif v}.
\end{equation}

For each $a$, define on $I_a$
\[
    f_a(v)
\coloneqq
    F_S\lrb{\frac{k}{n}-v},
\qquad
    f_a^\circ(v)
\coloneqq
    F_B^\circ\lrb{\frac{k}{n}+v}.
\]
Both $f_a$ and $f_a^\circ$ are non-increasing on $I_a$.
Thus, by Chebyshev's integral inequality (see, e.g., \cite[Theorem~10, Section~2.5]{mitrinovic1970analytic}),
\begin{equation}
\label{eq:chebyshev_integral}
    \frac{1}{|I_a|}\int_{I_a} f_a(v)f_a^\circ(v)\dif v
\ge
    \lrb{\frac{1}{|I_a|}\int_{I_a} f_a(v)\dif v}
    \lrb{\frac{1}{|I_a|}\int_{I_a} f_a^\circ(v)\dif v},
\end{equation}
and, multiplying by $|I_a|$, summing over $a\in\{0,\dots,m_k-1\}$, and recalling \eqref{eq:decomposition_-infty}, we obtain the lower bias bound
\[
    \E\lsb{\widehat G_{-\infty,k}(n)}
\le
    G_{-\infty}\lrb{\frac{k}{n}}.
\]

For the upper bias bound, write
\[
    \Delta^S_{k,a}
\coloneqq
    F_S\lrb{\frac{k-a}{n}}-F_S\lrb{\frac{k-a-1}{n}}
\ge
    0,
\]
and
\[
    \Delta^B_{k,a}
\coloneqq
    F_B^\circ\lrb{\frac{k+a}{n}}-F_B^\circ\lrb{\frac{k+a+1}{n}}
\ge
    0.
\]
On $I_a$, the length of the range of $f_a$ is at most $\Delta^S_{k,a}$ and the length of the range of $f_a^\circ$ is at most $\Delta^B_{k,a}$.
Using Gr\"uss' inequality (see, e.g., \cite[Section~2.13]{mitrinovic1970analytic}) on the interval $I_a$, we get
\[
    \frac1{|I_a|}\int_{I_a} f_af_a^\circ
    -
    \lrb{\frac1{|I_a|}\int_{I_a} f_a}
    \lrb{\frac1{|I_a|}\int_{I_a} f_a^\circ}
\le
    \frac14\Delta^S_{k,a}\Delta^B_{k,a}.
\]
Multiplying by $|I_a|=1/n$, summing over $a\in\{0,\dots,m_k-1\}$, and recalling \eqref{eq:decomposition_-infty} and \eqref{eq:expectation_estimator_-infty}, we obtain
\[
    G_{-\infty}\lrb{\frac{k}{n}}-\E\lsb{\widehat G_{-\infty,k}(n)}
\le
    \frac{1}{4n}\sum_{a=0}^{m_k-1}\Delta^S_{k,a}\Delta^B_{k,a}.
\]
Now
\[
    \sum_{a=0}^{m_k-1}\Delta^S_{k,a}
=
    F_S\lrb{\frac{k}{n}}-F_S\lrb{\frac{k-m_k}{n}}
\le
    1.
\]
Since each $\Delta^B_{k,a}\le 1$, we get
\[
    \sum_{a=0}^{m_k-1}\Delta^S_{k,a}\Delta^B_{k,a}
\le
    \sum_{a=0}^{m_k-1}\Delta^S_{k,a}
\le
    1.
\]
Hence
\[
    0
\le
    G_{-\infty}\lrb{\frac{k}{n}}-\E\lsb{\widehat G_{-\infty,k}(n)}
\le
    \frac{1}{4n}. \qedhere
\]
\end{proof}

\begin{lemma}[Rawls Concentration]
\label{lem:rawls_concentration}
For every $n \in \N$ such that $n\ge 4$, every $k\in\lcb{1,\dots,n-1}$, and every $\eps>0$,
\[
    \Pb
    \bbsb{
        \labs{\widehat G_{-\infty,k}(n)-\E\lsb{\widehat G_{-\infty,k}(n)}}>\eps
    }
\le
    2e^{-2n\eps^2}.
\]
Consequently,
\[
    \Pb\lsb{
        \max_{1\le k\le n-1}
        \labs{\widehat G_{-\infty,k}(n)-\E\lsb{\widehat G_{-\infty,k}(n)}}>\eps
    }
\le
    2ne^{-2n\eps^2}.
\]
\end{lemma}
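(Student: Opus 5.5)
The plan is to express $\widehat G_{-\infty,k}(n)$ as an average of independent bounded variables and then apply Hoeffding's inequality. Fix $k\in\{1,\dots,n-1\}$ and set $m_k\coloneqq\min(k,n-k)$. For $a\in\{0,\dots,m_k-1\}$ define
\[
    W_a\coloneqq Y_{k-a}Z_{k+a+1}\in\{0,1\},
\]
so that
\[
    \widehat G_{-\infty,k}(n)=\frac1n\sum_{a=0}^{m_k-1}W_a .
\]

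The first step is to check that the $W_a$ are independent. The seller-side indices $k-a$ range over $\{k-m_k+1,\dots,k\}$. The buyer-side indices $k+a+1$ range over $\{k+1,\dots,k+m_k\}$. These two index sets are disjoint, and within each set the indices are distinct. So each round $t\in[n]$ appears in at most one $W_a$, and through only one of its two bits. Rounds are independent, since $(S_t,B_t,P_t)_{t}$ are independent across $t$ in \Cref{algo:collect_samples}. Hence $W_0,\dots,W_{m_k-1}$ are functions of disjoint collections of independent rounds, and are therefore independent.

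The second step is Hoeffding's inequality. Each summand $W_a/n$ lies in $[0,1/n]$, so
\[
    \Pb\bsb{|\widehat G_{-\infty,k}(n)-\E[\widehat G_{-\infty,k}(n)]|>\eps}\le 2\exp\lrb{-\frac{2\eps^2}{m_k/n^2}} .
\]
Since $m_k\le n/2\le n$, the exponent is at least $2n\eps^2$, which gives $2e^{-2n\eps^2}$ as claimed; the slack is a factor of $2$. If $m_k$ were $0$ the statement would be trivial, but $m_k\ge1$ because $1\le k\le n-1$.

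The third step is a union bound over the $n-1\le n$ values of $k$, which gives the bound $2ne^{-2n\eps^2}$ on the maximum.

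There is no real obstacle here. The only point needing care is the disjointness of the time indices in the first step, which is exactly what makes ordinary Hoeffding applicable in the Rawls case, in contrast with the rectangular McDiarmid argument needed for finite $\lambda$.
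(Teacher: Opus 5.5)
Your proposal is correct and follows the paper's proof essentially verbatim: the same decomposition into the independent summands $Y_{k-a}Z_{k+a+1}/n\in[0,1/n]$ over disjoint rounds, Hoeffding's inequality with $m_k\le n$, and a union bound over the $n-1\le n$ grid indices. Your remark that $m_k\le n/2$ leaves a factor-$2$ slack in the exponent is also accurate.
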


\begin{proof}
For fixed $k\in\lcb{1,\dots,n-1}$, set
\[
    m_k\coloneqq \min\lrb{k,n-k},
\]
and write
\[
    \widehat G_{-\infty,k}(n)
=
    \frac1n\sum_{a=0}^{m_k-1} W_{k,a},
\qquad
    W_{k,a}
\coloneqq
    Y_{k-a}Z_{k+a+1}.
\]
The variables $\lrb{W_{k,a}}_{a=0}^{m_k-1}$ are independent, since the pairs of rounds
\[
    \lrb{k-a,k+a+1},
    \qquad
    a\in\{0,\dots,m_k-1\},
\]
are pairwise disjoint.
Moreover,
$
    0
\le
    \frac{W_{k,a}}{n}
\le
    \frac1n$.

Hence, Hoeffding's inequality applied to the sum
$
    \sum_{a=0}^{m_k-1} W_{k,a}/n
$
yields
\[
    \Pb\bbsb{
        \labs{\widehat G_{-\infty,k}(n)-\E\lsb{\widehat G_{-\infty,k}(n)}}>\eps
    }
\le
    2\exp \lrb{-\frac{2\eps^2}{m_k/n^2}}
\le
    2e^{-2n\eps^2},
\]
because $m_k\le n$.
The second claim follows by a union bound over $k\in\{1,\dots,n-1\}$, using $n-1\le n$.
\end{proof}

\rawlshighproboptimization*

\begin{proof}
Fix $\delta\in(0,1)$ and $n\ge4$.
Let
\[
    \bar{k}
    \in
    \argmax_{k\in\{1,\dots,n-1\}}
    G_{-\infty}\lrb{\frac{k}{n}}.
\]
By the Rawls part of \Cref{lem:holder_price_regularity}, if $k^\star\in\{1,\dots,n-1\}$ is the numerator of a grid point closest to $p^\star_{-\infty}$, and $\bar k\in\argmax_{1\le k\le n-1}G_{-\infty}(k/n)$, then
\[
    G_{-\infty}^\star
    -
    G_{-\infty}\lrb{\frac{\bar{k}}{n}}
    \le
    \frac1n.
\]
Then
\[
\begin{aligned}
G_{-\infty}^\star-G_{-\infty}(\widehat p_{-\infty}(n))
&\le
G_{-\infty}^\star-G_{-\infty}(\bar k/n)+
G_{-\infty}(\bar k/n)-\widehat G_{-\infty,\bar k}(n)\\
&\quad+
\widehat G_{-\infty,\widehat k_{-\infty}(n)}(n)
-G_{-\infty}(\widehat p_{-\infty}(n)).
\end{aligned}
\]
Using the optimality of $\widehat k_{-\infty}(n)$ for the empirical Rawls estimator and \Cref{lem:rawls_bias}, we get
\[
    G_{-\infty}^\star
    -
    G_{-\infty}\brb{\widehat p_{-\infty}(n)}
    \le
    \frac{5}{4n}
    +
    2
    \max_{1\le k\le n-1}
    \bbabs{
        \widehat G_{-\infty,k}(n)
        -
        \E\lsb{\widehat G_{-\infty,k}(n)}
    }.
\]
By \Cref{lem:rawls_concentration},
\[
    \Pb\lsb{
        \max_{1\le k\le n-1}
        \labs{
            \widehat G_{-\infty,k}(n)
            -
            \E\lsb{\widehat G_{-\infty,k}(n)}
        }
        >
        \sqrt{\frac{\ln(2n/\delta)}{2n}}
    }
    \le
    \delta.
\]
Thus, with probability at least $1-\delta$,
\[
    G_{-\infty}^\star
    -
    G_{-\infty}\brb{\widehat p_{-\infty}(n)}
    \le
    \frac{5}{4n}
    +
    \sqrt{\frac{2\ln(2n/\delta)}{n}}.
\]
Since $n\ge4$ and $\delta\in(0,1)$, we have $\ln(2n/\delta)\ge\ln8$.
Hence
\[
    \frac{5}{4n}
    +
    \sqrt{\frac{2\ln(2n/\delta)}{n}}
    \le
    2\sqrt{\frac{\ln(2n/\delta)}{n}}.
\]
This proves the claim.
\end{proof}

\subsection{The \texorpdfstring{$\lambda$}{lambda}-Free Optimization Error Bound}
\label{app:lambda:free:guarantee}

\lambdafreeoptimization*

\begin{proof}
First assume that $\lambda\in(-\infty,0]$ and $1-\lambda\le\sqrt n$.
Then $\widetilde p_\lambda(n)=\widehat p_\lambda(n)$.
By \Cref{prop:finite_lambda_high_probability_optimization}, with probability at least $1-\delta$,
\[
    G_\lambda^\star
    -
    G_\lambda\brb{\widetilde p_\lambda(n)}
    \le
    \lrb{
        62\lrb{1+\ln n}
        +
        15\frac{1-\lambda}{\sqrt n}
    }
    \sqrt{\frac{\ln(2n/\delta)}{n}}.
\]
Since $1-\lambda\le\sqrt n$ and $1+\ln n\ge1$,
\[
    62\lrb{1+\ln n}
    +
    15\frac{1-\lambda}{\sqrt n}
    \le
    77\lrb{1+\ln n}.
\]
This proves the claim in the moderate-$\lambda$ case.

Second, consider the Rawls case $\lambda=-\infty$.
Then $\widetilde p_{-\infty}(n)=\widehat p_{-\infty}(n)$.
By \Cref{prop:rawls_high_probability_optimization}, with probability at least $1-\delta$,
\[
    G_{-\infty}^\star
    -
    G_{-\infty}\brb{\widetilde p_{-\infty}(n)}
    \le
    2\sqrt{\frac{\ln(2n/\delta)}{n}}.
\]
Since $77\lrb{1+\ln n}\ge2$, this is bounded by the displayed radius.

Finally, consider $\lambda\in(-\infty,0]$ such that $1-\lambda>\sqrt n$.
Then $\widetilde p_\lambda(n)=\widehat p_{-\infty}(n)$.
Since $n\ge4$,
\[
    -\lambda
    =
    1-\lambda-1
    >
    \sqrt n-1
    \ge
    \frac{\sqrt n}{2}.
\]
Thus
\[
    2^{-1/\lambda}-1
    =
    \exp\lrb{\frac{\ln2}{-\lambda}}-1
    \le
    \frac{4\ln2}{\sqrt n}
    \le
    \frac{3}{\sqrt n}.
\]
By \Cref{lem:rawls_approx}, on the event of \Cref{prop:rawls_high_probability_optimization},
\[
    G_\lambda^\star
    -
    G_\lambda\brb{\widehat p_{-\infty}(n)}
    \le
    2\sqrt{\frac{\ln(2n/\delta)}{n}}
    +
    \frac{2^{-1/\lambda}-1}{2}.
\]
Using the previous bound, we get
\[
    G_\lambda^\star
    -
    G_\lambda\brb{\widehat p_{-\infty}(n)}
    \le
    2\sqrt{\frac{\ln(2n/\delta)}{n}}
    +
    \frac{3}{2\sqrt n}.
\]
Since $n\ge4$ and $\delta\in(0,1)$, we have $\ln(2n/\delta)\ge\ln8$.
Therefore
\[
    \frac{3}{2\sqrt n}
    \le
    2\sqrt{\frac{\ln(2n/\delta)}{n}}.
\]
Hence, with probability at least $1-\delta$,
\[
    G_\lambda^\star
    -
    G_\lambda\brb{\widetilde p_\lambda(n)}
    \le
    4\sqrt{\frac{\ln(2n/\delta)}{n}}.
\]
Since
$
    4
    \le
    77\lrb{1+\ln n}$,
the displayed radius in \Cref{prop:lambda_free_exploration_optimization} dominates this bound.
This concludes the proof.
\end{proof}

\subsection{Uniform PAC Guarantees over the Fairness Level}
\label{app:PAC:guarantee}

\begin{proposition}[Finite-Target PAC Guarantees in the Moderate Regime]
\label{prop:moderate_pac}
There exists a universal constant $C_{\rm mod}>0$ such that the following holds.
For every $\eps\in(0,1)$, $\delta\in(0,1)$, every $n\ge4$, and every $\lambda\in(-\infty,0]$ satisfying
\[
    1-\lambda\le\sqrt n,
\]
if
\[
    n
    \ge
    C_{\rm mod}\frac{(1+\ln n)^2}{\eps^2}\ln\frac{16n}{\delta},
\]
then
\[
    \Pb\lsb{
        G_\lambda^\star-G_\lambda\brb{\widehat p_\lambda(n)}\le \eps
    }
    \ge
    1-\delta.
\]
Moreover, one may take $C_{\rm mod}=6000$.
\end{proposition}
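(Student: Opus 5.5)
The plan is to reduce directly to the Finite-$\lambda$ Optimization Error Bound (\Cref{prop:finite_lambda_high_probability_optimization}). In the moderate regime $1-\lambda\le\sqrt n$ we have $(1-\lambda)/\sqrt n\le 1\le 1+\ln n$. Consequently, on an event of probability at least $1-\delta$,
\[
    G_\lambda^\star-G_\lambda\brb{\widehat p_\lambda(n)}
    \le
    77\lrb{1+\ln n}\sqrt{\frac{\ln(2n/\delta)}{n}}.
\]
This is exactly the computation already performed in the first case of the proof of \Cref{prop:lambda_free_exploration_optimization}. It therefore suffices to show that the sample-size condition forces the right-hand side to be at most $\eps$.

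Squaring, the desired inequality is $5929(1+\ln n)^2\ln(2n/\delta)\le \eps^2 n$. Since $\ln(2n/\delta)\le\ln(16n/\delta)$, it is enough that
\[
    n\ge 5929\,\frac{(1+\ln n)^2}{\eps^2}\ln\frac{16n}{\delta}.
\]
This follows from the hypothesis with $C_{\rm mod}=6000\ge 5929$. The factor $16$ inside the logarithm is harmless here; presumably it is there to accommodate later union bounds over a fairness grid, for instance by replacing $\delta$ with $\delta/|\Gamma_\eps|$, when the proposition is invoked in the proof of \Cref{thm:uniform_pac_grid}.

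There is no serious obstacle. The only points to check are the following.
\begin{itemize}
    \item Absorbing the $(1-\lambda)/\sqrt n$ term into $1+\ln n$ uses $62+15=77$.
    \item The logarithmic comparison $\ln(2n/\delta)\le\ln(16n/\delta)$ holds.
    \item The condition on $n$ is implicit, since $n$ appears on both sides, but it is used only as a hypothesis, so no inversion is needed.
\end{itemize}
If one preferred not to route through \Cref{prop:lambda_free_exploration_optimization}, one could redo the decomposition of \Cref{prop:finite_lambda_high_probability_optimization} directly. That argument combines discretization error $1/\sqrt n$, bias $2\sqrt2/\sqrt n$ from \Cref{lem:finite_bias}, and twice the concentration radius from \Cref{prop:moderate_concentration} with a union bound over $k$. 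This route would yield the same constant.
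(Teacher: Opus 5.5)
Your proposal is correct and follows essentially the same argument as the paper. The paper likewise invokes \Cref{prop:finite_lambda_high_probability_optimization} and absorbs $(1-\lambda)/\sqrt n\le 1$ to get the radius $77(1+\ln n)\sqrt{\ln(2n/\delta)/n}$. It then concludes via $\ln(2n/\delta)\le\ln(16n/\delta)$ and $77^2=5929\le 6000$.
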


\begin{proof}
By \Cref{prop:finite_lambda_high_probability_optimization}, and using $1-\lambda\le\sqrt n$, with probability at least $1-\delta$,
\[
    G_\lambda^\star-G_\lambda\brb{\widehat p_\lambda(n)}
    \le
    77\lrb{1+\ln n}
    \sqrt{\frac{\ln(2n/\delta)}{n}}.
\]
Since $\ln(2n/\delta)\le\ln(16n/\delta)$ and $77^2\le6000$, the displayed sample-size condition implies that the right-hand side is at most $\eps$.
\end{proof}

\begin{proposition}[Rawls PAC Guarantees]
\label{prop:rawls_pac}
There exists a universal constant $C_{\rm Rawls}>0$ such that the following holds.
For every $\eps\in(0,1)$, $\delta\in(0,1)$, and every $n\in\N$ satisfying
\begin{equation}
\label{eq:minimum_n_in_Rawls}
    n
\ge
    \max \lrb{4,\frac{C_{\rm Rawls}}{\eps^2}\ln\frac{C_{\rm Rawls}}{\eps^2\delta}},
\end{equation}
we have
\[
    \Pb\bbsb{
        G_{-\infty}^\star
        -
        G_{-\infty}\brb{\widehat p_{-\infty}(n)}
        \le
        \frac{\eps}{2}
    }
\ge
    1-\delta.
\]
Moreover, one may take $C_{\rm Rawls}=64$.
\end{proposition}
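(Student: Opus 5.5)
The plan is to derive this directly from \Cref{prop:rawls_high_probability_optimization}. That result says that, with probability at least $1-\delta$,
\[
    G_{-\infty}^\star-G_{-\infty}\brb{\widehat p_{-\infty}(n)}\le 2\sqrt{\ln(2n/\delta)/n}.
\]
It therefore suffices to show that \eqref{eq:minimum_n_in_Rawls} with $C_{\rm Rawls}=64$ forces $2\sqrt{\ln(2n/\delta)/n}\le\eps/2$. Equivalently, we need $n\ge 16\ln(2n/\delta)/\eps^2$. The only work is to turn this implicit condition into the explicit one, since $n$ appears on both sides.

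To handle the implicit condition, set $A\coloneqq 16/\eps^2$ and consider the function $f(n)\coloneqq n-A\ln(2n/\delta)$. It is increasing for $n\ge A$, so it is enough to check two things. First, the threshold $n_0\coloneqq(64/\eps^2)\ln(64/(\eps^2\delta))$ satisfies $n_0\ge A$. This holds because $\ln(64/(\eps^2\delta))\ge\ln 64>1$ for $\eps,\delta\in(0,1)$. Second, $f(n_0)\ge0$. Writing $n_0=4A L$ with $L\coloneqq\ln(4A/\delta)$, the second check amounts to
\[
    4L\ge\ln(8AL/\delta)=L+\ln(2L).
\]
This reduces to $3L\ge\ln(2L)$, which is trivially true since $L\ge\ln 64$. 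By monotonicity, every $n\ge\max(4,n_0)$ then satisfies $f(n)\ge0$. Here the constraint $n\ge4$ is exactly what is needed to invoke \Cref{prop:rawls_high_probability_optimization}.

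Combining these two facts, the radius $2\sqrt{\ln(2n/\delta)/n}$ is at most $2\sqrt{\eps^2/16}=\eps/2$ on the high-probability event, which gives the claim. I do not expect a real obstacle. The only step that needs care is the bookkeeping of the implicit inequality $n\gtrsim\eps^{-2}\ln(n/\delta)$, that is, checking monotonicity of $f$ beyond the threshold and the elementary inequality $3L\ge\ln(2L)$. The constant $64$ is chosen generously enough that both checks are loose.
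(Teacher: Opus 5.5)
Your proposal is correct and matches the paper's proof: both reduce to \Cref{prop:rawls_high_probability_optimization} and show $\ln(2n/\delta)/n\le\eps^2/16$ by a monotonicity argument together with the elementary bound $\ln(2L)\le 3L$ at the threshold $n_0$, where $L=\ln\brb{64/(\eps^2\delta)}$. The only difference is cosmetic: you use monotonicity of $n\mapsto n-(16/\eps^2)\ln(2n/\delta)$ on $[16/\eps^2,\infty)$, whereas the paper uses monotonicity of $u\mapsto\ln(2u/\delta)/u$.
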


\begin{proof}
Fix $\eps\in(0,1)$ and $\delta\in(0,1)$.
Set
\[
    A\coloneqq \frac{64}{\eps^2\delta},
    \qquad
    n_0\coloneqq \frac{64}{\eps^2}\ln A .
\]
Assume
$
    n\ge \max\{4,n_0\}$.
We first show that
\[
    2\sqrt{\frac{\ln(2n/\delta)}{n}}
    \le
    \frac{\eps}{2}.
\]
The function
\[
    u\mapsto \frac{\ln(2u/\delta)}{u}
\]
is nonincreasing on the range under consideration, since
$\ln(2u/\delta)\ge1$.
Therefore,
\[
    \frac{\ln(2n/\delta)}{n}
    \le
    \frac{\ln(2n_0/\delta)}{n_0}.
\]
Now
\[
    \ln\frac{2n_0}{\delta}
    =
    \ln\lrb{
        \frac{128}{\eps^2\delta}\ln A
    }
    =
    \ln(2A\ln A)
    =
    \ln A+\ln(2\ln A).
\]
Since $A\ge64$, we have
$
    \ln(2\ln A)\le 3\ln A,
$
and hence
\[
    \ln\frac{2n_0}{\delta}
    \le
    4\ln A.
\]
Thus
\[
    \frac{\ln(2n/\delta)}{n}
    \le
    \frac{4\ln A}{(64/\eps^2)\ln A}
    =
    \frac{\eps^2}{16}.
\]
Consequently,
\[
    2\sqrt{\frac{\ln(2n/\delta)}{n}}
    \le
    \frac{\eps}{2}.
\]

By \Cref{prop:rawls_high_probability_optimization}, with probability at least $1-\delta$,
\[
    G_{-\infty}^\star
    -
    G_{-\infty}\brb{\widehat p_{-\infty}(n)}
    \le
    2\sqrt{\frac{\ln(2n/\delta)}{n}}
    \le
    \frac{\eps}{2}.
\]
This proves the proposition with $C_{\rm Rawls}=64$.
\end{proof}

\noindent\emph{Uniform PAC guarantee.}

Fix $\eps\in(0,1)$ and write
\[
    A_\eps
    \coloneqq
    \frac{\ln2}{\ln(1+\eps)}
    \qquad\textrm{and}\qquad
    J_\eps
    \coloneqq
    \left\lceil
        \frac{4\ln2}{\eps\ln(1+\eps)}
    \right\rceil.
\]
Then in \Cref{thm:uniform_pac_grid}, we have
\[
    \Gamma_\eps
    \coloneqq
    \lcb{
        -j\frac{\eps}{4}
        :
        j=0,\dots,J_\eps
    }.
\]
For every $\lambda\in[-A_\eps,0]$, recall that $\lambda_\eps$ is the closest point to $\lambda$ in $\Gamma_\eps$, with ties broken toward $0$.
Recall that the recommender in \Cref{thm:uniform_pac_grid}
is defined by
\[
    \widehat p_{\lambda,\eps}(n)
    \coloneqq
    \begin{cases}
    \widehat p_{-\infty}(n),
    &
    \textrm{if }-\infty\le \lambda<-A_\eps,\\[2mm]
    \widehat p_{\lambda_\eps}(n),
    &
    \textrm{if }-A_\eps\le \lambda\le0.
    \end{cases}
\]

\begin{lemma}[A Sufficient Explicit Condition for the Uniform Grid Bound]
\label{lem:uniform_grid_condition_from_explicit}
There exists a universal constant $C_{\rm grid}>0$ such that the following holds.
For every $\eps\in(0,1)$, every $\delta\in(0,1)$, every $C\ge C_{\rm grid}$, and every $n\in\N$, if
\[
    n
    \ge
    \frac{C}{\eps^2}
    \lrb{\ln\frac{C}{\eps^4\delta}}^3,
\]
then
\[
    n
    \ge
    \max\lcb{
        4,
        \frac{9}{\eps^2},
        \frac{64}{\eps^2}
        \ln\frac{704}{\eps^4\delta},
        \frac{C}{36\eps^2}
        (1+\ln n)^2
        \ln\frac{176n}{\eps^2\delta}
    }.
\]
Moreover, one may take
\[
    C_{\rm grid}\coloneqq \max(e^6,704) = 704.
\]
\end{lemma}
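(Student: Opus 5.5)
We set $L\coloneqq \ln\frac{C}{\eps^4\delta}$, so the hypothesis reads $n\ge C L^3/\eps^2$. The first step records elementary lower bounds on $L$. Since $C\ge 704\ge e^6$, $\eps<1$ and $\delta<1$, we have $L\ge \ln 704\ge 6$, hence $L^3\ge 216$. It follows that $n\ge 704\cdot 216/\eps^2$, which immediately gives $n\ge 4$ and $n\ge 9/\eps^2$.

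For the third term, note that $\frac{704}{\eps^4\delta}\le \frac{C}{\eps^4\delta}$, so $\ln\frac{704}{\eps^4\delta}\le L$. Therefore
\[
    \frac{64}{\eps^2}\ln\frac{704}{\eps^4\delta}\le \frac{64L}{\eps^2}\le \frac{C L^3}{\eps^2}\le n,
\]
using $C\ge 64$ and $L\ge1$.

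The fourth term is the main obstacle, because $n$ appears on both sides. The plan is the usual monotonicity argument. The function $u\mapsto u/\brb{(1+\ln u)^2\ln\frac{176u}{\eps^2\delta}}$ is nondecreasing for $u\ge e^2$ and for arguments of the logarithm at least $e$. This is checked by a derivative computation: the log-derivative is $\frac1u\bigl(1-\frac{2}{1+\ln u}-\frac{1}{\ln(176u/(\eps^2\delta))}\bigr)$, which is nonnegative once $\ln u\ge 3$ and the second log is at least $2$. Both conditions hold at $n_0\coloneqq CL^3/\eps^2$. Hence it suffices to verify the inequality at $n=n_0$, namely
\[
    (1+\ln n_0)^2\ln\frac{176n_0}{\eps^2\delta}\le 36L^3 .
\]
We then bound each logarithm by a multiple of $L$. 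We have $\ln n_0=\ln C+2\ln(1/\eps)+3\ln L$. Since $\frac{C}{\eps^4\delta}\ge C\cdot\eps^{-2}$, the sum $\ln C+2\ln(1/\eps)$ is at most $L$. Also $3\ln L\le L$, because $L\ge 6$ implies $L\ge 3\ln L$. Hence $1+\ln n_0\le 1+2L\le \frac{13}{6}L$.

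Similarly,
\[
    \ln\frac{176n_0}{\eps^2\delta}=\ln 176+\ln\frac{C}{\eps^4\delta}+3\ln L\le \ln176+L+L .
\]
Using $\ln 176\le 5.2\le L$, this gives $\ln\frac{176n_0}{\eps^2\delta}\le 3L$. Combining the two bounds yields $(13/6)^2\cdot 3\,L^3\approx 14.1\,L^3\le 36L^3$, as required.

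If the constants in the monotonicity step turn out tighter than expected, there is a fallback. We would instead verify directly at the actual $n$, writing $n=\theta n_0$ with $\theta\ge1$. The extra terms $\ln\theta$ on the left-hand side grow only polylogarithmically while the right-hand side scales with $\theta$, and the slack factor $36/14$ absorbs the difference.
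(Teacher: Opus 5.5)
Your proof is correct and is essentially the paper's argument: the same $L$ and $n_0=CL^3/\eps^2$, the same reduction of the self-referential fourth term to the point $n_0$ via monotonicity of $u\mapsto (1+\ln u)^2\ln\frac{176u}{\eps^2\delta}/u$ (you use its reciprocal), and the same bounds $\ln\frac{C}{\eps^2}+3\ln L\le 2L$ and $\ln 176+L+3\ln L\le 3L$; your $1+\ln n_0\le\frac{13}{6}L$ is only slightly sharper than the paper's $3L$. Your first statement of the monotonicity range ($u\ge e^2$, logarithm argument at least $e$) is loosely phrased, but the conditions you then derive, $\ln u\ge 3$ and $\ln\frac{176u}{\eps^2\delta}\ge 2$, are the right ones and hold on all of $[n_0,\infty)$ because both quantities increase in $u$, so the fallback is unnecessary.
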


\begin{proof}
Fix $C_{\rm grid}\coloneqq 704$ and $C \ge C_{\rm grid}$.
Set
\[
    L
    \coloneqq
    \ln\frac{C}{\eps^4\delta},
    \qquad
    n_0
    \coloneqq
    \frac{C}{\eps^2}L^3.
\]
Assume
$
    n\ge n_0
$.
Since $\eps,\delta\in(0,1)$ and $C\ge e^6$, we have
\[
    L\ge \ln C\ge 6,
    \qquad
    n_0\ge C\ge e^6.
\]

First, since $C\ge9$ and $L\ge1$, we have
\[
    n
    \ge
    n_0
    =
    \frac{C}{\eps^2}L^3
    \ge
    \frac{9}{\eps^2}
    \ge
    4.
\]
Moreover, since $C\ge704$, we have
\[
    L
    =
    \ln\frac{C}{\eps^4\delta}
    \ge
    \ln\frac{704}{\eps^4\delta}.
\]
Thus, using $C\ge64$ and $L\ge1$,
\[
    n
    \ge
    \frac{C}{\eps^2}L^3
    \ge
    \frac{64}{\eps^2}
    \ln\frac{704}{\eps^4\delta}.
\]

It remains to control the logarithmic term.
Consider the function
\[
    \psi(u)
    \coloneqq
    \frac{
        (1+\ln u)^2
        \ln\frac{176u}{\eps^2\delta}
    }{u},
    \qquad
    u\ge e^6.
\]
We claim that $\psi$ is non-increasing on $\lsb{e^6,+\infty}$.
Indeed, setting $z\coloneqq\ln u$, we can write
\[
    \psi(u)
    =
    (1+z)^2
    \lrb{\ln\frac{176}{\eps^2\delta}+z}
    e^{-z}.
\]
Hence
\[
    \frac{\dif}{\dif z}\ln\psi
    =
    \frac{2}{1+z}
    +
    \frac{1}{\ln\frac{176}{\eps^2\delta}+z}
    -
    1.
\]
For $z\ge6$, and since $\ln(176/(\eps^2\delta))\ge\ln176>5$, we get
\[
    \frac{2}{1+z}
    +
    \frac{1}{\ln\frac{176}{\eps^2\delta}+z}
    -
    1
    \le
    \frac{2}{7}
    +
    \frac{1}{11}
    -
    1
    <
    0.
\]
Thus $\psi$ is non-increasing on $\lsb{e^6,+\infty}$.
Since $n\ge n_0\ge e^6$, it follows that
\[
    \psi(n)\le\psi(n_0).
\]

We now bound $\psi(n_0)$.
First,
\[
    \ln n_0
    =
    \ln\frac{C}{\eps^2}
    +
    3\ln L.
\]
Moreover,
\[
    \ln\frac{C}{\eps^2}
    =
    \ln\frac{C}{\eps^4\delta}
    +
    \ln\lrb{\eps^2\delta}
    \le
    L.
\]
Since $L\ge6$, we also have $3\ln L\le L$.
Therefore
\[
    \ln n_0\le 2L
    \qquad\textrm{and}\qquad
    1+\ln n_0\le 3L.
\]
Furthermore,
\[
    \ln\frac{176n_0}{\eps^2\delta}
    =
    \ln176
    +
    \ln\frac{C}{\eps^4\delta}
    +
    3\ln L
    =
    \ln176+L+3\ln L
    \le
    3L,
\]
where we used $\ln176\le L$ and $3\ln L\le L$.
Consequently,
\[
    \psi(n)
    \le
    \psi(n_0)
    \le
    \frac{(3L)^2\cdot 3L}{n_0}
    =
    \frac{27L^3}{n_0}
    =
    \frac{27\eps^2}{C}
    \le
    \frac{36\eps^2}{C}.
\]
Equivalently,
\[
    \frac{
        (1+\ln n)^2
        \ln\frac{176n}{\eps^2\delta}
    }{n}
    \le
    \frac{36\eps^2}{C},
\]
and hence
\[
    n
    \ge
    \frac{C}{36\eps^2}
    (1+\ln n)^2
    \ln\frac{176n}{\eps^2\delta}.
\]
This proves the claim.
\end{proof}

\uniformpacgrid*

\begin{proof}
Let
\[
    C_{\rm unif}
    \ge
    \max\lcb{
        C_{\rm grid},
        144C_{\rm mod}
    }.
\]
Assume
\[
    n
    \ge
    \frac{C_{\rm unif}}{\eps^2}
    \lrb{\ln\frac{C_{\rm unif}}{\eps^4\delta}}^3.
\]
By \Cref{lem:uniform_grid_condition_from_explicit}, applied with
$
    C=C_{\rm unif}
$,
we have
\[
    n
    \ge
    \max\lcb{
        4,
        \frac{9}{\eps^2},
        \frac{64}{\eps^2}
        \ln\frac{704}{\eps^4\delta},
        \frac{C_{\rm unif}}{36\eps^2}
        (1+\ln n)^2
        \ln\frac{176n}{\eps^2\delta}
    }.
\]
Since $C_{\rm unif}\ge144C_{\rm mod}$, we have
$
    \frac{C_{\rm unif}}{36}
    \ge
    4C_{\rm mod}$.
Hence
\begin{equation}
\label{eq:uniform_implicit_condition}
    n
    \ge
    \max\lcb{
        4,
        \frac{9}{\eps^2},
        \frac{64}{\eps^2}
        \ln\frac{704}{\eps^4\delta},
        4C_{\rm mod}
        \frac{(1+\ln n)^2}{\eps^2}
        \ln\frac{176n}{\eps^2\delta}
    }.
\end{equation}

We next record the size and approximation properties of the grid.
By construction of $\Gamma_\eps$, consecutive grid points are at distance $\eps/4$.
Therefore, for every $\lambda\in[-\ln2/\ln(1+\eps),0]$, its closest grid point $\lambda_\eps$ satisfies
\[
    \labs{\lambda-\lambda_\eps}
    \le
    \frac{\eps}{4}.
\]
Applying \Cref{lem:M_lipschitz_in_alpha} with
$
    \alpha=-\lambda
$
and
$
    \beta=-\lambda_\eps
$,
we get
\begin{equation}
\label{eq:grid_approx_M}
    \sup_{x,y\in[0,1]}
    \labs{
        M_\lambda(x,y)
        -
        M_{\lambda_\eps}(x,y)
    }
    \le
    \frac{\eps}{4},
    \qquad
    \forall \lambda\in[-\ln2/\ln(1+\eps),0].
\end{equation}
Furthermore, since
$
    \ln(1+\eps)\ge\eps/2
$
for $\eps\in(0,1)$, we have
\[
    J_\eps
    =
    \left\lceil
        \frac{4\ln2}{\eps\ln(1+\eps)}
    \right\rceil
    \le
    \left\lceil
        \frac{8\ln2}{\eps^2}
    \right\rceil
    \le
    \left\lceil
        \frac{8}{\eps^2}
    \right\rceil.
\]
Thus
\[
    \labs{\Gamma_\eps}
    =
    J_\eps+1
    \le
    \left\lceil\frac{8}{\eps^2}\right\rceil+1
    \le
    \frac{10}{\eps^2}.
\]
Writing
$
    N\coloneqq\labs{\Gamma_\eps}$,
we get $
    N+1
    \le
    \frac{11}{\eps^2}$.
Set
\[
    \delta_0
    \coloneqq
    \frac{\delta}{N+1}.
\]

We now define the good events.
Let
\[
    \cE_{-\infty}
    \coloneqq
    \Bcb{
        G_{-\infty}^\star
        -
        G_{-\infty}\brb{\widehat p_{-\infty}(n)}
        \le
        \frac{\eps}{2}
    }.
\]
By \Cref{prop:rawls_pac}, the event $\cE_{-\infty}$ has probability at least
$1-\delta_0$, provided
\[
    n
    \ge
    \frac{64}{\eps^2}\ln\frac{64}{\eps^2\delta_0}
    =
    \frac{64}{\eps^2}
    \ln\frac{64(N+1)}{\eps^2\delta}.
\]
Since
$
    N+1
    \le
    \frac{11}{\eps^2}$,
we have
\[
    \frac{64(N+1)}{\eps^2\delta}
    \le
    \frac{704}{\eps^4\delta}.
\]
Thus the Rawls condition is implied by
\[
    n
    \ge
    \frac{64}{\eps^2}
    \ln\frac{704}{\eps^4\delta},
\]
which holds by \eqref{eq:uniform_implicit_condition}.
Therefore
\[
    \Pb\lsb{\cE_{-\infty}}
    \ge
    1-\delta_0.
\]

Write
\[
    \Gamma_\eps
    =
    \{\lambda_1,\dots,\lambda_N\}.
\]
For every $r\in\{1,\dots,N\}$, define
\[
    \cE_r
    \coloneqq
    \Bcb{
        G_{\lambda_r}^\star
        -
        G_{\lambda_r}\brb{\widehat p_{\lambda_r}(n)}
        \le
        \frac{\eps}{2}
    }.
\]
We now check that \Cref{prop:moderate_pac} applies to every grid point $\lambda_r$.
Since
$
    \lambda_r\in[-J_\eps\eps/4,0]
$
and
$
    J_\eps\eps/4
    \le
    \ln2/\ln(1+\eps)+\eps/4
$,
we have
\[
    1-\lambda_r
    \le
    1+\frac{\ln2}{\ln(1+\eps)}+\frac{\eps}{4}
    \le
    1+\frac{2\ln2}{\eps}+\frac{\eps}{4}
    \le
    \frac{3}{\eps}.
\]
Since $n\ge 9/\eps^2$ by \eqref{eq:uniform_implicit_condition}, it follows that
$
    1-\lambda_r
    \le
    \sqrt n$.
Moreover, using again
$
    N+1\le 11/\eps^2
$,
we have
\[
    \ln\frac{16n}{\delta_0}
    =
    \ln\frac{16n(N+1)}{\delta}
    \le
    \ln\frac{176n}{\eps^2\delta}.
\]
By \eqref{eq:uniform_implicit_condition},
\[
    n
    \ge
    4C_{\rm mod}
    \frac{(1+\ln n)^2}{\eps^2}
    \ln\frac{176n}{\eps^2\delta}
    \ge
    4C_{\rm mod}
    \frac{(1+\ln n)^2}{\eps^2}
    \ln\frac{16n}{\delta_0}.
\]
Equivalently,
\[
    n
    \ge
    C_{\rm mod}
    \frac{(1+\ln n)^2}{(\eps/2)^2}
    \ln\frac{16n}{\delta_0}.
\]
Therefore, applying \Cref{prop:moderate_pac} with accuracy parameter $\eps/2$ and confidence parameter $\delta_0$, we get
\[
    \Pb\lsb{\cE_r}
    \ge
    1-\delta_0
    \qquad
    \forall r\in\{1,\dots,N\}.
\]

Define the simultaneous good event
\[
    \cE
    \coloneqq
    \cE_{-\infty}
    \cap
    \bigcap_{r=1}^{N}\cE_r.
\]
By the union bound,
\[
    \Pb\lsb{\cE}
    \ge
    1-(N+1)\delta_0
    =
    1-\delta.
\]

It remains to prove that, on $\cE$, the desired uniform PAC guarantee holds.
Fix $\lambda\in[-\infty,0]$.

\medskip
\noindent
\emph{Case 1: $-\infty\le\lambda<-\ln2/\ln(1+\eps)$.}
By definition,
\[
    \widehat p_{\lambda,\eps}(n)
    =
    \widehat p_{-\infty}(n).
\]
If $\lambda=-\infty$, then on $\cE$,
\[
    G_{-\infty}^\star
    -
    G_{-\infty}\brb{\widehat p_{-\infty}(n)}
    \le
    \frac{\eps}{2}
    \le
    \eps.
\]
Assume now that $\lambda\in(-\infty,-\ln2/\ln(1+\eps))$.
Since
$
    \lambda
    <
    -\frac{\ln2}{\ln(1+\eps)}$,
we have
\[
    2^{-1/\lambda}-1<\eps.
\]
Hence, by \Cref{lem:rawls_approx}, on $\cE$,
\begin{align*}
    G_\lambda^\star
    -
    G_\lambda\brb{\widehat p_{\lambda,\eps}(n)}
=
    G_\lambda^\star
    -
    G_\lambda\brb{\widehat p_{-\infty}(n)}
\le
    G_{-\infty}^\star
    -
    G_{-\infty}\brb{\widehat p_{-\infty}(n)}
    +
    \frac{2^{-1/\lambda}-1}{2}
\le
    \frac{\eps}{2}
    +
    \frac{\eps}{2}
    =
    \eps.
\end{align*}

\medskip
\noindent
\emph{Case 2: $-\ln2/\ln(1+\eps)\le\lambda\le0$.}
Set
$
    \mu
    \coloneqq
    \lambda_\eps\in\Gamma_\eps$.
By \eqref{eq:grid_approx_M},
\[
    \sup_{x,y\in[0,1]}
    \labs{
        M_\lambda(x,y)-M_\mu(x,y)
    }
    \le
    \frac{\eps}{4}.
\]
For every $p\in[0,1]$, we have
\[
    \labs{G_\lambda(p)-G_\mu(p)}
    \le
    \frac{\eps}{4}.
\]
Indeed, for every $p,s,b\in[0,1]$, from \Cref{lem:holder_lipschitz_fairness_parameter},
\[
    \labs{
        \gft_\lambda(p,s,b)
        -
        \gft_\mu(p,s,b)
    }
    \le
    \sup_{x,y\in[0,1]}
    \labs{
        M_\lambda(x,y)-M_\mu(x,y)
    }
    \le
    \frac{\eps}{4},
\]
and taking expectations gives the claim.
Therefore,
$    G_\lambda^\star
    \le
    G_\mu^\star+\frac{\eps}{4}$,
and 
\[
    -G_\lambda\brb{\widehat p_{\mu}(n)}
    \le
    -G_\mu\brb{\widehat p_{\mu}(n)}
    +
    \frac{\eps}{4}.
\]
Since
$
    \widehat p_{\lambda,\eps}(n)
    =
    \widehat p_\mu(n)$,
we obtain, on $\cE$,
\begin{align*}
    G_\lambda^\star
    -
    G_\lambda\brb{\widehat p_{\lambda,\eps}(n)}
=
    G_\lambda^\star
    -
    G_\lambda\brb{\widehat p_\mu(n)}
\le
    G_\mu^\star
    -
    G_\mu\brb{\widehat p_\mu(n)}
    +
    \frac{\eps}{2}
\le
    \frac{\eps}{2}
    +
    \frac{\eps}{2}
    =
    \eps,
\end{align*}
where the last inequality follows from the event $\cE_r$ corresponding to the grid point $\mu=\lambda_r$.

Since the argument holds for every $\lambda\in[-\infty,0]$ on the event $\cE$, and since $\Pb\lsb{\cE}\ge1-\delta$, we conclude that
\[
    \Pb\Bsb{
        \forall \lambda\in[-\infty,0],\quad
        G_\lambda^\star
        -
        G_\lambda\brb{\widehat p_{\lambda,\eps}(n)}
        \le
        \eps
    }
    \ge
    1-\delta.
\]
This proves the theorem.

Finally, recalling that one may take
$
    C_{\rm grid}=e^6\vee704
$
and, with the constants above,
$
    C_{\rm mod}=6000
$,
we may take
\[
    C_{\rm unif}
    =
    \max\lcb{
        e^6\vee704,
        144\cdot6000
    }
    =
    864000.
\]
\end{proof}


\section{Proofs for Regret Guarantees}
\label{app:etc_regret}

\begin{lemma}[Generic ETC Bound from a PAC Recommender]
\label{lem:generic_etc_from_pac}
Fix $\lambda\in[-\infty,0]$, $T\in\N$, $n\in\{1,\dots,T\}$, $\eps\in[0,1]$, and $\delta\in(0,1)$.
Consider a policy that explores for $n$ rounds, produces a recommendation $\widetilde p$, and then posts $\widetilde p$ during rounds $n+1,\dots,T$.
Assume that
\[
    \Pb\Bsb{
        G_\lambda^\star
        -
        G_\lambda\brb{\widetilde p}
        \le
        \eps
    }
\ge
    1-\delta.
\]
Then this policy has regret at most
\[
    n + (T-n)\eps + (T-n)\delta
    \le
    n + T\eps + T\delta.
\]
\end{lemma}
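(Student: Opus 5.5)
The plan is to split the cumulative regret into the exploration rounds $t\le n$ and the commitment rounds $t>n$, and to bound each block separately using only that rewards lie in $[0,1]$.

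\emph{Exploration rounds.} First I will check that the per-round instantaneous regret is always in $[0,1]$. By \Cref{lem:holder_basic_bounds}, $M_\lambda(x,y)\le\sqrt{xy}\le 1$ for $x,y\in[0,1]$, so $0\le \gft_\lambda\le 1$. Hence $0\le G_\lambda\le 1$, and therefore $0\le G_\lambda^\star-G_\lambda(P_t)\le 1$ for every posted price. The first $n$ rounds consequently contribute at most $n$ to $R_\lambda=\E\bsb{\sum_t (G_\lambda^\star-G_\lambda(P_t))}$.

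\emph{Commitment rounds.} For $t>n$ the policy posts $P_t=\widetilde p$, which is a function of the exploration history and external randomization. Since the valuations $(S_t,B_t)$ for $t>n$ are independent of that history, conditioning on $\widetilde p$ gives $\E\bsb{\gft_\lambda(\widetilde p,S_t,B_t)\mid \widetilde p}=G_\lambda(\widetilde p)$. This conditioning step is the one detail that needs care; it is what justifies the equality in the definition of $R_\lambda$. The commitment phase then contributes $(T-n)\,\E\bsb{G_\lambda^\star-G_\lambda(\widetilde p)}$.

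Let $\cE$ denote the event $\{G_\lambda^\star-G_\lambda(\widetilde p)\le\eps\}$. I split the expectation over $\cE$ and its complement. On $\cE$ the gap is at most $\eps$. On $\cE^c$, whose probability is at most $\delta$ by assumption, the gap is at most $1$. This gives $\E\bsb{G_\lambda^\star-G_\lambda(\widetilde p)}\le \eps+\delta$.

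Adding the two contributions yields $n+(T-n)(\eps+\delta)$, and the final inequality follows from $T-n\le T$. I do not expect any real obstacle here.
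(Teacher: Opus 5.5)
Your proposal is correct and follows the paper's proof essentially step for step: you bound the exploration rounds by $n$ using instantaneous regret in $[0,1]$, and you bound the commitment rounds by $(T-n)\,\E\bsb{G_\lambda^\star-G_\lambda(\widetilde p)}\le (T-n)(\eps+\delta)$ by splitting on the good event. The extra details you supply are not spelled out in the paper but are correct. These are the bound $\gft_\lambda\le\sqrt{xy}\le 1$ via \Cref{lem:holder_basic_bounds}, and the justification, from independence of $(S_t,B_t)$ and $\widetilde p$, that the committed price's expected reward is $G_\lambda(\widetilde p)$.
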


\begin{proof}
Since the rewards are bounded in $\lsb{0,1}$, the instantaneous regret is at most $1$.
Thus the regret accumulated during the first $n$ exploration rounds is at most $n$.
During the remaining $T-n$ rounds, the policy repeatedly plays $\widetilde p$.
Therefore the regret is at most
\begin{equation}
\label{eq:auxiliary_regret_bound}
        n
    +
    (T-n)
    \E\lsb{
        G_\lambda^\star
        -
        G_\lambda\brb{\widetilde p}
    }.
\end{equation}
Let
$
    \cE
    \coloneqq
    \lcb{
        G_\lambda^\star
        -
        G_\lambda\brb{\widetilde p}
        \le
        \eps
    }$.
By assumption, $\Pb\lsb{\cE}\ge 1-\delta$.
Moreover,
\[
    0
\le
    G_\lambda^\star
    -
    G_\lambda\brb{\widetilde p}
\le
    1
    \qquad
    \textrm{almost surely}.
\]
Hence
$
    \E\Bsb{
        G_\lambda^\star
        -
        G_\lambda\brb{\widetilde p}
    }
    \le
    \eps+\delta$.
Plugging this bound into \eqref{eq:auxiliary_regret_bound} gives
\[
    n+(T-n)(\eps+\delta)
    =
    n+(T-n)\eps+(T-n)\delta
    \le
    n+T\eps+T\delta.
\]
This proves the claim.
\end{proof}

\regretupperbound*

\begin{proof}
Fix $\lambda\in[-\infty,0]$ and $T\ge 4$.
Set
\[
    a_T:=64T^{2/3}\ln T .
\]
Since $T\ge 4$, we have $a_T\ge 4$, and therefore
\[
    n_T=\min\Brb{T,\lce{a_T}}.
\]
In particular,
\[
    n_T\le T,
    \qquad
    n_T\le a_T+1.
\]

We first consider the case $n_T=T$.
Since the rewards are bounded in $[0,1]$, the instantaneous regret is at most $1$.
Hence
\[
    R_\lambda\Brb{\mathrm{ETC}_\lambda(T),T}
    \le T
    =
    n_T
    \le
    2n_T+1.
\]

We now consider the case $n_T<T$.
Then $n_T=\lce{a_T}$, and in particular
\[
    n_T\ge a_T=64T^{2/3}\ln T.
\]
Applying \Cref{prop:lambda_free_exploration_optimization} with $\delta=1/T$ to the branch selected by Algorithm~\ref{algo:etc}, we get
\[
    \Pb\lsb{
        G_\lambda^\star
        -
        G_\lambda\brb{\widehat p_T}
        \le
        77\lrb{1+\ln n_T}
        \sqrt{\frac{\ln(2n_TT)}{n_T}}
    }
    \ge
    1-\frac1T.
\]
Since $n_T\le T$ and $T\ge 4$, we have
\[
    1+\ln n_T\le 1+\ln T\le 2\ln T,
    \qquad
    \ln(2n_TT)\le \ln(2T^2)\le 3\ln T,
\]
and therefore
\[
    77\lrb{1+\ln n_T}
    \sqrt{\frac{\ln(2n_TT)}{n_T}}
    \le
    154\sqrt{\frac{3}{64}}\,
    T^{-1/3}\ln T
    \le
    64T^{-1/3}\ln T
    =
    \frac{a_T}{T}
    \le
    \frac{n_T}{T}.
\]
Thus the committed price is $n_T/T$-optimal for $G_\lambda$ with probability at least $1-1/T$.
Applying \Cref{lem:generic_etc_from_pac} with $n=n_T$, $\eps=n_T/T$, and $\delta=1/T$ yields
\[
    R_\lambda\Brb{\mathrm{ETC}_\lambda(T),T}
    \le
    n_T+T\frac{n_T}{T}+1
    =
    2n_T+1.
\]

Combining the two cases, we have proved
\[
    R_\lambda\Brb{\mathrm{ETC}_\lambda(T),T}
    \le
    2n_T+1.
\]
Finally, since $n_T\le a_T+1$, we obtain
\[
    2n_T+1
    \le
    2\lrb{64T^{2/3}\ln T+1}+1
    =
    128T^{2/3}\ln T+3.
\]
This concludes the proof.
\end{proof}

\section[A Generic T23 Regret Lower Bound Template]{A Generic \texorpdfstring{$T^{2/3}$}{T^(2/3)} Regret Lower Bound Template}
\label{app:lower_bound_template}

\noindent\emph{Probability space, environment, and learner.}
Fix $T \in \N$ and $\eps_0>0$.
Let $\cX$ be an action set and let $\mathfrak{X}$ be a $\sigma$-algebra of subsets of $\cX$.
Let $\cX_{ -},\cX_{ +},\cX_{\mathrm{info}} \in \mathfrak{X}$ be non-empty and disjoint such that
\[
\cX
=
\cX_{ -}
\cup
\cX_{ +}
\cup
\cX_{\mathrm{info}}.
\]
Define the environment space $(\cY,\mathfrak{Y})$ to be a measurable space.
Let $\mathfrak{B}$ be the Borel $\sigma$-algebra on $[0,1]$.
Define the reward map $\rho \colon \cX \times \cY \to [0,1]$ to be a $\lrb{\mathfrak{X} \otimes \mathfrak{Y}}/\mathfrak{B}$-measurable map.
Define the feedback space $(\cF,\mathfrak{F})$ to be a countably generated measurable space (i.e., the $\sigma$-algebra $\mathfrak{F}$ can be generated by a countable subfamily of $\mathfrak{F}$).
Define the feedback map $\varphi \colon \cX \times \cY \to \cF$ to be a $\lrb{\mathfrak{X} \otimes \mathfrak{Y}}/\mathfrak{F}$-measurable map.
Let $(\Omega,\mathfrak{E},\Pb)$ be a probability space.
For each $\eps \in [-\eps_0,\eps_0]$, assume that $(Y_t^\eps)_{t \in [T]}$ is a $\cY$-valued $\mathfrak{E}/\mathfrak{Y}$-measurable $\Pb$-i.i.d.\ sequence (the environment process).
Assume that $(U_t)_{t \in [T]}$ is a $[0,1]$-valued $\mathfrak{E}/\mathfrak{B}$-measurable $\Pb$-i.i.d.\ sequence of uniform random variables (the random seeds process sequentially used by the learner algorithm).
Assume that the two families $(Y_t^\eps)_{t \in [T], \eps \in [-\eps_0,\eps_0]}$ and $(U_t)_{t \in [T]}$ are $\Pb$-independent of each other.
For each $\eps \in [-\eps_0,\eps_0]$, and each $t \in [T]$, and each $x \in \cX$, define the reward and feedback associated to the action $x$ at time $t$ if the environment is determined by $\eps$ as, respectively
\[
G^\eps_t(x) \coloneqq \rho(x,Y_t^\eps), \qquad F^\eps_t(x) \coloneqq \varphi(x,Y_t^\eps).
\]
A (randomized) learner algorithm $\alpha\ceq(\alpha_t)_{t\in[T]}$ is a sequence of $\lrb{(\mathfrak{B}\otimes\mathfrak{F})^{\otimes(t-1)} \otimes \mathfrak{B}} / \mathfrak{X}$-measurable maps
\[
\alpha_t\coloneqq \brb{[0,1]\times\cF}^{t-1} \times [0,1]\to\cX.
\]
If the learner uses $\alpha$ and interacts with an environment determined by $\eps\in[-\eps_0,\eps_0]$, then for $t=1,\dots,T$ the learner chooses (recursively)
\[
X_t^\eps
\coloneqq
\alpha_t\brb{U_1,F_1^\eps(X_1^\eps),\dots,U_{t-1},F_{t-1}^\eps(X_{t-1}^\eps),U_t},
\]
receives (but does not observe) reward $G_t^\eps(X_t^\eps)$, and observes feedback $F_t^\eps(X_t^\eps)$.

For each $\eps \in [-\eps_0,\eps_0]$, define the expected reward function
\[
\rho^\eps
\colon
\cX \to [0,1],\qquad  x\mapsto \E\bsb{G_t^\eps(x)},
\]
which we observe is well defined (i.e., it does not depend on $t$) because the sequence $\brb{G_t^\eps(x)}_{t \in [T]}$ is $\Pb$-i.i.d., and we observe it is $\mathfrak{X}/\mathfrak{B}$-measurable.

For brevity, for each $\eps \in [-\eps_0,\eps_0]$ and each $t \in [T]$, define the observed feedback variable
\[
O_t^\eps
\coloneqq
F_t^\eps(X_t^\eps)\in\cF.
\]
For $t\ge 2$, define the history space
\[
\cH_t\coloneqq ([0,1]\times\cF)^{t-1}\times[0,1],
\]
and define the history observed by the learner just before receiving the feedback at time $t$ when the underlying environment is determined by $\eps \in [-\eps_0,\eps_0]$ as
\[
H_t^\eps
\coloneqq
(U_1,O_1^\eps,\dots,U_{t-1},O_{t-1}^\eps,U_t)\in\cH_t,
\qquad
H_1^\eps
\coloneqq
U_1.
\]
With this convention, notice that the action can be written as
\[
X_t^\eps
=
\alpha_t(H_t^\eps).
\]
\noindent\emph{Regret.}
For each $\eps \in [-\eps_0,\eps_0]$, define $\rho^{\eps}_\star\ceq \sup_{x\in\cX}\rho^\eps(x)$, and define the regret of the learner algorithm $\alpha$ when the underlying environment is determined by $\eps$ as
\[
R^\eps(\alpha,T)
\coloneqq
\E\lsb{\sum_{t=1}^T\brb{\rho^{\eps}_\star-\rho^\eps(X_t^\eps)}}.
\]
\begin{assumption}[Standing Assumptions for the Pair $\pm\eps$]\label{ass:t23}
	There exist universal constants
	$c_{\mathrm{opt}},c_{\mathrm{gap}},c_{\mathrm{info}}>0$
	such that, for every $\eps\in(0,\eps_0]$, the following hold.
	\begin{enumerate}[label=(A\arabic*),leftmargin=2.2em]
		\item \emph{Optimal side.}
		\[
		\sup_{x \in \cX_{ +}}\rho^{+\eps}(x)
		= 
		\sup_{x \in \cX}\rho^{+\eps}(x),
		\qquad
		\sup_{x \in \cX_{ -}}\rho^{-\eps}(x)
		= 
		\sup_{x \in \cX}\rho^{-\eps}(x).
		\]
		\item \emph{Linear reward separation between sides.}
		\[
		\sup_{x\in\cX_{ +}}\rho^{+\eps}(x)-\sup_{x\in\cX_{ -}}\rho^{+\eps}(x)
		\ge
		c_{\mathrm{opt}}\eps,
		\qquad
		\sup_{x\in\cX_{ -}}\rho^{-\eps}(x)-\sup_{x\in\cX_{ +}}\rho^{-\eps}(x)\ge c_{\mathrm{opt}}\eps.
		\]
		\item \emph{Info actions are uniformly worse.}
		For every $x\in\cX_{\mathrm{info}}$,
		\[
		\rho^{+\eps}_\star-\rho^{+\eps}(x)
		\ge
		c_{\mathrm{gap}},
		\qquad
		\rho^{-\eps}_\star-\rho^{-\eps}(x)
		\ge
		c_{\mathrm{gap}}.
		\]
		\item \emph{Outside info, feedback does not depend on the sign.}
		For every $t\in[T]$ and every $x\in\cX\setminus\cX_{\mathrm{info}}$,
		\[
		\Pb_{F_t^{+\eps}(x)}
		=
		\Pb_{F_t^{-\eps}(x)}.
		\]
		\item \emph{Info actions carry limited information.}
		For every $t\in[T]$ and every $x\in\cX_{\mathrm{info}}$,
		\[
		\KL\lrb{\Pb_{F_t^{+\eps}(x)}\Vert \Pb_{F_t^{-\eps}(x)}}
		\le
		c_{\mathrm{info}}\eps^2.
		\]
	\end{enumerate}
\end{assumption}


For $\eps\in[-\eps_0,\eps_0]$ and $t\in[T]$ define the indicators
\[
I_{t,\mathrm{info}}^\eps\ceq \I\lcb{X_t^\eps\in\cX_{\mathrm{info}}},
\qquad
I_{t, +}^\eps\ceq \I\lcb{X_t^\eps\in\cX_{ +}},
\qquad
I_{t, -}^\eps\ceq \I\lcb{X_t^\eps\in\cX_{ -}},
\]
and the corresponding counters
\[
N_{\mathrm{info}}^\eps
\coloneqq
\sum_{t=1}^T I_{t,\mathrm{info}}^\eps,
\qquad
N_{ +}^\eps
\coloneqq
\sum_{t=1}^T I_{t, +}^\eps,
\qquad
N_{-}^\eps
\coloneqq
\sum_{t=1}^T I_{t, -}^\eps.
\]
Given that $\cX_{ -},\cX_{ +},\cX_{\mathrm{info}}$ form a partition of $\cX$, we have that
$N_{\mathrm{info}}^\eps+N_{ +}^\eps+N_{ -}^\eps=T$.


\begin{lemma}[Regret Lower Bounds via Counts]\label{lem:t23-regret-counts}
	Assume Assumption~\ref{ass:t23}. For every $\eps\in(0,\eps_0]$,
	\[
	R^{+\eps}(\alpha,T)
	\ge
	c_{\mathrm{gap}}\E\lsb{N_{\mathrm{info}}^{+\eps}}
	+
	c_{\mathrm{opt}}\eps\E\lsb{N_{ -}^{+\eps}},
	\]
	\[
	R^{-\eps}(\alpha,T)
	\ge
	c_{\mathrm{gap}}\E\lsb{N_{\mathrm{info}}^{-\eps}}
	+
	c_{\mathrm{opt}}\eps\E\lsb{N_{ +}^{-\eps}}.
	\]
\end{lemma}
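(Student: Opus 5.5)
The plan is to decompose the regret round by round according to which part of the partition the action lies in, and to lower bound each per-round gap using Assumption~\ref{ass:t23}. I treat the $+\eps$ environment; the $-\eps$ case is symmetric, with the roles of $\cX_{+}$ and $\cX_{-}$ exchanged.

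First, I will write the regret as
\[
R^{+\eps}(\alpha,T)=\sum_{t=1}^T\E\bsb{\rho^{+\eps}_\star-\rho^{+\eps}(X_t^{+\eps})}.
\]
Measurability of $\rho^{+\eps}$ and of $X_t^{+\eps}$ makes each integrand a $[0,1]$-valued random variable. For each fixed realization $x=X_t^{+\eps}$, I then bound the instantaneous gap $g(x)\coloneqq\rho^{+\eps}_\star-\rho^{+\eps}(x)\ge0$ in three cases.
\begin{itemize}
\item If $x\in\cX_{\mathrm{info}}$, then (A3) gives $g(x)\ge c_{\mathrm{gap}}$.
\item If $x\in\cX_{-}$, then $\rho^{+\eps}(x)\le\sup_{\cX_-}\rho^{+\eps}$. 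By (A1), $\rho^{+\eps}_\star=\sup_{\cX_+}\rho^{+\eps}$, and by (A2) this is at least $\sup_{\cX_-}\rho^{+\eps}+c_{\mathrm{opt}}\eps$. Hence $g(x)\ge c_{\mathrm{opt}}\eps$.
\item If $x\in\cX_{+}$, I use only the trivial bound $g(x)\ge0$.
\end{itemize}

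Since the three sets partition $\cX$, this gives the pointwise inequality
\[
g(X_t^{+\eps})\ge c_{\mathrm{gap}}I^{+\eps}_{t,\mathrm{info}}+c_{\mathrm{opt}}\eps\, I^{+\eps}_{t,-}.
\]
Taking expectations, summing over $t\in[T]$, and using linearity together with the definitions of $N^{+\eps}_{\mathrm{info}}$ and $N^{+\eps}_{-}$ yields the first claimed inequality. The second follows verbatim, using the $-\eps$ halves of (A1)--(A3).

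There is no real obstacle here. The only care needed is to use (A1) to identify $\rho^{+\eps}_\star$ with the supremum over the correct side, so that (A2) applies. The assumption $\eps>0$ matters only in that the $\pm\eps$ labels must match the sides as stated in (A1) and (A2).
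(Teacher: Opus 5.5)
Your proposal is correct and takes the same approach as the paper's proof. Both split each round according to the partition, use (A3) on $\cX_{\mathrm{info}}$ and (A1)+(A2) on $\cX_{-}$ (with the trivial nonnegative bound on $\cX_{+}$), sum over $t$ and take expectations, and handle the $-\eps$ case symmetrically.
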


\begin{proof}
	Fix $\eps\in(0,\eps_0]$. For $t\in[T]$,
	if $X_t^{+\eps}\in\cX_{\mathrm{info}}$, then Assumption~\ref{ass:t23} (A3) gives
	$\rho^{+\eps}_\star-\rho^{+\eps}(X_t^{+\eps})\ge c_{\mathrm{gap}}$.
	If $X_t^{+\eps}\in\cX_{ -}$, then Assumption~\ref{ass:t23} (A1+A2) gives
	\[
	\rho^{+\eps}_\star-\rho^{+\eps}(X_t^{+\eps})
	=
	\sup_{x \in \cX_+} \rho^{+\eps}(x) - \rho^{+\eps}(X_t^{+\eps})
	\ge
	\sup_{x \in \cX_+} \rho^{+\eps}(x) - \sup_{x \in \cX_-} \rho^{+\eps}(x)
	\ge
	c_{\mathrm{opt}}\eps.
	\]
	Summing over $t$ and taking expectations yields the first inequality. The second is symmetric.
\end{proof}


\begin{lemma}[Chain Rule]\label{lem:chain-rule}
	For each $\eps \in (0,\eps_0]$ and each $t \in [T]$, it holds
	\begin{equation}
		\label{eq:chain-rule}    
		\KL\lrb{\Pb_{(H_t^{+\eps},O_t^{+\eps})}\Vert \Pb_{(H_t^{-\eps},O_t^{-\eps})}}
		=
		\KL\Brb{\Pb_{H_t^{+\eps}}\Vert \Pb_{H_t^{-\eps}}}
		+
		\int_{\cH_t}
		\KL\lrb{\Pb_{F_t^{+\eps}\lrb{\alpha_t(h)}} \Vert \Pb_{F_t^{-\eps}\lrb{\alpha_t(h)}}}
		\dif\Pb_{H_t^{+\eps}}(h)
	\end{equation}
\end{lemma}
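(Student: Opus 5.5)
The plan is to identify the joint law of $(H_t^{\eps},O_t^{\eps})$ as a product of the law of $H_t^{\eps}$ with an explicit Markov kernel, and then apply the general chain rule for Kullback--Leibler divergence between measures of the form ``marginal $\otimes$ kernel''.

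\emph{Step 1: factorize the joint law.} For each sign $\pm\eps$, the history $H_t^{\pm\eps}$ is, by recursion, a measurable function of $(U_1,\dots,U_t,Y_1^{\pm\eps},\dots,Y_{t-1}^{\pm\eps})$. By the i.i.d.\ assumption on $(Y_s^{\pm\eps})_s$ and the independence of the seeds from the environment, $Y_t^{\pm\eps}$ is independent of $H_t^{\pm\eps}$. Since $O_t^{\pm\eps}=\varphi\brb{\alpha_t(H_t^{\pm\eps}),Y_t^{\pm\eps}}$, the freezing lemma gives
\[
\Pb_{(H_t^{\pm\eps},O_t^{\pm\eps})}=\Pb_{H_t^{\pm\eps}}\otimes K^{\pm\eps},
\qquad
K^{\pm\eps}(h,\cdot)\coloneqq\Pb_{F_t^{\pm\eps}(\alpha_t(h))}=\Pb_{\varphi(\alpha_t(h),Y_t^{\pm\eps})}.
\]
Here $K^{\pm\eps}$ is a genuine Markov kernel from $\cH_t$ to $\cF$. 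Indeed, $(h,y)\mapsto\varphi(\alpha_t(h),y)$ is jointly measurable, so $h\mapsto K^{\pm\eps}(h,A)$ is measurable for every $A\in\mathfrak F$ by Tonelli.

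\emph{Step 2: apply the KL chain rule for kernels.} The claim then reduces to the identity
\[
\KL(\mu\otimes K\Vert\nu\otimes L)=\KL(\mu\Vert\nu)+\int\KL\brb{K(h,\cdot)\Vert L(h,\cdot)}\dif\mu(h),
\]
with $\mu=\Pb_{H_t^{+\eps}}$, $\nu=\Pb_{H_t^{-\eps}}$, $K=K^{+\eps}$ and $L=K^{-\eps}$.

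\emph{Step 3: handle the infinite cases.} If $\mu\not\ll\nu$, then $\mu\otimes K\not\ll\nu\otimes L$ as well (test on sets $A\times\cF$), so both sides equal $+\infty$. Similarly, if the integrand is infinite on a $\mu$-nonnegligible set, then both sides are infinite.

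\emph{Step 4: the finite case.} Otherwise, write $\dif\mu/\dif\nu=g$. Take a jointly measurable version $k(h,o)$ of the densities $\dif K(h,\cdot)/\dif L(h,\cdot)$, or of their absolutely continuous parts. Then $\dif(\mu\otimes K)/\dif(\nu\otimes L)=g(h)k(h,o)$ on the relevant set. Taking logarithms and integrating against $\mu\otimes K$, Fubini gives the two terms. Alternatively, one can use the variational/partition characterization of KL to avoid densities.

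\emph{Main obstacle.} The delicate point is measurability: one needs $h\mapsto\KL(K(h,\cdot)\Vert L(h,\cdot))$ to be measurable and a jointly measurable choice of the kernel densities. This is exactly where the assumption that $\mathfrak F$ is countably generated is used. I would first try Doob's theorem: fix an increasing sequence of finite partitions generating $\mathfrak F$, use the martingale limit of the ratios $K(h,A)/L(h,A)$ over partition cells, and note that these are measurable in $(h,o)$. Combined with the Dobrushin formula $\KL=\sup_n\sum_{A\in\mathcal P_n}K(h,A)\ln\frac{K(h,A)}{L(h,A)}$, which is a monotone limit of measurable functions of $h$, this yields measurability. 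If this becomes too technical, I would instead cite the standard chain rule for KL on standard/countably generated spaces.
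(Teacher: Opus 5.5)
Your proposal is correct and has the same skeleton as the paper. Both identify the joint law $P^\pm\coloneqq\Pb_{(H_t^{\pm\eps},O_t^{\pm\eps})}$ as the history law $\mu^\pm\coloneqq\Pb_{H_t^{\pm\eps}}$ composed with the kernel $K^{\pm}(h,\cdot)=\Pb_{F_t^{\pm\eps}(\alpha_t(h))}$, using independence of $Y_t^{\pm\eps}$ from $H_t^{\pm\eps}$. Both then write $\dif P^+/\dif P^-=g(h)\,k(h,o)$ with $g=\dif\mu^+/\dif\mu^-$ and split the logarithm.

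The genuine difference is in the step you flag as the main obstacle, the jointly measurable kernel density. You obtain it from Doob's martingale construction over finite sub-$\sigma$-fields generating $\mathfrak F$. You get measurability of $h\mapsto\KL(K^+(h,\cdot)\Vert K^-(h,\cdot))$ from the monotone Dobrushin--Pinsker limit. The paper uses no martingales. It defines $\bar P^-(C)\coloneqq\int K^-(h,C_h)\,\dif\mu^+(h)$, which has the same history marginal $\mu^+$ as $P^+$. It then takes a single Radon--Nikodym derivative $\widetilde d=\dif P^+/\dif\bar P^-$ on $\cH_t\times\cF$, which is jointly measurable for free. Countable generation is used only to upgrade ``for each $B$, for $\mu^+$-a.e.\ $h$'' to ``for $\mu^+$-a.e.\ $h$, for all $B$''.

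The paper's route is more elementary but presupposes absolute continuity, which is why it dismisses the infinite cases as routine. Your construction gives, for every $h$, the density of the absolutely continuous part, with the singular part carried by $\{k=+\infty\}$. It is therefore better suited to those infinite cases.

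In that light, the ``similarly'' in your Step~3 is too quick, for two reasons.
\begin{itemize}
\item If $K^+(h,\cdot)\not\ll K^-(h,\cdot)$ on a $\mu^+$-nonnegligible set, a rectangle test need not produce a $P^-$-null, $P^+$-charged set, because the singular sets move with $h$. You need a jointly measurable singular set such as $\{k=+\infty\}$.
\item If absolute continuity holds but the conditional KL is infinite, the divergence of the left side comes from the Step~4 computation, not from a null-set argument.
\end{itemize}

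The partition alternative you mention closes every case uniformly. On $\mathfrak H_t\otimes\mathfrak F_n$ with $\mathfrak F_n\subseteq\mathfrak F$ finite, the identity is elementary, including the infinite cases. As $\mathfrak F_n\uparrow\mathfrak F$, both sides increase to their full values by continuity of relative entropy along increasing $\sigma$-fields and monotone convergence.
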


\begin{proof}
	Let $\mathfrak{H}_t \coloneqq (\mathfrak{B}\otimes\mathfrak{F})^{\otimes(t-1)}\otimes\mathfrak{B}$ be the canonical $\sigma$-algebra on
	\[
	\cH_t = ([0,1]\times\cF)^{t-1}\times[0,1].
	\]
	For brevity define the two probability measures on $(\cH_t\times\cF,\mathfrak{H}_t\otimes\mathfrak{F})$
	\[
	P^+\coloneqq \Pb_{(H_t^{+\eps},O_t^{+\eps})},\qquad
	P^-\coloneqq \Pb_{(H_t^{-\eps},O_t^{-\eps})},
	\]
	and their history marginals
	\[
	\mu^+\coloneqq \Pb_{H_t^{+\eps}},\qquad
	\mu^-\coloneqq \Pb_{H_t^{-\eps}}.
	\]
	
	\smallskip
	\noindent\emph{Step 1: define the conditional kernels explicitly.}
	For $h\in\cH_t$ and $B\in\mathfrak{F}$ set
	\[
	K^+(h,B)\coloneqq \Pb \bigl(\varphi(\alpha_t(h),Y_t^{+\eps})\in B\bigr),
	\qquad
	K^-(h,B)\coloneqq \Pb \bigl(\varphi(\alpha_t(h),Y_t^{-\eps})\in B\bigr).
	\]
	We claim that $h\mapsto K^\pm(h,B)$ is $\mathfrak{H}_t$-measurable for each fixed $B$.
	Indeed, let $\nu^\pm\coloneqq \Pb_{Y_t^{\pm\eps}}$ be the law of $Y_t^{\pm\eps}$ on $(\cY,\mathfrak{Y})$ (which is well defined because it does not depend on $t$).
	Then
	\[
	K^\pm(h,B)=\int_{\cY}\I \lcb{\varphi(\alpha_t(h),y)\in B}\dif\nu^\pm(y),
	\]
	and the integrand $(h,y)\mapsto \I \lcb{\varphi(\alpha_t(h),y)\in B}$ is $(\mathfrak{H}_t\otimes\mathfrak{Y})$-measurable by measurability of $\alpha_t$ and $\varphi$.
	Thus the integral is $\mathfrak{H}_t$-measurable.
	
	\smallskip
	\noindent\emph{Step 2: $K^\pm$ disintegrate $P^\pm$ over the history.}
	Let $A\in\mathfrak{H}_t$ and $B\in\mathfrak{F}$.
	By definition, $O_t^{\pm\eps}=\varphi(X_t^{\pm\eps},Y_t^{\pm\eps})$ with $X_t^{\pm\eps}=\alpha_t(H_t^{\pm\eps})$.
	Moreover, $Y_t^{\pm\eps}$ is $\Pb$-independent of $\sigma(H_t^{\pm\eps})$ (since it is independent of the past and of the seeds).
	Therefore,
	\begin{align*}
		P^\pm(A\times B)
		&=
		\Pb \bigl(H_t^{\pm\eps}\in A,\ O_t^{\pm\eps}\in B\bigr)
		\\
		&=
		\E\lsb{\I \lcb{H_t^{\pm\eps}\in A}
			\E\lsb{\I \lcb{\varphi(\alpha_t(H_t^{\pm\eps}),Y_t^{\pm\eps})\in B}\mid H_t^{\pm\eps}}}
		\\
		&=
		\E\lsb{\I \lcb{H_t^{\pm\eps}\in A}K^\pm(H_t^{\pm\eps},B)}
		=
		\int_A K^\pm(h,B)\dif\mu^\pm(h).
	\end{align*}
	Thus the explicitly defined kernels $K^\pm$ disintegrate $P^\pm$ over the
    history coordinate:
    \[
        P^\pm(A\times B)=\int_A K^\pm(h,B)\,\dif\mu^\pm(h).
    \]
	
	\smallskip
	\noindent\emph{Step 3: construct a measurable kernel density on a full-measure set (this is where countable generation is used).}
	Assume first that $\mu^+\ll \mu^-$ and that $K^+(h,\cdot)\ll K^-(h,\cdot)$ for $\mu^+$-a.e.\ $h$
	(otherwise at least one term in \eqref{eq:chain-rule} is $+\infty$ and the identity holds in $[0,+\infty]$ by standard absolute-continuity implications; we omit this routine case split).
	
	Let $g\coloneqq \dif\mu^+/\dif\mu^-$ be a Radon-Nikodym derivative.
    For $C\in\mathfrak H_t\otimes\mathfrak F$, write
    $C_h\coloneqq\{o\in\cF:(h,o)\in C\}$, and define
    \[
        \bar P^+(C)\coloneqq \int K^+(h,C_h)\,\dif\mu^+(h),
    \qquad
        \bar P^-(C)\coloneqq \int K^-(h,C_h)\,\dif\mu^+(h).
    \]
    Then $\bar P^+=P^+$. Moreover, $\bar P^-\ll P^-$: indeed, if
    $P^-(C)=0$, then
    \[
        \int K^-(h,C_h)\,\dif\mu^-(h)=0,
    \]
    so $K^-(h,C_h)=0$ for $\mu^-$-a.e. $h$, hence also for
    $\mu^+$-a.e. $h$, since $\mu^+\ll\mu^-$. Therefore
    $\bar P^-(C)=0$.

    Also $\bar P^+\ll\bar P^-$. Indeed, if $\bar P^-(C)=0$, then
    $K^-(h,C_h)=0$ for $\mu^+$-a.e.\ $h$, and the assumption
    $K^+(h,\cdot)\ll K^-(h,\cdot)$ gives $K^+(h,C_h)=0$ for
    $\mu^+$-a.e.\ $h$. Hence $\bar P^+(C)=0$.

    Let
    \[
        \widetilde d\coloneqq \frac{\dif\bar P^+}{\dif\bar P^-}.
    \]
    Then, for all $A\in\mathfrak H_t$ and $B\in\mathfrak F$,
    \[
        \int_A K^+(h,B)\,\dif\mu^+(h)
        =
        \int_A\int_B
        \widetilde d(h,o)K^-(h,\dif o)\,\dif\mu^+(h).
    \]
	Fix a countable generating family $(B_n)_{n\in\N}\subset\mathfrak{F}$ (it exists since $\mathfrak{F}$ is countably generated).
	For each $n$, the previous identity (with $B=B_n$) implies
	\[
	K^+(h,B_n)=\int_{B_n}\wt d(h,o)K^-(h,\dif o)
	\qquad \text{for $\mu^+$-a.e.\ }h,
	\]
	and taking a countable union of the corresponding null sets yields a set $N\in\mathfrak{H}_t$ with $\mu^+(N)=0$ such that the identity holds for all $n$ and all $h\notin N$.
	By a monotone-class argument in $B$ (for each fixed $h\notin N$), it extends to all $B\in\mathfrak{F}$:
	\[
	\forall h\notin N,\ \forall B\in\mathfrak{F},\qquad
	K^+(h,B)=\int_B \wt d(h,o)K^-(h,\dif o).
	\]
	In particular, for $h\notin N$, the map $o\mapsto \wt d(h,o)$ is a Radon-Nikodym derivative of $K^+(h,\cdot)$ w.r.t.\ $K^-(h,\cdot)$.
	
	\smallskip
	\noindent\emph{Step 4: factorize the Radon-Nikodym derivative of $P^+$ w.r.t.\ $P^-$.}
	Define $f(h,o)\coloneqq g(h)\wt d(h,o)$.
	Using the disintegrations from Step 2 and the kernel-density identity from Step 3, for all $A\in\mathfrak{H}_t$, $B\in\mathfrak{F}$,
	\begin{align*}
		\int_{A\times B} f(h,o)\dif P^-(h,o)
		&=
		\int_A g(h)\int_B \wt d(h,o)K^-(h,\dif o)\dif\mu^-(h)
		\\
		&=
		\int_A \int_B \wt d(h,o)K^-(h,\dif o)\dif\mu^+(h)
		=
		\int_A K^+(h,B)\dif\mu^+(h)
		=
		P^+(A\times B).
	\end{align*}
	Hence $f=\dif P^+/\dif P^-$.
	
	\smallskip
	\noindent\emph{Step 5: compute the KL and conclude.}
	By definition of KL divergence,
	\begin{align*}
		\KL(P^+\|P^-)
		&=
		\int_{\cH_t\times\cF} \ln f(h,o)\dif P^+(h,o)
		\\
		&=
		\int_{\cH_t\times\cF} \ln g(h)\dif P^+(h,o)
		+
		\int_{\cH_t\times\cF} \ln \wt d(h,o)\dif P^+(h,o)
		\\
		&=
		\int_{\cH_t} \ln g(h)\dif\mu^+(h)
		+
		\int_{\cH_t}\int_{\cF} \ln \wt d(h,o)K^+(h,\dif o)\dif\mu^+(h)
		\\
		&=
		\KL(\mu^+\|\mu^-)
		+
		\int_{\cH_t} \KL \bigl(K^+(h,\cdot)\|K^-(h,\cdot)\bigr)\dif\mu^+(h).
	\end{align*}
	Finally, by construction $K^\pm(h,\cdot)=\Pb_{F_t^{\pm\eps}(\alpha_t(h))}$, so this is exactly \eqref{eq:chain-rule}.
	
	\smallskip
	\noindent
	The existence of the $\mathfrak{H}_t$-measurable version of the integrand follows from Step 3, since
	$h\mapsto \int_{\cF}\ln\wt d(h,o)K^+(h,\dif o)$ is $\mathfrak{H}_t$-measurable (kernel integration of a measurable function).
\end{proof}

\begin{lemma}[Per-Round KL Upper Bound]\label{lem:t23-per-round-kl}
	Assume Assumption~\ref{ass:t23}. For every $\eps\in(0,\eps_0]$, every $t\in[T]$, and every $x\in\cX$,
	\[
	\KL\lrb{\Pb_{F_t^{+\eps}\lrb{x}} \Vert \Pb_{F_t^{-\eps}\lrb{x}}}
	\le
	c_{\mathrm{info}}\eps^2\I\lcb{x\in\cX_{\mathrm{info}}}.
	\]
\end{lemma}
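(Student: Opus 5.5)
The bound follows by splitting on whether $x$ lies in $\cX_{\mathrm{info}}$, since the sets $\cX_{-},\cX_{+},\cX_{\mathrm{info}}$ partition $\cX$ and the indicator $\I\lcb{x\in\cX_{\mathrm{info}}}$ selects exactly one of the two relevant standing assumptions. Fix $\eps\in(0,\eps_0]$, $t\in[T]$ and $x\in\cX$.

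\emph{Case $x\in\cX_{\mathrm{info}}$.} The indicator equals one, and the claim is exactly Assumption~\ref{ass:t23}~(A5):
\[
\KL\lrb{\Pb_{F_t^{+\eps}(x)}\Vert\Pb_{F_t^{-\eps}(x)}}\le c_{\mathrm{info}}\eps^2 .
\]

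\emph{Case $x\in\cX\setminus\cX_{\mathrm{info}}$.} Assumption~\ref{ass:t23}~(A4) gives
\[
\Pb_{F_t^{+\eps}(x)}=\Pb_{F_t^{-\eps}(x)}
\]
as probability measures on $(\cF,\mathfrak{F})$. The Kullback--Leibler divergence of a measure with respect to itself is $0$: the Radon--Nikodym derivative is $1$ almost everywhere and $\ln 1=0$. The right-hand side $c_{\mathrm{info}}\eps^2\cdot 0=0$, so the bound holds with equality.

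There is no real obstacle here. The only point to check is that $F_t^{\pm\eps}(x)=\varphi(x,Y_t^{\pm\eps})$ is $\mathfrak{F}$-measurable for fixed $x$, so that its law on $(\cF,\mathfrak{F})$ and the divergence are well defined. This holds because $\varphi$ is jointly measurable, hence measurable in the second argument for each fixed $x$.

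This lemma will then be combined with \Cref{lem:chain-rule}. Integrating the per-round bound over the history gives $\KL$ increments of at most $c_{\mathrm{info}}\eps^2\,\Pb[X_t^{+\eps}\in\cX_{\mathrm{info}}]$, and summing over $t$ bounds the total divergence by $c_{\mathrm{info}}\eps^2\,\E[N_{\mathrm{info}}^{+\eps}]$.
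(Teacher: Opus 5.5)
Your proposal is correct and matches the paper's proof: it splits on whether $x\in\cX_{\mathrm{info}}$, uses (A4) to get equal laws and hence zero KL outside $\cX_{\mathrm{info}}$, and applies (A5) directly inside it. Your added measurability check and the remark on how the lemma feeds into the chain-rule bound are accurate but not needed for the statement itself.
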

\begin{proof}
	If $x\notin\cX_{\mathrm{info}}$, then Assumption~\ref{ass:t23} (A4) gives
	$\Pb_{F_t^{+\eps}(x)}=\Pb_{F_t^{-\eps}(x)}$, hence the KL is $0$.
	If $x\in\cX_{\mathrm{info}}$, then Assumption~\ref{ass:t23} (A5) gives
	\[
	\KL\Brb{\Pb_{F_t^{+\eps}(x)}\Vert \Pb_{F_t^{-\eps}(x)}}
	\le
	c_{\mathrm{info}}\eps^2.
	\]
	Combining the two cases yields the claim.
\end{proof}
\begin{lemma}[KL Chain Rule Bound]\label{lem:t23-kl-chain}
	Assume Assumption~\ref{ass:t23}. For every $\eps\in(0,\eps_0]$,
	\begin{align}
		\KL\lrb{\Pb_{(H_T^{+\eps},O_T^{+\eps})}\Vert \Pb_{(H_T^{-\eps},O_T^{-\eps})}}
		\le
		c_{\mathrm{info}}\eps^2\E\lsb{N_{\mathrm{info}}^{+\eps}}.
		\label{eq:t23-kl-chain}
	\end{align}
\end{lemma}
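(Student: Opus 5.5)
The plan is to prove \eqref{eq:t23-kl-chain} by iterating \Cref{lem:chain-rule} over rounds and bounding each conditional term with \Cref{lem:t23-per-round-kl}. First observe that the pair $(H_t^{\pm\eps},O_t^{\pm\eps})$ determines $H_{t+1}^{\pm\eps}=(H_t^{\pm\eps},O_t^{\pm\eps},U_{t+1})$. Here $U_{t+1}$ is uniform and independent of $(H_t^{\pm\eps},O_t^{\pm\eps})$, and it has the same law under both environments. Hence appending it leaves the KL unchanged: the laws are products with a common factor, and the chain rule applies with identical second marginals. This gives
\[
\KL\brb{\Pb_{H_{t+1}^{+\eps}}\Vert\Pb_{H_{t+1}^{-\eps}}}=\KL\brb{\Pb_{(H_t^{+\eps},O_t^{+\eps})}\Vert\Pb_{(H_t^{-\eps},O_t^{-\eps})}}.
\]
Also $H_1^{\pm\eps}=U_1$ has the same law in both environments, so the initial KL is $0$.

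Next, apply \Cref{lem:chain-rule} at each $t$. The integrand is $\KL(\Pb_{F_t^{+\eps}(\alpha_t(h))}\Vert\Pb_{F_t^{-\eps}(\alpha_t(h))})$, which by \Cref{lem:t23-per-round-kl} is at most $c_{\mathrm{info}}\eps^2\I\{\alpha_t(h)\in\cX_{\mathrm{info}}\}$. Integrating against $\Pb_{H_t^{+\eps}}$ gives $c_{\mathrm{info}}\eps^2\Pb[X_t^{+\eps}\in\cX_{\mathrm{info}}]=c_{\mathrm{info}}\eps^2\E[I_{t,\mathrm{info}}^{+\eps}]$, because $X_t^{+\eps}=\alpha_t(H_t^{+\eps})$. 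Letting $D_t$ denote the KL of $(H_t,O_t)$, we therefore obtain $D_t\le D_{t-1}+c_{\mathrm{info}}\eps^2\E[I_{t,\mathrm{info}}^{+\eps}]$ with $D_0\coloneqq0$. Inducting from $t=1$ to $T$ and summing gives $D_T\le c_{\mathrm{info}}\eps^2\sum_t\E[I_{t,\mathrm{info}}^{+\eps}]=c_{\mathrm{info}}\eps^2\E[N_{\mathrm{info}}^{+\eps}]$, which is the claim.

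The main subtlety is the step that appends $U_{t+1}$. I would justify it either via \Cref{lem:chain-rule}-style disintegration, with a kernel independent of the history and identical under $\pm\eps$, or via the standard fact $\KL(P\otimes Q\Vert P'\otimes Q)=\KL(P\Vert P')$. Independence of $U_{t+1}$ from $(H_t,O_t)$ follows from the assumed independence of the seeds from the environment process. Everything else is bookkeeping in $[0,+\infty]$, where all terms are finite by the bounds above.
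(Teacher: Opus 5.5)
Your proposal is correct and matches the paper's proof. You apply \Cref{lem:chain-rule} at each round and identify the history KL with the previous round's KL, using that the fresh seed $U_t$ is independent and has a sign-invariant law; the base case is zero since $H_1=U_1$. You then bound each integrand by \Cref{lem:t23-per-round-kl} and sum the recursion to obtain $c_{\mathrm{info}}\eps^2\E\bigl[N_{\mathrm{info}}^{+\eps}\bigr]$.
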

\begin{proof}
	For brevity define, for each $t\in[T]$,
	\[
	Z_t^\eps
	\coloneqq
	(H_t^\eps,O_t^\eps)
	=
	(U_1,O_1^\eps,\dots,U_t,O_t^\eps).
	\]
	
	\smallskip
	\noindent\emph{Step 1: chain rule recursion.}
	For every $t\ge 2$, the chain rule (\Cref{lem:chain-rule}) gives
	\begin{align*}
		\KL\lrb{\Pb_{Z_t^{+\eps}}\Vert \Pb_{Z_t^{-\eps}}}
		&=
		\KL\Brb{\Pb_{H_t^{+\eps}}\Vert \Pb_{H_t^{-\eps}}}
		+
		\int_{\cH_t}
		\KL\lrb{\Pb_{F_t^{+\eps}\lrb{\alpha_t(h)}} \Vert \Pb_{F_t^{-\eps}\lrb{\alpha_t(h)}}}
		\dif\Pb_{H_t^{+\eps}}(h)
		\\
		&=
		\KL\lrb{\Pb_{(Z_{t-1}^{+\eps},U_t)}\Vert \Pb_{(Z_{t-1}^{-\eps},U_t)}}
		+
		\E\lsb{\Bsb{\KL\lrb{\Pb_{F_t^{+\eps}(x)} \Vert \Pb_{F_t^{-\eps}(x)}}}_{x=\alpha_t(H_t^{+\eps})}}.
	\end{align*}
	Since $U_t$ is independent of $\lrb{Z_{t-1}^{+\eps},Z_{t-1}^{-\eps}}$,
	\[
	\KL\lrb{\Pb_{(Z_{t-1}^{+\eps},U_t)}\Vert \Pb_{(Z_{t-1}^{-\eps},U_t)}}
	=
	\KL\lrb{\Pb_{Z_{t-1}^{+\eps}}\Vert \Pb_{Z_{t-1}^{-\eps}}}.
	\]
	Therefore, for every $t\ge 2$,
	\[
	\KL\lrb{\Pb_{Z_t^{+\eps}}\Vert \Pb_{Z_t^{-\eps}}}
	=
	\KL\lrb{\Pb_{Z_{t-1}^{+\eps}}\Vert \Pb_{Z_{t-1}^{-\eps}}}
	+
	\E\lsb{\Bsb{\KL\lrb{\Pb_{F_t^{+\eps}(x)} \Vert \Pb_{F_t^{-\eps}(x)}}}_{x=\alpha_t(H_t^{+\eps})}}.
	\]
	Also, for $t=1$ the chain rule (\Cref{lem:chain-rule}) gives
	\begin{align*}
		\KL\lrb{\Pb_{Z_1^{+\eps}}\Vert \Pb_{Z_1^{-\eps}}}
		&=
		\KL\Brb{\Pb_{H_1^{+\eps}}\Vert \Pb_{H_1^{-\eps}}}
		+
		\int_{\cH_1}
		\KL\lrb{\Pb_{F_1^{+\eps}\lrb{\alpha_1(h)}} \Vert \Pb_{F_1^{-\eps}\lrb{\alpha_1(h)}}}
		\dif\Pb_{H_1^{+\eps}}(h)
		\\
		&=
		\KL\Brb{\Pb_{U_1}\Vert \Pb_{U_1}}
		+
		\int_{\cH_1}
		\KL\lrb{\Pb_{F_1^{+\eps}\lrb{\alpha_1(h)}} \Vert \Pb_{F_1^{-\eps}\lrb{\alpha_1(h)}}}
		\dif\Pb_{H_1^{+\eps}}(h)
		\\
		&=
		\E\lsb{\Bsb{\KL\lrb{\Pb_{F_1^{+\eps}(x)} \Vert \Pb_{F_1^{-\eps}(x)}}}_{x=\alpha_1(H_1^{+\eps})}}.
	\end{align*}

	\smallskip
	\noindent\emph{Step 2: plug the per-round bound.}
	For each $t \in [T]$, by Lemma~\ref{lem:t23-per-round-kl}, with $x = \alpha_t(h)$ where $h$ is the random history $H_t^{+\eps}$, we get
	\[
	\lsb{\KL\lrb{\Pb_{F_t^{+\eps}(x)} \Vert \Pb_{F_t^{-\eps}(x)}}}_{x=\alpha_t(H_t^{+\eps})}
	\le
	c_{\mathrm{info}}\eps^2\I\lcb{\alpha_t(H_t^{+\eps})\in\cX_{\mathrm{info}}}
	=
	c_{\mathrm{info}}\eps^2\I\lcb{X_t^{+\eps}\in\cX_{\mathrm{info}}}.
	\]
	Taking expectations and plugging into the recursion yields, for every $t\ge 2$,
	\[
	\KL\lrb{\Pb_{Z_t^{+\eps}}\Vert \Pb_{Z_t^{-\eps}}}
	\le
	\KL\lrb{\Pb_{Z_{t-1}^{+\eps}}\Vert \Pb_{Z_{t-1}^{-\eps}}}
	+
	c_{\mathrm{info}}\eps^2\E\bsb{\I\lcb{X_t^{+\eps}\in\cX_{\mathrm{info}}}}.
	\]
	Iterating from $t=T$ down to $t=2$ and using the base case $t=1$ gives
	\begin{align*}
		\KL\lrb{\Pb_{Z_T^{+\eps}}\Vert \Pb_{Z_T^{-\eps}}}
		&\le
		\KL\lrb{\Pb_{Z_1^{+\eps}}\Vert \Pb_{Z_1^{-\eps}}}
		+
		c_{\mathrm{info}}\eps^2\sum_{t=2}^T \E\bsb{\I\lcb{X_t^{+\eps}\in\cX_{\mathrm{info}}}}
		\\
		&=
		c_{\mathrm{info}}\eps^2\sum_{t=1}^T \E\bsb{\I\lcb{X_t^{+\eps}\in\cX_{\mathrm{info}}}}
		=
		c_{\mathrm{info}}\eps^2\E\lsb{N_{\mathrm{info}}^{+\eps}}.
	\end{align*}
	Finally, $Z_T^\eps=(H_T^\eps,O_T^\eps)$, so this is exactly \eqref{eq:t23-kl-chain}.
\end{proof}


\begin{lemma}[Pinsker and Bounded Test Functions]\label{lem:t23-tv}
	Let $g\colon\cH_T\times\cF\to\bbR$ be measurable and satisfy $0\le g \le T$.
	Then for every $\eps\in(0,\eps_0]$,
	\begin{align*}
		\Babs{\E\bsb{g(H_T^{+\eps},O_T^{+\eps})}-\E\bsb{g(H_T^{-\eps},O_T^{-\eps})}}
		&\le
		T\sqrt{\frac12\KL\lrb{\Pb_{(H_T^{+\eps},O_T^{+\eps})}\Vert \Pb_{(H_T^{-\eps},O_T^{-\eps})}}}.
	\end{align*}
\end{lemma}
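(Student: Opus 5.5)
The plan is to pass through total variation distance and then invoke Pinsker's inequality. Write $P^+\coloneqq\Pb_{(H_T^{+\eps},O_T^{+\eps})}$ and $P^-\coloneqq\Pb_{(H_T^{-\eps},O_T^{-\eps})}$. Both are probability measures on the same measurable space $(\cH_T\times\cF,\mathfrak{H}_T\otimes\mathfrak{F})$. By the change-of-variables formula,
\[
\E\bsb{g(H_T^{\pm\eps},O_T^{\pm\eps})}=\int g\,\dif P^{\pm},
\]
so the left-hand side of the claim equals $\babs{\int g\,\dif P^+-\int g\,\dif P^-}$. Both integrals are finite because $0\le g\le T$.

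The first step is the standard bound for bounded test functions: for any measurable $g$ with values in $[0,T]$,
\[
\Babs{\int g\,\dif P^+-\int g\,\dif P^-}\le T\,\sup_{C}\labs{P^+(C)-P^-(C)}\eqqcolon T\,\mathrm{TV}(P^+,P^-).
\]
To prove it, I would use the layer-cake representation $g=\int_0^T\I\{g>s\}\,\dif s$. Tonelli's theorem, which applies since the integrand is nonnegative, gives
\[
\int g\,\dif P^+-\int g\,\dif P^-=\int_0^T\brb{P^+(\{g>s\})-P^-(\{g>s\})}\,\dif s .
\]
Each integrand is bounded in absolute value by $\mathrm{TV}(P^+,P^-)$, and integrating over $s\in[0,T]$ yields the factor $T$. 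An alternative is to introduce a dominating measure $\mu=P^++P^-$ and densities $p^\pm$. Then one writes the difference as $\int g(p^+-p^-)\,\dif\mu$, and bounds it by $T\int(p^+-p^-)_+\,\dif\mu=T\,\mathrm{TV}$, using $\int(p^+-p^-)\,\dif\mu=0$ and $0\le g\le T$.

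The second step is to apply Pinsker's inequality, $\mathrm{TV}(P^+,P^-)\le\sqrt{\tfrac12\KL(P^+\Vert P^-)}$, which gives the claim directly. If the KL divergence is $+\infty$, the right-hand side is infinite and there is nothing to prove.

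No step here is a genuine obstacle. The only point needing care is to keep the total-variation convention consistent with Pinsker's: the supremum over events, with constant $\tfrac12$ under the square root. The layer-cake argument delivers exactly this constant, with no loss of a factor $2$.
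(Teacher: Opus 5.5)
Your proposal is correct and follows essentially the same route as the paper. The paper also writes $\int g\,\dif P=\int_0^T P(g\ge r)\,\dif r$, bounds each slice by the total-variation distance to obtain the factor $T$, and then concludes with Pinsker's inequality.
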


\begin{proof}
	Let
	\[
	P
	\coloneqq
	\Pb_{(H_T^{+\eps},O_T^{+\eps})},
	\qquad
	Q
	\coloneqq
	\Pb_{(H_T^{-\eps},O_T^{-\eps})}.
	\]
	
	We have
	\begin{align*}
		\labs{\int g\dif P-\int g\dif Q}
		&=
		\labs{\int_0^T P\lrb{g \ge r} \dif r- \int_0^T Q\lrb{g \ge r} \dif r}
		\le
		\int_0^T \labs{P\lrb{g \ge r} - Q\lrb{g \ge r}}   \dif r
		\\
		&\le
		\int_0^T \sup_A\labs{P\lrb{A} - Q\lrb{A}}   \dif r
		=
		\int_0^T \lno{P - Q}_{\mathrm{TV}}   \dif r
		=
		T \lno{P - Q}_{\mathrm{TV}},
	\end{align*}
	and the conclusion follows by applying the Pinsker inequality (see, e.g., \cite{Tsybakov2008}).
\end{proof}


\begin{theorem}[$T^{2/3}$ Lower Bound]\label{thm:t23-template}
	Assume Assumption~\ref{ass:t23}.
	Define the threshold
	\[
	T_0
	\coloneqq
	\max\lrb{
		\frac{c_{\mathrm{gap}}}{4c_{\mathrm{info}}c_{\mathrm{opt}}\eps_0^3},
		\frac{2c_{\mathrm{opt}}^{2}}{c_{\mathrm{info}}c_{\mathrm{gap}}^{2}}}.
	\]
	Then for every horizon $T\ge T_0$ and every (randomized) algorithm,
	if we set
	\[
	\eps
	\coloneqq
	\lrb{\frac{c_{\mathrm{gap}}}{4c_{\mathrm{info}}c_{\mathrm{opt}}T}}^{1/3},
	\]
	we have $\eps\le \eps_0$ and
	\[
	\max\lrb{R^{+\eps}(\alpha,T),R^{-\eps}(\alpha,T)}
	\ge
	\frac{c_{\mathrm{gap}}^{1/3}c_{\mathrm{opt}}^{2/3}}{4^{4/3}c_{\mathrm{info}}^{1/3}}T^{2/3}.
	\]
	In particular, there exists $\sigma\in\{+1,-1\}$ such that
	\[
	R^{\sigma\eps}(\alpha,T)
	\ge
	\frac{c_{\mathrm{gap}}^{1/3}c_{\mathrm{opt}}^{2/3}}{4^{4/3}c_{\mathrm{info}}^{1/3}}T^{2/3}.
	\]
\end{theorem}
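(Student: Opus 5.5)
The argument compares the regret decompositions of \Cref{lem:t23-regret-counts} under the two environments, using the KL bound of \Cref{lem:t23-kl-chain} and the test-function inequality of \Cref{lem:t23-tv} to transfer counts from $+\eps$ to $-\eps$. First I check that $\eps\le\eps_0$: since $\eps^3=c_{\mathrm{gap}}/(4c_{\mathrm{info}}c_{\mathrm{opt}}T)$, this is exactly $T\ge c_{\mathrm{gap}}/(4c_{\mathrm{info}}c_{\mathrm{opt}}\eps_0^3)$, the first term in $T_0$. Write $n\coloneqq\E[N_{\mathrm{info}}^{+\eps}]$. There are two cases. If $n$ is large, say $c_{\mathrm{gap}} n\ge$ the target bound $c_{\mathrm{gap}}^{1/3}c_{\mathrm{opt}}^{2/3}4^{-4/3}c_{\mathrm{info}}^{-1/3}T^{2/3}$, then $R^{+\eps}\ge c_{\mathrm{gap}}n$ by \Cref{lem:t23-regret-counts}, and we are done.

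Otherwise $n$ is small. By \Cref{lem:t23-kl-chain}, the KL between the two full histories is at most $c_{\mathrm{info}}\eps^2 n$. The function $N_-$ (equivalently $N_+$) is a measurable function of $(H_T,O_T)$ with values in $[0,T]$. Indeed, $X_t=\alpha_t(H_t)$ and each $H_t$ is a coordinate projection of $(H_T,O_T)$, so the same deterministic $g$ gives $N_-^{\pm\eps}=g(H_T^{\pm\eps},O_T^{\pm\eps})$. \Cref{lem:t23-tv} then gives
\[
\babs{\E[N_-^{+\eps}]-\E[N_-^{-\eps}]}\le T\sqrt{\tfrac12 c_{\mathrm{info}}\eps^2 n}.
\]
Summing the two bounds of \Cref{lem:t23-regret-counts} yields
\[
R^{+\eps}+R^{-\eps}\ge c_{\mathrm{opt}}\eps\bigl(\E[N_-^{+\eps}]+\E[N_+^{-\eps}]\bigr).
\]
Since $N_+^{-\eps}=T-N_-^{-\eps}-N_{\mathrm{info}}^{-\eps}$, the right side is at least
\[
c_{\mathrm{opt}}\eps\Bigl(T-\E[N_{\mathrm{info}}^{-\eps}]-T\eps\sqrt{c_{\mathrm{info}}n/2}\Bigr).
\]
If $\E[N_{\mathrm{info}}^{-\eps}]$ is large, $R^{-\eps}\ge c_{\mathrm{gap}}\E[N_{\mathrm{info}}^{-\eps}]$ already gives the bound. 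A cleaner route is to keep $c_{\mathrm{gap}}\E[N_{\mathrm{info}}^{-\eps}]$ in the sum and use $c_{\mathrm{gap}}\ge c_{\mathrm{opt}}\eps$, which follows from the second threshold term; alternatively, transfer $N_{\mathrm{info}}$ via the same TV bound.

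With $n$ below the threshold, $n\lesssim T^{2/3}$ and $\eps\sim T^{-1/3}$, so $\eps\sqrt{c_{\mathrm{info}}n/2}\le 1/2$ up to constants. The specific choice of $\eps$ makes $T\eps\sqrt{c_{\mathrm{info}}n}$ at most $T/2$ whenever $c_{\mathrm{gap}}n$ is below the target. Then $\max(R^{+\eps},R^{-\eps})\ge\frac12(R^{+\eps}+R^{-\eps})\gtrsim c_{\mathrm{opt}}\eps T/4$, which equals the claimed constant times $T^{2/3}$.

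The main obstacle is the constant bookkeeping. It requires choosing the case threshold on $n$ so that the two branches give exactly $c_{\mathrm{gap}}^{1/3}c_{\mathrm{opt}}^{2/3}4^{-4/3}c_{\mathrm{info}}^{-1/3}T^{2/3}$. It also requires verifying where the second component of $T_0$ is used, namely to ensure $c_{\mathrm{opt}}\eps\le c_{\mathrm{gap}}$ (so that info actions can be bounded at the cheaper rate), or that the TV term is at most $1/2$. I would first fix the case split at $n=T^{2/3}\cdot$const and solve for the constants backwards.
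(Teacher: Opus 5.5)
Your proposal is correct and is essentially the paper's argument: split on the expected number of info actions, transfer a side count through \Cref{lem:t23-kl-chain} and \Cref{lem:t23-tv}, then combine the two inequalities of \Cref{lem:t23-regret-counts}. The differences are only bookkeeping: you split on $\E[N_{\mathrm{info}}^{+\eps}]$ alone at $1/(16c_{\mathrm{info}}\eps^2)$ (i.e.\ where $c_{\mathrm{gap}}\E[N_{\mathrm{info}}^{+\eps}]$ equals the target $c_{\mathrm{opt}}\eps T/4$) and absorb $\E[N_{\mathrm{info}}^{-\eps}]$ via $c_{\mathrm{gap}}\ge c_{\mathrm{opt}}\eps$, whereas the paper splits on $M=\E[N_{\mathrm{info}}^{+\eps}]+\E[N_{\mathrm{info}}^{-\eps}]$ at $1/(8c_{\mathrm{info}}\eps^2)$ and uses $M<T/4$; with your threshold the transfer error is at most $T/(4\sqrt2)$, so $R^{+\eps}+R^{-\eps}\ge c_{\mathrm{opt}}\eps T/2$ and the stated constant follows exactly.
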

\begin{proof}
	Fix $T\ge T_0$ and set $\eps \coloneqq \big(\frac{c_{\mathrm{gap}}}{4c_{\mathrm{info}}c_{\mathrm{opt}}T}\big)^{1/3}$.
	By $T\ge \frac{c_{\mathrm{gap}}}{4c_{\mathrm{info}}c_{\mathrm{opt}}\eps_0^3}$ we have $\eps\le \eps_0$.
	Let
	\[
	M
	\coloneqq
	\E\lsb{N_{\mathrm{info}}^{+\eps}}+\E\lsb{N_{\mathrm{info}}^{-\eps}}.
	\]
	
	\smallskip
	\noindent\emph{Case 1 (many info actions).}
	If $M\ge \frac{1}{8c_{\mathrm{info}}\eps^2}$, then by Lemma~\ref{lem:t23-regret-counts},
	\[
	\frac{R^{+\eps}(\alpha,T)+R^{-\eps}(\alpha,T)}{2}
	\ge
	\frac{c_{\mathrm{gap}}}{2}M
	\ge
	\frac{c_{\mathrm{gap}}}{16c_{\mathrm{info}}\eps^2}.
	\]
	
	\smallskip
	\noindent\emph{Case 2 (few info actions).}
	Assume $M<\frac{1}{8c_{\mathrm{info}}\eps^2}$. Then Lemma~\ref{lem:t23-kl-chain} yields
	\[
	\KL\lrb{\Pb_{(H_T^{+\eps},O_T^{+\eps})}\Vert \Pb_{(H_T^{-\eps},O_T^{-\eps})}}
	\le
	c_{\mathrm{info}}\eps^2\E\lsb{N_{\mathrm{info}}^{+\eps}}
	\le
	c_{\mathrm{info}}\eps^2 M
	\le
	\frac18.
	\]
    Define the deterministic count functional $g_{+}\colon \cH_T\times\cF\to\bbR$ by
    \[
        g_{+}\lrb{u_1,o_1,\dots,u_T,o_T}
    \coloneqq
        \sum_{t=1}^T
        \I\lcb{
        \alpha_t\lrb{u_1,o_1,\dots,u_{t-1},o_{t-1},u_t}
        \in \cX_{+}
        }.
    \]
    Then $g_{+}$ is measurable, $0\le g_{+}\le T$, and, for each $\sigma\in\lcb{+1,-1}$,
    \[
        g_{+}\lrb{H_T^{\sigma\eps},O_T^{\sigma\eps}}
    =
        N_{+}^{\sigma\eps}.
    \]
    Applying Lemma~\ref{lem:t23-tv} with $g=g_{+}$ gives
	\[
	\babs{\E\lsb{N_{ +}^{+\eps}}-\E\lsb{N_{ +}^{-\eps}}}
	\le
	\frac{T}{4}.
	\]
	Using $N_{ -}^{+\eps}=T-N_{\mathrm{info}}^{+\eps}-N_{ +}^{+\eps}$, we get
	\begin{align*}
		\E\lsb{N_{ -}^{+\eps}}+\E\lsb{N_{ +}^{-\eps}}
		&=
		T-\E\lsb{N_{\mathrm{info}}^{+\eps}}-\E\lsb{N_{ +}^{+\eps}}+\E\lsb{N_{ +}^{-\eps}}
		\\
		&\ge
		T-M-\babs{\E\lsb{N_{ +}^{+\eps}}-\E\lsb{N_{ +}^{-\eps}}}
		\ge
		\frac{3T}{4}-M.
	\end{align*}
	Moreover, since $T\ge \frac{2c_{\mathrm{opt}}^{2}}{c_{\mathrm{info}}c_{\mathrm{gap}}^{2}}$ and
	$\eps^2=\lrb{\frac{c_{\mathrm{gap}}}{4c_{\mathrm{info}}c_{\mathrm{opt}}T}}^{2/3}$, we have
	\[
	\frac{1}{8c_{\mathrm{info}}\eps^2}
	\le
	\frac{T}{4},
	\]
	hence in Case~2 we get $M<\frac{T}{4}$ and therefore
	\[
	\E\lsb{N_{ -}^{+\eps}}+\E\lsb{N_{ +}^{-\eps}}
	\ge
	\frac{3T}{4}-M
	\ge
	\frac{T}{2}.
	\]
	By Lemma~\ref{lem:t23-regret-counts},
	\[
	\frac{R^{+\eps}(\alpha,T)+R^{-\eps}(\alpha,T)}{2}
	\ge
	\frac{c_{\mathrm{opt}}\eps}{2}\lrb{\E\lsb{N_{ -}^{+\eps}}+\E\lsb{N_{ +}^{-\eps}}}
	\ge
	\frac{c_{\mathrm{opt}}}{4}T\eps.
	\]
	
	\smallskip
	\noindent\emph{Combine.}
	In all cases,
	\[
	\max\lrb{R^{+\eps}(\alpha,T),R^{-\eps}(\alpha,T)}
	\ge
	\frac{R^{+\eps}(\alpha,T)+R^{-\eps}(\alpha,T)}{2}
	\ge
	\min\lrb{\frac{c_{\mathrm{gap}}}{16c_{\mathrm{info}}\eps^2},\frac{c_{\mathrm{opt}}}{4}T\eps}.
	\]
	With the chosen $\eps$, the two terms match, and
	\[
	\min\lrb{\frac{c_{\mathrm{gap}}}{16c_{\mathrm{info}}\eps^2},\frac{c_{\mathrm{opt}}}{4}T\eps}
	=
	\frac{c_{\mathrm{gap}}^{1/3}c_{\mathrm{opt}}^{2/3}}{4^{4/3}c_{\mathrm{info}}^{1/3}}T^{2/3}.
	\]
	This proves the stated bound, and choosing $\sigma$ as the sign attaining the maximum yields the final claim.
\end{proof}


\section[The T to the 2/3 Regret Lower Bound for Fair Bilateral Trade]
{The \texorpdfstring{$T^{2/3}$}{T to the 2/3} Regret Lower Bound for Fair Bilateral Trade}
\label{app:regret_lower_bound}

Fix throughout this section a fairness level $\lambda\in[-\infty,0]$.
All constants in the construction below are independent of $\lambda$.

\noindent\emph{Per-round model, i.i.d.}
Fix the constants
\[
\alpha \ceq \frac{1}{20},
\qquad
\eps_0 \ceq \frac{1}{50}.
\]
For each $\eps\in[-\eps_0,\eps_0]$, let $S^\eps$ be a discrete random variable on
$\lcb{0,\tfrac{1}{10},\tfrac{1}{5},\tfrac{3}{5}}$ with
\[
\Pb\lsb{S^\eps=0}=\alpha,
\qquad
\Pb\lsb{S^\eps=\tfrac{1}{10}}=\frac{\alpha}{2}-\eps,
\qquad
\Pb\lsb{S^\eps=\tfrac{1}{5}}=\frac{\alpha}{2}+\eps,
\qquad
\Pb\lsb{S^\eps=\tfrac{3}{5}}=1-2\alpha.
\]
Let $B$ be independent of $S^\eps$ with support
$\lcb{\tfrac{2}{5},\tfrac{4}{5},\tfrac{9}{10},1}$ and
\[
\Pb\lsb{B=\tfrac{2}{5}}=1-2\alpha=\frac{9}{10},
\qquad
\Pb\lsb{B=\tfrac{4}{5}}=\Pb\lsb{B=\tfrac{9}{10}}=\frac{\alpha}{2}=\frac{1}{40},
\qquad
\Pb\lsb{B=1}=\alpha=\frac{1}{20}.
\]
For the horizon $T$, let $\lcb{(S_t^\eps,B_t)}_{t\in[T]}$ be i.i.d.\ with
$S_t^\eps\sim S^\eps$, $B_t\sim B$, and $S_t^\eps\perp B_t$ for each $t$.

For an action $p\in[0,1]$, define the per-round reward and feedback by
\[
G_t^\eps(p)\ceq \gft_\lambda(p,S_t^\eps,B_t),
\qquad
F_t^\eps(p)\ceq \lrb{\I\lcb{S_t^\eps\le p},\I\lcb{p\le B_t}},
\]
and
\[
\gft_\lambda(p,s,b)
\ceq
\I\lcb{s\le p\le b}
M_\lambda\lrb{(p-s)_+,(b-p)_+}.
\]
Let
\[
\rho^\eps(p)\ceq \E\lsb{G_t^\eps(p)}.
\]
Since the process is i.i.d., this quantity does not depend on $t$.

\noindent\emph{Action partition.}
Set
\[
\cX \ceq [0,1],
\qquad
\cX_{\mathrm{info}}\ceq [1/10,1/5),
\qquad
\cX_- \ceq [0,1/2)\setminus\cX_{\mathrm{info}},
\qquad
\cX_+ \ceq [1/2,1].
\]
Then
\[
\cX=\cX_-\cup\cX_+\cup\cX_{\mathrm{info}}
\]
is a disjoint union.
The interval is half-open because for $p=1/5$ both perturbed seller atoms,
at $1/10$ and $1/5$, are accepted. Hence their total mass is
$(\alpha/2-\varepsilon)+(\alpha/2+\varepsilon)=\alpha$, which is independent
of the sign of $\varepsilon$.


\begin{lemma}[(A4) Outside $\cX_{\mathrm{info}}$ the Feedback Is Sign-Blind]
\label{lem:A4}
For every $\eps\in(0,\eps_0]$ and every $p\in\cX\setminus\cX_{\mathrm{info}}$,
\[
\Pb_{F_t^{+\eps}(p)}=\Pb_{F_t^{-\eps}(p)}
\qquad
\forall t\in[T].
\]
\end{lemma}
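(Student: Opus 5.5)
The plan is to compute the law of the feedback pair $F_t^{\pm\eps}(p)=\brb{\I\{S_t^{\pm\eps}\le p\},\I\{p\le B_t\}}$ explicitly and check that it does not depend on the sign of $\eps$ when $p\notin[1/10,1/5)$. Since $S_t^{\eps}$ and $B_t$ are independent and the law of $B_t$ does not depend on $\eps$, the joint law of the two bits is the product of a Bernoulli law with parameter $F_{S^\eps}(p)=\Pb[S^\eps\le p]$ and a Bernoulli law with parameter $\Pb[p\le B]$. The second factor is the same for $+\eps$ and $-\eps$. Moreover, because a pair of $\{0,1\}$-valued random variables has its law determined by these four cell probabilities, it suffices to show
\[
    \Pb\bsb{S^{+\eps}\le p}=\Pb\bsb{S^{-\eps}\le p}
    \qquad
    \forall p\in[0,1]\setminus[1/10,1/5).
\]
The i.i.d.\ structure across $t$ makes this independent of $t$.

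The verification is a three-case split on $p$, using the support $\{0,1/10,1/5,3/5\}$ of $S^\eps$.

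If $p<1/10$, then only the atom at $0$ is below $p$, so the cdf equals $\alpha$.

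If $1/5\le p<3/5$, the atoms $0$, $1/10$ and $1/5$ are included. The cdf is then $\alpha+(\alpha/2-\eps)+(\alpha/2+\eps)=2\alpha$, and the $\eps$ terms cancel; this is exactly the remark made after the definition of the partition.

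If $p\ge 3/5$, the cdf equals $1$.

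In each case the value is free of $\eps$, so it coincides for $\pm\eps$. The only values of $p$ at which $\eps$ enters are those in $[1/10,1/5)$, where the cdf is $\alpha+\alpha/2-\eps$; that interval is precisely $\cX_{\mathrm{info}}$.

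There is no real obstacle here. The one point requiring care is the endpoint $p=1/5$. There the non-strict inequality $S\le p$ captures both perturbed atoms, which is why $\cX_{\mathrm{info}}$ is half-open. One should also confirm that all masses are valid probabilities for $\eps\le\eps_0=1/50<\alpha/2=1/40$, which is immediate.
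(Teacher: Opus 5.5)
Your proposal is correct and takes the same route as the paper: independence of $S_t^{\eps}$ and $B_t$ reduces the claim to showing that $p\mapsto\Pb[S^{\eps}\le p]$ does not depend on $\eps$ outside $[1/10,1/5)$. Your explicit three-case computation of the seller cdf spells out what the paper states in a single line.
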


\begin{proof}
The second bit $\I\{p\le B_t\}$ does not depend on $\eps$.
For the first bit $Y^\eps(p)\ceq \I\{S_t^\eps\le p\}$, the only
$\eps$-dependent atoms of $S^\eps$ are at $1/10$ and $1/5$.
Hence the map $p\mapsto\Pb[S^\eps\le p]$ depends on $\eps$ only when
$p\in[1/10,1/5)$, namely only on $\cX_{\mathrm{info}}$.
Since $S_t^\eps$ and $B_t$ are independent, the two feedback bits are
independent for each fixed $p$. Therefore the full joint law of
$F_t^\eps(p)$ is sign-independent outside $\cX_{\mathrm{info}}$.
\end{proof}


\begin{lemma}[(A5) KL Is $O(\eps^2)$ on $\cX_{\mathrm{info}}$ with Explicit Constant]
\label{lem:A5}
For every $\eps\in(0,\eps_0]$, every $t\in[T]$, and every $p\in\cX_{\mathrm{info}}=[1/10,1/5)$,
\[
\KL\Brb{\Pb_{F_t^{+\eps}(p)}\Vert \Pb_{F_t^{-\eps}(p)}}
\le c_{\mathrm{info}}\eps^2,
\qquad
c_{\mathrm{info}}\ceq \frac{160000}{1991}<81.
\]
\end{lemma}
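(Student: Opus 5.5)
The plan is to reduce the divergence to a divergence between two Bernoulli laws and then apply the $\chi^2$ upper bound on KL. Fix $\eps\in(0,\eps_0]$, $t\in[T]$ and $p\in[1/10,1/5)$. Since $S_t^{\pm\eps}$ and $B_t$ are independent, the feedback $F_t^{\pm\eps}(p)$ is a pair of independent bits, so its law is a product measure. The buyer bit $\I\{p\le B_t\}$ has the same law under both signs. The chain rule for KL over product measures (the buyer factors are identical and contribute zero) gives
\[
\KL\Brb{\Pb_{F_t^{+\eps}(p)}\Vert \Pb_{F_t^{-\eps}(p)}}
=
\KL\brb{\mathrm{Ber}(a_+)\Vert \mathrm{Ber}(a_-)},
\qquad
a_\pm\ceq \Pb\bsb{S^{\pm\eps}\le p}.
\]

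Next I compute $a_\pm$ explicitly. For $p\in[1/10,1/5)$, the event $\{S^{\pm\eps}\le p\}$ contains the atoms $0$ and $1/10$ but not $1/5$ or $3/5$. Hence
\[
a_\pm=\alpha+\frac{\alpha}{2}\mp\eps=\frac{3}{40}\mp\eps,
\qquad
a_+-a_-=-2\eps .
\]

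I then use the standard bound $\KL(\mathrm{Ber}(a)\Vert\mathrm{Ber}(b))\le \chi^2=(a-b)^2/\brb{b(1-b)}$, which follows from $\ln x\le x-1$. This gives
\[
\KL\le \frac{4\eps^2}{a_-(1-a_-)} .
\]
Since $a_-=3/40+\eps\in[3/40,\,3/40+1/50]=[3/40,19/200]\subset(0,1/2]$, and $b\mapsto b(1-b)$ is increasing on $(0,1/2]$, we have $a_-(1-a_-)\ge \frac{3}{40}\cdot\frac{37}{40}=\frac{111}{1600}$. Therefore
\[
\KL\le \frac{6400}{111}\eps^2\le 58\,\eps^2\le \frac{160000}{1991}\eps^2,
\]
where the last inequality holds because $160000/1991>80$.

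The only step needing care is identifying exactly which atoms are accepted for $p$ in the half-open interval. This is also why the interval excludes $1/5$: at $p=1/5$ the two perturbed masses cancel, as already noted above. The rest is routine.
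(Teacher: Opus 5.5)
Your proof is correct and follows essentially the same route as the paper: reduce to the KL between the seller-bit laws $\mathrm{Ber}(3/40\mp\eps)$, then bound it by the $\chi^2$ divergence $4\eps^2/\bigl(a_-(1-a_-)\bigr)$. Two details differ. The paper removes the buyer bit by noting that it equals $1$ almost surely (since $p<2/5\le B_t$), rather than by the product structure. It also lower-bounds $a_-(1-a_-)$ over $|\eps|\le\eps_0$, getting $1991/40000$, whereas your use of $\eps>0$ gives the sharper constant $6400/111$.
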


\begin{proof}
Fix $p\in[1/10,1/5)$.
Since $p\le1/5<2/5\le B_t$ almost surely, the second feedback bit equals $1$ deterministically and contributes nothing to the KL divergence.
Hence
\[
\KL\Brb{\Pb_{F_t^{+\eps}(p)}\Vert \Pb_{F_t^{-\eps}(p)}}
=
\KL\Brb{\Pb_{\I\lcb{S_t^{+\eps}\le p}}\Vert \Pb_{\I\lcb{S_t^{-\eps}\le p}}}.
\]
On $[1/10,1/5)$,
\[
q_+ \ceq \Pb\lsb{S_t^{+\eps}\le p}
=
\alpha+\lrb{\frac{\alpha}{2}-\eps}
=
\frac{3\alpha}{2}-\eps,
\]
and
\[
q_- \ceq \Pb\lsb{S_t^{-\eps}\le p}
=
\alpha+\lrb{\frac{\alpha}{2}+\eps}
=
\frac{3\alpha}{2}+\eps.
\]
Thus the first bit is $\mathrm{Ber}(q_\pm)$ and, using $\KL(P\Vert Q)\le \chi^2(P\Vert Q)$ (see, e.g., \citealp[Lemma~2.7]{Tsybakov2008}) and the formula for Bernoulli $\chi^2$ divergences,
\begin{align*}
    \KL\Brb{\Pb_{F_t^{+\eps}(p)}\Vert \Pb_{F_t^{-\eps}(p)}}
&=
    \KL\Brb{\mathrm{Ber}(q_+)\Vert\mathrm{Ber}(q_-)}
\\
&\le
    \chi^2\Brb{\mathrm{Ber}(q_+)\Vert\mathrm{Ber}(q_-)}
=
    \frac{(q_+-q_-)^2}{q_-(1-q_-)}
=
    \frac{4\eps^2}{q_-(1-q_-)}.
\end{align*}
For $|\eps|\le\eps_0$ and $\alpha=1/20$,
\[
q_-\in
\lsb{\frac{3}{40}-\frac{1}{50},\frac{3}{40}+\frac{1}{50}}
=
\lsb{\frac{11}{200},\frac{19}{200}}.
\]
Hence
\[
q_-(1-q_-)\ge \frac{11}{200}\cdot\frac{181}{200}=\frac{1991}{40000},
\]
and therefore
\[
\KL\Brb{\mathrm{Ber}(q_+)\Vert\mathrm{Ber}(q_-)}
\le
\frac{160000}{1991}\eps^2.
\]
\end{proof}


For a buyer atom $b$ and two prices $p,q\in[0,1]$, define the normalized
contribution of $B=b$ to $\rho^\eps(p)-\rho^\eps(q)$ by
\[
    C_b^{\eps,\lambda}(p,q)
    \coloneqq
    \E\bsb{
        \gft_\lambda(p,S^\eps,b)
        -
        \gft_\lambda(q,S^\eps,b)
    }.
\]
Thus, by independence of $S^\eps$ and $B$, the actual contribution of the
atom $B=b$ to $\rho^\eps(p)-\rho^\eps(q)$ is
\[
    \Pb\lsb{B=b} C_b^{\eps,\lambda}(p,q).
\]
In particular, for the buyer support $\mathcal B=\{2/5,4/5,9/10,1\}$,
\[
    \rho^\eps(p)-\rho^\eps(q)
    =
    \sum_{b\in\mathcal B}
    \Pb[B=b]C_b^{\eps,\lambda}(p,q).
\]

\begin{lemma}[Nonnegative Contribution of the Dominant Buyer Atom]
\label{lem:dominant-buyer-nonnegative}
For every $\eps\in[-\eps_0,\eps_0]$,
\[
    C_{2/5}^{\eps,\lambda}\lrb{\frac14,\frac15}\ge0.
\]
\end{lemma}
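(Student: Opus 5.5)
We compute $C_{2/5}^{\eps,\lambda}(1/4,1/5)$ directly, atom by atom over the seller support $\{0,1/10,1/5,3/5\}$. With $b=2/5$ fixed, we have
\[
    C_{2/5}^{\eps,\lambda}\lrb{\tfrac14,\tfrac15}
    =
    \sum_{s}\Pb\lsb{S^\eps=s}\Brb{\gft_\lambda\brb{\tfrac14,s,\tfrac25}-\gft_\lambda\brb{\tfrac15,s,\tfrac25}}.
\]
The atom $s=3/5$ contributes zero, since $3/5>2/5$ and no trade occurs at either price (\Cref{lem:single_trade_shape}).

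For $s=0$, the midpoint of $[0,2/5]$ is $1/5$. Both prices $1/5$ and $1/4$ lie in $[1/5,2/5]$, where $p\mapsto\gft_\lambda(p,0,2/5)$ is nonincreasing by \Cref{lem:single_trade_shape}. This atom therefore contributes a nonpositive amount
\[
    \alpha\brb{M_\lambda(\tfrac14,\tfrac3{20})-M_\lambda(\tfrac15,\tfrac15)}.
\]
For $s=1/10$, the midpoint is $1/4$, so the price $1/4$ is optimal. The difference is
\[
    M_\lambda(\tfrac3{20},\tfrac3{20})-M_\lambda(\tfrac1{10},\tfrac15)=\tfrac3{20}-M_\lambda(\tfrac1{10},\tfrac15)\ge \tfrac3{20}-\sqrt{\tfrac1{50}}>0,
\]
using \Cref{lem:holder_basic_bounds}. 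For $s=1/5$, the price $1/5$ gives zero, while $1/4$ gives $M_\lambda(\tfrac1{20},\tfrac3{20})\ge \tfrac1{20}$.

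We now combine the three atoms with weights $\alpha$, $\alpha/2-\eps$ and $\alpha/2+\eps$. Since $\eps$ may be negative, we use the lower bound $\alpha/2-\eps_0=1/40-1/50=1/200$ on both perturbed weights.

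The $s=0$ loss is bounded via $M_\lambda(\tfrac15,\tfrac15)=\tfrac15$ and $M_\lambda(\tfrac14,\tfrac3{20})\ge \tfrac3{20}$. This gives a loss at most $\alpha\cdot\tfrac1{20}=\tfrac1{400}$.

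A uniform-in-$\lambda$ bound on the gains is tricky, because the $s=1/10$ gain $\tfrac3{20}-M_\lambda(\tfrac1{10},\tfrac15)$ shrinks as $\lambda\to0$, where it equals about $0.0086$. We therefore treat the two perturbed atoms jointly. Their weights sum to $\alpha$, and each is at least $1/200$. The total gain is at least
\[
    (\tfrac{\alpha}2-\eps)\cdot 0.0085+(\tfrac\alpha2+\eps)\cdot\tfrac1{20}
    \ge \alpha\cdot 0.0085 + (\tfrac\alpha2+\eps)(0.05-0.0085).
\]
This crude bound is clearly too small compared with the $s=0$ loss of up to $\alpha\cdot 0.05$ in the Rawls case.

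The main obstacle is this delicate balance between the $s=0$ loss and the gains, so the crude bounds must be refined. First, the $s=0$ loss is really
\[
    \tfrac15-M_\lambda(\tfrac14,\tfrac3{20}),
\]
which equals $1/20$ only in the Rawls limit. In that limit, however, the $s=1/10$ gain is $\tfrac3{20}-\tfrac1{10}=\tfrac1{20}$ and the $s=1/5$ gain is $\tfrac1{20}$. The weighted gain is then $\alpha/20$, which exactly matches the loss $\alpha/20$, so the combination is nonnegative.

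To handle every $\lambda$ uniformly, I would show that
\[
    g(\lambda)=\tfrac12\brb{\tfrac3{20}-M_\lambda(\tfrac1{10},\tfrac15)}+\tfrac12 M_\lambda(\tfrac1{20},\tfrac3{20})-\tfrac15+M_\lambda(\tfrac14,\tfrac3{20})\ge0.
\]
This is the case of equal perturbed weights, normalized by $\alpha$. The $\eps$-dependence is then handled separately. Since
\[
    M_\lambda(\tfrac1{20},\tfrac3{20})\ge \tfrac1{20}\quad\text{and}\quad \tfrac3{20}-M_\lambda(\tfrac1{10},\tfrac15)\le\tfrac1{20},
\]
the $s=1/5$ gain dominates the $s=1/10$ gain. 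The bad case is therefore $\eps=-\eps_0$, where the extra weight shifts onto the $s=1/10$ atom; that case must be checked explicitly.

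I would verify the resulting one-variable inequality in $\lambda\in[-\infty,0]$ using the homogeneity and monotonicity of $M_\lambda$ from \Cref{lem:holder_basic_bounds}. Failing a clean argument, I would fall back on evaluating it with monotone bounds on a few ranges of $\lambda$.
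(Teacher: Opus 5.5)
Your reduction is sound. The atom $s=3/5$ contributes nothing. The expression is affine in $\eps$ with slope $M_\lambda(\tfrac1{20},\tfrac3{20})-\bigl(\tfrac3{20}-M_\lambda(\tfrac1{10},\tfrac15)\bigr)\ge0$, so only $\eps=-\eps_0$ needs checking. This is a small economy over the paper, which checks both endpoints. However, the proof stops exactly where the work lies. Write $m_\lambda(r)\coloneqq M_\lambda(1,r)$ and use homogeneity. The case $\eps=-\eps_0$ puts weights $9/200$ and $1/200$ on the atoms $1/10$ and $1/5$, and after multiplying by $200$ it is equivalent to
\[
\tfrac32\bigl(m_\lambda(\tfrac53)-1\bigr)-\tfrac9{10}\bigl(m_\lambda(2)-1\bigr)+\tfrac1{20}\bigl(m_\lambda(3)-1\bigr)\ge0.
\]
You give no argument for this inequality.

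The tools you name cannot deliver it. Monotonicity and homogeneity alone do not prevent $m_\lambda(5/3)$ from staying at $1$ while $m_\lambda(2)=m_\lambda(3)>1$, and in that case the display is negative. The bounds $\min(x,y)\le M_\lambda\le\sqrt{xy}$ only give $10M_\lambda(\tfrac14,\tfrac3{20})-9M_\lambda(\tfrac1{10},\tfrac15)+M_\lambda(\tfrac1{20},\tfrac3{20})\ge\tfrac32-9\sqrt{1/50}+\tfrac1{20}\approx0.28$. That falls short of the required $13/20$.

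Your numerical fallback also fails structurally. At $\lambda=-\infty$ the contribution $C_{2/5}^{\eps,-\infty}(\tfrac14,\tfrac15)$ is identically $0$ in $\eps$, as your own Rawls computation shows. Moreover, the left side of the display tends to $0$ as $\lambda\to-\infty$. So no check over finitely many ranges of $\lambda$ with positive slack can cover the Rawls end.

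What you need is a relation between the increments $m_\lambda(r)-1$ that remains exact in the limit. The paper obtains it from concavity of $r\mapsto M_\lambda(1,r)$ on $[1,\infty)$ (\Cref{lem:normalized-mean-concavity}). Since $m_\lambda(1)=1$, comparing chord slopes gives $m_\lambda(2)-1\le\tfrac32\bigl(m_\lambda(\tfrac53)-1\bigr)$. Hence the display is at least $\tfrac3{20}\bigl(m_\lambda(\tfrac53)-1\bigr)+\tfrac1{20}\bigl(m_\lambda(3)-1\bigr)\ge0$.
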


\begin{proof}
Set
\[
    A_\lambda\ceq M_\lambda\lrb{\frac14,\frac3{20}},
    \qquad
    B_\lambda\ceq M_\lambda\lrb{\frac1{10},\frac15},
    \qquad
    C_\lambda\ceq M_\lambda\lrb{\frac1{20},\frac3{20}}.
\]
Expanding $C_{2/5}^{\eps,\lambda}(1/4,1/5)$ over the seller atoms gives
\[
    C_{2/5}^{\eps,\lambda}\lrb{\frac14,\frac15}
    =
    \frac1{20}\lrb{A_\lambda-\frac15}
    +
    \lrb{\frac1{40}-\eps}\lrb{\frac3{20}-B_\lambda}
    +
    \lrb{\frac1{40}+\eps}C_\lambda.
\]
This expression is affine in $\eps$.
Since $\eps_0=1/50$, it is enough to check the two endpoints
$\eps=\pm1/50$.

By homogeneity, with
\[
    m_\lambda(r)\ceq M_\lambda(1,r),
\]
we have
\[
    A_\lambda=\frac3{20}m_\lambda\lrb{\frac53},
    \qquad
    B_\lambda=\frac1{10}m_\lambda(2),
    \qquad
    C_\lambda=\frac1{20}m_\lambda(3).
\]

At $\eps=1/50$, the inequality
$C_{2/5}^{\eps,\lambda}(1/4,1/5)\ge0$ is equivalent to
\[
    10A_\lambda-B_\lambda+9C_\lambda\ge \frac{37}{20}.
\]
After substitution, the left-hand side minus $37/20$ is
\[
    \frac32\lrb{m_\lambda\lrb{\frac53}-1}
    -
    \frac1{10}\lrb{m_\lambda(2)-1}
    +
    \frac9{20}\lrb{m_\lambda(3)-1}.
\]
Since $m_\lambda$ is nondecreasing on $[1,+\infty)$ by
\Cref{lem:normalized-mean-concavity},
\[
    m_\lambda(3)-1\ge m_\lambda(2)-1.
\]
Therefore the last display is at least
\[
    \frac32\lrb{m_\lambda\lrb{\frac53}-1}
    +
    \frac7{20}\lrb{m_\lambda(2)-1}
    \ge0.
\]

At $\eps=-1/50$, the required nonnegativity is equivalent to
\[
    10A_\lambda-9B_\lambda+C_\lambda\ge \frac{13}{20}.
\]
After substitution, the left-hand side minus $13/20$ is
\[
    \frac32\lrb{m_\lambda\lrb{\frac53}-1}
    -
    \frac9{10}\lrb{m_\lambda(2)-1}
    +
    \frac1{20}\lrb{m_\lambda(3)-1}.
\]
By \Cref{lem:normalized-mean-concavity}, the function $m_\lambda$ is concave
on $[1,+\infty)$. Since $m_\lambda(1)=1$, concavity gives
\[
    \frac{m_\lambda(5/3)-1}{5/3-1}
    \ge
    \frac{m_\lambda(2)-1}{2-1}.
\]
Hence
$
    m_\lambda(2)-1
    \le
    \frac32\lrb{m_\lambda\lrb{\frac53}-1}
$ and consequently,
\[
\begin{aligned}
&
    \frac32\lrb{m_\lambda\lrb{\frac53}-1}
    -
    \frac9{10}\lrb{m_\lambda(2)-1}
    +
    \frac1{20}\lrb{m_\lambda(3)-1}
\\
&\qquad
    \ge
    \frac3{20}\lrb{m_\lambda\lrb{\frac53}-1}
    +
    \frac1{20}\lrb{m_\lambda(3)-1}
    \ge0,
\end{aligned}
\]
where the last inequality uses again that $m_\lambda$ is nondecreasing and
$m_\lambda(1)=1$.

Thus $C_{2/5}^{\eps,\lambda}(1/4,1/5)\ge0$ for both endpoint values of
$\eps$, and hence for every $\eps\in[-\eps_0,\eps_0]$.
\end{proof}

\begin{lemma}[(A3) Info Actions Are Uniformly Suboptimal with Explicit Gap]
\label{lem:A3}
For every $\eps\in[-\eps_0,\eps_0]$ and every $p\in\cX_{\mathrm{info}}=[1/10,1/5)$,
\[
\rho^\eps_\star-\rho^\eps(p)\ge c_{\mathrm{gap}},
\qquad
c_{\mathrm{gap}}\ceq \frac{1}{40000}.
\]
\end{lemma}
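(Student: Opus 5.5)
The plan is to compare every informative price with the reference price $q=\tfrac14$, using $\rho^\eps_\star\ge\rho^\eps(1/4)$. I would go through the intermediate price $\tfrac15$ and prove two inequalities: first $\rho^\eps(p)\le\rho^\eps(1/5)$ for every $p\in[1/10,1/5)$, and second $\rho^\eps(1/4)-\rho^\eps(1/5)\ge \tfrac1{40000}$. Chaining them gives $\rho^\eps_\star-\rho^\eps(p)\ge\rho^\eps(1/4)-\rho^\eps(p)\ge c_{\mathrm{gap}}$.

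\emph{Step 1: $\rho^\eps(p)\le\rho^\eps(1/5)$.} I would argue pointwise in each realized pair $(s,b)$ of the support, with $b\ge 2/5$, and show $\gft_\lambda(p,s,b)\le\gft_\lambda(1/5,s,b)$.
\begin{itemize}
\item Seller atoms $s\in\{1/5,3/5\}$: both sides vanish, because $p<s$ (for $s=1/5$ the right-hand side is $M_\lambda(0,\cdot)=0$).
\item Seller atoms $s\in\{0,1/10\}$: the midpoint satisfies $(s+b)/2\ge 1/5$. By \Cref{lem:single_trade_shape}, $u\mapsto\gft_\lambda(u,s,b)$ is nondecreasing on $[s,(s+b)/2]\supseteq[s,1/5]$. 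It is also zero to the left of $s$. Hence the value at $p\le 1/5$ is at most the value at $1/5$.
\end{itemize}
Taking expectations, using independence of $S^\eps$ and $B$, gives the claim.

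\emph{Step 2: lower bound on $\rho^\eps(1/4)-\rho^\eps(1/5)$.} I would decompose by buyer atoms as $\sum_b\Pb[B=b]\,C_b^{\eps,\lambda}(1/4,1/5)$.
\begin{itemize}
\item The dominant atom $b=2/5$ gives a nonnegative contribution by \Cref{lem:dominant-buyer-nonnegative}.
\item For $b\in\{4/5,9/10,1\}$ and every seller atom $s\le 1/5$, the midpoint satisfies $(s+b)/2\ge 2/5>1/4$. So again by \Cref{lem:single_trade_shape}, every term $\gft_\lambda(1/4,s,b)-\gft_\lambda(1/5,s,b)$ is nonnegative. The seller atom $s=3/5$ gives zero on both sides.
\item The seller atom $s=1/5$ gives the strict gain. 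There $\gft_\lambda(1/5,1/5,b)=0$, while $\gft_\lambda(1/4,1/5,b)=M_\lambda(1/20,b-1/4)\ge\min(1/20,b-1/4)=1/20$ by \Cref{lem:holder_basic_bounds}.
\end{itemize}
This atom has probability $\alpha/2+\eps$. The sign of $\eps$ matters here, so I would lower-bound it uniformly over $|\eps|\le\eps_0$ by $1/40-1/50=1/200$. The buyer atoms above $1/2$ have total mass $\Pb[B\ge 4/5]=1/10$. Dropping all other nonnegative terms gives
\[
\rho^\eps(1/4)-\rho^\eps(1/5)\ \ge\ \frac1{10}\cdot\frac1{200}\cdot\frac1{20}=\frac1{40000}.
\]

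The main obstacle is the dominant buyer atom $b=2/5$. There, moving the price from $1/5$ to $1/4$ genuinely trades off gains against losses (the seller at $0$ loses balance, the seller at $1/10$ may gain or lose), and the sign depends on $\lambda$ and $\eps$. This is exactly what \Cref{lem:dominant-buyer-nonnegative} settles, via concavity of $r\mapsto M_\lambda(1,r)$. Everything else reduces to the unimodality of \Cref{lem:single_trade_shape} and the bound $M_\lambda\ge\min$.
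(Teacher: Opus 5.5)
Your proposal is correct and follows the paper's proof essentially step for step: the same chain $\rho^\eps(p)\le\rho^\eps(1/5)$ via unimodality for the seller atoms $0$ and $1/10$, followed by $\rho^\eps(1/4)-\rho^\eps(1/5)\ge\frac1{40000}$, using \Cref{lem:dominant-buyer-nonnegative} for $B=2/5$, monotonicity for the buyer tail, and the strict gain $M_\lambda(1/20,B-1/4)\ge 1/20$ from the seller atom $1/5$, whose mass is at least $1/200$. One small correction to your closing remark: at $b=2/5$ the seller at $1/10$ always weakly gains, since its midpoint is exactly $1/4$, so the only loss comes from the seller at $0$; this does not affect the proof, since you rely on the lemma rather than on that description.
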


\begin{proof}
Fix $\eps\in[-\eps_0,\eps_0]$ and $p\in[1/10,1/5)$.
We first show that
$
    \rho^\eps(p)\le \rho^\eps\lrb{\frac15}
$.
Indeed, if a seller-buyer atom can yield nonzero reward at price
$p\in[1/10,1/5)$, then necessarily
$
    S^\eps\in\lcb{0,\frac1{10}}
$, 
$
    B\in\lcb{\frac25,\frac45,\frac9{10},1}
$.
For every such pair $(s,b)$, the interval $[1/10,1/5]$ is contained in
$
    \lsb{s,\frac{s+b}{2}}$.
By \Cref{lem:single-interval-monotonicity},
$
    \gft_\lambda(p,s,b)
    \le
    \gft_\lambda\lrb{\frac15,s,b}
$.
Taking expectations gives $\rho^\eps(p)\le\rho^\eps(1/5)$.
It remains to lower bound
$
    \rho^\eps\lrb{\frac14}-\rho^\eps\lrb{\frac15}.
$
By \Cref{lem:dominant-buyer-nonnegative},
$
    C_{2/5}^{\eps,\lambda}\lrb{\frac14,\frac15}\ge0
$.
Since the actual contribution of the buyer atom $B=2/5$ is
$
    \Pb\lsb{B=2/5}
    C_{2/5}^{\eps,\lambda}\lrb{\frac14,\frac15}
$,
the $B=2/5$ part of
$
    \rho^\eps\lrb{\frac14}-\rho^\eps\lrb{\frac15}
$
is nonnegative.
Now consider the buyer-tail event
$
    B\in\lcb{\frac45,\frac9{10},1}
$.
For $S^\eps\in\{0,1/10\}$ and $B$ in this tail, the whole interval
$[1/5,1/4]$ lies on the nondecreasing side of the corresponding
single-trade reward curve.
Thus the total contribution of these atoms to
$
    \rho^\eps\lrb{\frac14}-\rho^\eps\lrb{\frac15}
$
is nonnegative by \Cref{lem:single-interval-monotonicity}.
For $S^\eps=1/5$ and $B$ in this tail, the reward at price $1/5$ is zero because the seller gain is zero.
At price $1/4$, by \Cref{lem:M-bounds},
$
    \gft_\lambda\lrb{\frac14,\frac15,B}
    =
    M_\lambda\lrb{\frac1{20},B-\frac14}
    \ge
    \min\lrb{\frac1{20},B-\frac14}
    =
    \frac1{20}
$.
Therefore
\[
\begin{aligned}
    \rho^\eps\lrb{\frac14}-\rho^\eps\lrb{\frac15}
    &\ge
    \Pb\lsb{S^\eps=\frac15}
    \Pb\lsb{B\in\lcb{\frac45,\frac9{10},1}}
    \cdot \frac1{20}
    \\
    &\ge
    \lrb{\frac1{40}-\frac1{50}}
    \cdot
    \frac1{10}
    \cdot
    \frac1{20}
    =
    \frac1{40000}.
\end{aligned}
\]
Since $\rho^\eps_\star\ge\rho^\eps(1/4)$ and $\rho^\eps(p)\le\rho^\eps(1/5)$, the claim follows.
\end{proof}


\begin{lemma}[Exact Symmetry at $\eps=0$]
\label{lem:sym0}
Let $\eps=0$.
Then the pair $(S^0,B)$ has the same law as $(1-B,1-S^0)$.
Moreover, for every $p\in[0,1]$,
\[
\E\lsb{\gft_\lambda(p,S^0,B)}
=
\E\lsb{\gft_\lambda(1-p,S^0,B)}.
\]
Equivalently,
\[
\rho^0(p)=\rho^0(1-p)
\qquad
\forall p\in[0,1].
\]
\end{lemma}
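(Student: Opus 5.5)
The plan is to verify the distributional identity directly on atoms, and then transfer it to the expected rewards through a pointwise reflection symmetry of $\gft_\lambda$.

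\emph{Step 1: law of $(1-B,1-S^0)$.} At $\eps=0$, the seller $S^0$ puts mass $\alpha=1/20$ at $0$, mass $\alpha/2=1/40$ at each of $1/10$ and $1/5$, and mass $9/10$ at $3/5$. Hence $1-S^0$ puts mass $1/20$ at $1$, mass $1/40$ at each of $9/10$ and $4/5$, and mass $9/10$ at $2/5$, which is exactly the law of $B$. In the same way, $1-B$ puts mass $9/10$ at $3/5$, mass $1/40$ at each of $1/5$ and $1/10$, and mass $1/20$ at $0$, which is exactly the law of $S^0$. Since $S^0$ and $B$ are independent, $1-B$ and $1-S^0$ are independent as well. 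The product law of $(1-B,1-S^0)$ therefore equals the product law of $(S^0,B)$.

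\emph{Step 2: pointwise reflection identity.} For all $p,s,b\in[0,1]$ I will check that
\[
\gft_\lambda(1-p,1-b,1-s)=\gft_\lambda(p,s,b).
\]
The trade indicator is preserved, because $\I\{1-b\le 1-p\le 1-s\}=\I\{s\le p\le b\}$. On the trade event the two gains become $(1-p)-(1-b)=b-p$ and $(1-s)-(1-p)=p-s$. These are the original gains with their roles swapped, and symmetry of $M_\lambda$ (\Cref{lem:holder_basic_bounds}) shows the value is unchanged.

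\emph{Step 3: conclude.} By Step 2 followed by Step 1,
\[
\E\bsb{\gft_\lambda(1-p,S^0,B)}=\E\bsb{\gft_\lambda(p,1-B,1-S^0)}=\E\bsb{\gft_\lambda(p,S^0,B)}.
\]
The first equality applies Step 2 with price $1-p$ and valuations $(S^0,B)$. The second uses that $(1-B,1-S^0)$ has the same law as $(S^0,B)$. This gives $\rho^0(p)=\rho^0(1-p)$, valid for every $\lambda\in[-\infty,0]$.

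The only point that needs care is the atom bookkeeping in Step 1, in particular matching the weights $1-2\alpha=9/10$ and $\alpha/2=1/40$ on both sides. Everything else follows directly.
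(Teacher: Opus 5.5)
Your proposal is correct and follows the paper's proof: you check the atom-by-atom correspondence between the laws of $S^0$ and $1-B$ (and of $B$ and $1-S^0$), then use the pointwise reflection $\gft_\lambda(1-p,1-b,1-s)=\gft_\lambda(p,s,b)$, which relies on the symmetry of $M_\lambda$, and take expectations. You also state explicitly that independence turns matching marginals into equality of the joint laws, a step the paper leaves implicit.
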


\begin{proof}
By inspection of the atoms and probabilities,
\begin{alignat*}{2}
\Pb\lsb{B=2/5}  &= \frac{9}{10}  = \Pb\lsb{S^0=3/5},
&\qquad
\Pb\lsb{B=4/5}  &= \frac{1}{40} = \Pb\lsb{S^0=1/5},
\\
\Pb\lsb{B=9/10} &= \frac{1}{40} = \Pb\lsb{S^0=1/10},
&\qquad
\Pb\lsb{B=1}    &= \frac{1}{20} = \Pb\lsb{S^0=0}.
\end{alignat*}
Thus $1-B$ has the same law as $S^0$, and $1-S^0$ has the same law as $B$.

Fix $(s,b)$ and set
$
    p'\ceq 1-p
$, 
$
    s'\ceq 1-b
$, 
$
    b'\ceq 1-s
$.
Then
$
\I\lcb{s\le p\le b}=\I\lcb{s'\le p'\le b'}
$
and
$
(b'-p',p'-s')=(p-s,b-p)
$.
Since $M_\lambda$ is symmetric in its two coordinates, we have
$
\gft_\lambda(p,s,b)=\gft_\lambda(p',s',b')
$, and, taking expectations and using
$
(S^0,B)\stackrel{d}{=}(1-B,1-S^0)
$,
yields the result.
\end{proof}


\begin{lemma}[Left-Side Localization]
\label{lem:left-localization}
For every $\eps\in[-\eps_0,\eps_0]$,
\[
\max_{p\in\cX_-}\rho^\eps(p)
=
\max_{p\in[1/5,2/5]}\rho^\eps(p).
\]
\end{lemma}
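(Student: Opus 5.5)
Since $\cX_-=[0,1/10)\cup[1/5,1/2)$ and $[1/5,2/5]\subseteq\cX_-$, it suffices to show that $\rho^\eps(p)\le\max_{q\in[1/5,2/5]}\rho^\eps(q)$ for every $p\in[0,1/10)\cup(2/5,1/2)$. We also need the maximum over $[1/5,2/5]$ to be attained; this holds because $\rho^\eps$ is continuous by \Cref{lem:G_is_Holder}. The argument is a pointwise domination, atom by atom, using the unimodality of \Cref{lem:single-interval-monotonicity}. We treat the two excluded pieces separately.

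\emph{Left piece $p\in[0,1/10)$.} We show $\rho^\eps(p)\le\rho^\eps(1/5)$. At such $p$, a nonzero reward requires $S^\eps=0$, since the other seller atoms are at least $1/10>p$. Every buyer atom satisfies $b\ge2/5$, so the midpoint $b/2\ge1/5$. Hence the interval $[p,1/5]$ lies in $[0,b/2]$, on the nondecreasing side of $\gft_\lambda(\cdot,0,b)$, and therefore $\gft_\lambda(p,0,b)\le\gft_\lambda(1/5,0,b)$. All remaining atom pairs contribute $0$ at $p$ and a nonnegative amount at $1/5$. Taking expectations, using independence of $S^\eps$ and $B$, gives the claim.

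\emph{Right piece $p\in(2/5,1/2)$.} We show $\rho^\eps(p)\le\rho^\eps(2/5)$. At such $p$, the buyer atom $2/5$ rejects, so only $b\in\{4/5,9/10,1\}$ matter. Sellers must satisfy $s\le p<1/2$, so $s\in\{0,1/10,1/5\}$, because $3/5>p$. For each such pair, the midpoint $(s+b)/2\le(1/5+1)/2=3/5$, but it is also at least $(0+4/5)/2=2/5$. The delicate case is the pair $(0,4/5)$, whose midpoint is exactly $2/5$: for it, $[2/5,p]$ lies on the nonincreasing side, so $\gft_\lambda(p)\le\gft_\lambda(2/5)$. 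For the other pairs, the midpoint lies in $(2/5,3/5]$, so $p$ may fall on the increasing side and pointwise domination by $2/5$ fails.

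So the right piece needs a different comparison point. I would instead compare $p$ with its reflection, using a symmetry of the atoms. \Cref{lem:sym0} gives this symmetry only at $\eps=0$, so a cleaner route is to bound $\rho^\eps(p)$ for $p\in(2/5,1/2)$ directly against $\rho^\eps(2/5)$ by adding the contribution of the dominant buyer atom. At $q=2/5$ and $b=2/5$, that atom gives zero buyer gain, so this is also useless. The alternative is to compare with $q=1/5$ or $q=1/4$. At $q=1/4$, the dominant atom $B=2/5$, which has mass $9/10$, contributes roughly $\alpha\,M_\lambda(1/4,3/20)\ge\alpha\cdot 3/20$ through $S=0$. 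This is of order $9/10\cdot1/20\cdot3/20\approx6.75\cdot10^{-3}$. At $p\in(2/5,1/2)$, the total mass able to trade is at most $\Pb[S\le1/5]\,\Pb[B\ge4/5]=\alpha\cdot 2\alpha=1/200$, and each trade yields at most $M_0\le1/2$. That caps $\rho^\eps(p)\le 1/400=2.5\cdot10^{-3}$, which is below the lower bound on $\rho^\eps(1/4)$.

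The plan is therefore as follows. First, dispose of $[0,1/10)$ by monotonicity. Second, dispose of $(2/5,1/2)$ by this crude mass-counting bound, $\rho^\eps(p)\le 1/400<\frac{9}{10}\cdot\frac1{20}\cdot\frac3{20}\le\rho^\eps(1/4)$, where the lower bound on $\rho^\eps(1/4)$ uses $M_\lambda\ge\min$ from \Cref{lem:M-bounds}. The main obstacle is checking these explicit constants uniformly in $\lambda$ and $\eps$. If the crude cap is too loose, the fallback is the per-pair domination above, applied to the pairs whose midpoint is at most $p$, combined with the mass bound on the rest.
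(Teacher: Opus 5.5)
Your final plan is correct, and on the left piece it takes a genuinely different route from the paper.

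\emph{Comparison with the paper.} The paper disposes of both excluded pieces numerically against the single comparison point $3/10$. On $[0,1/10)$ it bounds $\rho^\eps(p)<21/2000$ via $M_\lambda\le\sqrt{xy}$. On $(2/5,1/2)$ it bounds $\rho^\eps(p)\le 3/500$ by mass counting. It then shows $\rho^\eps(3/10)\ge 221/20000$. You instead handle $[0,1/10)$ by an atom-by-atom domination $\gft_\lambda(p,0,b)\le\gft_\lambda(1/5,0,b)$ from \Cref{lem:single-interval-monotonicity}, which is valid because $b/2\ge 1/5$ for every buyer atom. This is more structural: it needs no explicit constants and no $\sqrt{xy}$ estimate. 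On $(2/5,1/2)$ you use the same mass-counting idea as the paper, but compare with $\rho^\eps(1/4)\ge\alpha\cdot\frac{9}{10}\cdot\frac{3}{20}=\frac{27}{4000}$, which holds uniformly in $\lambda$ and $\eps$ by $M_\lambda\ge\min$.

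\emph{An arithmetic slip.} You wrote $\Pb[S^\eps\le 1/5]=\alpha$, but in fact $\Pb[S^\eps\le 1/5]=\alpha+(\alpha/2-\eps)+(\alpha/2+\eps)=2\alpha=1/10$. So the tradable mass is $1/100$, and the cap is $\rho^\eps(p)\le 1/200$ rather than $1/400$. Since $1/200=20/4000<27/4000$, the conclusion is unaffected, and the fallback you mention is not needed.

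\emph{A remark on strictness.} Your left-piece domination is only weak. That suffices for the equality of maxima as stated. However, the proof of \Cref{lem:A2} uses the stronger reading that every maximizer over $\cX_-$ lies in $[1/5,2/5]$. If you want that form, note the atom $(S^\eps,B)=(1/10,2/5)$, which has mass $(\alpha/2-\eps)\cdot\frac{9}{10}>0$. It contributes $M_\lambda(1/10,1/5)\ge 1/10$ at price $1/5$ and nothing at $p<1/10$, so the inequality is in fact strict.
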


\begin{proof}
We first rule out $p\in[0,1/10)$.
Fix such a $p$.
Since $p<1/10$, only the seller atom $S^\eps=0$ can accept.
Hence
\[
\rho^\eps(p)
=
\alpha\E_B\lsb{M_\lambda(p,B-p)}.
\]
Using \Cref{lem:M-bounds}, for every buyer atom $b$,
\[
M_\lambda(p,b-p)\le \sqrt{p(b-p)}.
\]
Moreover, for each $b\in\lcb{2/5,4/5,9/10,1}$, the map
$
p\mapsto \sqrt{p(b-p)}
$
is increasing on $[0,1/10]$, because $1/10<b/2$.
Therefore
\[
\begin{aligned}
\rho^\eps(p)
&\le
\frac1{20}
\Biggl(
\frac9{10}\sqrt{\frac1{10}\cdot\frac3{10}}
+\frac1{40}\sqrt{\frac1{10}\cdot\frac7{10}}
+\frac1{40}\sqrt{\frac1{10}\cdot\frac8{10}}
+\frac1{20}\sqrt{\frac1{10}\cdot\frac9{10}}
\Biggr)
\\
&<
\frac1{20}
\Biggl(
\frac9{10}\cdot\frac15
+\frac1{40}\cdot\frac3{10}
+\frac1{40}\cdot\frac3{10}
+\frac1{20}\cdot\frac3{10}
\Biggr)
=
\frac{21}{2000}.
\end{aligned}
\]
On the other hand, using $M_\lambda(x,y)\ge\min(x,y)$,
\[
\begin{aligned}
\rho^\eps\lrb{\frac3{10}}
&\ge
\frac9{10}\cdot\frac1{10}\cdot\frac1{10}
+
\frac1{10}
\Biggl(
\frac1{20}\cdot\frac3{10}
+
\lrb{\frac1{40}-\eps}\frac2{10}
+
\lrb{\frac1{40}+\eps}\frac1{10}
\Biggr)
\\
&=
\frac9{800}-\frac{\eps}{100}
\ge
\frac9{800}-\frac1{5000}
=
\frac{221}{20000}.
\end{aligned}
\]
Since
$
\frac{221}{20000}>\frac{21}{2000}$,
no maximizer over $\cX_-$ can lie in $[0,1/10)$.

Next, consider $p\in(2/5,1/2)$.
Then the dominant buyer atom $B=2/5$ never accepts.
Only the buyer tail $\lcb{4/5,9/10,1}$ of total mass $1/10$ can contribute.
Also, only the low seller atoms $\lcb{0,1/10,1/5}$ of total mass $1/10$ can contribute.
Using \Cref{lem:M-bounds},
\[
\gft_\lambda(p,s,b)\le \sqrt{(p-s)(b-p)}.
\]
Since $p-s\le1/2$ and $b-p\le3/5$ in this region,
\[
\gft_\lambda(p,s,b)\le \sqrt{\frac12\cdot\frac35}<\frac35.
\]
Therefore
\[
\rho^\eps(p)\le \frac1{10}\cdot\frac1{10}\cdot\frac35=\frac3{500}.
\]
Since
$
\frac{221}{20000}>\frac3{500}$,
no maximizer over $\cX_-$ can lie in $(2/5,1/2)$.

Finally,
\[
\cX_-=[0,1/10)\cup[1/5,2/5]\cup(2/5,1/2).
\]
Thus the maximum over $\cX_-$ is attained on $[1/5,2/5]$.
\end{proof}

\begin{lemma}[Right-Side Localization]
\label{lem:right-localization}
For every $\eps\in[-\eps_0,\eps_0]$,
\[
\max_{p\in\cX_+}\rho^\eps(p)
=
\max_{p\in[3/5,4/5]}\rho^\eps(p).
\]
\end{lemma}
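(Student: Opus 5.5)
We argue as in \Cref{lem:left-localization}, splitting
\[
\cX_+=[1/2,3/5)\cup[3/5,4/5]\cup(4/5,1].
\]
We show that $\rho^\eps$ on each outer piece is dominated by its value at a point of $[3/5,4/5]$. Since $\rho^\eps$ is continuous (\Cref{lem:G_is_Holder}), the maximum over $\cX_+$ and over $[3/5,4/5]$ is then attained on $[3/5,4/5]$, and the two maxima coincide.

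\emph{Left piece $[1/2,3/5)$.} Here the dominant seller atom $S^\eps=3/5$ never accepts, and the dominant buyer atom $B=2/5$ never accepts since $p>2/5$. So only the low sellers $\{0,1/10,1/5\}$ (total mass $1/10$) and the buyer tail $\{4/5,9/10,1\}$ (total mass $1/10$) contribute. By \Cref{lem:M-bounds}, $\gft_\lambda(p,s,b)\le\sqrt{(p-s)(b-p)}$. In this region $p-s\le 3/5$ and $b-p\le 1/2$, so $\gft_\lambda\le\sqrt{3/10}<3/5$. Hence
\[
\rho^\eps(p)\le\frac{1}{10}\cdot\frac{1}{10}\cdot\frac{3}{5}=\frac{3}{500}.
\]
For comparison, at $p=7/10\in[3/5,4/5]$ the pair $S^\eps=3/5$, $B\in\{4/5,9/10,1\}$ gives reward $M_\lambda(1/10,b-7/10)\ge\min(1/10,b-7/10)=1/10$. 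Dropping all other nonnegative contributions,
\[
\rho^\eps(7/10)\ge\frac{9}{10}\cdot\frac1{10}\cdot\frac1{10}=\frac{9}{1000}>\frac{3}{500}.
\]
This bound is independent of $\eps$, because the atom $3/5$ has mass $1-2\alpha$ for every $\eps$. So no maximizer lies in $[1/2,3/5)$.

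\emph{Right piece $(4/5,1]$.} Instead of a crude bound, which seems too weak here, we use monotonicity and show $\rho^\eps(p)\le\rho^\eps(4/5)$ termwise over seller-buyer atoms.
\begin{itemize}
\item Buyer atoms $b\in\{2/5,4/5\}$ give zero reward at $p>4/5$ and nonnegative reward at $4/5$.
\item For $b\in\{9/10,1\}$ and any seller atom $s\in\{0,1/10,1/5,3/5\}$, the midpoint satisfies $(s+b)/2\le(3/5+1)/2=4/5$. So $[4/5,b]$ lies on the nonincreasing side of $p\mapsto\gft_\lambda(p,s,b)$ by \Cref{lem:single-interval-monotonicity}, giving $\gft_\lambda(p,s,b)\le\gft_\lambda(4/5,s,b)$. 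For $p>b$ the reward is zero.
\end{itemize}
Taking expectations yields $\rho^\eps(p)\le\rho^\eps(4/5)$ for all $p\in(4/5,1]$.

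Combining the two pieces proves the claim. The only delicate point is the right piece, where a direct $\sqrt{(p-s)(b-p)}$ bound is not small enough against the comparison value. The monotonicity argument avoids this obstacle because all relevant midpoints are at most $4/5$.
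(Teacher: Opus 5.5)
Your proof is correct and follows essentially the same route as the paper. On $[1/2,3/5)$ you use a crude mass bound (only the low sellers and the buyer tail, each of mass $1/10$, can trade) against a lower bound at an interior point, and on $(4/5,1]$ you compare termwise with the price $4/5$ via \Cref{lem:single-interval-monotonicity}, using that all relevant midpoints are at most $4/5$; the only differences are cosmetic, since you compare $\rho^\eps(7/10)\ge 9/1000$ against $3/500$ where the paper compares $\rho^\eps(3/4)\ge 81/8000$ against $1/200$.
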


\begin{proof}
We first rule out $p\in[1/2,3/5)$.
For such prices, the seller atom $S^\eps=3/5$ does not accept and the buyer atom $B=2/5$ does not accept.
Thus trade can occur only when
\[
    S^\eps\in\lcb{0,1/10,1/5},
    \qquad
    B\in\lcb{4/5,9/10,1}.
\]
Both events have probability $1/10$.
Moreover, by \Cref{lem:M-bounds},
\[
    M_\lambda(p-s,b-p)
    \le
    \sqrt{(p-s)(b-p)}
    \le
    \frac{b-s}{2}
    \le
    \frac12.
\]
Therefore
\[
    \rho^\eps(p)\le \frac1{10}\cdot\frac1{10}\cdot\frac12=\frac1{200}.
\]
At $p=3/4$, using only the events $S^\eps=3/5$ and $B\in\lcb{9/10,1}$,
\[
    \rho^\eps\lrb{\frac34}
    \ge
    \Pb\lsb{S^\eps=3/5}\Pb\bsb{B\in\lcb{9/10,1}}
    \cdot\frac3{20}
    =
    \frac9{10}\cdot\frac3{40}\cdot\frac3{20}
    =
    \frac{81}{8000}.
\]
Since $81/8000>1/200$, no maximizer over $\cX_+$ lies in $[1/2,3/5)$.

We now rule out $p\in(4/5,1]$.
Fix such a $p$.
We prove pointwise that, for every seller-buyer atom $(s,b)$,
\[
    \gft_\lambda\lrb{\frac45,s,b}
    \ge
    \gft_\lambda(p,s,b).
\]
If trade does not occur at price $p$, then
$\gft_\lambda(p,s,b)=0$, while
$\gft_\lambda(4/5,s,b)\ge0$, and the claim is immediate.

Assume therefore that trade occurs at price $p$.
Since $p>4/5$, this forces $b\in\{9/10,1\}$.
Moreover, by construction of the seller support, $s\le3/5$.
Hence
\[
    \frac{s+b}{2}\le\frac45.
\]
Also $s\le4/5\le p\le b$, so both $4/5$ and $p$ lie in the trade
interval $[s,b]$, and both are on the nonincreasing side of the
single-trade reward curve. By \Cref{lem:single-interval-monotonicity},
\[
    \gft_\lambda\lrb{\frac45,s,b}
    \ge
    \gft_\lambda(p,s,b).
\]
Taking expectations gives
\[
    \rho^\eps\lrb{\frac45}\ge \rho^\eps(p).
\]
Since $4/5\in[3/5,4/5]$, points in $(4/5,1]$ are never needed to attain the maximum over $\cX_+$.
\end{proof}

\begin{lemma}[Reflected Left-Right Separation]
\label{lem:core-reflected-separation}
There exists a constant
\[
c_{\mathrm{core}}
\ceq
\frac{4-\sqrt2-\sqrt3}{1600}>0
\]
such that for every $\eps\in(0,\eps_0]$ and every $p\in[1/5,2/5]$,
\[
\rho^{+\eps}(1-p)-\rho^{+\eps}(p)\ge c_{\mathrm{core}}\eps,
\]
and
\[
\rho^{-\eps}(p)-\rho^{-\eps}(1-p)\ge c_{\mathrm{core}}\eps.
\]
\end{lemma}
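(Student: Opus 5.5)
The plan is to exploit the fact that $\rho^\eps$ is affine in $\eps$, and then combine this with the exact reflection symmetry at $\eps=0$ (\Cref{lem:sym0}). Only the seller atoms at $1/10$ and $1/5$ depend on $\eps$, with masses $\alpha/2\mp\eps$. Using independence of $S^\eps$ and $B$, I will write
\[
    \rho^\eps(p)=\rho^0(p)+\eps D(p),
    \qquad
    D(p)\coloneqq \E\bsb{\gft_\lambda(p,\tfrac15,B)-\gft_\lambda(p,\tfrac1{10},B)}.
\]
Since $\rho^0(p)=\rho^0(1-p)$ by \Cref{lem:sym0}, both claimed inequalities reduce to the same deterministic statement:
\[
    \rho^{+\eps}(1-p)-\rho^{+\eps}(p)
    =\rho^{-\eps}(p)-\rho^{-\eps}(1-p)
    =\eps\brb{D(1-p)-D(p)}.
\]
It therefore suffices to prove that $D(1-p)-D(p)\ge c_{\mathrm{core}}$ for every $p\in[1/5,2/5]$ and every $\lambda\in[-\infty,0]$.

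By coordinatewise monotonicity of $M_\lambda$ (\Cref{lem:M-bounds}), moving the seller from $1/10$ up to $1/5$ can only lower the reward. Hence $D\le 0$ everywhere, and the goal becomes
\[
    \mathrm{Loss}(p)-\mathrm{Loss}(1-p)\ge c_{\mathrm{core}},
    \qquad
    \mathrm{Loss}\coloneqq -D\ge 0.
\]
In words: the seller upgrade hurts more at the low price $p$ than at the reflected high price $1-p$.

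The first step is to bound $\mathrm{Loss}(1-p)$ from above, for $q=1-p\in[3/5,4/5]$. Only the buyer tail $\{4/5,9/10,1\}$ can accept, and it has mass $1/10$. For each tail buyer $b$, I will bound the difference $M_\lambda(q-\tfrac1{10},b-q)-M_\lambda(q-\tfrac15,b-q)$. The tool is the homogeneous form $M_\lambda(x,y)=y\,m_\lambda(x/y)$ together with the concavity and monotonicity of $m_\lambda$ (\Cref{lem:normalized-mean-concavity}). Since $q-s\ge b-q$ in the relevant range, the ratio $x/y$ stays at least $1$ and the increment in $x$ has a damped effect. Where it is simpler, I will instead use the sandwich $\min\le M_\lambda\le\sqrt{\cdot}$ from \Cref{lem:M-bounds}. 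I expect the $\sqrt2,\sqrt3$ in $c_{\mathrm{core}}$ to come from such geometric-mean evaluations, for instance $\sqrt{xy}$ with ratios $2$ and $3$.

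The second step is to bound $\mathrm{Loss}(p)$ from below, for $p\in[1/5,2/5]$. Here I will use both the dominant buyer atom $2/5$ (mass $9/10$) and the tail. For the tail buyers the seller gain $p-s$ is the smaller coordinate. Moving from $p-\tfrac1{10}$ to $p-\tfrac15$, which is a relative change of at least a factor of $1/2$ when $p\le 2/5$, should cost at least a fixed fraction of $\min$, via $M_\lambda\ge\min$ on the $1/10$ side and $M_\lambda\le\sqrt{\cdot}$ on the $1/5$ side. Near $p=2/5$ the buyer-$2/5$ contribution vanishes, so the tail term must carry the bound uniformly in $p$. I will split $[1/5,2/5]$ into a low part, where the $B=2/5$ atom dominates, and a high part, where the tail dominates.

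The main obstacle is obtaining a lower bound on the loss difference that is uniform over the whole Rawls-to-Nash range $\lambda\in[-\infty,0]$. Upper and lower comparisons with $\min$ and $\sqrt{\cdot}$ lose constant factors, so the margin may be thin. I would first try pairing each left term with its reflected right term atom by atom, and comparing the two using concavity of $m_\lambda$ on $[1,\infty)$, which is exactly the mechanism already used in \Cref{lem:dominant-buyer-nonnegative}. If that does not give a positive margin, I would fall back to reducing the problem to the two endpoints $\lambda=0$ and $\lambda=-\infty$. Those two endpoints are where the explicit constant $(4-\sqrt2-\sqrt3)/1600$ should arise.
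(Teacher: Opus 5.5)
Your reduction is the same as the paper's. From $\rho^\eps=\rho^0+\eps D$ and \Cref{lem:sym0}, both inequalities become $D(1-p)-D(p)\ge c_{\mathrm{core}}$; the paper's $D_\lambda(p)$ is your $D(1-p)-D(p)$. The gap is in the quantitative step. You only sketch it, and the tools you name do not give a positive margin that is uniform in $\lambda$.

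\emph{Low price.} Your proposed bound for a tail buyer is $M_\lambda(p-\frac1{10},b-p)-M_\lambda(p-\frac15,b-p)\ge(p-\frac1{10})-\sqrt{(p-\frac15)(b-p)}$. This is negative on part of the range: $p=\frac3{10}$, $b=\frac45$ gives $\frac15-\sqrt{1/20}<0$. So it cannot carry the bound near $p=\frac25$, where, as you note, the $B=\frac25$ atom contributes nothing.

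\emph{Concavity.} Pairing atom by atom and using concavity of $m_\lambda$ does not fix this. Comparing both increments with the slope $\frac12$ of $m_\lambda$ at $r=1$ gives at best $\ge h/2$ at $p$ and $\le h/2$ at $q$, with $h=\frac1{10}$. Each per-atom term of $D(1-p)-D(p)$ is then only $\ge 0$, so the margin is zero.

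\emph{Endpoints.} Checking only $\lambda\in\{0,-\infty\}$ is unjustified. Nothing you prove makes the quantity monotone or convex in $\lambda$.

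The missing ingredient is the derivative identity of \Cref{lem:holder_differential_identities}, $\partial_xM_\lambda(x,y)=\frac12\brb{M_\lambda(x,y)/x}^{1-\lambda}$, combined with $\min(x,y)\le M_\lambda(x,y)\le\sqrt{xy}$ from \Cref{lem:holder_basic_bounds}. Uniformly in $\lambda$, these give:
\begin{itemize}
\item $\partial_xM_\lambda\ge\frac12$ when $x\le y$;
\item $\partial_xM_\lambda\le\frac12 M_\lambda/x\le\frac12\sqrt{y/x}$ when $y\le x$.
\end{itemize}
The second bound is strictly below $\frac12$, and that is what creates the margin.

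With these bounds the paper pairs per buyer atom. The $B=\frac25$ and $B=1$ terms are simply nonnegative, so no splitting of $[\frac15,\frac25]$ is needed. For $B=\frac45$ and $B=\frac9{10}$, the low-price increment is $\ge\frac1{20}$. The reflected high-price increment is $\le\frac1{20}\sqrt{y/x}$, with $y/x\le\frac12$ for $B=\frac45$ and $y/x\le\frac34$ for $B=\frac9{10}$, on the whole range. This gives $\frac1{40}\brb{\frac1{20}-\frac1{20\sqrt2}}+\frac1{40}\brb{\frac1{20}-\frac{\sqrt3}{40}}=c_{\mathrm{core}}$.

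Within your toolkit, the secant bound $m_\lambda'(r)\le\frac{m_\lambda(r)-1}{r-1}\le\frac1{1+\sqrt r}$ would also restore strictness. With the same pairing, however, it gives a constant smaller than the stated one.
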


\begin{proof}
Fix $p\in[1/5,2/5]$ and set $q\ceq1-p\in[3/5,4/5]$.
By construction, the only $\eps$-dependence of $S^\eps$ is the split of mass between $1/10$ and $1/5$.
Using \Cref{lem:sym0},
\[
    \rho^0(q)=\rho^0(p).
\]
Hence
\[
    \rho^{+\eps}(q)-\rho^{+\eps}(p)=\eps D_\lambda(p),
\]
where
\[
D_\lambda(p)
\ceq
\E_B\lsb{
    \gft_\lambda(q,1/5,B)
    -
    \gft_\lambda(q,1/10,B)
    -
    \gft_\lambda(p,1/5,B)
    +
    \gft_\lambda(p,1/10,B)
}.
\]
Similarly,
\[
    \rho^{-\eps}(p)-\rho^{-\eps}(q)=\eps D_\lambda(p).
\]
It remains to prove that
\[
    D_\lambda(p)\ge c_{\mathrm{core}},
    \qquad
    \forall p\in[1/5,2/5].
\]

For each buyer atom
\[
    b\in\mathcal B\ceq\lcb{\frac25,\frac45,\frac9{10},1},
\]
define its buyer-atom component in $D_\lambda(p)$ by
\[
D_{\lambda,b}(p)
\ceq
    \gft_\lambda(q,1/5,b)
    -
    \gft_\lambda(q,1/10,b)
    -
    \gft_\lambda(p,1/5,b)
    +
    \gft_\lambda(p,1/10,b).
\]
Thus
\[
    D_\lambda(p)
    =
    \sum_{b\in\mathcal B}
    \Pb\lsb{B=b}D_{\lambda,b}(p).
\]

The component corresponding to the buyer atom $B=2/5$ is nonnegative.
Indeed, since $q>2/5$, the reflected price $q$ never trades against $B=2/5$, and
\[
    D_{\lambda,2/5}(p)
    =
    M_\lambda\lrb{p-\frac1{10},\frac25-p}
    -
    M_\lambda\lrb{p-\frac15,\frac25-p}
    \ge0
\]
by monotonicity of $M_\lambda$ in its first coordinate.

We shall use two elementary one-sided increment bounds.
If $\lambda\in(-\infty,0]$ and $x,y>0$, \Cref{lem:holder_differential_identities} gives
\[
    \partial_x M_\lambda(x,y)
    =
    \frac12\lrb{\frac{M_\lambda(x,y)}{x}}^{1-\lambda}.
\]
If $0<x\le y$, then $M_\lambda(x,y)\ge x$ by \Cref{lem:holder_basic_bounds}, and therefore
\[
    \partial_x M_\lambda(x,y)\ge\frac12.
\]
If $0<y\le x$, then $M_\lambda(x,y)\le\sqrt{xy}$ by \Cref{lem:holder_basic_bounds}, and since $1-\lambda\ge1$ and $M_\lambda(x,y)/x\le1$,
\[
    \partial_x M_\lambda(x,y)
    \le
    \frac12\frac{M_\lambda(x,y)}{x}
    \le
    \frac12\sqrt{\frac{y}{x}}.
\]
The same increment bounds when $\lambda=-\infty$ follow directly from
$M_{-\infty}(x,y)=\min\lcb{x,y}$.
Consequently, for every $h>0$ and every $x,y\ge0$ such that $0\le x\le x+h\le y$,
\begin{equation}
\label{eq:core_lower_increment}
    M_\lambda(x+h,y)-M_\lambda(x,y)
    \ge
    \frac h2.
\end{equation}
Indeed, for $x>0$ this follows by integrating the lower derivative bound, while for $x=0$ it follows from
$M_\lambda(h,y)\ge M_{-\infty}(h,y)=h$.

Moreover, for every $h>0$ and every $x,y\ge0$ such that $0\le y\le x$ and $x>0$,
\begin{equation}
\label{eq:core_upper_increment}
    M_\lambda(x+h,y)-M_\lambda(x,y)
    \le
    \frac h2\sqrt{\frac{y}{x}}.
\end{equation}
Indeed, for $y>0$ this follows by integrating the upper derivative bound, while for $y=0$ both sides are zero.

The component corresponding to the buyer atom $B=4/5$ satisfies
\[
    D_{\lambda,4/5}(p)
    \ge
    \frac1{20}-\frac1{20\sqrt2}.
\]
Indeed,
\[
\begin{aligned}
D_{\lambda,4/5}(p)
&=
\lsb{
    M_\lambda\lrb{p-\frac1{10},\frac45-p}
    -
    M_\lambda\lrb{p-\frac15,\frac45-p}
}
\\
&\quad+
\lsb{
    M_\lambda\lrb{\frac45-p,p-\frac15}
    -
    M_\lambda\lrb{\frac9{10}-p,p-\frac15}
}.
\end{aligned}
\]
For the first bracket, set
\[
    x\ceq p-\frac15,
    \qquad
    h\ceq\frac1{10},
    \qquad
    y\ceq\frac45-p.
\]
Since $0\le x\le x+h\le y$ for every $p\in[1/5,2/5]$, \eqref{eq:core_lower_increment} gives
\[
    M_\lambda\lrb{p-\frac1{10},\frac45-p}
    -
    M_\lambda\lrb{p-\frac15,\frac45-p}
    \ge
    \frac1{20}.
\]
For the second bracket, set
\[
    x\ceq\frac45-p,
    \qquad
    h\ceq\frac1{10},
    \qquad
    y\ceq p-\frac15.
\]
Since $0\le y\le x$ and
\[
    \frac{y}{x}
    =
    \frac{p-1/5}{4/5-p}
    \le
    \frac12
    \qquad
    \text{for every }p\in[1/5,2/5],
\]
\eqref{eq:core_upper_increment} gives
\[
    M_\lambda\lrb{\frac9{10}-p,p-\frac15}
    -
    M_\lambda\lrb{\frac45-p,p-\frac15}
    \le
    \frac1{20\sqrt2}.
\]
Therefore
\[
    D_{\lambda,4/5}(p)
    \ge
    \frac1{20}-\frac1{20\sqrt2}.
\]

Similarly, the component corresponding to the buyer atom $B=9/10$ satisfies
\[
    D_{\lambda,9/10}(p)
    \ge
    \frac1{20}-\frac{\sqrt3}{40}.
\]
Indeed,
\[
\begin{aligned}
D_{\lambda,9/10}(p)
&=
\lsb{
    M_\lambda\lrb{p-\frac1{10},\frac9{10}-p}
    -
    M_\lambda\lrb{p-\frac15,\frac9{10}-p}
}
\\
&\quad+
\lsb{
    M_\lambda\lrb{\frac45-p,p-\frac1{10}}
    -
    M_\lambda\lrb{\frac9{10}-p,p-\frac1{10}}
}.
\end{aligned}
\]
The first bracket is at least $1/20$ by \eqref{eq:core_lower_increment}.
For the second bracket, set
\[
    x\ceq\frac45-p,
    \qquad
    h\ceq\frac1{10},
    \qquad
    y\ceq p-\frac1{10}.
\]
Since $0\le y\le x$ and
\[
    \frac{y}{x}
    =
    \frac{p-1/10}{4/5-p}
    \le
    \frac34
    \qquad
    \text{for every }p\in[1/5,2/5],
\]
\eqref{eq:core_upper_increment} gives
\[
    M_\lambda\lrb{\frac9{10}-p,p-\frac1{10}}
    -
    M_\lambda\lrb{\frac45-p,p-\frac1{10}}
    \le
    \frac1{20}\sqrt{\frac34}
    =
    \frac{\sqrt3}{40}.
\]
Therefore
\[
    D_{\lambda,9/10}(p)
    \ge
    \frac1{20}-\frac{\sqrt3}{40}.
\]

Finally, the component corresponding to the buyer atom $B=1$ is nonnegative.
Indeed,
\[
\begin{aligned}
D_{\lambda,1}(p)
&=
\lsb{
    M_\lambda\lrb{p-\frac1{10},1-p}
    -
    M_\lambda\lrb{p-\frac15,1-p}
}
\\
&\quad+
\lsb{
    M_\lambda\lrb{\frac45-p,p}
    -
    M_\lambda\lrb{\frac9{10}-p,p}
}.
\end{aligned}
\]
The first bracket is at least $1/20$ by \eqref{eq:core_lower_increment}.
For the second bracket, set
\[
    x\ceq\frac45-p,
    \qquad
    h\ceq\frac1{10},
    \qquad
    y\ceq p.
\]
Since $0\le y\le x$ and
\[
    \frac{y}{x}
    =
    \frac{p}{4/5-p}
    \le
    1
    \qquad
    \text{for every }p\in[1/5,2/5],
\]
\eqref{eq:core_upper_increment} gives
\[
    M_\lambda\lrb{\frac9{10}-p,p}
    -
    M_\lambda\lrb{\frac45-p,p}
    \le
    \frac1{20}.
\]
Therefore
\[
    D_{\lambda,1}(p)\ge0.
\]

Combining the normalized contributions with the buyer-atom probabilities,
\[
\begin{aligned}
D_\lambda(p)
&=
\sum_{b\in\mathcal B}
\Pb\lsb{B=b}D_{\lambda,b}(p)
\ge
\frac1{40}\lrb{\frac1{20}-\frac1{20\sqrt2}}
+
\frac1{40}\lrb{\frac1{20}-\frac{\sqrt3}{40}}
\\
&=
\frac{4-\sqrt2-\sqrt3}{1600}
=
c_{\mathrm{core}}.
\end{aligned}
\]
This proves the claim.
\end{proof}


\begin{lemma}[(A2) Linear Separation between Sides with Explicit $c_{\mathrm{opt}}$]
\label{lem:A2}
For every $\eps\in(0,\eps_0]$,
\[
\max_{p\in\cX_+}\rho^{+\eps}(p)-\max_{p\in\cX_-}\rho^{+\eps}(p)
\ge c_{\mathrm{opt}}\eps,
\]
and
\[
\max_{p\in\cX_-}\rho^{-\eps}(p)-\max_{p\in\cX_+}\rho^{-\eps}(p)
\ge c_{\mathrm{opt}}\eps,
\]
where
\[
c_{\mathrm{opt}}\ceq \frac{4-\sqrt2-\sqrt3}{1600}.
\]
\end{lemma}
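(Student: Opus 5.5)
The plan is to combine the two localization lemmas with the reflected separation lemma. By \Cref{lem:left-localization}, the left-side maximum equals $\max_{p\in[1/5,2/5]}\rho^{\eps}(p)$. By \Cref{lem:right-localization}, the right-side maximum equals $\max_{q\in[3/5,4/5]}\rho^{\eps}(q)$. Both maxima exist by continuity of $\rho^\eps$ (\Cref{lem:G_is_Holder}). The reflection $p\mapsto 1-p$ maps $[1/5,2/5]$ bijectively onto $[3/5,4/5]$. Hence the right-side maximum can be written as $\max_{p\in[1/5,2/5]}\rho^{\eps}(1-p)$.

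For the first inequality, take $\eps\in(0,\eps_0]$ and let $p^\ast\in[1/5,2/5]$ attain $\max_{p\in[1/5,2/5]}\rho^{+\eps}(p)$. Then
\[
\max_{p\in\cX_+}\rho^{+\eps}(p)-\max_{p\in\cX_-}\rho^{+\eps}(p)
\ge \rho^{+\eps}(1-p^\ast)-\rho^{+\eps}(p^\ast)
\ge c_{\mathrm{core}}\eps .
\]
The first step uses $1-p^\ast\in[3/5,4/5]\subseteq\cX_+$. The second step is the first claim of \Cref{lem:core-reflected-separation}.

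For the second inequality, let $q^\ast\in[3/5,4/5]$ attain the right-side maximum of $\rho^{-\eps}$, and set $p\coloneqq 1-q^\ast\in[1/5,2/5]$. Since $p\in\cX_-$,
\[
\max_{\cX_-}\rho^{-\eps}-\max_{\cX_+}\rho^{-\eps}\ge \rho^{-\eps}(p)-\rho^{-\eps}(1-p)\ge c_{\mathrm{core}}\eps .
\]
The last step is the second claim of \Cref{lem:core-reflected-separation}. Since $c_{\mathrm{opt}}=c_{\mathrm{core}}$, both inequalities are proved.

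There is no real obstacle, since all the analytic work is already in the cited lemmas. The one point to be careful about is the order of the maxima. In each case the maximizer must be taken on the side whose maximum we subtract, and then reflected to the other side. Attaining the maxima on the localized closed intervals is what allows this pointwise comparison.
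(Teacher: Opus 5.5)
Your proposal is correct and is essentially the paper's proof: localize both side maxima with \Cref{lem:left-localization,lem:right-localization}, reflect the relevant maximizer through $p\mapsto 1-p$, and apply \Cref{lem:core-reflected-separation} with $c_{\mathrm{opt}}=c_{\mathrm{core}}$. The paper instead picks maximizers over $\cX_{\pm}$ directly and uses the localization lemmas to place them in the closed intervals. Your choice of maximizers over $[1/5,2/5]$ and $[3/5,4/5]$ is slightly cleaner, since attainment there follows directly from continuity.
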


\begin{proof}
Let
\[
p^\star\in\argmax_{p\in\cX_-}\rho^{+\eps}(p).
\]
By \Cref{lem:left-localization},
$
p^\star\in[1/5,2/5]$.
Hence
\[
1-p^\star\in[3/5,4/5]\subset\cX_+.
\]
Therefore
\[
\max_{p\in\cX_+}\rho^{+\eps}(p)
\ge
\rho^{+\eps}(1-p^\star).
\]
By \Cref{lem:core-reflected-separation},
\[
\rho^{+\eps}(1-p^\star)-\rho^{+\eps}(p^\star)
\ge
c_{\mathrm{opt}}\eps.
\]
Thus
\[
\max_{p\in\cX_+}\rho^{+\eps}(p)
\ge
\max_{p\in\cX_-}\rho^{+\eps}(p)+c_{\mathrm{opt}}\eps.
\]

For the second inequality, let
\[
q^\star\in\argmax_{q\in\cX_+}\rho^{-\eps}(q).
\]
By \Cref{lem:right-localization},
\[
q^\star\in[3/5,4/5].
\]
Set
$
p^\star\ceq1-q^\star$.
Then
\[
p^\star\in[1/5,2/5]\subset\cX_-.
\]
By \Cref{lem:core-reflected-separation},
\[
\rho^{-\eps}(p^\star)-\rho^{-\eps}(q^\star)
=
\rho^{-\eps}(p^\star)-\rho^{-\eps}(1-p^\star)
\ge
c_{\mathrm{opt}}\eps.
\]
Therefore
\[
\max_{p\in\cX_-}\rho^{-\eps}(p)
\ge
\rho^{-\eps}(p^\star)
\ge
\rho^{-\eps}(q^\star)+c_{\mathrm{opt}}\eps
=
\max_{q\in\cX_+}\rho^{-\eps}(q)+c_{\mathrm{opt}}\eps.
\]
\end{proof}

\begin{lemma}[(A1) Optimal Side]
\label{lem:A1}
For every $\eps\in(0,\eps_0]$, every maximizer of $\rho^{+\eps}$ lies in $\cX_+$ and every maximizer of $\rho^{-\eps}$ lies in $\cX_-$.
\end{lemma}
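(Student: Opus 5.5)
The claim follows almost directly from \Cref{lem:A2} combined with the partition $\cX=\cX_-\cup\cX_+\cup\cX_{\mathrm{info}}$ and \Cref{lem:A3}. First, existence of maximizers: $\rho^\eps=G_\lambda$ for the instance with seller law $S^\eps$ and buyer law $B$, so it is continuous on $[0,1]$ by \Cref{lem:G_is_Holder}. Hence $\rho^\eps_\star$ is attained, and the set of maximizers is nonempty and closed. We therefore only need to exclude maximizers from the two ``wrong'' parts of the partition.

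Fix $\eps\in(0,\eps_0]$ and consider $\rho^{+\eps}$. By \Cref{lem:A3}, every $p\in\cX_{\mathrm{info}}$ satisfies $\rho^{+\eps}(p)\le\rho^{+\eps}_\star-c_{\mathrm{gap}}<\rho^{+\eps}_\star$, so no maximizer lies in $\cX_{\mathrm{info}}$. By \Cref{lem:A2}, every $p\in\cX_-$ satisfies
\[
\rho^{+\eps}(p)\le\max_{q\in\cX_-}\rho^{+\eps}(q)\le\max_{q\in\cX_+}\rho^{+\eps}(q)-c_{\mathrm{opt}}\eps<\rho^{+\eps}_\star,
\]
since $c_{\mathrm{opt}}\eps>0$. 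Thus every maximizer lies in $\cX_+$. The case of $\rho^{-\eps}$ is symmetric: \Cref{lem:A3} again rules out $\cX_{\mathrm{info}}$, and the second inequality of \Cref{lem:A2} rules out $\cX_+$.

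The only point requiring care is that the maxima over $\cX_-$ and $\cX_+$ used in \Cref{lem:A2} are genuinely attained, because $\cX_-=[0,1/10)\cup[1/5,1/2)$ is not closed. This is handled by the localization lemmas. \Cref{lem:left-localization} shows that the supremum over $\cX_-$ equals the maximum over the compact interval $[1/5,2/5]$. \Cref{lem:right-localization} does the same for $\cX_+$ with $[3/5,4/5]$. So the displayed strict inequality holds pointwise for every $p\in\cX_-$, which is all the argument needs.

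Finally, this shows that (A1) of Assumption~\ref{ass:t23} holds in the form required: $\sup_{\cX_+}\rho^{+\eps}=\sup_{\cX}\rho^{+\eps}$ and $\sup_{\cX_-}\rho^{-\eps}=\sup_{\cX}\rho^{-\eps}$.
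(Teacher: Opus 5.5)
Your proof is correct and is essentially the paper's argument: \Cref{lem:A2} excludes the wrong side ($\cX_-$ for $+\eps$, $\cX_+$ for $-\eps$), \Cref{lem:A3} excludes $\cX_{\mathrm{info}}$, and the partition of $\cX$ does the rest. Your extra care is sound and the paper leaves it implicit: existence of maximizers follows from continuity (\Cref{lem:G_is_Holder}), and attainment over the non-closed set $\cX_-$ follows from \Cref{lem:left-localization}. The appeal to right-localization is unnecessary, since $\cX_+=[1/2,1]$ is already compact, and the chain of inequalities would go through with suprema anyway.
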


\begin{proof}
Fix $\eps\in(0,\eps_0]$.
By \Cref{lem:A2},
$
\max_{\cX_+}\rho^{+\eps}>\max_{\cX_-}\rho^{+\eps}$.
Moreover, by \Cref{lem:A3}, every $p\in\cX_{\mathrm{info}}$ is suboptimal.
Hence no maximizer can lie in $\cX_-\cup\cX_{\mathrm{info}}$, so every maximizer lies in $\cX_+$.
The case $-\eps$ is analogous.
\end{proof}

\noindent\emph{Conclusion: the lower bound.}

\regretlowerbound*

\begin{proof}
Fix $\lambda\in[-\infty,0]$.
Consider the two i.i.d.\ bilateral-trade instances constructed at the beginning of \Cref{app:regret_lower_bound}, with perturbation parameters $+\eps$ and $-\eps$.
By \Cref{lem:A1,lem:A2,lem:A3,lem:A4,lem:A5}, these two instances satisfy Assumption~\ref{ass:t23} with
\[
    \eps_0=\frac{1}{50},
    \qquad
    c_{\mathrm{gap}}=\frac{1}{40000},
    \qquad
    c_{\mathrm{opt}}=\frac{4-\sqrt2-\sqrt3}{1600},
    \qquad
    c_{\mathrm{info}}=\frac{160000}{1991}.
\]
Therefore, by \Cref{thm:t23-template}, for every horizon
\[
    T
    \ge
    \max\lrb{
        \frac{c_{\mathrm{gap}}}{4c_{\mathrm{info}}c_{\mathrm{opt}}\eps_0^3},
        \frac{2c_{\mathrm{opt}}^{2}}{c_{\mathrm{info}}c_{\mathrm{gap}}^{2}}
    },
\]
and every randomized pricing policy $\alpha$, there exists a sign $\sigma\in\lcb{+1,-1}$ such that, under the instance with perturbation $\sigma\eps$, where
\[
    \eps
    \ceq
    \lrb{
        \frac{c_{\mathrm{gap}}}
        {4c_{\mathrm{info}}c_{\mathrm{opt}}T}
    }^{1/3},
\]
the regret satisfies
\[
    R_\lambda(\alpha,T)
    =
    R^{\sigma\eps}(\alpha,T)
    \ge
    \frac{
        c_{\mathrm{gap}}^{1/3}
        c_{\mathrm{opt}}^{2/3}
    }{
        4^{4/3}
        c_{\mathrm{info}}^{1/3}
    }
    T^{2/3}.
\]
Both constructed instances are independent i.i.d.\ bilateral-trade instances with valuations in $[0,1]$.
Setting
\[
    \Delta\ceq 4-\sqrt2-\sqrt3,
\]
we have
\[
    c_{\mathrm{opt}}=\frac{\Delta}{1600}.
\]
Hence the lower-bound constant given by \Cref{thm:t23-template} is
\[
    c_\star
    \ceq
    \frac{
        c_{\mathrm{gap}}^{1/3}
        c_{\mathrm{opt}}^{2/3}
    }{
        4^{4/3}
        c_{\mathrm{info}}^{1/3}
    }
    =
    \lrb{
        \frac{1991\Delta^2}{2^{22}\cdot 10^{12}}
    }^{1/3}
    \ge
    10^{-6}.
\]
Moreover, the horizon threshold required by \Cref{thm:t23-template} is
\[
    \left\lceil
    \max\lrb{
        \frac{c_{\mathrm{gap}}}{4c_{\mathrm{info}}c_{\mathrm{opt}}\eps_0^3},
        \frac{2c_{\mathrm{opt}}^{2}}{c_{\mathrm{info}}c_{\mathrm{gap}}^{2}}
    }
    \right\rceil
    =
    \left\lceil
    \max\lrb{
        \frac{1991}{128\Delta},
        \frac{1991\Delta^2}{128}
    }
    \right\rceil
    =
    19.
\]
Therefore, for every $T\ge 19$,
\[
    R_\lambda(\alpha,T)
    \ge
    c_\star T^{2/3}
    \ge
    10^{-6}T^{2/3}.
\]
This concludes the proof.
\end{proof}


\section{The PAC Lower Bound}
\label{app:proof_PAC_lower}

\noindent\emph{Pure exploration, stopping, and recommendation.}
A pure-exploration algorithm is given by a sequential querying policy
$\alpha=(\alpha_t)_{t\ge 1}$, a stopping rule, and a recommendation rule.
For every $t\ge 1$, the map
\[
\alpha_t\colon ([0,1]\times\cF)^{t-1}\times [0,1]\to \cX
\]
is measurable. Under environment $\theta\in[-\eps_0,\eps_0]$, the learner chooses
recursively
\[
X_t^\theta
=
\alpha_t(U_1,O_1^\theta,\dots,U_{t-1},O_{t-1}^\theta,U_t),
\qquad
O_t^\theta=F_t^\theta(X_t^\theta).
\]
For $t\ge1$, write
\[
H_t^\theta
\ceq
(U_1,O_1^\theta,\dots,U_{t-1},O_{t-1}^\theta,U_t),
\qquad
H_1^\theta\ceq U_1.
\]
Let
\[
\mathcal G_t^\theta
\ceq
\sigma(U_1,O_1^\theta,\dots,U_t,O_t^\theta),
\qquad t\ge 1,
\]
and let $\tau^\theta$ be the stopping time induced by the stopping rule when the
environment is $\theta$. We assume throughout this section that $\tau^\theta$ is
a.s. finite under the environments under consideration; if its expectation is
infinite, the lower bound below is immediate.

For every $t\ge 1$, let
\[
\widehat{\alpha}_t
\colon
\brb{[0,1]\times\cF}^t\times[0,1]\to\cX
\]
be the recommendation map used when the algorithm stops after $t$ observations.
The final recommendation under environment $\theta$ is
\[
\widehat X^\theta
\ceq
\widehat{\alpha}_{\tau^\theta}
(U_1,O_1^\theta,\dots,U_{\tau^\theta},O_{\tau^\theta}^\theta,U_{\tau^\theta+1}).
\]
Equivalently, define the stopped observation history
\[
\mathcal T_\tau^\theta
\ceq
(\tau^\theta,
 U_1,O_1^\theta,\dots,
 U_{\tau^\theta},O_{\tau^\theta}^\theta,
 U_{\tau^\theta+1}),
\]
which takes values in the disjoint union
\[
\mathsf T
\ceq
\bigsqcup_{t\ge 1}
\lcb{t}\times([0,1]\times\cF)^t\times[0,1].
\]
Then $\widehat X^\theta$ is a measurable function of $\mathcal T_\tau^\theta$.

\noindent\emph{PAC guarantee.}
Let $\varepsilon>0$ and $\delta\in(0,1)$.
We say that the pure-exploration algorithm is $(\varepsilon,\delta)$-PAC over a
class of environments if, for every environment $\theta$ in the class,
\[
 \Pb_\theta\lsb{
    \rho_\star^\theta-\rho^\theta(\widehat X^\theta)\le \varepsilon
}
\ge
1-\delta.
\]

\noindent\emph{Finite initial segments.}
When the stopping time is random, we use the following convention.
We say that Assumption~\ref{ass:t23} holds on every finite initial segment if,
for every deterministic $m\ge1$, the finite-horizon problem obtained by
keeping only rounds $1,\dots,m$ satisfies Assumption~\ref{ass:t23} with the
same constants.

\begin{lemma}[KL Bound at the Stopping Time]\label{lem:pac-kl}
    Assume Assumption~\ref{ass:t23} holds on every finite initial segment.
    For every $\eta\in(0,\eps_0]$,
    \[
    \KL\lrb{
        \Pb_{\mathcal T_\tau^{+\eta}}
        \Big\Vert
        \Pb_{\mathcal T_\tau^{-\eta}}
    }
    \le
    c_{\mathrm{info}}\eta^2\,\E_{+\eta}\lsb{\tau^{+\eta}}.
    \]
\end{lemma}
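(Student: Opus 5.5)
We plan to reduce the stopped-history KL bound to the finite-horizon chain-rule argument of \Cref{lem:t23-kl-chain}, by truncating the stopping time at a deterministic level $m$ and then letting $m\to\infty$. If $\E_{+\eta}[\tau^{+\eta}]=\infty$ the claim is trivial, so we assume it is finite.

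\emph{Step 1 (truncation).} For $m\ge1$ consider the full length-$m$ history $Z_m^{\theta}=(U_1,O_1^\theta,\dots,U_m,O_m^\theta,U_{m+1})$. Since the stopping rule is distribution-independent, the truncated stopped history $\mathcal T^{\theta}_{\tau\wedge m}$ is the image of $Z_m^\theta$ under one fixed measurable map $\Phi_m$, common to both environments. By data processing,
\[
\KL\bigl(\Pb_{\mathcal T^{+\eta}_{\tau\wedge m}}\Vert\Pb_{\mathcal T^{-\eta}_{\tau\wedge m}}\bigr)
\le\KL\bigl(\Pb_{Z_m^{+\eta}}\Vert \Pb_{Z_m^{-\eta}}\bigr).
\]
This bound is too crude on its own, since it charges every round up to $m$ rather than only those up to $\tau$. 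We therefore rerun the chain rule on a modified policy that, after stopping, plays a fixed non-info action $x_0\in\cX_-$. Under (A4) such rounds carry zero KL. This gives the refined bound
\[
\KL\bigl(\Pb_{Z_m^{+\eta}}\Vert\Pb_{Z_m^{-\eta}}\bigr)
\le c_{\rm info}\eta^2\,\E_{+\eta}\Bigl[\sum_{t\le \tau\wedge m}\I\{X_t\in\cX_{\rm info}\}\Bigr]
\le c_{\rm info}\eta^2\,\E_{+\eta}[\tau^{+\eta}\wedge m].
\]
The first inequality follows from \Cref{lem:chain-rule} and \Cref{lem:t23-per-round-kl}, applied on the finite initial segment of length $m$ exactly as in \Cref{lem:t23-kl-chain}. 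The policy modification is legitimate because $\{\tau\ge t\}$ is determined by $H_t$, i.e.\ the stopping decision is $\mathcal G_{t-1}$-measurable. The post-stopping observations do not affect $\Phi_m$, so data processing still applies.

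\emph{Step 2 (limit $m\to\infty$).} Because $\tau$ is a.s.\ finite under both environments, $\mathcal T^{\theta}_{\tau\wedge m}\to\mathcal T^\theta_\tau$ almost surely; in fact they eventually coincide. The truncated stopped history also includes the indicator of whether truncation occurred. Hence, for every bounded measurable test function $f$ on $\mathsf T$, the law $\Pb_{\mathcal T^\theta_{\tau\wedge m}}$ converges to $\Pb_{\mathcal T^\theta_\tau}$ in total variation, and in particular setwise.

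We then use the lower semicontinuity of KL under setwise convergence, through the Donsker--Varadhan variational formula: a supremum over bounded measurable $f$ of quantities continuous in the pair of laws. This gives
\[
\KL\bigl(\Pb_{\mathcal T^{+\eta}_\tau}\Vert\Pb_{\mathcal T^{-\eta}_\tau}\bigr)\le\liminf_m c_{\rm info}\eta^2\E_{+\eta}[\tau\wedge m]=c_{\rm info}\eta^2\E_{+\eta}[\tau^{+\eta}],
\]
where the last equality is monotone convergence. An alternative is to note that $\mathcal T_\tau$ is a measurable function of $\mathcal T_{\tau\wedge m}$ on the event $\{\tau\le m\}$ and pass to the limit directly, but lower semicontinuity is cleaner.

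\emph{Main obstacle.} The delicate step is Step 1: the chain-rule lemma is stated for a deterministic horizon and fixed policy maps, so we must check that the modified policy (stop, then play $x_0$) fits the template. Concretely, $\alpha'_t(h)=\alpha_t(h)$ on $\{\tau\ge t\}$ and $\alpha'_t(h)=x_0$ otherwise, and we need this to be measurable, which holds since $\{\tau\ge t\}$ is determined by the history. We must also ensure that $\Phi_m$ is the same map under both signs. The remaining measure-theoretic points are the countable generation of the disjoint-union space $\mathsf T$, so that Donsker--Varadhan applies, and the finiteness caveat for $\tau$; both are routine.
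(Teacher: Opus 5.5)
Your proof is correct, but it takes a different route from the paper's in both steps.

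\textbf{Finite-$m$ bound.} The paper does not modify the policy. It censors the observations instead, replacing $O_t^\theta$ by a symbol $\bot$ for $t>\tau\wedge m$. It then re-runs the chain-rule argument on the enlarged feedback space $\cF\cup\{\bot\}$, where post-stopping rounds contribute zero KL because $\bot$ is deterministic under both environments. The level-$m$ stopped history, which carries a flag $\star$ when $\tau>m$, is then obtained from the censored history by data processing.

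Your device of continuing with a fixed action $x_0\notin\cX_{\mathrm{info}}$ achieves the same effect through (A4). Its advantage is that the modified policy is a genuine policy of the template, so \Cref{lem:t23-kl-chain} on horizon $m$ applies verbatim. The price is checking that $\alpha'_t$ is measurable and that the pre-stopping trajectory, hence $\tau$ and $\Phi_m$, is unchanged; you do both. Do make explicit that $Z_m$ in your second display is the history of the modified policy, since for the original policy that bound fails.

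\textbf{Passage to the limit.} The paper views the truncations as functions $g_m(\mathcal T_\tau)$ that generate an increasing filtration on $\mathsf T$. It then invokes the martingale convergence theorem for relative entropy (Barron), which gives the exact limit. You instead observe that the truncated laws converge to the stopped-history laws in total variation, since they differ only on $\{\tau>m\}$. This requires $\tau$ to be a.s.\ finite under both signs, which the paper also needs for $\mathcal T_\tau^{-\eta}$ to be defined. You then combine this with lower semicontinuity of KL via Donsker--Varadhan. That argument is more elementary and yields exactly the inequality needed.

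Incidentally, Donsker--Varadhan with bounded measurable test functions holds on arbitrary measurable spaces, so you do not need to check countable generation of $\mathsf T$.
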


\begin{proof}
    If $\E_{+\eta}\lsb{\tau^{+\eta}}=+\infty$, there is nothing to prove.
    We therefore assume that this expectation is finite.

    Fix $m\ge 1$ and set
    \[
        \tau_m^\theta \ceq \tau^\theta\wedge m.
    \]
    Let $\bot$ be a symbol not belonging to $\cF$ and set
    \[
        \cF_\bot \ceq \cF\cup\{\bot\},
        \qquad
        \mathfrak F_\bot
        \ceq
        \sigma\bigl(\mathfrak F\cup\{\{\bot\}\}\bigr),
    \]
    where $\mathfrak F_\bot$ is viewed as a sigma-field on $\cF_\bot$.
    Define the censored observations
    \[
        \widetilde O_{t,m}^\theta
        \ceq
        \begin{cases}
            O_t^\theta, & t\le \tau_m^\theta,\\
            \bot, & t> \tau_m^\theta.
        \end{cases}
    \]
    Consider the finite censored observation history
    \[
        \overline{\mathcal T}_m^\theta
        \ceq
        (U_1,\widetilde O_{1,m}^\theta,\dots,
         U_m,\widetilde O_{m,m}^\theta,U_{m+1})
        \in
        ([0,1]\times\cF_\bot)^m\times[0,1].
    \]
    We first bound the KL divergence between the laws of
    $\overline{\mathcal T}_m^{+\eta}$ and
    $\overline{\mathcal T}_m^{-\eta}$.

    The usual chain-rule argument gives
    \begin{align*}
    \KL\lrb{
        \Pb_{\overline{\mathcal T}_m^{+\eta}}
        \Big\Vert
        \Pb_{\overline{\mathcal T}_m^{-\eta}}
    }
    \le
    \sum_{t=1}^{m}
    \E_{+\eta}\bbsb{
        \I\lcb{t\le \tau^{+\eta}}\,
        \KL\lrb{
            \Pb_{F_t^{+\eta}(X_t^{+\eta})\mid H_t^{+\eta}}
            \Big\Vert
            \Pb_{F_t^{-\eta}(X_t^{+\eta})\mid H_t^{+\eta}}
        }
    }.
    \end{align*}
    Indeed, conditionally on the censored history before round $t$, if the
    algorithm has already stopped, then the $t$-th censored observation is the
    deterministic symbol $\bot$ under both environments and contributes zero KL.
    Otherwise, the previous censored history contains no symbol $\bot$, the
    algorithm queries the action selected by the same measurable rule
    $\alpha_t$, and the conditional KL contribution is that of the feedback
    distribution at the queried action.

    By Assumption~\ref{ass:t23} (A4) and (A5), for every action $x\in\cX$,
    \[
        \KL\lrb{
            \Pb_{F_t^{+\eta}(x)}
            \Big\Vert
            \Pb_{F_t^{-\eta}(x)}
        }
        \le
        c_{\mathrm{info}}\eta^2\,\I\lcb{x\in\cX_{\mathrm{info}}}
        \le
        c_{\mathrm{info}}\eta^2.
    \]
    Hence
    \[
    \KL\lrb{
        \Pb_{\overline{\mathcal T}_m^{+\eta}}
        \Big\Vert
        \Pb_{\overline{\mathcal T}_m^{-\eta}}
    }
    \le
    c_{\mathrm{info}}\eta^2
    \sum_{t=1}^m
    \Pb_{+\eta}\lrb{t\le \tau^{+\eta}}
    =
    c_{\mathrm{info}}\eta^2
    \E_{+\eta}\lsb{\tau^{+\eta}\wedge m}.
    \]

    Let $\star$ be a symbol not belonging to $\mathsf T$. Now let
    $\mathcal T_{\tau}^{\theta,(m)}$ denote the level-$m$ truncation of
    $\mathcal T_\tau^\theta$: it coincides with $\mathcal T_\tau^\theta$ on
    $\{\tau^\theta\le m\}$, and on $\{\tau^\theta>m\}$ it is equal to
    \[
        (\star,U_1,O_1^\theta,\dots,U_m,O_m^\theta,U_{m+1}).
    \]
    This is a measurable function of $\overline{\mathcal T}_m^\theta$ because
    the stopping decision up to time $m$ is determined by the observed history
    up to time $m$. Hence, by data processing,
    \[
    \KL\lrb{
        \Pb_{\mathcal T_{\tau}^{+\eta,(m)}}
        \Big\Vert
        \Pb_{\mathcal T_{\tau}^{-\eta,(m)}}
    }
    \le
    c_{\mathrm{info}}\eta^2
    \E_{+\eta}\lsb{\tau^{+\eta}\wedge m}.
    \]
    The sigma-fields generated by
    $\mathcal T_{\tau}^{\theta,(m)}$, $m\ge1$, increase to the sigma-field
    generated by the full stopped history $\mathcal T_\tau^\theta$.
    By the martingale convergence theorem for relative entropy on increasing
    sigma-fields (see, e.g., \citet[Eq.~(3)]{barron1991information}),
\[
D\!\left(P|_{\mathcal G_\infty}\middle\Vert Q|_{\mathcal G_\infty}\right)
=
\lim_{m\to\infty}
D\!\left(P|_{\mathcal G_m}\middle\Vert Q|_{\mathcal G_m}\right),
\]
where $\mathcal G_\infty=\sigma(\cup_m\mathcal G_m)$.
    Hence,
    \[
    \KL\lrb{
        \Pb_{\mathcal T_\tau^{+\eta}}
        \Big\Vert
        \Pb_{\mathcal T_\tau^{-\eta}}
    }
    =
    \lim_{m\to\infty}
    \KL\lrb{
        \Pb_{\mathcal T_{\tau}^{+\eta,(m)}}
        \Big\Vert
        \Pb_{\mathcal T_{\tau}^{-\eta,(m)}}
    }.
    \]
    Therefore,
    \[
    \KL\lrb{
        \Pb_{\mathcal T_\tau^{+\eta}}
        \Big\Vert
        \Pb_{\mathcal T_\tau^{-\eta}}
    }
    \le
    c_{\mathrm{info}}\eta^2
    \lim_{m\to\infty}
    \E_{+\eta}\lsb{\tau^{+\eta}\wedge m}.
    \]
    By monotone convergence,
    \[
        \lim_{m\to\infty}\E_{+\eta}\lsb{\tau^{+\eta}\wedge m}
        =
        \E_{+\eta}\lsb{\tau^{+\eta}}.
    \]
    This proves the claim. The extra seed used for the final recommendation has
    the same conditional law under the two environments, hence it does not
    create any additional KL contribution.
\end{proof}

We will need the following classical lemma, whose statement and proof can be
found, e.g., in \citet[Lemma 2.6]{Tsybakov2008}.

\begin{lemma}[Bretagnolle-Huber Inequality]\label{lem:BH}
    Let $P,Q$ be probability measures on the same measurable space, and let
    $\psi$ be any measurable test with values in $\{0,1\}$.
    Then
    \[
    P\lsb{\psi=0}+Q\lsb{\psi=1}
    \ge
    \frac{1}{2}\exp\brb{-\KL(P\Vert Q)}.
    \]
\end{lemma}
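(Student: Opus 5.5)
We follow the classical route through the Bhattacharyya affinity. Let $A\coloneqq\{\psi=1\}$ and fix a dominating measure, say $\mu\coloneqq P+Q$, with densities $p\coloneqq \dif P/\dif\mu$ and $q\coloneqq \dif Q/\dif\mu$. If $\KL(P\Vert Q)=+\infty$, the right-hand side is $0$ and there is nothing to prove. We therefore assume $\KL(P\Vert Q)<\infty$, which in particular forces $P\ll Q$, that is, $q>0$ $P$-almost surely.

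\emph{Step 1: reduce to the overlap.} We write
\[
P[\psi=0]+Q[\psi=1]=\int_{A^c}p\,\dif\mu+\int_A q\,\dif\mu\ge\int\min(p,q)\,\dif\mu .
\]

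\emph{Step 2: compare the overlap with the affinity.} Since $\sqrt{pq}=\sqrt{\min(p,q)\max(p,q)}$, Cauchy--Schwarz gives
\[
\Bigl(\int\sqrt{pq}\,\dif\mu\Bigr)^2\le\int\min(p,q)\,\dif\mu\int\max(p,q)\,\dif\mu .
\]
Moreover $\int\max(p,q)\,\dif\mu=2-\int\min(p,q)\,\dif\mu\le2$. Hence
\[
\int\min(p,q)\,\dif\mu\ge\tfrac12\Bigl(\int\sqrt{pq}\,\dif\mu\Bigr)^2 .
\]

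\emph{Step 3: lower bound the affinity by the KL divergence.} Restricting to the set $\{p>0\}$, we write
\[
\int\sqrt{pq}\,\dif\mu\ge\int_{\{p>0\}}p\sqrt{q/p}\,\dif\mu=\E_P\Bigl[\exp\Bigl(-\tfrac12\ln\tfrac{p}{q}\Bigr)\Bigr].
\]
By Jensen's inequality applied to the convex function $\exp$, this is at least $\exp\bigl(-\tfrac12\E_P[\ln(p/q)]\bigr)=\exp\bigl(-\tfrac12\KL(P\Vert Q)\bigr)$. Squaring and combining with Steps 1 and 2 yields the claim.

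The only delicate point is Step 3. Jensen requires the log-ratio $\ln(p/q)$ to be well defined and integrable under $P$. This is exactly where we use the reduction made at the start: finite KL gives $P\ll Q$, so $q>0$ $P$-a.s. It also gives that $\ln(p/q)$ has an integrable negative part, since $\E_P[(\ln(q/p))_+]\le\E_P[q/p]\le1$. Thus its $P$-expectation is well defined and equals $\KL(P\Vert Q)$.
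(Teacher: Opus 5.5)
Your proof is correct. It is essentially the standard argument behind the reference the paper relies on; the paper gives no proof of its own and defers to \citet[Lemma~2.6]{Tsybakov2008}, whose proof is exactly Le Cam's bound $\int\min(p,q)\,\dif\mu\ge\frac12\bigl(\int\sqrt{pq}\,\dif\mu\bigr)^2$ followed by Jensen's inequality to get $\int\sqrt{pq}\,\dif\mu\ge\exp\bigl(-\frac12\KL(P\Vert Q)\bigr)$. Your treatment of the infinite-KL case, and of the well-definedness of $\E_P[\ln(p/q)]$ through the bound $\E_P[(\ln(q/p))_+]\le 1$, is also sound.
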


\begin{theorem}[Stopping-Time PAC Lower Bound]\label{thm:pac-template}
    Assume Assumption~\ref{ass:t23} holds on every finite initial segment.
    Let $\varepsilon,\delta$ satisfy
    \[
    0<\varepsilon
    \le
    \min\lrb{\frac{c_{\mathrm{gap}}}{2},
              \frac{c_{\mathrm{opt}}\eps_0}{2}}
    \qquad\text{and}\qquad
    \delta\in\lrb{0,\frac14}.
    \]
    Set
    \[
    \eta
    \coloneqq
    \frac{2\varepsilon}{c_{\mathrm{opt}}}.
    \]
    Then $\eta\in(0,\eps_0]$, and any pure-exploration algorithm that stops at
    an a.s. finite stopping time and is $(\varepsilon,\delta)$-PAC simultaneously
    under the two environments $+\eta$ and $-\eta$ must satisfy
    \[
    \E_{+\eta}\lsb{\tau^{+\eta}}
    \ge
    \frac{c_{\mathrm{opt}}^2}{4c_{\mathrm{info}}\varepsilon^2}
    \ln\frac{1}{4\delta}.
    \]
    Consequently,
    \[
    \max_{\sigma\in\{+,-\}}
    \E_{\sigma\eta}\lsb{\tau^{\sigma\eta}}
    \ge
    \frac{c_{\mathrm{opt}}^2}{4c_{\mathrm{info}}\varepsilon^2}
    \ln\frac{1}{4\delta}.
    \]
\end{theorem}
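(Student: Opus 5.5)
The plan is to reduce the PAC requirement to a binary test between the environments $+\eta$ and $-\eta$, and then combine the Bretagnolle--Huber inequality (\Cref{lem:BH}) with the stopped-history KL bound of \Cref{lem:pac-kl}.

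First I check that $\eta$ is admissible. Since $\varepsilon\le c_{\mathrm{opt}}\eps_0/2$, we get $\eta=2\varepsilon/c_{\mathrm{opt}}\le\eps_0$, and clearly $\eta>0$. So Assumption~\ref{ass:t23} applies at level $\eta$, on every finite initial segment.

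Next I show that, under $+\eta$, every $\varepsilon$-optimal action lies in $\cX_+$.
\begin{itemize}
\item If $x\in\cX_{\mathrm{info}}$, then (A3) gives $\rho^{+\eta}_\star-\rho^{+\eta}(x)\ge c_{\mathrm{gap}}\ge 2\varepsilon>\varepsilon$.
\item If $x\in\cX_-$, then (A1) and (A2) give $\rho^{+\eta}_\star-\rho^{+\eta}(x)\ge \sup_{\cX_+}\rho^{+\eta}-\sup_{\cX_-}\rho^{+\eta}\ge c_{\mathrm{opt}}\eta=2\varepsilon>\varepsilon$.
\end{itemize}
Symmetrically, under $-\eta$ every $\varepsilon$-optimal action lies in $\cX_-$.

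Define the test $\psi\coloneqq\I\{\widehat X\in\cX_+\}$. It is a measurable function of the stopped history $\mathcal T_\tau$, because $\widehat X$ is a measurable function of $\mathcal T_\tau$ and $\cX_+\in\mathfrak X$. The PAC property then yields $\Pb_{+\eta}[\psi=0]\le\delta$ and $\Pb_{-\eta}[\psi=1]\le\delta$.

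Let $P$ and $Q$ be the laws of $\mathcal T^{+\eta}_\tau$ and $\mathcal T^{-\eta}_\tau$. By \Cref{lem:BH},
\[
2\delta\ge \tfrac12\exp\bigl(-\KL(P\Vert Q)\bigr),
\qquad\text{hence}\qquad
\KL(P\Vert Q)\ge\ln\frac{1}{4\delta}.
\]
This quantity is positive because $\delta<1/4$. By \Cref{lem:pac-kl},
\[
\KL(P\Vert Q)\le c_{\mathrm{info}}\eta^2\,\E_{+\eta}[\tau^{+\eta}]=\frac{4c_{\mathrm{info}}\varepsilon^2}{c_{\mathrm{opt}}^2}\,\E_{+\eta}[\tau^{+\eta}].
\]
Rearranging gives the first claim. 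If $\E_{+\eta}[\tau^{+\eta}]=\infty$, the claim is trivial. The ``consequently'' part follows immediately, since the maximum over $\sigma$ is at least the $+\eta$ term.

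The only delicate point is that the recommendation may use the extra seed $U_{\tau+1}$. This seed is already included in $\mathcal T_\tau$, and \Cref{lem:pac-kl} accounts for it, so the argument needs no further care. I expect no real obstacle beyond invoking \Cref{lem:pac-kl} correctly.
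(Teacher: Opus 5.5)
Your proposal is correct and follows the paper's proof essentially step by step. Conditions (A1)--(A3) force every $\varepsilon$-optimal recommendation under $\pm\eta$ into $\cX_\pm$, so the test $\I\{\widehat X\in\cX_+\}$ has both error probabilities at most $\delta$, and Bretagnolle--Huber combined with the stopped-history bound of \Cref{lem:pac-kl} gives the claim. Your explicit check that $\eta\le\eps_0$, which the paper's proof leaves implicit, is a welcome addition.
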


\begin{proof}
    Fix $\varepsilon,\delta$ as in the statement, and set
    $\eta=2\varepsilon/c_{\mathrm{opt}}$.

    We first show that any $\varepsilon$-optimal recommendation must reveal the
    sign of the environment.

    Under the environment $+\eta$, Assumption~\ref{ass:t23} (A1) and (A2) imply
    that for every $x\in\cX_-$,
    \[
    \rho_\star^{+\eta}-\rho^{+\eta}(x)
    \ge
    c_{\mathrm{opt}}\eta
    =
    2\varepsilon.
    \]
    Also, Assumption~\ref{ass:t23} (A3) implies that for every
    $x\in\cX_{\mathrm{info}}$,
    \[
    \rho_\star^{+\eta}-\rho^{+\eta}(x)
    \ge
    c_{\mathrm{gap}}
    \ge
    2\varepsilon.
    \]
    Hence every $\varepsilon$-optimal action under $+\eta$ must belong to
    $\cX_+$. Therefore, since the algorithm is $(\varepsilon,\delta)$-PAC,
    \[
     \Pb_{+\eta}\lsb{\widehat X^{+\eta}\in\cX_+}\ge 1-\delta.
    \]
    By the symmetric argument,
    \[
     \Pb_{-\eta}\lsb{\widehat X^{-\eta}\in\cX_-}\ge 1-\delta.
    \]

    Define the test
    \[
    \psi(z)
    \coloneqq
    \I\lcb{\widehat X(z)\in\cX_+}.
    \]
    Then
    \[
    \Pb_{+\eta}\lsb{\psi(\mathcal T_\tau^{+\eta})=0}\le \delta,
    \qquad
    \Pb_{-\eta}\lsb{\psi(\mathcal T_\tau^{-\eta})=1}\le \delta.
    \]
    Applying Lemma~\ref{lem:BH} together with Lemma~\ref{lem:pac-kl}, we obtain
    \begin{align*}
        2\delta
        &\ge
        \Pb_{+\eta}\lsb{\psi(\mathcal T_\tau^{+\eta})=0}
        +
        \Pb_{-\eta}\lsb{\psi(\mathcal T_\tau^{-\eta})=1}
        \\
        &\ge
        \frac12
        \exp \Big(
        -\KL\lrb{
            \Pb_{\mathcal T_\tau^{+\eta}}
            \Big\Vert
            \Pb_{\mathcal T_\tau^{-\eta}}
        }
        \Big)
        \\
        &\ge
        \frac12
        \exp \bigl(
            -c_{\mathrm{info}}\eta^2
            \E_{+\eta}\lsb{\tau^{+\eta}}
        \bigr).
    \end{align*}
    Taking logarithms gives
    \[
        c_{\mathrm{info}}\eta^2
        \E_{+\eta}\lsb{\tau^{+\eta}}
        \ge
        \ln \frac{1}{4\delta}.
    \]
    Since $\eta=2\varepsilon/c_{\mathrm{opt}}$, this yields
    \[
    \E_{+\eta}\lsb{\tau^{+\eta}}
    \ge
    \frac{c_{\mathrm{opt}}^2}{4c_{\mathrm{info}}\varepsilon^2}
    \ln \frac{1}{4\delta},
    \]
    as claimed.
\end{proof}


\paclowerbound*

\begin{proof}
Fix $\lambda\in[-\infty,0]$.
Let a pure-exploration algorithm stop at an a.s. finite stopping time $\tau$ and
be $(\varepsilon,\delta)$-PAC over the full class of independent i.i.d.\
bilateral-trade instances under $2$-bit threshold feedback.

Set
\[
    \eta
    \coloneqq
    \frac{2\varepsilon}{c_{\mathrm{opt}}},
\]
where, for the hard instances below,
\[
    c_{\mathrm{opt}}
    =
    \frac{4-\sqrt2-\sqrt3}{16\cdot 10^2}.
\]
Consider the two bilateral-trade instances constructed in
\Cref{app:regret_lower_bound}, with perturbation parameters $+\eta$ and $-\eta$.
These instances have independent seller and buyer valuations within each round,
are i.i.d.\ over time, and generate exactly the $2$-bit threshold feedback of
Interaction~Protocol~\ref{algo:interaction_protocol}.

As shown in \Cref{app:regret_lower_bound}, the same verification applies after
restricting the interaction to any deterministic finite horizon. Hence this pair
of instances satisfies Assumption~\ref{ass:t23} on every finite initial segment,
with
\[
    c_{\mathrm{gap}}
    =
    \frac{1}{4\cdot 10^4},
    \qquad
    c_{\mathrm{opt}}
    =
    \frac{4-\sqrt2-\sqrt3}{16\cdot 10^2},
    \qquad
    c_{\mathrm{info}}
    =
    \frac{16\cdot 10^4}{1991},
    \qquad
    \eps_0
    =
    \frac{1}{50}.
\]
Since
\[
    \frac{c_{\mathrm{opt}}\eps_0}{2}
    =
    \frac{4-\sqrt2-\sqrt3}{16\cdot 10^4}
    \le
    \frac{1}{8\cdot 10^4}
    =
    \frac{c_{\mathrm{gap}}}{2},
\]
the condition
\[
    \varepsilon
    \le
    \frac{4-\sqrt2-\sqrt3}{16\cdot 10^4}
\]
implies
\[
    0<\varepsilon
    \le
    \min\lrb{
        \frac{c_{\mathrm{gap}}}{2},
        \frac{c_{\mathrm{opt}}\eps_0}{2}
    }.
\]
Therefore, \Cref{thm:pac-template} applies. Since the algorithm is
$(\varepsilon,\delta)$-PAC over the full class, it is in particular
$(\varepsilon,\delta)$-PAC simultaneously on the two constructed environments.
Thus,
\[
    \E_{+\eta}\lsb{\tau^{+\eta}}
    \ge
    \frac{c_{\mathrm{opt}}^2}{4c_{\mathrm{info}}\varepsilon^2}
    \ln\frac{1}{4\delta}.
\]
In particular, there exists an independent i.i.d.\ bilateral-trade instance
$\nu$ among the two constructed instances such that
\[
    \E_{\nu}\lsb{\tau}
    \ge
    \frac{c_{\mathrm{opt}}^2}{4c_{\mathrm{info}}\varepsilon^2}
    \ln\frac{1}{4\delta}.
\]
Equivalently, the worst-case expected length satisfies
\[
    \sup_{\nu}\E_\nu\lsb{\tau}
    \ge
    \frac{c_{\mathrm{opt}}^2}{4c_{\mathrm{info}}\varepsilon^2}
    \ln\frac{1}{4\delta}.
\]

Moreover,
\[
    \frac{c_{\mathrm{opt}}^2}{4c_{\mathrm{info}}}
    =
    \frac{
        \lrb{\frac{4-\sqrt2-\sqrt3}{16\cdot 10^2}}^2
    }{
        4\cdot\frac{16\cdot 10^4}{1991}
    }
    =
    \frac{1991\lrb{4-\sqrt2-\sqrt3}^2}
    {16384 \cdot 10^8}.
\]
This proves the claimed lower bound with
\[
    c_{\mathrm{PAC}}
    \ceq
    \frac{1991\lrb{4-\sqrt2-\sqrt3}^2}
    {16384 \cdot 10^8}.
\]
\end{proof}


\section[Linear Lower Bound when Removing the Assumption of Independence between St and Bt]{Linear Lower Bound when Removing the Assumption of Independence between $S_t$ and $B_t$}
\label{app:linear}
In this section, similarly as in Appendices~\ref{app:lower_bound_template} to \ref{app:proof_PAC_lower}, a randomized algorithm is given by a sequence of functions $\alpha_1 , \dots , \alpha_T$ with $\alpha_1 : [0,1] \mapsto [0,1]$ and for $t \ge 2$, $\alpha_t : [0,1]^t \times \{0,1\}^{2t-2} \mapsto [0,1]$. This randomized algorithm relies on random seeds $\xi_1,\dots,\xi_T$, uniformly distributed on $[0,1]$ and independent of $(S_t,B_t)_{t \in [T]}$.
The price $P_1$ posted by the algorithm is $\alpha_1(\xi_1)$. For $t \ge 2$, the price $P_t$ posted by the algorithm is 
\[
\alpha_t \lrb{\xi_1,\dots,\xi_t,\I\lcb{S_1 \le P_1 }, \I\lcb{P_1 \le B_1 },\dots,\I\lcb{S_{t-1} \le P_{t-1} }, \I\lcb{P_{t-1} \le B_{t-1} }}.
\]

\linearlowerbound*

The proof of this theorem uses a similar unidentifiable setting as in \cite[Theorem 5]{cesa2024bilateralMOR}, although there are many more technical details that need to be carefully addressed here.

\begin{proof}
 Fix a horizon $T\in\N$, a fairness level $\lambda\in[-\infty,0]$, and a randomized pricing policy $\alpha$.
	Let $ 0 \le \varepsilon \le \frac{1}{20} $ to be tuned later.
	For $j=1,2$,
	we let $(S_t,B_t)_{t \in [T]} \sim \cL^{j}(T)$ if and only if $(S_t,B_t)_{t \in [T]}$ are i.i.d. and, for all $t \in [T]$,
	if $j=1$,
	$(S_t,B_t)$ has density at $x \in [0,1]^2$ given by
	\begin{align*}
	&\frac{1}{3 \varepsilon^2}
	\I \lcb{ 
		x \in [0,\varepsilon] \times \lsb{\frac{3}{8},\frac{3}{8} + \varepsilon}
	}
	+
	\frac{1}{3 \varepsilon^2}
	\I \lcb{ 
		x \in \lsb{\frac{3}{8}-\varepsilon,\frac{3}{8}} \times [1 - \varepsilon,1]
	}
	\\
	&\quad +
	\frac{1}{3 \varepsilon^2}
	\I \lcb{ 
		x \in \lsb{\frac{5}{8}-\varepsilon,\frac{5}{8}} \times \lsb{\frac{5}{8},\frac{5}{8}+\varepsilon}
	}
	\end{align*}
	and if $j=2$, 
	$(S_t,B_t)$ has density at $x \in [0,1]^2$ given by
	\begin{align*}
	&\frac{1}{3 \varepsilon^2}
	\I \lcb{ 
		x \in [0,\varepsilon] \times \lsb{\frac{5}{8},\frac{5}{8} + \varepsilon}
	}
	+
	\frac{1}{3 \varepsilon^2}
	\I \lcb{ 
		x \in \lsb{\frac{3}{8}-\varepsilon,\frac{3}{8}} \times \lsb{\frac{3}{8},\frac{3}{8}+\varepsilon}
	}
	\\
	&\quad +
	\frac{1}{3 \varepsilon^2}
	\I \lcb{ 
		x \in \lsb{\frac{5}{8}-\varepsilon,\frac{5}{8}} \times [1 - \varepsilon,1]
	}.
	\end{align*}
	
	Use the notation, for $j \in \lcb{1,2}$,
	\[
	G_{\lambda}^j(p)
	=
	\E_{(S_t,B_t)_{t \in [T]} \sim \cL^{j}(T)} \bsb{
		\gft_{\lambda}(p,S_1,B_1)
	}.
	\]
	Write also $R_{\lambda}^{j}(\alpha,T)$ for the regret of the randomized pricing policy $\alpha$ when $(S_t,B_t)_{t \in [T]} \sim \cL^{j}(T)$, for $j=1,2$.

	Exactly as in \cite[Proof of Theorem 5]{cesa2024bilateralMOR}, one can check that for any fixed $p \in [0,1]$, for $u,v \in \{-1,1\}$,
	\begin{equation}
		\label{eq:unidentifiable}
		\mathbb{P}_{(S_t,B_t)_{t \in [T]} \sim \cL^{1}(T)}
		\lsb{
			uS_t \le up,
			vp \le vB_t 
		}
		=
		\mathbb{P}_{(S_t,B_t)_{t \in [T]} \sim \cL^{2}(T)}
		\lsb{
			uS_t \le up,
			vp \le vB_t 
		}.
	\end{equation}
	As a consequence,
	assume there is a fixed $A \subset [0,1]$ and
	a constant
	$b >0$  such that
	\begin{equation} \label{eq:unidentifiable:gap}
		\sup_{p \in A}
		G_{\lambda}^1(p) 
		\ge 
		\sup_{p \in [0,1] \backslash A}
		G_{\lambda}^1(p) 
		+ b
		~ ~ ~ ~
		\text{and}
		~ ~ ~ ~
		\sup_{p \in [0,1] \backslash A}
		G_{\lambda}^2(p) 
		\ge 
		\sup_{p \in A}
		G_{\lambda}^2(p) 
		+ b.
	\end{equation}
	Note that we have
	\begin{align*}
		& \E_{(S_t,B_t)_{t \in [T]} \sim \cL^{1}(T)} 
		\lsb{
			\sum_{t=1}^T
			\I \lcb{
				\alpha_t \lrb{ 
					(\xi_s)_{s \in [t]},
					\brb{\I\lcb{S_s \le P_s }, \I\lcb{P_s \le B_s }}_{s \in [t-1]}
				}
				\in [0,1] \backslash A
			}
		}
		\\
		= &
		\E_{(S_t,B_t)_{t \in [T]} \sim \cL^{2}(T)} 
		\lsb{
			\sum_{t=1}^T
			\I \lcb{
				\alpha_t \lrb{ 
					(\xi_s)_{s \in [t]},
					\brb{\I\lcb{S_s \le P_s }, \I\lcb{P_s \le B_s }}_{s \in [t-1]}
				}
				\in [0,1] \backslash A
			}
		},
	\end{align*}
	that can be simply shown by induction as a consequence of \eqref{eq:unidentifiable}. Hence, if the above expectations are at least $\frac{T}{2}$, then \eqref{eq:unidentifiable:gap} yields
	\begin{align*}
	R_{\lambda}^{1}(\alpha,T)
	&\ge 
	b \E_{(S_t,B_t)_{t \in [T]} \sim \cL^{1}(T)} 
	\lsb{
		\sum_{t=1}^T
		\I \lcb{
			\alpha_t \lrb{ 
				(\xi_s)_{s \in [t]},
				\brb{\I\lcb{S_s \le P_s }, \I\lcb{P_s \le B_s }}_{s \in [t-1]}
			}
			\in [0,1] \backslash A
		}
	}
	\\
	&\ge 
	\frac{bT}{2},
	\end{align*}
	while if these expectations are smaller than $\frac{T}{2}$, then 
	\begin{align*}
	R_{\lambda}^{2}(\alpha,T)
	&\ge 
	b \E_{(S_t,B_t)_{t \in [T]} \sim \cL^{2}(T)} 
	\lsb{
		\sum_{t=1}^T
		\I \lcb{
			\alpha_t \lrb{ 
				(\xi_s)_{s \in [t]},
				\brb{\I\lcb{S_s \le P_s }, \I\lcb{P_s \le B_s }}_{s \in [t-1]}
			}
			\in A
		}
	}
	\\
	&\ge 
	\frac{bT}{2}.
	\end{align*}
	Hence \eqref{eq:unidentifiable:gap} implies
	\[
	\max \brb{
		R_{\lambda}^{1}(\alpha,T),
		R_{\lambda}^{2}(\alpha,T)
	}
	\ge \frac{bT}{2},
	\]
	and it remains to prove \eqref{eq:unidentifiable:gap}. For that, we will exploit the following. When $(S_t,B_t)_{t \in [T]} \sim \cL^{1}(T)$, then one can check simply that $(1-B_t,1-S_t)_{t \in [T]} \sim \cL^{2}(T)$. As a consequence, for $p \in [0,1]$,
	\begin{align} \label{eq:symmetry:Glundeux}
		G_{\lambda}^2(1-p)
		&=
		\E_{(S_t,B_t)_{t \in [T]} \sim \cL^{2}(T)}  
		\bsb{
			\gft_{\lambda}
			\lrb{1-p, S_1 , B_1} 
		}
		\notag
		\\
		&=
		\E_{(S_t,B_t)_{t \in [T]} \sim \cL^{1}(T)}  
		\bsb{
			\gft_{\lambda}
			\lrb{1-p, 1-B_1 , 1-S_1} 
		}
		\notag
		\\
		&=
		\E_{(S_t,B_t)_{t \in [T]} \sim \cL^{1}(T)}  
		\bsb{
			M_{\lambda}
			\Brb{
				\brb{1-p - \lrb{ 1-B_1 }}_+
				,
				\brb{1-S_1 -\lrb{1-p}}_+
			}
		}
		\notag
		\\
		&=
		\E_{(S_t,B_t)_{t \in [T]} \sim \cL^{1}(T)}  
		\bsb{
			M_{\lambda}
			\Brb{
				\brb{B_1 - p}_+ 
				,
				\brb{p -S_1}_+
			}
		}
		\notag
		\\
		&= 
		G_{\lambda}^1(p).
	\end{align}

Let us set $A = \lsb{\frac{1}{2},1}$.
	Then we have, 
	\begin{align*}
		G_{\lambda}^1\lrb{\frac{11}{16}} 
		&\ge 
		\frac{1}{3} 
		\inf_{ \substack{ s \in [0 , \varepsilon] \\ 
				b \in [\frac{3}{8}, \frac{3}{8} + \varepsilon]
		}}
		\gft_{\lambda}
		\lrb{\frac{11}{16} , s , b}
		+
		\frac{1}{3} 
		\inf_{ \substack{ s \in [\frac{3}{8}- \varepsilon, \frac{3}{8} ] \\ 
				b \in [1- \varepsilon,1]
		}}
		\gft_{\lambda}
		\lrb{\frac{11}{16} , s , b
		}
		+ 
		\frac{1}{3} 
		\inf_{ \substack{ s \in [\frac{5}{8}-\varepsilon, \frac{5}{8} ] \\ 
				b \in [\frac{5}{8}, \frac{5}{8} + \varepsilon]
		}}
		\gft_{\lambda}
		\lrb{\frac{11}{16} , s , b} 
		\\
		&\ge
		\frac{1}{3} 
		M_{\lambda} 
		\lrb{
			\frac{5}{16} 
			,
			\frac{5}{16} - \varepsilon
		} \ge
		\frac{1}{3}
		\lrb{
			\frac{5}{16} - \varepsilon
		}.
	\end{align*}
	Next, for $p \in \lsb{0,\frac{3}{8} - \varepsilon}$,
	\begin{align*}
		G_{\lambda}^1\lrb{p}
		& \le 
		\frac{1}{3} 
		\sup_{ \substack{ s \in [0 , \varepsilon] \\ 
				b \in [\frac{3}{8}, \frac{3}{8} + \varepsilon]
		}}
		\gft_{\lambda}
		\lrb{p , s , b}
		+
		\frac{1}{3} 
		\sup_{ \substack{ s \in (\frac{3}{8}-\varepsilon, \frac{3}{8} ] \\ 
				b \in [1- \varepsilon,1]
		}}
		\gft_{\lambda}
		\lrb{p , s , b
		}
		+ 
		\frac{1}{3} 
		\sup_{ \substack{ s \in [\frac{5}{8}-\varepsilon, \frac{5}{8} ] \\ 
				b \in [\frac{5}{8}, \frac{5}{8} + \varepsilon]
		}}
		\gft_{\lambda}
		\lrb{p , s , b} 
		\\
		& \le 
		\frac{1}{3}
		\gft_{\lambda}
		\lrb{p, 0 , 
			\frac{3}{8}+ \varepsilon}
		 \le 
		\frac{1}{3}
		\lrb{
			\frac{3}{16}+ \frac{\varepsilon}{2}
		}.
	\end{align*}
	Also, for $p \in \lsb{\frac{3}{8} - \varepsilon,\frac{3}{8} + \varepsilon}$,
	\begin{align*}
		G_{\lambda}^1\lrb{p}
		& \le 
		\frac{1}{3} 
		\sup_{ \substack{ s \in [0 , \varepsilon] \\ 
				b \in [\frac{3}{8}, \frac{3}{8} + \varepsilon]
		}}
		\gft_{\lambda}
		\lrb{p , s , b}
		+
		\frac{1}{3} 
		\sup_{ \substack{ s \in [\frac{3}{8}-\varepsilon, \frac{3}{8} ] \\ 
				b \in [1- \varepsilon,1]
		}}
		\gft_{\lambda}
		\lrb{p , s , b
		}
		+ 
		\frac{1}{3} 
		\sup_{ \substack{ s \in [\frac{5}{8}-\varepsilon, \frac{5}{8} ] \\ 
				b \in [\frac{5}{8}, \frac{5}{8} + \varepsilon]
		}}
		\gft_{\lambda}
		\lrb{p , s , b} 
		\\
		& \le 
		\frac{1}{3}
		\gft_{\lambda}
		\lrb{p, 0 , 
			\frac{3}{8}+ \varepsilon}
		+
		\frac{1}{3}
		\gft_{\lambda}
		\lrb{p, \frac{3}{8}- \varepsilon , 
			1}
		\\
		& \le 
		\frac{1}{3}
		M_{\lambda}\lrb{2 \varepsilon,\frac{3}{8} - \varepsilon}
		+
		\frac{1}{3}
		M_{\lambda}\lrb{2 \varepsilon,\frac{5}{8} - \varepsilon}.
	\end{align*}
	Also, for $p \in \lsb{ \frac{3}{8} + \varepsilon , \frac{1}{2}}$,
	\begin{align*}
		G_{\lambda}^1\lrb{p}
		& \le 
		\frac{1}{3} 
		\sup_{ \substack{ s \in [0 , \varepsilon] \\ 
				b \in [\frac{3}{8}, \frac{3}{8} + \varepsilon)
		}}
		\gft_{\lambda}
		\lrb{p , s , b}
		+
		\frac{1}{3} 
		\sup_{ \substack{ s \in [\frac{3}{8}-\varepsilon, \frac{3}{8} ] \\ 
				b \in [1- \varepsilon,1]
		}}
		\gft_{\lambda}
		\lrb{p , s , b
		}
		+ 
		\frac{1}{3} 
		\sup_{ \substack{ s \in [\frac{5}{8}-\varepsilon, \frac{5}{8} ] \\ 
				b \in [\frac{5}{8}, \frac{5}{8} + \varepsilon]
		}}
		\gft_{\lambda}
		\lrb{p , s , b} 
		\\
		& \le 
		\frac{1}{3}
		\gft_{\lambda}
		\lrb{p , \frac{3}{8} - \varepsilon ,1}
		 \le 
		\frac{1}{3}
		M_{\lambda}\lrb{\frac{1}{8}+\varepsilon,\frac{1}{2}}.
	\end{align*}
	In the end, we have shown 
	\begin{align*}
		& G_{\lambda}^1
		\lrb{
			\frac{11}{16}
		}
		-
		\sup_{p \in \lsb{0,\frac{1}{2}}}
		G_{\lambda}^1
		\lrb{
			p
		}
		\\
		& \ge 
		\min \Bigg(
		\frac{5}{48}
		-
		\frac{3}{48}
		-
		\frac{\varepsilon}{2}
		,
		\frac{5}{48}
		- \frac{\varepsilon}{3}
		-
		\frac{1}{3}
		M_{\lambda}\lrb{2 \varepsilon,\frac{3}{8} - \varepsilon}
		-
		\frac{1}{3}
		M_{\lambda}\lrb{2 \varepsilon,\frac{5}{8} - \varepsilon},
		\frac{5}{48}
		- \frac{\varepsilon}{3}
		-
		\frac{1}{3}
		M_{\lambda}\lrb{\frac{1}{8}+\varepsilon,\frac{1}{2}}
		\Bigg)
		\\
		& \ge 
		\min \Bigg(
		\frac{2}{48}
		,
		\frac{5}{48}
		-
		\frac{1}{3}
		M_{\lambda}\lrb{0,\frac{3}{8}}
		-
		\frac{1}{3}
		M_{\lambda}\lrb{0,\frac{5}{8}},
		\frac{5}{48}
		-
		\frac{1}{3}
		M_{\lambda}\lrb{\frac{1}{8},\frac{1}{2}}
		\Bigg)
		\\
		& -
		\max \Bigg(
		\frac{\varepsilon}{2}
		,
		\frac{\varepsilon}{3}
		+ 
		\frac{1}{3}
		\lrb{
			M_{\lambda}\lrb{2 \varepsilon,\frac{3}{8} - \varepsilon}
			-
			M_{\lambda}\lrb{0,\frac{3}{8}}}
		+
		\frac{1}{3}
		\lrb{
			M_{\lambda}\lrb{2 \varepsilon,\frac{5}{8} - \varepsilon}
			-
			M_{\lambda}\lrb{0,\frac{5}{8}}
		},
		\\
		& \quad 
		\frac{\varepsilon}{3}
		+
		\frac{1}{3}
		\lrb{
			M_{\lambda}\lrb{\frac{1}{8}+\varepsilon,\frac{1}{2}}
			-
			M_{\lambda}\lrb{\frac{1}{8},\frac{1}{2}}
		}
		\Bigg).
	\end{align*}

	Hence, from \eqref{eq:symmetry:Glundeux}, we have
	\begin{align*}
		& G_{\lambda}^2
		\lrb{
			\frac{5}{16}
		}
		-
		\sup_{p \in \lsb{\frac{1}{2},1}}
		G_{\lambda}^2
		\lrb{
			p
		}
		\\
		& \ge 
		\min \Bigg(
		\frac{2}{48}
		,
		\frac{5}{48}
		-
		\frac{1}{3}
		M_{\lambda}\lrb{0,\frac{3}{8}}
		-
		\frac{1}{3}
		M_{\lambda}\lrb{0,\frac{5}{8}}
		,
		\frac{5}{48}
		-
		\frac{1}{3}
		M_{\lambda}\lrb{\frac{1}{8},\frac{1}{2}}
		\Bigg)
		\notag
		\\
		& -
		\max \Bigg(
		\frac{\varepsilon}{2}
		,
		\frac{\varepsilon}{3}
		+ 
		\frac{1}{3}
		\lrb{
			M_{\lambda}\lrb{2 \varepsilon,\frac{3}{8} - \varepsilon}
			-
			M_{\lambda}\lrb{0,\frac{3}{8}}}
		+
		\frac{1}{3}
		\lrb{
			M_{\lambda}\lrb{2 \varepsilon,\frac{5}{8} - \varepsilon}
			-
			M_{\lambda}\lrb{0,\frac{5}{8}}
		},
		\notag
		\\
		& \quad 
		\frac{\varepsilon}{3}
		+
		\frac{1}{3}
		\lrb{
			M_{\lambda}\lrb{\frac{1}{8}+\varepsilon,\frac{1}{2}}
			-
			M_{\lambda}\lrb{\frac{1}{8},\frac{1}{2}}
		}
		\Bigg).
	\end{align*}
	
Set
\[
\begin{aligned}
m_\lambda
&\ceq
\min \Bigg(
\frac{2}{48},
\frac{5}{48}
-
\frac{1}{3}
M_{\lambda}\lrb{0,\frac{3}{8}}
-
\frac{1}{3}
M_{\lambda}\lrb{0,\frac{5}{8}},
\frac{5}{48}
-
\frac{1}{3}
M_{\lambda}\lrb{\frac{1}{8},\frac{1}{2}}
\Bigg),
\\
\psi_\lambda(\varepsilon)
&\ceq
\max \Bigg(
\frac{\varepsilon}{2},
\\
&\qquad
\frac{\varepsilon}{3}
+
\frac{1}{3}
\lrb{
M_{\lambda}\lrb{2\varepsilon,\frac{3}{8}-\varepsilon}
-
M_{\lambda}\lrb{0,\frac{3}{8}}
}
+
\frac{1}{3}
\lrb{
M_{\lambda}\lrb{2\varepsilon,\frac{5}{8}-\varepsilon}
-
M_{\lambda}\lrb{0,\frac{5}{8}}
},
\\
&\qquad
\frac{\varepsilon}{3}
+
\frac{1}{3}
\lrb{
M_{\lambda}\lrb{\frac{1}{8}+\varepsilon,\frac{1}{2}}
-
M_{\lambda}\lrb{\frac{1}{8},\frac{1}{2}}
}
\Bigg).
\end{aligned}
\]
With this notation, the two previous displays imply
\[
G_{\lambda}^1\lrb{\frac{11}{16}}
-
\sup_{p\in\lsb{0,\frac12}}G_{\lambda}^1(p)
\ge
m_\lambda-\psi_\lambda(\varepsilon),
\]
and
\[
G_{\lambda}^2\lrb{\frac{5}{16}}
-
\sup_{p\in\lsb{\frac12,1}}G_{\lambda}^2(p)
\ge
m_\lambda-\psi_\lambda(\varepsilon).
\]

Observe first that, for every $\lambda\in[-\infty,0]$ and every $a\ge 0$,
\[
    M_\lambda(0,a)=0,
\]
and, by \Cref{lem:holder_basic_bounds},
\[
    M_\lambda(x,y)\le M_0(x,y)=\sqrt{xy}.
\]
Hence
\[
\begin{aligned}
m_\lambda
&\ge
\min\lrb{
\frac{1}{24},
\frac{5}{48},
\frac{5}{48}
-
\frac{1}{3}\sqrt{\frac{1}{8}\cdot\frac{1}{2}}
}
=
\frac{1}{48}.
\end{aligned}
\]

We now choose explicitly
\[
    \bar\eps\ceq 10^{-4}.
\]
Clearly $\bar\eps\le 1/20$.
We claim that, for every $\lambda\in[-\infty,0]$,
\[
    \psi_\lambda(\bar\eps)\le \frac{1}{96}.
\]
Indeed, the first term in the definition of $\psi_\lambda$ is bounded by
\[
    \frac{\bar\eps}{2}
    <
    \frac{1}{96}.
\]
For the second term, using $M_\lambda\le M_0$ and $M_\lambda(0,a)=0$, we get
\[
\begin{aligned}
\frac{\bar\eps}{3}
+
\frac{1}{3}
M_{\lambda}\lrb{2\bar\eps,\frac{3}{8}-\bar\eps}
+
\frac{1}{3}
M_{\lambda}\lrb{2\bar\eps,\frac{5}{8}-\bar\eps}
&\le
\frac{\bar\eps}{3}
+
\frac{1}{3}\sqrt{2\bar\eps\cdot\frac{3}{8}}
+
\frac{1}{3}\sqrt{2\bar\eps\cdot\frac{5}{8}}
\\
&<
\frac{\bar\eps}{3}
+
\frac{2}{3}\sqrt{\bar\eps}
=
\frac{1}{30000}
+
\frac{1}{150}
<
\frac{1}{96}.
\end{aligned}
\]
For the third term, we use the following elementary bound.
For every $\lambda\in[-\infty,0]$, every $x\in[1/8,1/8+\bar\eps]$, and $y=1/2$,
\[
    \partial_x M_\lambda(x,y)\le 2
\]
whenever the derivative exists, and the same increment bound is immediate in the Rawls case $\lambda=-\infty$.
Indeed, writing $\lambda=-\mu$ with $\mu>0$ and $r=y/x\in[1,4]$, \Cref{lem:holder_differential_identities} gives
\[
    \partial_x M_{-\mu}(x,y)
    =
    \frac{1}{2}
    \lrb{
        \frac{1+r^{-\mu}}{2}
    }^{-(1+\mu)/\mu}.
\]
If $\mu\ge 1$, then $\lrb{1+r^{-\mu}}/2\ge 1/2$ and $(1+\mu)/\mu\le 2$, so the derivative is at most $2$.
If $0<\mu\le 1$, then by the arithmetic-geometric mean inequality and $r\le 4$,
\[
    \frac{1+r^{-\mu}}{2}
    \ge
    r^{-\mu/2}
    \ge
    2^{-\mu},
\]
and therefore the derivative is again at most $2$.
The case $\lambda=0$ follows by taking the limit, or directly from $M_0(x,y)=\sqrt{xy}$.
Consequently,
\[
    M_\lambda\lrb{\frac{1}{8}+\bar\eps,\frac{1}{2}}
    -
    M_\lambda\lrb{\frac{1}{8},\frac{1}{2}}
    \le
    2\bar\eps,
\]
and hence the third term in $\psi_\lambda(\bar\eps)$ is bounded by
\[
    \frac{\bar\eps}{3}
    +
    \frac{2\bar\eps}{3}
    =
    \bar\eps
    <
    \frac{1}{96}.
\]
Thus $\psi_\lambda(\bar\eps)\le 1/96$.

We now fix $\eps=\bar\eps$ in the construction.
Combining the previous bounds gives
\[
G_{\lambda}^1
\lrb{
    \frac{11}{16}
}
-
\sup_{p \in \lsb{0,\frac{1}{2}}}
G_{\lambda}^1(p)
\ge
m_\lambda-\psi_\lambda(\bar\eps)
\ge
\frac{1}{48}-\frac{1}{96}
=
\frac{1}{96},
\]
and, by the symmetry argument,
\[
G_{\lambda}^2
\lrb{
    \frac{5}{16}
}
-
\sup_{p \in \lsb{\frac{1}{2},1}}
G_{\lambda}^2(p)
\ge
\frac{1}{96}.
\]
Therefore \eqref{eq:unidentifiable:gap} holds with
\[
    b=\frac{1}{96}.
\]
The indistinguishability argument above then yields
\[
    \max\brb{
        R_\lambda^1(\alpha,T),
        R_\lambda^2(\alpha,T)
    }
    \ge
    \frac{bT}{2}
    =
    \frac{T}{192}.
\]
Finally, since the density of each constructed instance is bounded by
\[
    \frac{1}{3\bar\eps^2}
    =
    \frac{10^8}{3},
\]
the theorem holds with
\[
    L\coloneqq \frac{10^8}{3},
    \qquad
    c_{\rm lin} \ceq \frac{1}{192}.
\]
\end{proof}


\section{Useful Probabilistic Facts}
\label{app:useful_results}

\begin{lemma}[McDiarmid's Bounded-Differences Inequality]
\label{lem:mcdiarmid_boundeddifferences}
Let $Z_1,\dots,Z_K$ be independent random variables taking values in measurable spaces $E_1,\dots,E_K$.
Let
$
    f\colon E_1\times\cdots\times E_K\to\bbR
$
be measurable.
Assume that there exist constants $c_1,\dots,c_K\ge0$ such that, for every $r\in\lcb{1,\dots,K}$ and every pair of points $z,z'\in E_1\times\cdots\times E_K$ that differ only in the $r$-th coordinate,
\[
    \labs{f(z)-f(z')}
    \le
    c_r.
\]
Then, for every $t>0$,
\[
    \Pb\Bsb{
        \labs{
            f(Z_1,\dots,Z_K)
            -
            \E\bsb{f(Z_1,\dots,Z_K)}
        }
        \ge t
    }
    \le
    2\exp\lrb{
        -\frac{2t^2}{\sum_{r=1}^K c_r^2}
    },
\]
with the convention that $\exp(-t^2/0)=0$.
\end{lemma}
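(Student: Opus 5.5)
This is the standard bounded-differences inequality, and I will prove it by the Doob martingale method combined with Hoeffding's lemma. Write $f\ceq f(Z_1,\dots,Z_K)$ and define the martingale $M_r\ceq \E\bsb{f\mid Z_1,\dots,Z_r}$ for $r=0,\dots,K$, so that $M_0=\E[f]$ and $M_K=f$. Set $D_r\ceq M_r-M_{r-1}$. By independence, $M_r$ can be written as $g_r(Z_1,\dots,Z_r)$ with
\[
g_r(z_1,\dots,z_r)\ceq\int f(z_1,\dots,z_r,z_{r+1},\dots,z_K)\,\dif\Pb_{Z_{r+1}}\cdots\dif\Pb_{Z_K}.
\]
Measurability of $g_r$ follows from Fubini, provided $f$ is integrable. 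I will argue integrability first: the bounded differences imply $|f(z)-f(z')|\le\sum_r c_r$ for all $z,z'$, so $f$ is bounded and every conditional expectation is well defined.

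The key step is to show that, conditionally on $Z_1,\dots,Z_{r-1}$, the increment $D_r$ lies in an interval of length $c_r$ whose endpoints are $\sigma(Z_1,\dots,Z_{r-1})$-measurable. I will take
\[
L_r\ceq\inf_{x}g_r(Z_1,\dots,Z_{r-1},x)-M_{r-1},\qquad U_r\ceq\sup_{x}g_r(Z_1,\dots,Z_{r-1},x)-M_{r-1}.
\]
Then $L_r\le D_r\le U_r$. Moreover, changing $x$ to $x'$ changes $g_r$ by at most $c_r$, because it is an average of $f$ evaluated at points that differ only in coordinate $r$. Hence $U_r-L_r\le c_r$.

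The main technical obstacle is measurability of these $\inf$ and $\sup$ over a general measurable space. I would avoid it by using only the bound $U_r-L_r\le c_r$ pointwise for each fixed history, and applying Hoeffding's lemma conditionally. For a fixed $(z_1,\dots,z_{r-1})$, the variable $D_r$ is a function of $Z_r$ alone, has conditional mean $0$, and has range of diameter at most $c_r$. Therefore
\[
\E\bsb{e^{sD_r}\mid Z_1,\dots,Z_{r-1}}\le e^{s^2c_r^2/8},
\]
which requires no measurability of $L_r$ or $U_r$.

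Iterating the tower property gives $\E\bsb{e^{s(f-\E f)}}\le \exp\brb{s^2\sum_r c_r^2/8}$. Applying Markov's inequality with $s=4t/\sum_r c_r^2$ yields the one-sided bound $\exp\brb{-2t^2/\sum_r c_r^2}$. Running the same argument on $-f$ and taking a union bound gives the factor $2$. Finally, if $\sum_r c_r^2=0$, then $f$ is constant, so the probability is $0$, consistent with the stated convention.
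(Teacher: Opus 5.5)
Your proof is correct and complete. The Doob martingale decomposition, the conditional range of length at most $c_r$, Hoeffding's lemma applied for each fixed history (which avoids any measurability issue with the conditional $\inf$ and $\sup$), the Chernoff choice $s=4t/\sum_r c_r^2$, and the separate handling of the degenerate case $\sum_r c_r^2=0$ are all sound. The paper does not prove this lemma itself but cites \citet{mcdiarmid1989bounded} and \citet[Chapter~6]{boucheron2013concentration}, and your argument is essentially the classical martingale proof from McDiarmid's paper.
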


This is the classical bounded-differences inequality of McDiarmid \citep{mcdiarmid1989bounded}; see also \citet[Chapter~6]{boucheron2013concentration}.


\bibliographystyle{plainnat}
\bibliography{biblio}

\end{document}